\documentclass{article}

\usepackage{etoolbox}
\usepackage{comment}
\newtoggle{icml}
\togglefalse{icml}
\newcommand{\icml}[1]{\iftoggle{icml}{#1}{}}
\newcommand{\arxiv}[1]{\iftoggle{icml}{}{#1}}
\newcommand{\icmlfallback}{%
\newcommand{\icmltitlerunning}[1]{}%
\newcommand{\icmltitle}[1]{\title{##1}}%
\newcommand{\icmlsetsymbol}[2]{}%
\newenvironment{icmlauthorlist}{}{}%
\newcommand{\icmlauthor}[2]{}%
\newcommand{\icmlaffiliation}[2]{}%
\newcommand{\icmlcorrespondingauthor}[2]{}%
\newcommand{\icmlkeywords}[1]{}%
\newcommand{\printAffiliationsAndNotice}[1]{\author{}\maketitle}%
}
\icml{\IfFileExists{icml2025.sty}{\usepackage{icml2025}}{\IfFileExists{icml2026.sty}{\usepackage{icml2026}}{\icmlfallback}}}

\arxiv{
\usepackage[left=1in, right=1in, top=1in,
bottom=1in]{geometry}
  \usepackage{parskip}
}

\icml{
}

\PassOptionsToPackage{hypertexnames=false}{hyperref}  %

\arxiv{
\usepackage[dvipsnames]{xcolor}
}
\usepackage[colorlinks=true, linkcolor=blue!70!black, citecolor=blue!70!black,urlcolor=black,breaklinks=true]{hyperref}
\icml{
  \PassOptionsToPackage{dvipsnames}{xcolor}
}

\usepackage{amsmath}

\usepackage{microtype}
\usepackage{hhline}

\usepackage{amsthm}
\usepackage{mathtools}
\usepackage{bbm}
\usepackage{amsfonts}
\usepackage{amssymb}
\usepackage[nameinlink,capitalize]{cleveref}

\makeatletter
\newcommand{\neutralize}[1]{\expandafter\let\csname c@#1\endcsname\count@}
\makeatother

\arxiv{
\usepackage{algorithm}
}

\arxiv{
\usepackage{natbib}
\bibliographystyle{plainnat}
\bibpunct{(}{)}{;}{a}{,}{,}
}

\usepackage{xpatch}

\usepackage{thm-restate}
\usepackage{autonum}
\declaretheorem[name=Theorem,parent=section]{theorem}
\declaretheorem[name=Lemma,parent=section, numberlike=theorem]{lemma}
\declaretheorem[name=Assumption, parent=section, numberlike=theorem]{assumption}
\declaretheorem[name=Definition, parent=section, numberlike=theorem]{definition}

\declaretheorem[name=Claim, parent=section, numberlike=theorem]{claim}

\declaretheorem[name=Remark, parent=section, numberlike=theorem]{remark}
\declaretheorem[name=Proposition, parent=section, numberlike=theorem]{proposition}
\declaretheorem[name=Fact, parent=section, numberlike=theorem]{fact}

\usepackage{crossreftools}
\makeatletter
  \renewenvironment{proof}[1][Proof]%
  {%
   \par\noindent{\bfseries\upshape {#1.}\ }%
  }%
  {\qed\newline}
  \makeatother

\xpatchcmd{\proof}{\itshape}{\normalfont\proofnameformat}{}{}
\newcommand{\proofnameformat}{\bfseries}

\newcommand{\pref}[1]{\cref{#1}}
\newcommand{\pfref}[1]{Proof of \pref{#1}}

\renewcommand{\eqref}[1]{\texorpdfstring{\hyperref[#1]{(\ref*{#1})}}{(\ref*{#1})}}
\crefformat{equation}{#2Eq.\,(#1)#3}
\Crefformat{equation}{#2Eq.\,(#1)#3}

\Crefformat{figure}{#2Figure~#1#3}
\Crefformat{assumption}{#2Assumption~#1#3}

\Crefname{assumption}{Assumption}{Assumptions}

\crefname{fact}{Fact}{Facts}

\Crefformat{figure}{#2Figure #1#3}
\Crefformat{assumption}{#2Assumption #1#3}

\usepackage{xparse}

\ExplSyntaxOn
\DeclareDocumentCommand{\XDeclarePairedDelimiter}{mm}
 {
  \__egreg_delimiter_clear_keys: %
  \keys_set:nn { egreg/delimiters } { #2 }
  \use:x %
   {
    \exp_not:n {\NewDocumentCommand{#1}{sO{}m} }
     {
      \exp_not:n { \IfBooleanTF{##1} }
       {
        \exp_not:N \egreg_paired_delimiter_expand:nnnn
         { \exp_not:V \l_egreg_delimiter_left_tl }
         { \exp_not:V \l_egreg_delimiter_right_tl }
         { \exp_not:n { ##3 } }
         { \exp_not:V \l_egreg_delimiter_subscript_tl }
       }
       {
        \exp_not:N \egreg_paired_delimiter_fixed:nnnnn 
         { \exp_not:n { ##2 } }
         { \exp_not:V \l_egreg_delimiter_left_tl }
         { \exp_not:V \l_egreg_delimiter_right_tl }
         { \exp_not:n { ##3 } }
         { \exp_not:V \l_egreg_delimiter_subscript_tl }
       }
     }
   }
 }

\keys_define:nn { egreg/delimiters }
 {
  left      .tl_set:N = \l_egreg_delimiter_left_tl,
  right     .tl_set:N = \l_egreg_delimiter_right_tl,
  subscript .tl_set:N = \l_egreg_delimiter_subscript_tl,
 }

\cs_new_protected:Npn \__egreg_delimiter_clear_keys:
 {
  \keys_set:nn { egreg/delimiters } { left=.,right=.,subscript={} }
 }

\cs_new_protected:Npn \egreg_paired_delimiter_expand:nnnn #1 #2 #3 #4
 {%
  \mathopen{}
  \mathclose\c_group_begin_token
   \left#1
   #3
   \group_insert_after:N \c_group_end_token
   \right#2
   \tl_if_empty:nF {#4} { \c_math_subscript_token {#4} }
 }
\cs_new_protected:Npn \egreg_paired_delimiter_fixed:nnnnn #1 #2 #3 #4 #5
 {
  \mathopen{#1#2}#4\mathclose{#1#3}
  \tl_if_empty:nF {#5} { \c_math_subscript_token {#5} }
 }
\ExplSyntaxOff

\XDeclarePairedDelimiter{\supnorm}{
  left=\lVert,
  right=\rVert,
  subscript=\infty
  }

\newcommand{\noah}{\ngcomment}

\newcommand{\BR}{\mathbb{R}}
\newcommand{\E}{\mathbb{E}}
\newcommand{\Zhat}{\widehat{Z}}
\newcommand{\What}{\widehat{W}}

\usepackage[utf8]{inputenc} %
\usepackage[T1]{fontenc}    %
\usepackage{url}            %
\usepackage{booktabs}       %
\usepackage{amsfonts}       %
\usepackage{nicefrac}       %
\usepackage{microtype}      %
\usepackage{graphicx}       %

\usepackage{tocloft}            %

\usepackage{makecell}
\usepackage{amsmath}
\usepackage{amsthm}

\usepackage{upgreek}

\usepackage{enumitem}

\usepackage{breakcites}
\usepackage{mathrsfs}
\arxiv{
\usepackage{algorithm}
\usepackage{verbatim}
\usepackage[noend]{algpseudocode}

}

\usepackage{multicol}

\usepackage{colortbl}
\usepackage{subcaption}
\usepackage{setspace}

\usepackage{transparent}

\usepackage{inconsolata}
\usepackage[scaled=.90]{helvet}
\usepackage{xspace}

\usepackage{pifont}
\usepackage{bm}

\usepackage{tablefootnote}

\DeclareFontFamily{U}{jkpmia}{}
\DeclareFontShape{U}{jkpmia}{m}{it}{<->s*jkpmia}{}
\DeclareFontShape{U}{jkpmia}{bx}{it}{<->s*jkpbmia}{}
\DeclareMathAlphabet{\mathfrak}{U}{jkpmia}{m}{it}
\SetMathAlphabet{\mathfrak}{bold}{U}{jkpmia}{bx}{it}

\DeclarePairedDelimiter{\abs}{\lvert}{\rvert} %
\DeclarePairedDelimiter{\brk}{[}{]}
\DeclarePairedDelimiter{\crl}{\{}{\}}
\DeclarePairedDelimiter{\prn}{(}{)}

\DeclarePairedDelimiter{\floor}{\lfloor}{\rfloor}

\let\Pr\undefined

\DeclareMathOperator{\En}{\mathbb{E}}

\DeclareMathOperator{\Pr}{Pr}

\newcommand{\wt}[1]{\widetilde{#1}}
\newcommand{\wh}[1]{\widehat{#1}}

\def\ddefloop#1{\ifx\ddefloop#1\else\ddef{#1}\expandafter\ddefloop\fi}
\def\ddef#1{\expandafter\def\csname bb#1\endcsname{\ensuremath{\mathbb{#1}}}}
\ddefloop ABCDEFGHIJKLMNOPQRSTUVWXYZ\ddefloop
\def\ddefloop#1{\ifx\ddefloop#1\else\ddef{#1}\expandafter\ddefloop\fi}
\def\ddef#1{\expandafter\def\csname b#1\endcsname{\ensuremath{\mathbf{#1}}}}
\ddefloop ABCDEFGHIJKLMNOPQRSTUVWXYZ\ddefloop
\def\ddef#1{\expandafter\def\csname sf#1\endcsname{\ensuremath{\mathsf{#1}}}}
\ddefloop ABCDEFGHIJKLMNOPQRSTUVWXYZ\ddefloop
\def\ddef#1{\expandafter\def\csname c#1\endcsname{\ensuremath{\mathcal{#1}}}}
\ddefloop ABCDEFGHIJKLMNOPQRSTUVWXYZ\ddefloop
\def\ddef#1{\expandafter\def\csname h#1\endcsname{\ensuremath{\widehat{#1}}}}
\ddefloop ABCDEFGHIJKLMNOPQRSTUVWXYZ\ddefloop
\def\ddef#1{\expandafter\def\csname hc#1\endcsname{\ensuremath{\widehat{\mathcal{#1}}}}}
\ddefloop ABCDEFGHIJKLMNOPQRSTUVWXYZ\ddefloop
\def\ddef#1{\expandafter\def\csname t#1\endcsname{\ensuremath{\widetilde{#1}}}}
\ddefloop ABCDEFGHIJKLMNOPQRSTUVWXYZ\ddefloop
\def\ddef#1{\expandafter\def\csname tc#1\endcsname{\ensuremath{\widetilde{\mathcal{#1}}}}}
\ddefloop ABCDEFGHIJKLMNOPQRSTUVWXYZ\ddefloop
\def\ddefloop#1{\ifx\ddefloop#1\else\ddef{#1}\expandafter\ddefloop\fi}
\def\ddef#1{\expandafter\def\csname scr#1\endcsname{\ensuremath{\mathscr{#1}}}}
\ddefloop ABCDEFGHIJKLMNOPQRSTUVWXYZ\ddefloop

\newcommand{\ind}{\mathbbm{1}}    %

\newcommand{\eps}{\epsilon}

\newcommand{\ldef}{\vcentcolon=}
\newcommand{\rdef}{=\vcentcolon}

\newcommand{\nuhat}{\wh{\nu}}

\newcommand{\Ztil}{\wt{Z}}

\newcommand{\iidsim}{\overset{\text{i.i.d.}}{\sim}}
\newcommand{\Cact}{C_{\sf act}}
\newcommand{\Cchi}{C_{\chi^2}}

\newcommand{\tx}{\tilde{x}}

\newcommand{\einf}{\mathsf{C}_{\infty}}

\newcommand{\dirac}{\updelta}

\newcommand{\piref}{\pi_{\texttt{ref}}}
\newcommand{\pirefs}[1][h]{\pi_{\texttt{ref},#1}}

\newcommand{\model}{\mathsf{M}}
\newcommand{\prompt}{\texttt{p}}
\newcommand{\promptref}{\prompt_{\texttt{ref}}}
\newcommand{\promptstar}{\prompt^{\star}}
\renewcommand{\^}[1]{^{(#1)}}

\newcommand{\Vstar}{V^\star}

\newcommand{\Unif}{\mathsf{Unif}}

\newcommand{\var}{\mathrm{Var}}
\newcommand{\Var}{\var}

\renewcommand{\emptyset}{\varnothing}

\newcommand{\M}[1]{^{{\scriptscriptstyle M}}}  %

\newcommand{\pistar}{\pi^{\star}}

\newcommand{\pihat}{\wh{\pi}}

\renewcommand{\ind}[1]{^{{\scriptscriptstyle#1}}}

\newcommand{\bigoht}{\wt{O}}

\newcommand{\indic}{\mathbb{I}}

\renewcommand{\Pr}{\bbP}

\newcommand{\poly}{\mathrm{poly}}

\newcommand{\Dkl}[2]{D_{\mathsf{KL}}\prn*{#1\,\|\,#2}}

\newcommand{\Dchis}[2]{D_{\chi^2}\prn*{#1\dmid{}#2}}

\newcommand{\Dtv}[2]{D_{\mathsf{TV}}\prn*{#1,#2}}

\newcommand{\Dcov}[3][N]{\mathrm{Cov}_{#1}\prn*{#2\,\|\,#3}}
\newcommand{\Dcovac}[3][N]{\mathrm{Cov}_{#1}^{\mathsf{act}}\prn*{#2\,\|\,#3}}

\newcommand{\Ber}{\mathrm{Ber}}

\newcommand{\dmid}{\;\|\;}

\newcommand{\Vf}{V}

\newcommand{\Vhat}{\wh{\Vf}}

\newcommand{\unif}{\mathrm{unif}}

\newcommand{\supp}{\mathrm{supp}}

\newcommand{\muhat}{\widehat{\mu}}

\newcommand{\Vtil}{\wt{\Vf}}

\def\multiset#1#2{\ensuremath{\left(\kern-.3em\left(\genfrac{}{}{0pt}{}{#1}{#2}\right)\kern-.3em\right)}}

\newcommand{\iid}{i.i.d.\xspace}

\renewcommand{\emptyset}{\varnothing}

\newcommand{\phat}{\wh{p}}

\newcommand{\MA}{\mathcal{A}}

\DeclareMathOperator*{\EE}{\mathbb{E}}
\newcommand{\RR}{\mathbb{R}}
\newcommand{\NN}{\mathbb{N}}

\newcommand{\BP}{\mathbb{P}}

\newcommand{\Alg}{\texttt{Alg}}
\newcommand{\MX}{\mathcal{X}}

\newcommand{\MP}{\mathcal{P}}

\DeclareMathOperator{\Bin}{Bin}

\newcommand{\MS}{\mathcal{S}}

\newcommand{\ystar}{y^\star}

\newcommand{\SMCIND}{\text{SMC-IND}}
\DeclareMathOperator{\Geom}{Geom}

\newcommand{\DMC}{SMC-RS\xspace}
\newcommand{\DMCname}{Sequential Monte Carlo with Rejection Sampling\xspace}

\input{widebar}

\definecolor{ForestGreen}{RGB}{34,139,34}
\definecolor{BurntOrange}{RGB}{34,139,34}
\usepackage[suppress]{color-edits}
 \addauthor{df}{ForestGreen}
 \addauthor{ab}{red}
 \addauthor{dr}{purple}
 \addauthor{ak}{BurntOrange}
 \addauthor{ah}{olive}
 \addauthor{fc}{red}
 \addauthor{ng}{blue}
 \newcommand{\dfc}[1]{}
 \newcommand{\dhruv}[1]{}
 \newcommand{\ah}[1]{}

\arxiv{
\makeatletter
\let\OldStatex\Statex
\renewcommand{\Statex}[1][3]{%
  \setlength\@tempdima{\algorithmicindent}%
  \OldStatex\hskip\dimexpr#1\@tempdima\relax}
\makeatother
}

\icml{
\providecommand{\State}{\STATE}
\providecommand{\For}{\FOR}
\providecommand{\EndFor}{\ENDFOR}
\providecommand{\EndWhile}{\ENDWHILE}
\providecommand{\While}{\WHILE}
}

 \usepackage{accents}

\let\oldparagraph\paragraph

\renewcommand{\paragraph}[1]{\oldparagraph{#1.}}

\makeatletter
\g@addto@macro\appendix{%
  \crefalias{section}{appendixsection}%
  \crefalias{subsection}{appendixsubsection}%
  \crefalias{subsubsection}{appendixsubsubsection}%
}
\makeatother
\crefname{appendixsection}{Appendix}{Appendices}
\Crefname{appendixsection}{Appendix}{Appendices}
\crefname{appendixsubsection}{Appendix}{Appendices}
\Crefname{appendixsubsection}{Appendix}{Appendices}
\crefname{appendixsubsubsection}{Appendix}{Appendices}
\Crefname{appendixsubsubsection}{Appendix}{Appendices}

\arxiv{\title{Reject, Resample, Repeat: \\Understanding Parallel Reasoning in Language Model Inference}}
\arxiv{\author{\begin{tabular}{c}
Noah Golowich\\
{\small Microsoft Research}\\
{\small \texttt{nzg@cs.utexas.edu}}
\end{tabular}\and
\begin{tabular}{c}
Fan Chen\\
{\small MIT}\\
{\small \texttt{fanchen@mit.edu}}
\end{tabular}\and
\begin{tabular}{c}
Dhruv Rohatgi\\
{\small MIT}\\
{\small \texttt{drohatgi@mit.edu}}
\end{tabular}\and
\begin{tabular}{c}
Raghav Singhal\\
{\small NYU}\\
{\small \texttt{singhal.raghav@gmail.com}}
\end{tabular}\and
\begin{tabular}{c}
Carles Domingo-Enrich\\
{\small Microsoft Research}\\
{\small \texttt{carlesd@microsoft.com}}
\end{tabular}\and
\begin{tabular}{c}
Dylan J. Foster\\
{\small Microsoft Research}\\
{\small \texttt{dylanfoster@microsoft.com}}
\end{tabular}\and
\begin{tabular}{c}
Akshay Krishnamurthy\\
{\small Microsoft Research}\\
{\small \texttt{akshaykr@microsoft.com}}
\end{tabular}}}
\arxiv{\date{\today}}

\begin{document} 
\arxiv{\maketitle}

\icml{\icmltitlerunning{Understanding Particle Filtering Algorithms in Large Language Models}}

\icml{
\twocolumn[

  \icmltitle{Understanding Particle Filtering Algorithms in Large Language Models: \\ 
  Sequential Monte Carlo and Beyond}

  \icmlsetsymbol{equal}{*}

  \begin{icmlauthorlist}
    \icmlauthor{Firstname1 Lastname1}{equal,yyy}
    \icmlauthor{Firstname2 Lastname2}{equal,yyy,comp}
    \icmlauthor{Firstname3 Lastname3}{comp}
    \icmlauthor{Firstname4 Lastname4}{sch}
    \icmlauthor{Firstname5 Lastname5}{yyy}
    \icmlauthor{Firstname6 Lastname6}{sch,yyy,comp}
    \icmlauthor{Firstname7 Lastname7}{comp}
    \icmlauthor{Firstname8 Lastname8}{sch}
    \icmlauthor{Firstname8 Lastname8}{yyy,comp}
  \end{icmlauthorlist}

  \icmlaffiliation{yyy}{Department of XXX, University of YYY, Location, Country}
  \icmlaffiliation{comp}{Company Name, Location, Country}
  \icmlaffiliation{sch}{School of ZZZ, Institute of WWW, Location, Country}

  \icmlcorrespondingauthor{Firstname1 Lastname1}{first1.last1@xxx.edu}
  \icmlcorrespondingauthor{Firstname2 Lastname2}{first2.last2@www.uk}

  \icml{\icmlkeywords{Machine Learning, ICML}}

  \vskip 0.3in
]
}

\icml{
\printAffiliationsAndNotice{} }

\begin{abstract}
  Inference-time methods that aggregate and prune multiple samples have emerged as a powerful paradigm for steering large language models, yet we lack any principled understanding of their accuracy--cost tradeoffs. 
In this paper, we introduce a route to rigorously study such approaches using the lens of \emph{particle filtering} algorithms such as Sequential Monte Carlo (SMC). 
  Given a base language model and a \emph{process reward model} estimating expected terminal rewards, we ask: \emph{how accurately can we sample from a target distribution given some number of process reward evaluations?}  
  Theoretically, we identify (1) simple criteria enabling non-asymptotic guarantees for SMC; (2) algorithmic improvements to SMC; and (3) a fundamental limit faced by all particle filtering methods. %
  Empirically, we demonstrate that our theoretical criteria effectively govern the \emph{sampling error} of SMC, though not necessarily its final \emph{accuracy}, suggesting that theoretical perspectives beyond sampling may be necessary.
  
\end{abstract}

\section{Introduction}
\label{sec:intro}
Recent work on large language models (LLMs) has demonstrated the value of \emph{inference-time interventions} that \emph{steer} models towards higher-quality outputs. For instance, existing work has experimented with a combination of using \emph{parallel} generations from LLMs \citep{madaan2025rethinking,brown2024large}, techniques to \emph{aggregate} such generations \citep{wang2023selfconsistency,fu2025deep,zhao2025majority}, and ways to \emph{prune or compress} them to fit in the context window for subsequent model calls \citep{yang2024buffer,wu2025resum}. Even with \emph{no additional training}, such interventions can significantly improve performance on challenging tasks such as mathematical reasoning %
and question answering%
. However, these methods are largely ad hoc, since we lack a unifying theoretical framework that can \emph{explain} the benefits of different inference-time interventions and \emph{guide} algorithm design towards the most effective ones.

\begin{figure}[t!]
\centering
\includegraphics[width=0.3\textwidth]{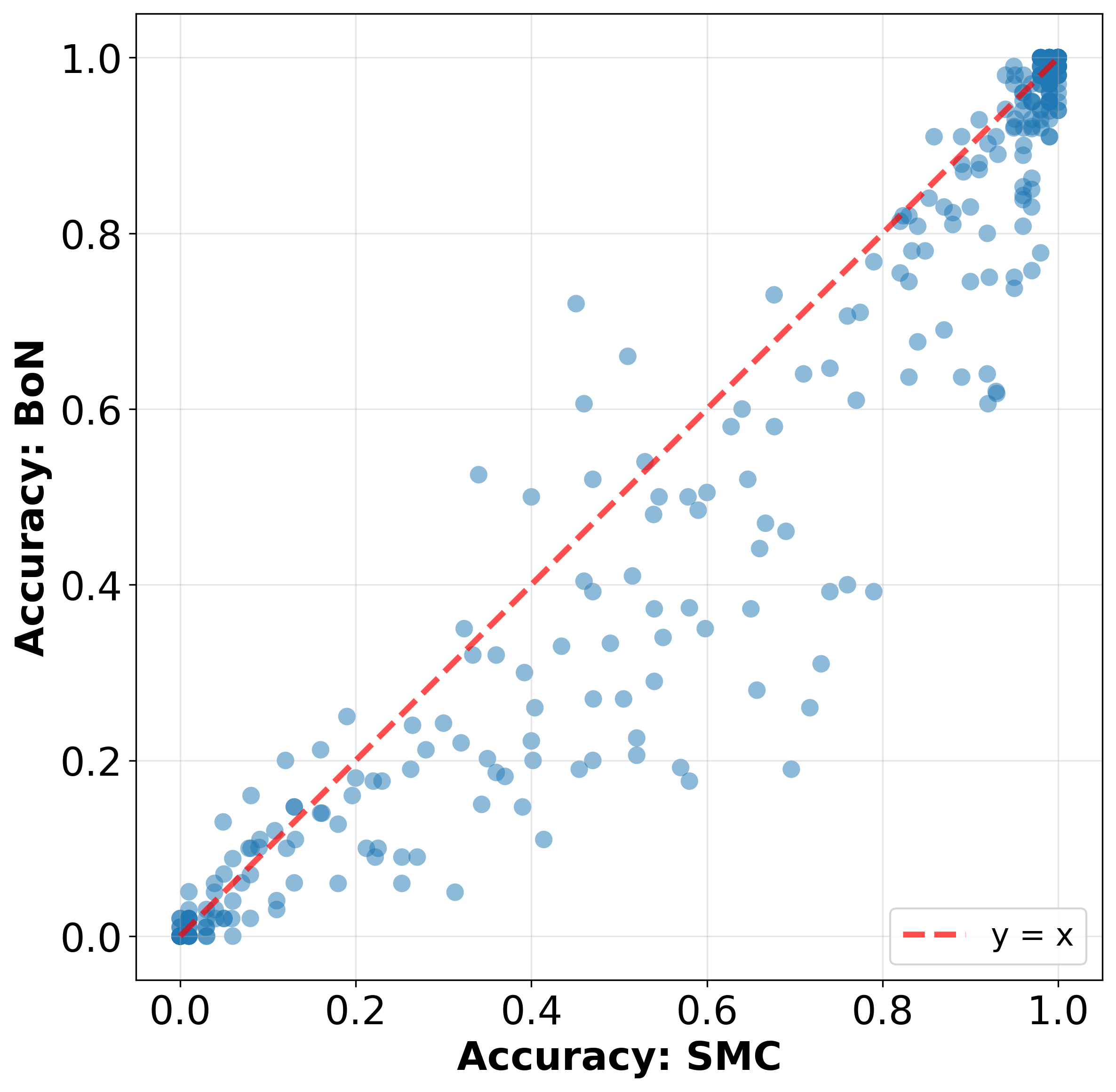}
\caption{Performance of SMC (with $N$ particles) vs. Best-of-$N$ on Math500 problems; here we take $N = 32$. Each point is a different problem; thus, SMC with $N$ particles improves performance over Best-of-$N$ on most problems. See \cref{sec:smc-math-experiments} for details.}%
\label{fig:smc-vs-bon}
\end{figure} 

In this paper, we ask:

\begin{center}
\emph{Can we analyze inference-time interventions such as aggregation and pruning of parallel generations in a principled manner? }%
\end{center}

To do so, we study the paradigm of \emph{guided LLM generation with a process reward model} \citep{wang2025value,puri2025probabilistic}, where the process reward model numerically scores partial generations.  %
In this paradigm, aggregation and pruning are naturally implemented by \emph{particle filtering} algorithms, including the popular \emph{Sequential Monte Carlo (SMC)} method, which use the process reward model to adaptively prune and replicate multiple \emph{particles} (namely, partial generations from the base model), as tokens are generated. These strategies can have robust empirical benefits over simpler interventions such as Best-of-$N$; see e.g. \cref{fig:smc-vs-bon} or \citet{puri2025probabilistic}.%

While particle filtering methods have been extensively applied to tasks ranging from posterior inference \citep{pitt1997filtering} and robot localization \citep{montemerlo2002fastslam} to steering LLMs \citep{lipkin2025fastcontrolled,lew2023smcsteering,loula2025syntactic,grand2025selfsteering,puri2025particlemc,zhao2024twisted,feng2024step} and diffusion models \citep{skreta2025feynman,singhal2025general}, few non-asymptotic \emph{guarantees} have been established for such methods. In this paper, we address this challenge, providing a user-friendly analysis of SMC and identifying benefits and limitations of more sophisticated particle filtering methods. %

\paragraph{Towards a theory of inference-time interventions} Increasingly, inference-time interventions utilize models not just for generation and scoring, but also more complex operations such as splicing \citep{fu2025deep} or summarizing \citep{yu2025limits} generations. While these operations are not directly captured by our setting, we believe that our work is a necessary first step towards developing a framework for understanding (and improving) such interventions. %

\subsection{Setting: Guided Generation with Imperfect Process Rewards}\label{sec:setting}

For a fixed prompt, a language model defines a distribution $\piref \in \Delta(\cA^H)$ over sequences of $H$ \emph{actions} in some set $\cA$.\footnote{For simplicity, we assume that the language model produces generations of fixed length.}
One can think of $\MA$ as the space of \emph{tokens}, or more generally as blocks of tokens. Given a sequence $a_{1:h}$, we can efficiently sample from $\piref(a_{h+1}=\cdot\mid{}a_{1:h})$, which enables autoregressive sampling from $\piref$.%

We model the task of \emph{steering} the language model towards some desired reward as a \emph{sampling} problem, in line with recent work on LLM inference and post-training \citep{rohatgi2025taming,geuter2025guided,foster2025foundation,xiong2024iterative,zhu2026power}. Given query access to a reward function $r^\star : \cA^H \to \BR_{\geq 0}$, we aim to sample from the \emph{tilted distribution} $\pistar_H(a_{1:H}) \propto \piref(a_{1:H}) r^\star(a_{1:H})$. For instance, in the context of mathematical reasoning, $r^\star(a_{1:H}) \in \{0,1\}$ measures whether $a_{1:H}$ correctly solves the prompt. %

\paragraph{Imperfect process rewards} Following \cite{rohatgi2025taming}, we theoretically formalize a process reward model (PRM) as an \emph{approximate value function}. Given a sequence $a_{1:h}$ of any length, the PRM provides an estimate $\Vhat(a_{1:h}) \in \RR_{\geq 0}$ of the 
\emph{expected reward} of $a_{1:h}$ under $\piref$, i.e., $\Vstar(a_{1:h}) := \E_{a'_{1:H} \sim \piref}[r^\star(a'_{1:H}) \mid a'_{1:h}=a_{1:h}]$. 

In practice, there has been extensive effort to train transformers specifically to be PRMs \citep{lightman2023let,wang2024mathshepherd,wang2025valueguided,zhang2025lessons} for certain problems, such as math reasoning. If such a fine-tuned PRM adapted to a desired reward function $r^\star$ is not available, there are several other popular choices for $\Vhat$---e.g. LLM-as-a-judge \citep{stephan2024calculation} or LLM log-probabilities \citep{karan2025reasoning,lew2023smcsteering}; see also \cref{sec:experiments}. %

The key algorithmic challenge in this setting is that the PRM is imperfect. To address this challenge, numerous parallel and sequential approaches have been proposed, including SMC \citep{puri2025probabilistic}, block rejection sampling \citep{mudgal2023controlled}, and backtracking \citep{botta2025on}. Thus far, provable guarantees are only known for backtracking \citep{rohatgi2025taming} --- an inherently sequential approach. We ask: \emph{given access to $\piref$ and $\Vhat$ as above, what structural properties enable approximate sampling from $\pistar_H$ \textbf{via efficient parallel methods such as particle filtering}?}

\subsection{Theoretical Contributions: A Principled Analysis of Particle Filtering Methods for LLM Inference}
\label{sec:theory-overview}

\newcommand{\bfb}[1]{\textbf{\color{blue} #1}}

\paragraph{Contribution I: Simple criteria for success of SMC}

 Our first result identifies two \bfb{key properties} that enable the success of SMC. For $h \in [H]$, let $\pistar_h, \pihat_h \in \Delta(\cA^h)$ denote the \emph{true} and \emph{approximate intermediate target distributions}, defined by $\pistar_h(a_{1:h}) \propto \piref(a_{1:h})\Vstar(a_{1:h})$ and $\pihat_h(a_{1:h}) \propto \piref(a_{1:h}) \Vhat(a_{1:h})$. Then we show the following:

\begin{theorem}[Corollary of \cref{thm:smc-chisq}]
\label{thm:main-intro}
Suppose that the following hold.\\ \bfb{(1)} Bounded \emph{action-level coverage}:  for all $h$, $a_{1:h+1}$, $\pistar(a_{h+1} \mid a_{1:h}) / \piref(a_{h+1} \mid a_{1:h}) \leq \Cact$.\\
\bfb{(2)} Bounded \emph{$\chi^2$-divergences} (these control the error of $\Vhat$ vs. $\Vstar$, per \cref{fact:chisq-avg}): $\Dchis{\pistar_h}{\pihat_h} \leq \Cchi$ for all $h$. 

Then SMC with $N$ particles samples from a distribution $\muhat$ satisfying $\Dtv{\muhat}{\pistar} \leq \sqrt{\frac{H^2 \Cact(\Cchi+1)}{N}}$. %
\end{theorem}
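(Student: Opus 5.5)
Since this is stated as a corollary of \cref{thm:smc-chisq}, the proof will mostly instantiate that theorem. I will also sketch the underlying argument, because the corollary's form $\sqrt{H^2\Cact(\Cchi+1)/N}$ indicates how the general bound must be built. I take SMC to be the standard scheme: at step $h$, each of the $N$ particles is extended by $a_{h+1}\sim\piref(\cdot\mid a_{1:h})$, given incremental weight $w_h(a_{1:h+1})=\Vhat(a_{1:h+1})/\Vhat(a_{1:h})$ (with $\Vhat$ at step $H$ replaced by $r^\star$), and the $N$ particles are multinomially resampled in proportion to these weights. The output $\muhat$ is the law of a uniformly chosen final particle.

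\textbf{Step 1 (telescoping over steps).} I will introduce hybrid distributions $\nu^{(h)}$: run SMC exactly (i.e., with infinitely many particles) up to step $h$, and use the finite-$N$ procedure afterwards, or equivalently compare one-step resampling operators. Since each SMC step is a Markov kernel on particle populations, TV contracts under the subsequent steps. Hence $\Dtv{\muhat}{\pistar}\le\sum_{h=1}^{H}\epsilon_h$, where $\epsilon_h$ is the error introduced by a single reweight-resample step with $N$ particles. A cleaner route to the same end is to bound $\Dchis{}{}$ or $\Dkl{}{}$ of the joint law of an ancestral path against an ideal process, then apply Pinsker or Cauchy--Schwarz.

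\textbf{Step 2 (one-step error via self-normalized importance sampling).} Fix step $h$ and suppose the current particles are approximately distributed as $\pihat_h$. Extending them by $\piref$ and resampling by $w_h$ is self-normalized importance sampling with proposal $\pihat_h\otimes\piref$ and target $\pihat_{h+1}$ (or, after correction, $\pistar_{h+1}$). The standard bound says the TV error of one self-normalized IS draw with $N$ proposals is at most $\sqrt{(1+\Dchis{\text{target}}{\text{proposal}})/N}$ up to constants. So I need to bound the $\chi^2$-divergence between $\pistar_{h+1}$ and $\pihat_h\otimes\piref(\cdot\mid a_{1:h})$. I factor it as
\[
1+\chi^2 \le \bigl(1+\Dchis{\pistar_h}{\pihat_h}\bigr)\cdot\sup_{a_{1:h+1}}\frac{\pistar(a_{h+1}\mid a_{1:h})}{\piref(a_{h+1}\mid a_{1:h})}\le(1+\Cchi)\,\Cact,
\]
using the chain rule for $\chi^2$ and the fact that $\pistar_{h+1}$ has conditional $\pistar(a_{h+1}\mid a_{1:h})$ and marginal $\pistar_h$. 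Combining this with Step 1 gives $\sum_h\sqrt{\Cact(\Cchi+1)/N}=\sqrt{H^2\Cact(\Cchi+1)/N}$, which is exactly the claimed bound and confirms the structure.

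\textbf{Main obstacle.} The difficulty is that the particles entering step $h$ are not i.i.d.\ from $\pihat_h$: they are correlated through earlier resampling, so the single-step IS bound does not apply directly. I would handle this with the coupling/hybrid argument from Step 1, viewing the finite-$N$ population as a mixture over ancestral histories. Alternatively, following the proof of \cref{thm:smc-chisq}, I would bound the variance of the unbiased SMC normalizing-constant estimator $\hat Z=\prod_h\frac1N\sum_i w_h^{(i)}$. Its relative second moment grows roughly like $1+\sum_h\Cact(1+\Cchi)/N$, and $\Dtv{\muhat}{\pistar}\lesssim\sqrt{\mathrm{Var}(\hat Z/Z)}$. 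That yields $\sqrt{H\Cact(\Cchi+1)/N}$, which is at least as good as the stated bound, so any looser accounting in the general theorem still implies the corollary.
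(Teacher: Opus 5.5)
Your Step 2 factorization is correct, and it bounds the right per-step quantity. The $h$-th term of \cref{prop:smc-chisq}, $\frac{\Zhat_{h-1}}{Z^2}\En_{\piref}[\Vstar(x_h)^2/\Vhat(x_{h-1})]$, is exactly $1+\chi^2$ between $\pistar_{h-1}\otimes\pistar(\cdot\mid x_{h-1})$ (i.e.\ $\pistar_h$ in the autoregressive case) and the proposal $\pihat_{h-1}\otimes\piref$.

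The gap is in how you add up these per-step errors. A TV-contraction/hybrid argument has to be run on the law of the whole $N$-particle population, because every later step depends on all particles. After a resample-and-extend step, that population is in general far in TV from an i.i.d.\ population. For example, in the autoregressive setting, particles descended from a duplicated ancestor share a prefix, which essentially never happens for i.i.d.\ draws from a spread-out $\pihat_h$. So the hybrid increments $\epsilon_h$ are not the single-draw self-normalized IS errors, which only control the marginal law of one resampled particle. Note also that the SNIS target of that step is $\pihat_h$, not $\pistar_h$. Your ``main obstacle'' paragraph proposes to handle the correlations with the same hybrid argument, which is circular, and the ancestral-path $\chi^2$/KL route is not carried out.

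Your variance fallback is not what the paper does, and it is false. The paper deliberately avoids second moments of $\What_h$ (\cref{sec:compare-smc}). Even the terminal variance is not controlled by $\Cact$ and $\Cchi$, as the following instance shows.
Take $H=3$ and $r^\star\equiv1$, so $\Vstar\equiv1$ and $Z=\Cact=1$. Set $\Vhat(\perp)=1$, and let $x_1\in\{g,b\}$ with $\pi_1(b)=\beta$ and $\Vhat(x_1)=1$. From $b$, move to $b^+$ w.p.\ $\epsilon$ with $\Vhat(b^+)=1/(\beta\epsilon)$, and otherwise to $b^-$ with $\Vhat(b^-)=\beta$. From $g$, move to $g'$ with $\Vhat(g')=1$. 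Make the last step deterministic.
Then $\Dchis{\pistar_h}{\pihat_h}\le 3$ for every $h$, yet $\En[\What_3\mid\cH_2]=1$ and
\[
\Var(\What_3)=\tfrac1N\En\bigl[\What_2^2\,\Var_{\nuhat_2}(1/\Vhat)\bigr]\ \ge\ \tfrac1N\Bigl(\tfrac{(N-1)(1-\beta)(1-\epsilon)}{N^2\beta}-1\Bigr)\xrightarrow{\beta\to0}\infty .
\]
So neither the stated bound nor your claimed $\sqrt{H}$ improvement can come from controlling $\Var(\hat Z)$.

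The missing idea is to work with the first absolute moment instead:
\begin{itemize}
\item By \cref{lem:SMC-unbiased}, the unnormalized measure $\What_h\nuhat_h$ is unbiased, so \cref{lem:SMC-TV} gives $\Dtv{\En[\nuhat_H]}{\pistar_H}\le\frac{1}{2Z}\En|\What_H-Z|$.
\item Write $\What_H-Z$ as a sum of increments of the martingale $M_h=\What_h\En_{\nuhat_h}[\Vstar/\Vhat]$.
\item Bound $\En_{h-1}|M_h-M_{h-1}|\le\What_{h-1}\sqrt{\Delta_{h-1}}$ by conditional Cauchy--Schwarz. The tuples are i.i.d.\ given $\cH_{h-1}$, so no independence across steps is needed.
\item Use $\En[\What_{h-1}\sqrt{\Delta_{h-1}}]\le\sqrt{\En[\What_{h-1}]\,\En[\What_{h-1}\Delta_{h-1}]}$. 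This involves only first moments of $\What_{h-1}$, and by unbiasedness it equals $\frac{Z}{\sqrt N}\sqrt{1+\chi^2}$ for exactly the divergence you bound in Step 2.
\end{itemize}

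Finally, plugging $\Cchi$ into \cref{thm:smc-chisq} as literally stated gives $\sqrt{\Cact/N}\,(H+(H-1)\sqrt{\Cchi})$, which can exceed the target by a factor up to $\sqrt2$. The stated form comes from \cref{prop:smc-chisq}, via the per-step bound $\sqrt{\Cact(1+\Dchis{\pistar_{h-1}}{\pihat_{h-1}})}$.
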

This bound strengthens the guarantees for the backtracking-based algorithm VGB of \citet{rohatgi2025taming}. Moreover, the parallel runtime of SMC is $O(H)$ whereas VGB uses time $\Omega(H^2)$ (and is inherently sequential). See \cref{sec:comparison} for additional discussion. %

\begin{remark}[Connection to literature on refinements of SMC]
Interestingly, the two \textbf{\color{blue} key properties} listed above map quite cleanly onto various efforts in prior work focused on refining SMC algorithms. The latter of the two conditions (that $\Vhat$ approximates $\Vstar$) is related to the line of work on \emph{twisting targets} for SMC. In particular, the distributions $\pihat_h$ introduced above are often referred to as (intermediate) target distributions, and prior work has explored how to choose (or learn) $\Vhat$ so as to bring $\pihat_h$ as close as possible to $\pistar_h$ for all $h \in [H]$ \citep{guarniero2017iterated,heng2017controlled,zhao2024twisted,lawson2022sixo}, which is a natural goal in light of \cref{thm:main-intro}.

Moreover, the first condition (on action-level coverage) is related to the extensive literature on changing the proposal distribution $\pirefs[h]$ to better approximate the \emph{locally optimal proposal distribution} \citep{naesseth2019highdim,doucet2000sequential}, which is defined as the distribution $\pihat_{h+1}(a_{h+1} = \cdot \mid a_{1:h})$. In the event that $\pihat_{h+1} \approx \pistar_{h+1}$, then the locally optimal proposal distribution has action-level coverage close to $1$, the smallest possible value. Thus, as long as $\Vhat$ is sufficiently accurate, modifying $\pirefs$ so as to approximate the locally optimal proposal is a way to decrease action-level coverage.\footnote{Indeed, using $\pihat_{h+1}$ as the proposal and appropriately modifying the weights in SMC yields a variant with improved convergence properties (see \cref{sec:dmc} and \cref{sec:dmc-restart}).} %
 Summarizing, \textbf{our result helps to unify the existing literature on variants of SMC by showing that they are effectively minimizing the key quantities \cref{thm:main-intro}}.
 \end{remark}
 
 \paragraph{Contribution II: Beyond Sequential Monte Carlo} Can particle filtering algorithms beyond SMC achieve stronger guarantees? First, we show that if $\pihat_h$ is close to $\pistar_h$ in a stronger $L_\infty$ sense, then it is possible to achieve \emph{exponential decay} in sampling error (rather than polynomial decay, as in \cref{thm:main-intro}) by wrapping SMC in an outer rejection sampling loop (\cref{thm:smc-linf}). %
 
Second, we observe that even when the PRM is \emph{perfect}, and hence $\Cchi=0$, SMC requires $\Omega(\sqrt{H})$ particles to achieve non-trivial sampling accuracy (\cref{prop:SMC-lower-exact}), even though there is a simple one-particle algorithm for this special case \citep{rohatgi2025taming}. We show that a modified algorithm--- \emph{Sequential Monte Carlo with Rejection Sampling (\DMC, \cref{alg:dmc-implementable})}---avoids this pathology, recovering \cref{thm:main-intro} as well as the one-particle guarantee (\cref{thm:dmc-chisq}).

 \paragraph{Contribution III: Limits of particle filtering} A key limitation of SMC (and our modifications) is that even when $\piref$ has bounded action-level coverage (i.e. $\Cact=O(1)$) and the approximate value function $\Vhat$ is within a constant factor of the truth (and hence $\Cchi=O(1)$), our algorithm requires maintaining $\poly(H)$ particles and hence requires superlinear work. In \cref{thm:pf-lb-main} we show that fully avoiding this lower bound would require \emph{lookahead}: any \emph{myopic} method needs at least $\Omega(\log H/\log \log H)$ particles (\cref{thm:pf-lb-main}). Closing this gap is open.

\paragraph{Addendum: backtracking as a form of particle filtering} In \cref{app:coupling}, we show an interesting connection between particle filtering algorithms and the backtracking-based approach VGB \citep{rohatgi2025taming} --- previously, the only algorithm for inference-time steering of LLMs that was known to be robust to PRM errors. VGB is a (one-particle) Markov chain method that generates tokens but sometimes also \emph{deletes} the most recent token. We show that the execution of VGB can be \emph{coupled} with the execution of a certain particle filtering method, potentially providing a new perspective on the benefits of both methods.%

\subsection{Empirical Contributions: Does the Theory Predict the Performance of SMC in LLMs?}
We investigate empirically whether the quantities identified by \cref{thm:main-intro} indeed predict the performance of SMC on LLM sampling problems. First, we consider a simple ``prompt switching'' task in which the base model $\piref$ corresponds to sampling from an LLM given some prompt $\promptref$, and $\pistar$ corresponds to sampling from that LLM given another prompt $\promptstar$ (see \cref{sec:experiments} for details). By varying $\promptref, \promptstar$, and the PRM $\Vhat$, we can control  for each quantity in \cref{thm:main-intro}. First, in \cref{fig:mean-ratio-intro}, we consider a dataset where $\Vhat = \Vstar$ for all prompts in the dataset (so $\Cchi=0$) but the action level coverage varies, and observe a strong correlation between the action-level coverage (measured via a KL-divergence proxy) and the sampling error of SMC. Second, in \cref{fig:inv-phi-intro}, we consider a dataset where the action-level coverage is identical across all prompts but the degree of accuracy of $\Vhat$ with respect to $\Vstar$ varies, and again we observe a strong correlation between $\Dkl{\pistar_h}{\pihat_h}$ and the sampling error of SMC. (We report the KL divergence instead of the $\chi^2$ divergence since the latter requires prohibitively large sample complexity to estimate.)

\begin{figure}[t!]
\centering
\begin{subfigure}[b]{\icml{0.24}\arxiv{0.49}\textwidth}
\centering
\includegraphics[width=\textwidth]{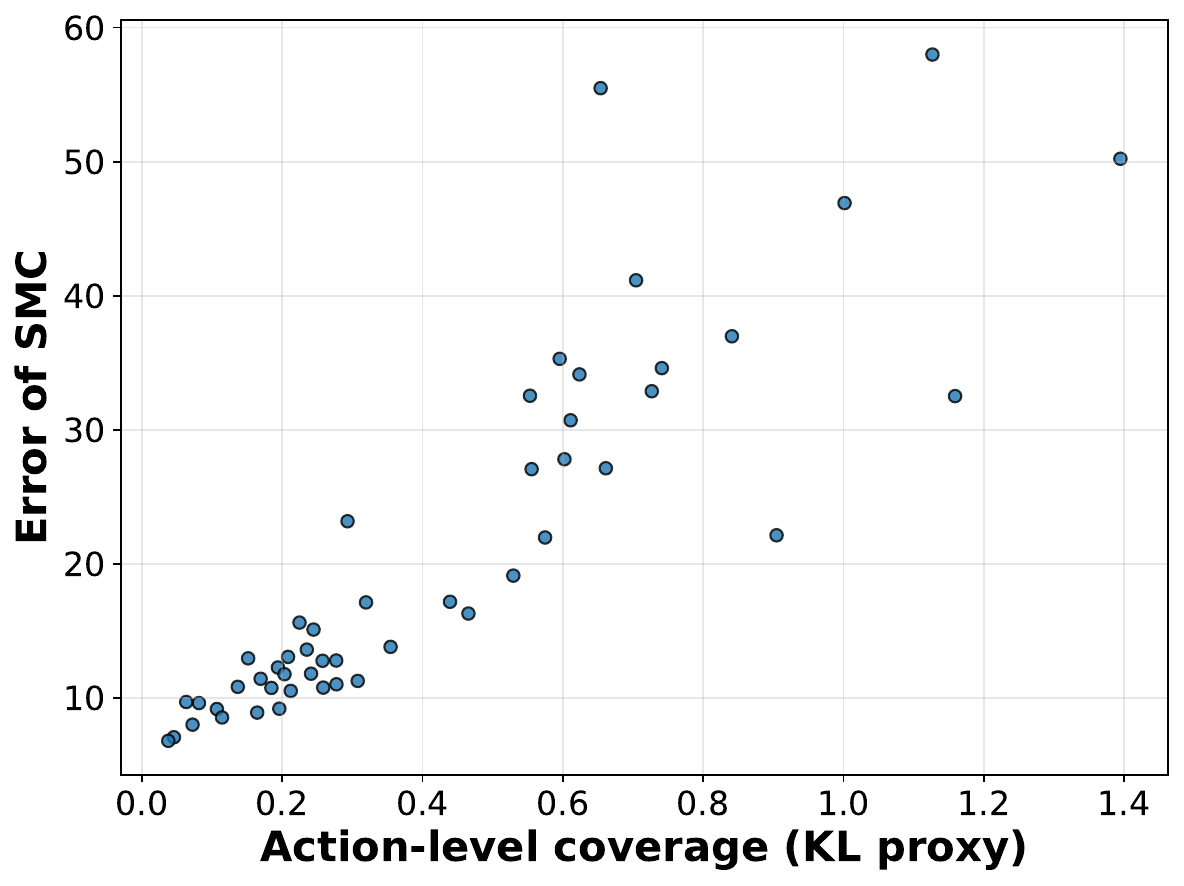}
\caption{Action-level coverage.}
\label{fig:mean-ratio-intro}
\end{subfigure}\hfill
\begin{subfigure}[b]{\icml{0.24}\arxiv{0.49}\textwidth}
\centering
\includegraphics[width=\textwidth]{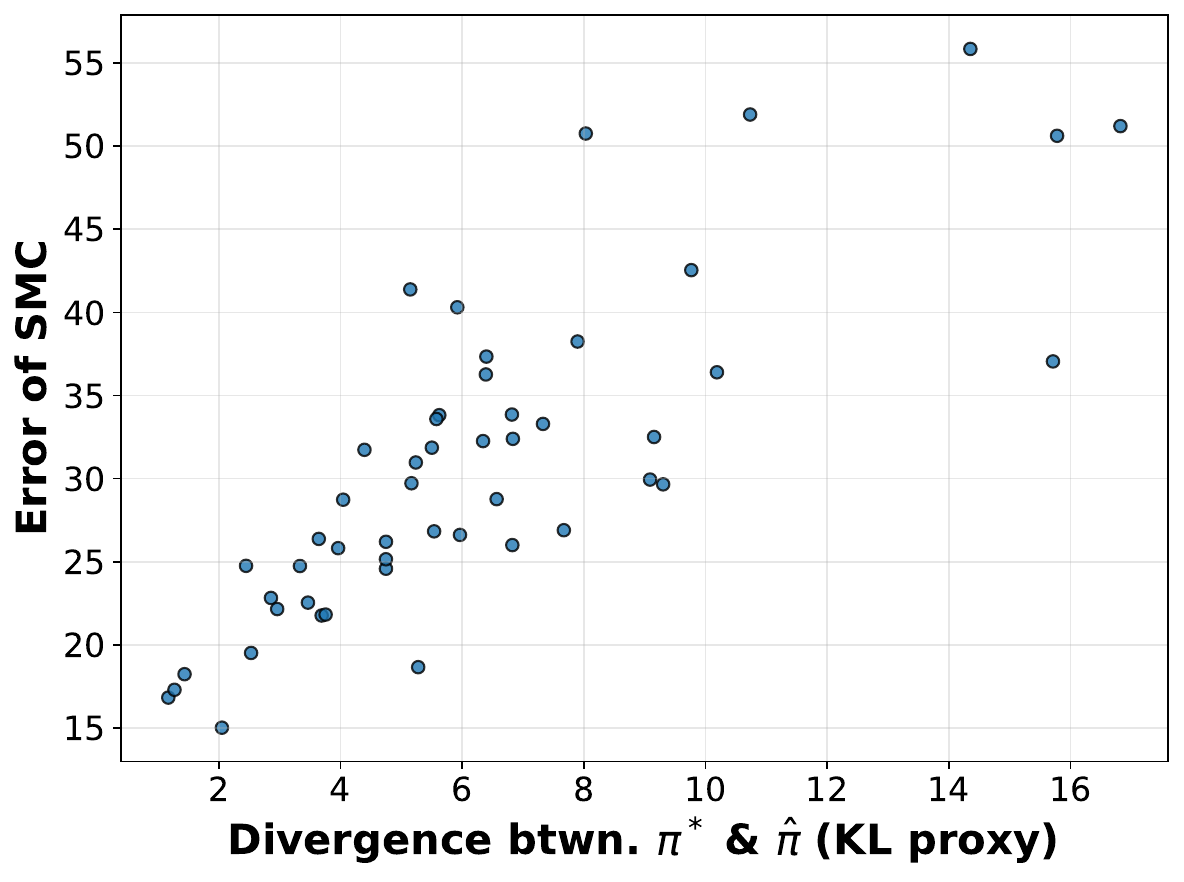}
\caption{$\Dkl{\pistar_h}{\pihat_h}$.}
\label{fig:inv-phi-intro}
\end{subfigure}
\caption{Empirical validation for our theory on the \emph{prompt-switching task} (see \cref{sec:experiments}). (a) We vary the action-level coverage (as measured by a KL-divergence proxy) across many prompts while keeping $\Vhat = \Vstar$ fixed, and observe that action-level coverage predicts the sampling error of SMC. (b) We fix $\piref = \pistar$ (so that action-level coverage is fixed) and observe that $\Dkl{\pistar_h}{\pihat_h}$ predicts the sampling error of SMC.}
\label{fig:empirical-validation}
\end{figure}

\paragraph{Evaluating SMC on math reasoning tasks} Finally, we consider the performance of SMC on challenging math reasoning benchmarks including AIME and Math500. In \cref{fig:smc-vs-bon}, we plot, for each problem in Math500, the accuracy of SMC with $N$ particles against that of Best-of-$N$ (with $N = 32$); each of these quantities was estimated by running the requisite algorithm 100 times per problem. As most points lie below the line $y = x$, we see that SMC  improves performance over Best-of-$N$ in a \emph{uniform sense over all problems}, which extends prior observations in \cite{puri2025probabilistic,chatziveroglou2025adecodingtokenefficientinferencescaling} which observed that SMC improves upon Best-of-$N$ in an \emph{on-average} sense over such benchmarks. Despite this strong evidence for superiority of SMC, as we discuss further in \cref{sec:smc-math-experiments}, the quality of the PRM $\Vhat$ (as measured by an estimate of the chi-square divergence between $\pihat_h$ and $\pistar_h$) does not seem to positively correlate with performance, as often \emph{larger divergence} leads to \emph{higher accuracy}. \textbf{\emph{We leave as an intriguing open question the development of a more refined framework that captures performance on such benchmarks.}}

\paragraph{Roadmap} \cref{sec:prelim} introduces the formal setting for our theoretical results. \cref{sec:particle-theory} contains our main theoretical results, and \cref{sec:experiments} contains our experimental results. %

\section{Preliminaries}\label{sec:prelim}

Our theory applies to a setting that generalizes \cref{sec:setting}. It should capture settings such as guidance for diffusion models, though we do not investigate such settings concretely. 

Let $\MX_0, \MX_1, \ldots, \MX_H$ be arbitrary sets, and let $\MX := \MX_0 \sqcup \MX_1 \sqcup \cdots \sqcup \MX_H$ be their disjoint union. Let $\piref = (\pirefs)_{h=0}^{H-1}$ be a collection of transition kernels $\pirefs : \MX_h \to \Delta(\MX_{h+1})$. Without loss of generality, assume that $\MX_0 = \{ \perp \}$ is a single element. The transition kernels induces a Markov chain $x_0 \to x_1 \to x_2 \cdots \to x_H$, where $x_{h+1} \sim \pirefs(\cdot \mid x_h)$ for $0 \leq h \leq H-1$; we drop the subscript $h$ when clear from context. We write $x_{1:H} \sim \piref$ to denote a sample from the Markov chain. Let $\pi_h \in \Delta(\MX_h)$ be the induced marginal distribution of $x_h$. 

\begin{remark}[Special case: autoregressive generation]\label{remark:autoregressive} In the specialization to LLMs discussed in \cref{sec:intro}, we would have $\MX_h = \cA^h$, and $\pirefs(\cdot \mid a_{1:h})$ is the transition kernel which samples $a_{h+1}$ (given $a_{1:h}$) and appends it to $a_{1:h}$. 
\end{remark}

As in the setup of \cref{sec:intro}, given a terminal reward function $r^\star : \MX_H \to \BR_{\geq 0}$, the \emph{value function} $\Vstar : \MX\to \BR_{\geq 0}$ is defined, for $x \in \MX_h$, by 
 $\Vstar(x) = \En_{x_{1:H} \sim \piref}[r^\star(x_H) \mid x_h = x]$. %
 Note that $\Vstar(x_H) = r^\star(x_H)$ for $x_H \in \MX_H$. 

For $h \in [H]$, define the distribution $\pistar_h \in \Delta(\MX_h)$ by $\pistar_h(x)\propto \pi_h(x)\cdot \Vstar(x)$. %
One may check that the distributions $\pistar_h$ are induced by the Markov chain with kernel 
\begin{align}
  \pistar(x_{h+1}\mid x_h):=\frac{\pirefs(x_{h+1}\mid x_h) \Vstar(x_{h+1})}{\Vstar_h(x_h)},\label{eq:pistar-kernel}
\end{align}
as $\sum_{x_h} \pistar_h(x_h) \pistar(x_{h+1}\mid x_h) = \pistar_{h+1}(x_{h+1})$.\footnote{We note the following slight abuse of notation: we use $\pistar$ with no subscript to denote the transition kernel in \cref{eq:pistar-kernel}, and $\pistar_h$ with a subscript to denote the corresponding intermediate \emph{distributions}.}

\paragraph{Partial Reward Model}
We extend the definition of the PRM to our more general setting in the natural way: the PRM is denoted $\Vhat : \MX \to \BR_{\geq 0}$, and satisfies $\Vhat(x_H)= r^\star(x_H)$ for $x_H \in \MX_H$. We define $\Zhat_h := \E_{x \sim \pi_h}[\Vhat(x)]$ for $h \in [H]$, and let $\pihat_h\in\Delta(\cX_h)$ be defined by $\pihat_h(x)=\pi_h(x)\cdot \Vhat(x)/\Zhat_h$.

\section{Theoretical Analysis of Particle Filtering Methods}
\label{sec:particle-theory}

\subsection{Simple Criteria for Success of SMC}\label{sec:SMC}

\begin{algorithm}
  \caption{Sequential Monte Carlo}\label{alg:smc}
\begin{algorithmic}
\State \textbf{Input:} Transition kernel $\piref$, value function $\Vhat$.
\State \textbf{Parameter:} Number of particles $N \geq 1$. 
\State Initialize $\nuhat_0=\dirac_{\perp}$.
\For{$h=1,\cdots,H$}
\State Sample $N$ particles $\tx_{h-1}\ind{1},\ldots,\tx_{h-1}\ind{N}\iidsim \nuhat_{h-1}(\cdot{})$.
\State Generate $x_h\ind{i}\sim \pirefs[h-1](\cdot\mid \tx_{h-1}\ind{i})$ independently for $i \in [N]$. 
\State Define weights $w_h\ind{i}=\frac{\Vhat(x_h\ind{i})}{\Vhat(\tx_{h-1}\ind{i})}$.
\State Define weighted empirical measure $\nuhat_h\ldef{} \frac{1}{W_h}\sum_{i=1}^{N} w_h\ind{i}\dirac_{x_h\ind{i}}$, where $W_h = \sum_{i=1}^{N} w_h\ind{i}$.
\EndFor
\State Define $\What_h := \prod_{i=1}^h \frac{W_i}{N}$.
\State \textbf{Output Option 1:} Output $x_H \sim \nuhat_H$. 
\State \textbf{Output Option 2:} With probability $\min\crl*{\frac{\What_H}{ 2\einf\Vhat(\perp)}, 1}$, output $x\sim \nuhat_H$, and otherwise restart the algorithm. 
\end{algorithmic}
\end{algorithm}

The first criterion is bounded \emph{action-level coverage}: 

\begin{assumption}\label{asmp:act-cov}
There exists parameter $\Cact\geq 1$ such that for all $h \geq 0$, $\Vstar_{h+1}(x_{h+1})\leq \Cact \Vstar_h(x_h)$ for any $x_{h+1}\in \supp(\piref(\cdot\mid x_h))$. 
\end{assumption}

We call this condition \emph{action-level coverage} because
$
    \frac{\Vstar(x_{h+1})}{\Vstar(x_h)}=\frac{\pistar(x_{h+1}\mid x_h)}{\pirefs(x_{h+1}\mid x_h)}
$
is exactly the density ratio between the conditional distributions $\pistar$ and $\pirefs$.\footnote{ We remark that \cref{asmp:act-cov}
can be relaxed to the \emph{average} sense (cf. \cref{thm:smc-coverage-full}).}

The second criterion is bounded $\chi^2$-divergences between $\pistar_h$ and $\pihat_h$ for all $h$. Quantitatively, the two criteria yield the following guarantee for SMC, which directly implies \cref{thm:main-intro}:%

\begin{theorem}
  \label{thm:smc-chisq}
Under \cref{asmp:act-cov}, SMC with $N$ particles achieves
\begin{align}
  \Dtv{\En[\nuhat_H]}{\pistar_H}\leq \sqrt{\frac{\Cact}{N}}\prn*{H+\sum_{h=1}^{H-1} \sqrt{\Dchis{\pistar_h}{\pihat_h}}}.
\end{align}
\end{theorem}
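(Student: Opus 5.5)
We analyze SMC through a telescoping (hybrid) argument over the layers $h=1,\dots,H$. Define the ideal one-step map $\Phi_h:\Delta(\MX_{h-1})\to\Delta(\MX_h)$ by
\[
\Phi_h(\nu)(x_h)\propto\sum_{x_{h-1}}\nu(x_{h-1})\,\pirefs[h-1](x_h\mid x_{h-1})\,\frac{\Vhat(x_h)}{\Vhat(x_{h-1})}.
\]
Then $\nuhat_h$ is a random $N$-sample self-normalized importance-sampling approximation of $\Phi_h(\nuhat_{h-1})$. Let $\Psi_h$ be the corresponding map in which $\Vhat$ is replaced by $\Vstar$; it sends $\pistar_{h-1}$ exactly to $\pistar_h$ by \cref{eq:pistar-kernel}. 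The idea is to compare $\En[\nuhat_H]$ with $\pistar_H$ by replacing, one layer at a time, the empirical step by the exact one. Rather than work with nonlinear maps, we prefer a ``genealogical'' description: the output $x_H\sim\nuhat_H$ has a law obtained by tracing back ancestors, and the per-step error is controlled in $\chi^2$-like terms.

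\textbf{Step 1 (per-step error of self-normalized IS).} Fix a (random) input measure $\nu=\nuhat_{h-1}$. Sampling $\tx\sim\nu$, then $x\sim\pirefs(\cdot\mid\tx)$, and reweighting by $w=\Vhat(x)/\Vhat(\tx)$ gives an empirical measure whose expectation differs from the target $\Phi_h(\nu)$. We will use the standard bound for self-normalized importance sampling:
\[
\Dtv{\En[\nuhat_h\mid\nu]}{\Phi_h(\nu)}\lesssim\frac{1}{N}\cdot\frac{\En[w^2]}{\En[w]^2},
\]
or, more carefully, a bound of order $\frac1N\chi^2$, whose square root appears after summing. Rather than take this route directly, we measure all errors against the \emph{true} target $\pistar_H$. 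For this we define a potential $G_h(x_h):=\Vstar(x_h)/\Vhat(x_h)$ and track $\En_{\nuhat_h}$ of test functions of the form $f\cdot\Vstar/\Vhat$. The key identity is that, with weights $\Vhat(x_h)/\Vhat(x_{h-1})$ and the kernel $\piref$, the quantity $\sum_i w_h\ind{i}G_h(x_h\ind{i})\,g(x_h\ind{i})$ is, conditionally, an unbiased estimate of $N\,\En_{\tx\sim\nuhat_{h-1}}[G_{h-1}(\tx)\,(Pg)(\tx)]$, where $P$ is the $\pistar$ kernel. This is a martingale/unbiasedness structure in the spirit of Feynman–Kac.

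\textbf{Step 2 (telescoping).} We write $\En[\nuhat_H](f)-\pistar_H(f)$ as a sum over $h$ of the errors incurred by normalizing with $W_h$ instead of its mean. Each term is a covariance between the normalization $W_h/N$ and a bounded function. By Cauchy–Schwarz, each such term is at most $\sqrt{\Var(\cdot)/N}$. The conditional variance of $w_h G_h$ under the step-$h$ proposal is bounded using \cref{asmp:act-cov}: we have $\Vstar(x_h)/\Vstar(x_{h-1})\le\Cact$, which gives a second moment at most $\Cact$ times (first moment) $\times\,(1+\text{density-ratio term})$. The density-ratio term is $\En_{\pistar_h}[\pistar_h/\pihat_h]=1+\Dchis{\pistar_h}{\pihat_h}$. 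Taking square roots then gives the per-layer contribution $\sqrt{\Cact/N}\,\bigl(1+\sqrt{\Dchis{\pistar_h}{\pihat_h}}\bigr)$. The $\chi^2$ term at $h=H$ vanishes because $\Vhat=r^\star=\Vstar$ on $\MX_H$. Summing over $h$ yields the claimed bound.

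\textbf{Main obstacle.} The hard part is handling the self-normalization and the resampling noise from earlier layers together. Errors in $\nuhat_{h-1}$ propagate forward, so we must ensure they contribute additively rather than multiplicatively. Our first attempt will be to pass to the unnormalized measures $\What_h\nuhat_h$, which are exactly unbiased for $\Zhat_h\pihat_h$. We then bound $\En[\nuhat_H]-\pistar_H$ via the correlation between $\What_H/\Zhat$ and the output, using $\En[\What_H]=\Vhat(\perp)$ and a second-moment bound on $\What_H$ obtained by the same layerwise variance computation. If that gives a product rather than a sum, we will instead perform the hybrid argument on conditional laws given $\mathcal{F}_{h-1}$.
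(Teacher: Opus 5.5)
\textbf{There is a genuine gap.} Your ingredients are the right ones: your Step~1 identity with $g\equiv 1$ is exactly the martingale $M_h=\What_h\,\En_{\nuhat_h}[\Vstar/\Vhat]$, with $M_0=Z$ and $M_H=\What_H$. But the step that makes the theorem work is missing, and the route you propose first is one that fails.

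In Step~2 you bound a conditional second moment computed under the \emph{random} measure $\nuhat_{h-1}$. You then simply identify the resulting ``density-ratio term'' with the population quantity $\En_{\pistar_h}[\pistar_h/\pihat_h]$, and nothing justifies that transfer. If you make it by unconditional Cauchy--Schwarz on your covariance terms, or by the second-moment bound on $\What_H$ in your ``first attempt,'' you get quantities like $\En\bigl[\What_{h-1}^2\,\En_{\nuhat_{h-1}}[(\Vstar/\Vhat)^2]\bigr]$. These are quadratic in $\What_{h-1}$, and $\chi^2$ divergences do not control them. The recursion closes only under an $L_\infty$-type condition, giving the ``product'' you anticipate. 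Indeed \cref{lem:smc-var-lower} exhibits an instance with $\Dchis{\pistar_h}{\pihat_h}\le 2$ and $\Cact=2$ where $\max_h\Var(\What_h)\ge (2^{\lfloor H/2\rfloor}-2)/N$. Your fallback (``hybrid argument on conditional laws'') is not specified enough to evaluate.

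The fix is to stay in $L^1$ and use only first moments of $\What$.
\begin{itemize}
\item From $\En[\What_H\nuhat_H(x)]=Z\pistar_H(x)$, you get $\Dtv{\En[\nuhat_H]}{\pistar_H}\le\frac{1}{2Z}\En|\What_H-Z|\le\frac{1}{2Z}\sum_h\En|M_h-M_{h-1}|$ by the triangle inequality.
\item Conditionally on the history, Cauchy--Schwarz gives $\En_{h-1}|M_h-M_{h-1}|\le\What_{h-1}\sqrt{\Delta_{h-1}}$, where $\Delta_{h-1}=\frac1N\En_{x_{h-1}\sim\nuhat_{h-1}}\bigl[\En[\Vstar(x_h)^2\mid x_{h-1}]/\Vhat(x_{h-1})^2\bigr]$.
\item The crucial move is the weighted Cauchy--Schwarz
\[
\En\bigl[\What_{h-1}\sqrt{\Delta_{h-1}}\bigr]\le\sqrt{\En[\What_{h-1}]\cdot\En[\What_{h-1}\Delta_{h-1}]}.
\]
Both factors are linear in the unnormalized measure $\What_{h-1}\nuhat_{h-1}$, so unbiasedness evaluates them exactly. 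They equal $\Zhat_{h-1}$ and $\frac1N\En_{\pi_{h-1}}\bigl[\En[\Vstar(x_h)^2\mid x_{h-1}]/\Vhat(x_{h-1})\bigr]$ respectively.
\item Using $\Vstar(x_h)\le\Cact\Vstar(x_{h-1})$, the product is at most $\frac{\Cact}{N}Z^2\bigl(1+\Dchis{\pistar_{h-1}}{\pihat_{h-1}}\bigr)$.
\end{itemize}

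Note the index: the layer-$h$ term carries the $\chi^2$ at layer $h-1$, because the weight's denominator is $\Vhat(x_{h-1})$ and \cref{asmp:act-cov} constrains $\Vstar$, not $\Vhat$. The vanishing term is the one at layer $0$, since $\MX_0$ is a singleton. Getting the layer-$h$ divergence, as you claim, would require controlling $\Vhat(x_h)/\Vhat(x_{h-1})$, i.e.\ \cref{asmp:act-cov-Vhat}.

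Also, $\En[\What_H]=Z$, not $\Vhat(\perp)$. This is consistent with your own remark that $\What_h\nuhat_h$ is unbiased for $\Zhat_h\pihat_h$.
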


We remark that bounding $\Dchis{\pistar_h}{\pihat_h}$ only requires \emph{average-case} closeness between $\Vhat$ and $\Vstar$ (\cref{fact:chisq-avg}). Existing theory for SMC \citep{schweizer2012non,zhu2026power} can recover an analogous bound that requires \emph{worst-case} closeness:

\begin{assumption}\label{asmp:L-inf}
  There exists parameter $\einf\geq 1$ such that $\einf^{-1}\leq \frac{\Vstar(x)}{\Vhat(x)}\leq \einf$ for any $x\in \cX$. 
\end{assumption}

While the proof technique of \citet{schweizer2012non} can relax this assumption to some extent, it cannot recover \cref{thm:smc-chisq} (see \cref{sec:compare-smc}). Moreover, as we show below, we can strengthen \cref{thm:smc-chisq} further to tolerate \emph{heavy-tailed errors} in $\Vhat$.

\subsubsection{Handling Tail Behavior}

While the $\chi^2$-divergences in \cref{thm:smc-chisq} provide an easy-to-interpret upper bound, $\Dchis{\pistar_h}{\pihat_h}$ is sensitive to the tail behavior of $\pihat_h$ with respect to $\pistar_h$ (equivalently, the tail behavior of $\Vhat$ relative to $\Vstar$); in the case of heavy-tailed errors, it can blow up and render \cref{thm:smc-chisq} vacuous. 

However, the goal of SMC is to produce a sample from an \emph{approximately} correct distribution, so the tail behavior should only appear as an error probability. This motivates the following notion of distributional closeness~\citep{chen2025coverage}, which can be bounded in terms of $\chi^2$-divergence but not vice versa—--in particular, it can be arbitrarily small even when $\Dchis{\pistar_h}{\pihat_h} = \infty$.

\begin{definition}\label{def:coverage}
For parameter $M\geq 1$, $h\in[H]$, 
we define
\begin{align}
  \Dcov[M]{\pistar_h}{\pihat_h}=\bbP_{x\sim \pistar_h}\prn*{ \frac{\pistar_h(x)}{\pihat_h(x)}\geq M }.
\end{align}
\end{definition}

\begin{theorem}
  \label{thm:smc-coverage}
Under \cref{asmp:act-cov}, SMC with $N$ particles achieves the following for any $M\geq 1$:
\begin{align}
  \Dtv{\En[\nuhat_H]}{\pistar_H}\leq H\sqrt{\frac{M\Cact}{N}}+\sum_{h=1}^{H-1} \Dcov[M]{\pistar_h}{\pihat_h}.
\end{align}
\end{theorem}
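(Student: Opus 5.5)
The plan is to rerun the telescoping (hybrid) argument behind \cref{thm:smc-chisq}. The $\chi^2$-divergence there should enter only through a single second-moment quantity at each intermediate step. I will replace that quantity by a truncated version, and pay for the truncation with an additive probability-of-bad-set term.

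\textbf{Step 1: the per-step recursion.} Write $\mu_h := \En[\nuhat_h]$, the law of a particle drawn from $\nuhat_h$. By the triangle inequality,
\begin{align}
\Dtv{\mu_H}{\pistar_H} \le \sum_{h=1}^{H} \Dtv{\mu_h}{\pistar(\cdot\mid \cdot)\circ \mu_{h-1}} + \text{(propagated error)}.
\end{align}
The propagated error is absorbed because Markov kernels contract TV and $\pistar_h = \pistar\circ\pistar_{h-1}$. I expect the proof of \cref{thm:smc-chisq} to show a one-step bound of the form
\begin{align}
\sqrt{\frac{\Cact}{N}\cdot \En_{x\sim \pistar_{h-1}}\brk*{\frac{\pistar_{h-1}(x)}{\pihat_{h-1}(x)}}} .
\end{align}
This is the error of a self-normalized importance-sampling step. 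The weights $\Vhat(x_h)/\Vhat(\tx_{h-1})$ are reweighted toward $\Vstar$, and their second moment is bounded by $\Cact$ (via \cref{asmp:act-cov}) times the density ratio of $\pistar_{h-1}$ to $\pihat_{h-1}$. That expectation equals $1+\Dchis{\pistar_{h-1}}{\pihat_{h-1}}$.

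\textbf{Step 2: truncation.} Fix $M$ and the bad set $B_h=\{x: \pistar_h(x)/\pihat_h(x)\ge M\}$, so that $\pistar_h(B_h)=\Dcov[M]{\pistar_h}{\pihat_h}=:\eps_h$. Let $\bar\pi_h := \pistar_h(\cdot\mid B_h^c)$. Then $\Dtv{\bar\pi_h}{\pistar_h}\le \eps_h$, and $\bar\pi_h/\pihat_h \le M/(1-\eps_h)$ pointwise.

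I will run the one-step comparison with $\bar\pi_{h-1}$ as the reference in place of $\pistar_{h-1}$. The importance-sampling error at step $h$ then has second moment at most $\Cact M$ (up to the harmless $1/(1-\eps)$ factor, which I can handle by assuming $\eps_h\le 1/2$, since otherwise the bound is trivial). This contributes $\sqrt{M\Cact/N}$ per step, hence $H\sqrt{M\Cact/N}$ in total. Swapping $\bar\pi_{h-1}$ back to $\pistar_{h-1}$ costs $\eps_{h-1}$ in TV. Kernel contraction stops this cost from compounding, so summing gives $\sum_{h=1}^{H-1}\eps_h$. No term arises at $h=0$ or $h=H$, since $\Vhat=\Vstar$ at $x_H$ and $\MX_0$ is a singleton.

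\textbf{Main obstacle.} The resampling step is nonlinear in $\nuhat_{h-1}$: $\mu_h$ is not a linear image of $\mu_{h-1}$. So the mixture decomposition $\pistar_{h-1}=(1-\eps)\bar\pi_{h-1}+\eps\,\pistar_{h-1}(\cdot\mid B)$ cannot simply be pushed through the algorithm.

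My first attempt is to work on the particle system directly. I will couple the $N$ particles at step $h-1$ with an idealized system in which each particle's ancestral law avoids $B_{h-1}$, and union bound on the event that any ``relevant'' particle lands in $B_{h-1}$. A naive union bound over $N$ particles would cost $N\eps$, which is too much. The charge must therefore be made only for the single selected output lineage, with probability $\eps_{h-1}$. This should work because, in the chi-square proof, the error is measured through the law of one output particle weighted by $\Vhat$-ratios, in which $B_{h-1}$ has weight exactly $\pistar_{h-1}(B_{h-1})$.

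If this coupling is awkward, the fallback is to modify the PRM itself. I would define $\wt V$ to equal $\Vhat$ off $B_h$ and a value making the density ratio $\le M$ on $B_h$, apply \cref{thm:smc-chisq}'s per-step bound to $\wt V$, and bound the TV discrepancy between SMC run with $\Vhat$ and with $\wt V$ by the weighted mass of $B_h$.
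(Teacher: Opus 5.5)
\textbf{Your Step 1 decomposition cannot yield the theorem, and the one-step bound you attribute to the proof of \cref{thm:smc-chisq} is false.} With an imperfect $\Vhat$, SMC does not track $\pistar_h$ at intermediate steps. As $N\to\infty$ one has $\nuhat_h\to\pihat_h$, and the mismatch with $\pistar_h$ is undone only at step $H$, where $\Vhat=\Vstar$.

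Here is a concrete instance. Take $H=2$, $\cX_1=\{0,1\}$, $\piref(\cdot\mid\perp)$ uniform, $\Vstar(1)=4/3$, $\Vstar(0)=2/3$, $\Vhat(1)=2/3$, $\Vhat(0)=4/3$, and a deterministic second step with $r^\star(x_1,\ast)=\Vstar(x_1)$. Then $\pistar_1/\pihat_1\le 2$, so with $M=3$ the theorem's right-hand side is $4/\sqrt{N}\to 0$, and indeed $\En[\nuhat_2]\to\pistar_2$. Yet $\Dtv{\En[\nuhat_1]}{\pistar_1}\to 1/3$. This contradicts your claimed bound $\sqrt{\Cact/N}$ at $h=1$, where you say no truncation term arises, and your hybrid sum stays near $2/3$.

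More generally, any argument that charges each step's deviation from $\pistar(\cdot\mid\cdot)\circ\En[\nuhat_{h-1}]$ and relies on kernel contraction pays PRM error that never decays in $N$. Replacing $\pistar_{h-1}$ by $\bar\pi_{h-1}$ does not change this. Your fallback with a clipped $\wt V$ moves the whole difficulty into an unproved claim: that the two SMC runs are TV-close up to the weighted mass of the $B_h$. A shared-randomness coupling pays whenever \emph{any} of the $N$ particles enters some $B_h$, which is your $N\eps$ problem again. Worse, all particles interact through the normalizers $W_h$, so one divergent particle perturbs every later resampling.

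The missing idea is to measure error through the unbiased normalizing-constant estimator rather than through marginal laws; this is also how \cref{thm:smc-chisq} is proved, with no comparison of successive marginals.
\begin{itemize}
\item Since $\En[\What_H\nuhat_H(x)]=Z\pistar_H(x)$ (\cref{lem:SMC-unbiased}), one gets $\Dtv{\En[\nuhat_H]}{\pistar_H}\le\frac{1}{2Z}\En\abs{\What_H-Z}$ (\cref{lem:SMC-TV}).
\item $M_h=\What_h\En_{\nuhat_h}[\Vstar/\Vhat]$ is a martingale from $M_0=Z$ to $M_H=\What_H$, and its increments vanish at $N=\infty$.
\item Each increment is $\What_{h-1}$ times a centered average of $N$ conditionally i.i.d.\ terms $\Vstar(x_h\ind{i})/\Vhat(\tx_{h-1}\ind{i})$.
\item Truncating these at $R=M\Cact Z/\Zhat_{h-1}$ via \cref{prop:MZ-inequality} splits the increment into a variance part and a tail part. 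After Cauchy--Schwarz and \cref{lem:SMC-unbiased}, the variance part contributes at most $Z\sqrt{M\Cact/N}$.
\item The $\What_{h-1}$-weighted unbiasedness turns the tail part into a $\pistar_{h-1}$-probability. Under \cref{asmp:act-cov}, $\Vstar(x_h)/\Vhat(x_{h-1})\ge R$ forces $\pistar_{h-1}(x_{h-1})/\pihat_{h-1}(x_{h-1})\ge M$, so the tail contributes at most $2Z\,\Dcov[M]{\pistar_{h-1}}{\pihat_{h-1}}$.
\end{itemize}
This weighting is exactly what charges $\eps_{h-1}$ rather than $N\eps_{h-1}$. You correctly flagged that as the crux but had no mechanism to realize it.
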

Let us compare the guarantee of \cref{thm:smc-coverage} with that of \cref{thm:smc-chisq}: the coverage terms $\Dcov[M]{\pistar_h}{\pihat_h}$ in \cref{thm:smc-coverage} play the role of the chi-square divergences $\Dchis{\pistar_h}{\pihat_h} $ in \cref{thm:smc-chisq}, capturing the amount by which $\pihat_h$ deviates from $\pistar_h$. A key difference, though, is that for any fixed value of $M$, the TV error between $\En[\nuhat_H]$ and $\pistar_H$ in \cref{thm:smc-coverage} does not approach $0$ as $N \to \infty$, i.e., we pay for the mass $\Dcov[M]{\pistar_h}{\pihat_h}$ \emph{additively}. 

\subsection{Beyond SMC: Faster Convergence under $L_\infty$-condition}

Below, we show that a simple modification to SMC (Output Option 2 of \cref{alg:smc}) yields exponential (rather than polynomial) convergence when $\Vhat$ and $\Vstar$ are close in a worst-case sense (\cref{asmp:L-inf}).

\begin{theorem}
  \label{thm:smc-linf}
Under \cref{asmp:act-cov,asmp:L-inf}, \cref{alg:smc} with \textbf{Output Option 2} samples from a distribution $\mu$ that satisfies $\Dtv{\mu}{\pistar_H}\leq \delta$,
as long as $N \geq \Omega(H\einf^4 \Cact^2 \log(\einf H/\delta))$, using a number of outer loop steps that is bounded by $O(\einf^2 \log(1/\delta))$.
\end{theorem}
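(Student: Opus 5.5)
The plan is to read Output Option 2 as a rejection sampler whose proposal is the law of one SMC run. The key identity is the classical unbiasedness of the SMC normalizing-constant estimator, in its strong form:
\[
\En\brk*{\What_H\,\nuhat_H(x)} = \Vhat(\perp)\,\pi_H(x)\Vhat(x)/\Vhat(\perp)\cdot\frac{1}{1}\;=\;\pi_H(x)\, r^\star(x).
\]
This says the $\What_H$-weighted output is exactly proportional to $\pistar_H$, since $\Vhat=r^\star$ on $\MX_H$. I will prove it by induction over $h$. The induction uses $\En[\tfrac{W_h}{N}\nuhat_h(f)\mid \mathcal F_{h-1}] = \nuhat_{h-1}\big(\pirefs[h-1](\Vhat f)/\Vhat\big)$, and the factors of $\Vhat$ telescope.

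Write $p(x)=\min\{\What_H/(2\einf\Vhat(\perp)),1\}$ for the acceptance probability. If the $\min$ were never active, the accepted output would have law exactly proportional to $\En[\What_H\nuhat_H]\propto\pistar_H$. Each trial would then accept with probability $\En[\What_H]/(2\einf\Vhat(\perp)) = Z^\star/(2\einf\Vhat(\perp))\geq 1/(2\einf^2)$, using \cref{asmp:L-inf} at $\perp$. That gives $O(\einf^2\log(1/\delta))$ outer loops to terminate with failure probability $\delta$, and truncating after that many loops costs only $\delta$ in TV.

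The error therefore comes entirely from truncation, namely the event $\What_H > 2\einf\Vhat(\perp)$. Its contribution to the unnormalized law is at most $\En[\What_H\ind{\What_H>2\einf\Vhat(\perp)}]/\En[\What_H]$ in TV (up to the normalization factor, which changes it by at most a factor of 2). So the main step is a concentration bound on the product $\What_H/\Vhat(\perp)=\prod_h W_h/N$ around its mean $Z^\star/\Vhat(\perp)\le\einf$, showing that exceeding $2\einf$ is exponentially rare.

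For that bound, I will write $\prod_h W_h/N$ as $\prod_h(1+\epsilon_h)\cdot m_h$. Here $m_h=\En[W_h/N\mid\mathcal F_{h-1}]=\nuhat_{h-1}(\pirefs\Vhat/\Vhat)$, and the $\prod_h m_h$ part telescopes into a martingale structure. Under \cref{asmp:act-cov,asmp:L-inf}, each weight satisfies $w_h\ind{i}\le \einf^2\Cact$ and $\pirefs\Vhat/\Vhat\geq \einf^{-2}$. The relative fluctuations $\epsilon_h$ are then averages of $N$ conditionally independent bounded terms, so they are sub-Gaussian with variance proxy $O(\einf^4\Cact^2/N)$ by Hoeffding. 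I will bound $\En[\What_H^2]/\En[\What_H]^2\le\prod_h(1+O(\einf^4\Cact^2/N))$ by iterated conditioning, possibly using an exponential-moment version to get tails. Taking $N\gtrsim H\einf^4\Cact^2\log(\einf H/\delta)$ makes $\log\What_H-\log\En\What_H$ concentrated at scale $\sqrt{H\einf^4\Cact^2/N}\ll 1/\sqrt{\log(\einf H/\delta)}$. A union bound over the $H$ steps then makes $\Pr(\What_H>2\einf\Vhat(\perp))$, and its weighted version via Cauchy–Schwarz, at most $\delta/\einf$.

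I expect this concentration of the product to be the main obstacle. The difficulty is that the $m_h$ are themselves random, since they depend on $\nuhat_{h-1}$. I would handle this by applying the multiplicative bound conditionally step by step on the martingale $\prod_{i\le h}(W_i/N)\cdot\nuhat_h(\Vstar/\Vhat)$, after checking that it is indeed a martingale.
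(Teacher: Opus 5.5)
Your proposal is correct once you run the concentration argument on the martingale you name at the end. Its skeleton matches the paper's: the unbiasedness identity $\En[\What_H\nuhat_H(x)]=Z\pistar_H(x)$, the rejection-sampling accounting with acceptance rate at least $1/(2\einf^2)$, and the martingale $M_h=\What_h\En_{\nuhat_h}[\rho]$ with $\rho=\Vstar/\Vhat$, $M_0=Z$ and $M_H=\What_H$.

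The real difference is how $\What_H$ is concentrated and how that enters the TV bound.
\begin{itemize}
\item \textbf{Paper's route.} It works additively. $M_h-M_{h-1}$ is conditionally sub-Gaussian with proxy $\einf^4\Cact^2M_{h-1}^2/N$. A Freedman-type inequality, a union bound over $h$, and an induction then keep every $M_h$ within $Z/2$ of $Z$ with probability $1-\delta$. It then lower-bounds $\phat(X)$ using only the \emph{probability} of the complementary event.
\item \textbf{Your route.} You work multiplicatively: $M_h=M_{h-1}(1+\epsilon_h')$. Here $1+\epsilon_h'$ is an average of $N$ conditionally i.i.d.\ terms $\Vstar(x_h\ind{i})/\bigl(\Vhat(\tx_{h-1}\ind{i})\,\En_{\nuhat_{h-1}}[\rho]\bigr)\in[0,\Cact\einf^2]$ with conditional mean $1$. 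So $(1+\epsilon)^p\le e^{p\epsilon}$ and Hoeffding's lemma iterate to $\En[M_H^p]\le Z^p\exp\bigl(Hp^2\Cact^2\einf^4/(8N)\bigr)$.
\end{itemize}

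What your route buys:
\begin{itemize}
\item It gives the tail of $\What_H$ in one shot, with no induction and no union bound over steps. This even removes the $\log H$ factor.
\item It also gives $\En[\What_H^2]=O(Z^2)$, so Cauchy--Schwarz bounds the \emph{weighted} truncation mass $\En\bigl[\What_H\indic\crl{\What_H>2\einf\Vhat(\perp)}\bigr]$.
\item That weighted quantity, not the probability of the bad event, is what governs the TV error. The paper's final step tacitly needs it, and your argument supplies it explicitly.
\end{itemize}

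One correction to your first decomposition. The proxy $O(\einf^4\Cact^2/N)$ is right for $\epsilon_h'$, but not for your initial $\epsilon_h$ defined relative to $m_h=\En[W_h/N\mid\mathcal{F}_{h-1}]$. With the bounds you quote ($w_h\ind{i}\le\einf^2\Cact$, $m_h\ge\einf^{-2}$), Hoeffding only gives $\einf^8\Cact^2/N$. Moreover, $\prod_h m_h$ does not telescope, so iterated conditioning on $\prod_h W_h/N$ does not close. Drop that decomposition and state everything for $M_h$ from the outset.
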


The basic idea is to wrap SMC in an outer rejection sampling loop. This qualitatively matches the convergence rate of the backtracking-based method VGB of \citet{rohatgi2025taming}, which requires the same assumptions. %

\subsection{Beyond SMC: Near-perfect PRM}\label{sec:dmc}

Even when $\Vhat=\Vstar$ (i.e. the PRM is perfect), the upper bound in \cref{thm:smc-chisq} still scales as $\frac{H}{\sqrt{N}}$. In fact, this is a fundamental limitation of SMC, not an artifact of the analysis: even when $\Vhat=\Vstar$ and $\Cact=2$, SMC cannot achieve $o(1)$ sampling error without at least $N \geq \Omega(\sqrt{H})$ particles (\cref{prop:SMC-lower-exact}).\footnote{We leave the problem of determining the optimal $H$-dependence as an open question.} Note that $\Vhat=\Vstar$, the one-particle algorithm which simply samples $x_{h} \sim \pihat(x_h \mid x_{h-1})$ succeeds in exactly sampling from $\pistar$, so there is indeed hope to improve the upper bound of $H/\sqrt{N}$.

\begin{algorithm}[t]
  \caption{\DMCname}\label{alg:dmc-implementable}
\begin{algorithmic}
\State \textbf{Input:} Transition kernel $\piref$, value function $\Vhat$. 
\State \textbf{Parameter:} initial sample size $N\geq 1$, parameter $\eta\geq1$. 
\State Initialize $\MS_0$ to be $N$ copies of $ \perp$.
\For{$h=1,\cdots,H$}
\State Initialize a multi-set $\cS_h=\crl{}$.
\While{$|\MS_h| < N$}
\State Sample $x_{h-1}\sim \mathrm{Unif}(\cS_{h-1})$.
\State Sample $x_h\sim \pirefs[h-1](\cdot\mid x_{h-1})$.
\State With probability $\frac{\Vhat(x_h)}{\eta\Vhat(x_{h-1})}$, update $\cS_h\leftarrow \cS_h \cup \crl{x_h}$.
\EndWhile
\EndFor
\State \textbf{Output:} $x \sim \nuhat_H=\Unif(\MS_H)$. 
\end{algorithmic}
\end{algorithm}

To mitigate this issue, we introduce \DMCname (\DMC, \cref{alg:dmc-implementable}). 
Intuitively, \DMC fixes the issue that, in order to sample new particles, SMC \emph{normalizes} the weighted empirical measure, which introduces interference between particles. \DMC instead uses rejection sampling, so that regardless of the other particles, the conditional distribution $x_h\mid{}x_{h-1}$ is 
\begin{align}
    \textstyle
    \pihat(x_h\mid{}x_{h-1})\propto \piref(x_h\mid{}x_{h-1})\Vhat(x_h).
\end{align}
This can be regarded as a refinement of the base proposal $\piref(\cdot\mid{}x_{h-1})$ with the knowledge of $\Vhat$.
In particular, when $N=1$, \DMC reduces to \emph{action-level rejection sampling} \citep{yang2021fudge}, which, due to \cref{eq:pistar-kernel}, is an exact sampler when $\Vhat=\Vstar$ (see \cref{sec:comparison} for the definition).
We analyze \DMC under the following action-level coverage condition, which we view as conceptually equivalent to \cref{asmp:act-cov} (and formally equivalent, up to a factor of $\einf^2$ under the $L_\infty$ condition of \cref{asmp:L-inf}). In the event that we make the weaker assumption of controlling chi-square divergence between $\pistar$ and $\pihat$ (as in \cref{thm:dmc-chisq} below), \cref{asmp:act-cov-Vhat} is needed in order to ensure that a bounded value of the parameter $\eta$ in \cref{alg:dmc-implementable} suffices. %

\newcommand{\Cacthat}{\wh{C}_{\mathsf{act}}}
\begin{assumption}\label{asmp:act-cov-Vhat}
There exists parameter $\Cacthat\geq 1$ such that for all $h \geq 0$, 
$\Vhat_{h+1}(x_{h+1})\leq \Cacthat \Vhat_h(x_h)$ for any $x_{h+1}\in \supp(\piref(\cdot\mid x_h))$. 
\end{assumption}

\begin{theorem}\label{thm:dmc-chisq}
Under \cref{asmp:act-cov-Vhat}, suppose that \DMC (\cref{alg:dmc-implementable}) is instantiated with $\eta\geq \Cacthat$. Then
\DMC with $N$ particles achieves
\begin{align}
  \Dtv{\En[\nuhat_H]}{\pistar_H}\leq \frac{1}{\sqrt{N}}\sum_{h=1}^H \sqrt{\Dchis{\pistar_h}{\pihat_h}},
\end{align}
and has time complexity $O(NH\eta)$ in expectation.
\end{theorem}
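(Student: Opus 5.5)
Our approach is to analyze \DMC as an exact sampler for $\pihat(\cdot\mid x_{h-1})$ run from an empirical initial distribution. We then propagate TV error layer by layer via a change-of-measure argument.

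\textbf{Step 1: the structure of each layer.} Since $\eta\geq\Cacthat$, the acceptance probability $\Vhat(x_h)/(\eta\Vhat(x_{h-1}))$ lies in $[0,1]$, by \cref{asmp:act-cov-Vhat}. Conditionally on $\cS_{h-1}$, each accepted particle is therefore an independent draw from
\begin{align}
\nu_h(x_h)\propto \frac{1}{N}\sum_{y\in\cS_{h-1}} \frac{\pirefs[h-1](x_h\mid y)\Vhat(x_h)}{\Vhat(y)}.
\end{align}
Equivalently, we pick $y$ with probability proportional to $\Vhat(y)^{-1}\sum_{x'}\pirefs[h-1](x'\mid y)\Vhat(x')$ and then draw $x_h\sim\pihat(\cdot\mid y)$. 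This layer mixture is the same kind of object as the SMC reweighting, but the weight is now the one-step ratio $\bar V(y)/\Vhat(y)$, where $\bar V(y):=\E_{\pirefs}[\Vhat(x')\mid y]$. The $N$ accepted particles are i.i.d.\ given $\cS_{h-1}$. The runtime claim follows directly: the expected acceptance probability is $\bar V(y)/(\eta\Vhat(y))$, and we expect to bound its reciprocal on average by $\eta$ by pairing it with the uniform choice of $y$.

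\textbf{Step 2: hybrid/telescoping decomposition.} We define the ideal Markov chain $\pistar$ and compare $\En[\nuhat_H]$ to $\pistar_H$ by a chain of hybrids. Hybrid $k$ runs the true kernel $\pistar$ exactly for steps $1,\dots,k$ and runs \DMC from layer $k$ on, initialized with $N$ i.i.d.\ samples from $\pistar_k$. The TV error is then at most the sum over $h$ of the one-step errors incurred by replacing exact samples from $\pistar_{h-1}$ with \DMC's layer-$h$ resampling. The TV contraction of the Markov operators, which are linear in the initial empirical measure in expectation, lets us push each one-step error forward to time $H$ without amplification.

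\textbf{Step 3: one-step error, the main obstacle.} Suppose $\cS_{h-1}=\{y_1,\dots,y_N\}$ with $y_i\iidsim\pistar_{h-1}$. The law of the new particle is a self-normalized importance-sampling estimate. We will show that its expectation is within $\sqrt{\Dchis{\pistar_h}{\pihat_h}/N}$ of $\pistar_h$ in TV. To do so, we write the output as $\sum_i w_i Q(\cdot\mid y_i)/\sum_i w_i$ and compare it with the unnormalized version divided by $\E[w]$. The TV gap between normalized and unnormalized estimators is at most $\E\abs{\frac1N\sum_i w_i/\E w-1}\leq\sqrt{\Var(w/\E w)/N}$. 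The relevant variance is that of the density ratio between $\pihat_h$ and $\pistar_h$, pulled back to layer $h-1$. By convexity/Jensen (data processing for $\chi^2$ under the kernel), this variance should be bounded by $\Dchis{\pistar_h}{\pihat_h}$. Getting the normalized weights to line up exactly so that the unnormalized mixture equals $\pistar_h$, with no residual bias term, is the delicate point. If the unnormalized mixture does not reproduce $\pistar_h$ exactly, we would instead assign the weights as $\pistar_h/\pihat_h$-type ratios via \cref{eq:pistar-kernel} and absorb the discrepancy into the same Cauchy--Schwarz step. Summing the per-step bounds over $h$ then gives $\frac{1}{\sqrt N}\sum_h\sqrt{\Dchis{\pistar_h}{\pihat_h}}$.
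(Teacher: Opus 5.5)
\textbf{There is a genuine gap in Steps 2--3.} The intermediate layers of \DMC track $\pihat_k$, not $\pistar_k$. A hybrid that swaps in exact $\pistar_k$-samples layer by layer therefore incurs errors that do not shrink with $N$.

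\emph{Step 3 fails already at $h=1$.} Here $\cS_0=\{\perp,\dots,\perp\}$ is trivially an i.i.d.\ sample from $\pistar_0$, and layer $1$ returns exact i.i.d.\ draws from $\pihat_1$. So the one-step error of Step 3 is $\Dtv{\pihat_1}{\pistar_1}$, not $O(\sqrt{\Dchis{\pistar_1}{\pihat_1}/N})$.

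\emph{The hybrids carry an $N$-independent bias.} Write $\rho=\Vstar/\Vhat$. Running \DMC from i.i.d.\ $\pistar_k$-samples yields, as $N\to\infty$, an output proportional to $\sum_y\pistar_k(y)\rho(y)\,\pistar(x_H\mid x_k=y)$. (The intermediate $\Vhat$'s telescope, and $\Vhat=r^\star$ at layer $H$.) This is a $\rho$-tilt of $\pistar_H$. \DMC's layerwise deviations from $\pistar_k$ cancel only at layer $H$, which a data-processing telescoping cannot capture.

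\emph{A concrete instance.} Take $H=2$, $x_1\in\{a,b\}$ uniform under $\piref$ with deterministic continuations, $r^\star\equiv1$, $\Vhat(a)=2$, and every other value of $\Vhat$ equal to $1$. Then $\Cacthat=2$, $\Dchis{\pistar_1}{\pihat_1}=1/8$, and $\Dchis{\pistar_2}{\pihat_2}=0$. As $N\to\infty$, \DMC outputs branch $a$ with probability tending to $1/2$, while your Hybrid $1$ does so with probability tending to $1/3$. Your telescoping sum therefore tends to $1/3$, while the theorem's bound tends to $0$.

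\emph{The propagation claim of Step 2 is also false as stated.} Each layer self-normalizes by $W_h=\frac1N\sum_{y\in\cS_{h-1}}\bar V(y)/\Vhat(y)$. Hence $\cS_k\mapsto\En[\nuhat_H\mid\cS_k]$ is not linear in the empirical measure: in the example, $A$ copies of $a$ and $N-A$ of $b$ give $\frac{A/2}{A/2+N-A}$. Data processing then only transports TV between laws of whole $N$-particle configurations, not the one-particle error that Step 3 targets.

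\textbf{The missing idea is to carry the normalizers along} rather than compare to $\pistar_k$ at intermediate layers. With $\What_h=\prod_{\ell\le h}W_\ell$, the unnormalized measures propagate without bias (\cref{lem:DMC-unbiased}). This gives $\En[\What_H\nuhat_H(x)]=\En[\What_H]\,\pistar_H(x)$ and hence $\Dtv{\En[\nuhat_H]}{\pistar_H}\le\frac{1}{2Z}\En\abs{\What_H-Z}$ (\cref{lem:DMC-TV}).

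The paper then telescopes along the martingale $M_h=\What_h\En_{\nuhat_h}[\rho]$. Its increments $\What_h(\En_{\nuhat_h}[\rho]-\En_{\muhat_h}[\rho])$ are empirical-mean fluctuations of $N$ conditionally i.i.d.\ draws from $\muhat_h$ (your Step 1). Their conditional second moment is at most $\What_h^2\Var_{\muhat_h}(\rho)/N$. Cauchy--Schwarz together with unbiasedness gives $\En\abs{M_h-M_{h-1}}\le Z\sqrt{\Dchis{\pistar_h}{\pihat_h}/N}$. The factor $\rho$ inside $M_h$ is exactly the correction for the tilt that breaks your hybrids.

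\textbf{Your runtime sketch does not close either.} Given $\cS_{h-1}$, the expected number of proposals per accepted particle is $\eta/W_h$. \cref{asmp:act-cov-Vhat} bounds $\Vhat(x_h)/\Vhat(x_{h-1})$ only from above, so $W_h$ can be arbitrarily small. The paper's appendix gives no argument for this part either. It needs an extra lower bound on $\bar V/\Vhat$; for example, it holds with $W_h\equiv1$ when $\Vhat=\Vstar$.
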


In particular, suppose $\Vhat$ is near-accurate so that $\Dchis{\pistar_h}{\pihat_h}\leq \eps^2$ for all $h\in[H]$. Then \DMC achieves \icml{$\Dtv{\En[\nuhat_H]}{\pistar_H}\leq \frac{\eps H}{\sqrt{N}}.$}
\arxiv{
\begin{align}
    \Dtv{\En[\nuhat_H]}{\pistar_H}\leq \frac{\eps H}{\sqrt{N}}.
\end{align}}
If $\eps \leq \frac{1}{H}$, then \DMC achieves sampling error $o(1)$ with only $N=O(1)$ particles. This is in sharp contrast to SMC, where by \cref{prop:SMC-lower-exact}, $N\geq \Omega(\sqrt{H})$ particles are necessary even when $\Vhat=\Vstar$ and $\Cact=O(1)$ (note that in this setting $\Cacthat=\Cact = O(1)$ as well, so \DMC does indeed succeed).

Finally, \DMC admits analogues of \cref{thm:smc-coverage} (heavy-tailed errors) and \cref{thm:smc-linf} (fast-rate convergence); see \cref{thm:dmc-coverage} and \cref{thm:dmc-linf-full}, respectively.

\subsection{Limits of Particle Filtering Methods}\label{sec:limits}

A key downside of SMC (and the discussed variants) is that, when the PRM is imperfect, avoiding error amplification requires the number of particles $N$ to grow at least linearly in the horizon $H$. In this section, we show that some horizon dependence is necessary for all \emph{myopic} particle filtering methods. Informally, a myopic particle filtering method is any algorithm that maintains a set of particles $\cS_h$ at each step $h$, and does not use the PRM data from later steps $k>h$ to determine the set $\cS_h$. SMC and \DMC are two examples of myopic methods; see \cref{app:pf-lb} for a formal definition.

We prove that even with mild (constant-factor) imperfections in the PRM, any myopic particle filtering algorithm requires nearly $\Omega(\log H)$ particles to even obtain non-trivial \emph{coverage} of $\pistar_H$:

\begin{theorem}\label{thm:pf-lb-main}
Let $N(H) := \log(H)/(4\log \log(H))$. There is \textbf{no myopic particle filtering algorithm} $\Alg$ with the following guarantee for all sufficiently large $H$: $\Alg$ uses at most $N(H)$ particles, and under \cref{asmp:act-cov,asmp:L-inf} with $\Cact = 2$ and $\einf = e^3$, the output distribution $\nu$ of $\Alg$ satisfies $\nu(\cE) \geq H^{-1/5}$
for all events $\cE \subset \cX_H$ with $\pistar_H(\cE) \geq 1/2$.
\end{theorem}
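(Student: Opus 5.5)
We argue via a hard family of instances and a counting (or information-theoretic) argument that exploits myopia. The plan is to build a family of instances $\{(\piref, r^\star_\theta, \Vhat_\theta)\}_\theta$ on a binary tree ($\cA=\{0,1\}$, $\piref$ uniform). Each instance satisfies \cref{asmp:act-cov} with $\Cact=2$ and \cref{asmp:L-inf} with $\einf=e^3$, and on each instance $\pistar_H$ concentrates at least half its mass on a set $\cE_\theta$. The key design principle is that the PRM values seen up to step $h$ must be nearly uninformative about $\theta$: decisive information should only be revealed late. A myopic algorithm, which fixes $\cS_h$ without looking at later PRM values, must then essentially commit its $N$ particles blindly.

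The concrete construction I would try uses $K\approx \log H/\log\log H$ ``decision rounds'' placed at depths $h_1<h_2<\cdots<h_K$ with geometrically growing gaps, say gaps of length $L\approx \log H$ (or $(\log H)^{j}$). At each decision round the correct branch is one of several options hidden by $\theta$. The true value function $\Vstar$ multiplies by a constant factor on the correct branch, which keeps $\Cact=2$. The PRM $\Vhat_\theta$ is designed to misreport by a bounded factor $\le e^3$, so that at the moment of the decision the wrong and correct branches look identical. Their difference is revealed only gradually over the subsequent gap, with cumulative log-ratio capped at $3$ so that \cref{asmp:L-inf} holds. Because \cref{asmp:L-inf} forbids unbounded error, the PRM must eventually agree with the truth. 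The point is that a myopic algorithm only learns which particles were wrong after pruning decisions at later depths have already been made from the particles it holds.

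The key steps, in order, are as follows.
\begin{enumerate}
\item Verify the two assumptions for the construction.
\item Show that conditioned on the algorithm's history up to depth $h_j$, the particle set $\cS_{h_j}$ is independent of (or nearly independent of) the hidden choice $\theta_j$. This step uses myopia together with the symmetry of $\Vhat$ up to that depth.
\item Conclude that each particle survives on the correct branch with probability about $1/m$, where $m$ is the number of options per round. With $N$ particles and $K$ rounds, once all particles land wrong, the mass recoverable in $\cE_\theta$ decays.
\end{enumerate}

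Quantitatively, I aim to show that the probability that any particle follows the correct path through all rounds is at most something like $(N/m)^{K'}$ for an appropriate number $K'$ of rounds. Choosing $m\approx \log H$ and $N\le \log H/(4\log\log H)$ should push this below $H^{-1/5}$. Since $\cE_\theta$ is the set of correct paths, which carries at least half of $\pistar_H$, this gives $\nu(\cE_\theta)<H^{-1/5}$ for some $\theta$. The final step is averaging over $\theta$ uniformly and using a Yao-style argument to pick a bad deterministic $\theta$.

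The main obstacle is the construction itself. Making the PRM indistinguishable across branches at decision time while keeping $\Vhat/\Vstar\in[e^{-3},e^3]$ everywhere and $\Cact=2$ forces delicate bookkeeping of how the error is ``paid back'' over the gap. It also limits how many options per round can be hidden, and this is exactly what produces the $\log H/\log\log H$ rather than a polynomial bound. I would first try a depth-recursive construction in which each round's gap hosts a rescaled copy of the next round. I would then verify the $e^3$ budget by showing the per-round log-errors telescope rather than accumulate.
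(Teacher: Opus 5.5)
There is a genuine gap, and it is in the construction you flag as the main obstacle. Under \cref{asmp:L-inf} with $\einf=e^3$, the correct branch cannot both be hidden from the PRM at decision time and be strongly favored by $\pistar$. Your step 2 needs $\cS_{h_j}$ to be independent of $\theta_j$. By myopia, this forces the PRM values on the $m$ options not to depend on which option is correct. Each $\Vhat(o)$ must lie within a factor $e^3$ of $\Vstar(o)$ in every instance. So the value $\Vstar(o)$ of a given option can change by at most a factor $e^6$ between instances where $o$ is correct and instances where it is wrong. With the symmetry your construction has, $\pistar$ therefore picks the correct option with probability at most about $e^6/(e^6+m-1)$ per round, which is $o(1)$ for $m\approx\log H$. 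Consequently the set of paths that are correct in all rounds has $\pistar_H$-mass exponentially small in the number of rounds, not at least $1/2$. So $\cE_\theta$ cannot be ``the set of correct paths.'' For the same reason, the $\log H/\log\log H$ threshold cannot come from how many options can be hidden: constant-factor PRM error only ever hides constant-factor likelihood ratios, whatever $m$ is. Your counting bound $(N/m)^{K'}$ is also not the relevant quantity for a second reason. It treats particles as landing on branches independently, whereas an algorithm can keep one particle per branch until the PRM reveals the answer, then prune and clone.

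The paper instead uses weak hidden biases $\gamma_n=1/N$ and a threshold event verified by concentration. At level $n$ the horizon is cut into blocks. The first bit of each block is biased by $\gamma_n$ toward $y^\star$ under $\mu$, but is uniform under the PRM until the block's last step, where the PRM becomes exact again. The interior of each block embeds the level-$(n-1)$ instance. The event $\cF_N(y^\star)$ requires matching $y^\star$ on at least a $(1/2+\gamma_N/2)$ fraction of block-initial bits \emph{and} lying in $\cF_{N-1}$ inside every block; Hoeffding plus a union bound gives $\mu(\cF_N)\approx 1$.

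For the algorithm, the key step is an agree-or-split dichotomy. Either all particles alive at the end of a block agree on its first bit, and then the output's bit is a blind guess, correct with probability $1/2$. Or they split, and then each group has at most $N-1$ particles and simulates an $(N-1)$-particle myopic algorithm on the embedded instance, so by induction the output fails $\cF_{N-1}$ there. Azuma/Freedman across blocks shows the output cannot satisfy both requirements. Averaging over a uniform $y^\star$ then matches your Yao-style step.

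The $e^3$ budget arises because errors telescope across sequential blocks but \emph{multiply} across nested levels, giving accuracy $\exp(3\sum_n\gamma_n)=e^3$; this is what forces the per-level bias down to $1/N$. Concentration with bias $1/N$ needs polynomially many ($N^4$) blocks per level, so $H=N^{4N}$; this is the source of $N\approx\log H/(4\log\log H)$. Your closing idea of recursive copies in the gaps points this way. The missing ingredients are the weak-bias threshold event and the agree-or-split induction on the number of particles.
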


As an immediate corollary, no such algorithm can achieve $\Dtv{\nu}{\pistar_H}\leq 1/3$. In contrast, with a perfect PRM (i.e. $\einf = 1$), a single particle achieves zero error \citep{rohatgi2025taming}. Moreover, \cref{thm:pf-lb-main} is particularly striking since, under \cref{asmp:L-inf}, every intermediate distribution $\pihat_h$ has constant coverage of $\pistar_h$ (i.e. $\pihat_h(x)/\pistar_h(x) \geq \einf^{-2}$ for all $x\in\cX_h$). \cref{thm:pf-lb-main} shows that it is impossible for myopic methods to inductively ``maintain'' this coverage without a super-constant number of particles --- and raises the question of whether some form of \emph{lookahead} can improve computational efficiency.

\section{Experiments in Language Models}
\label{sec:experiments}
Next, we investigate to what extent the quantities from \cref{sec:SMC} %
predict the performance of SMC on sampling problems involving LLMs.

\subsection{Controlled Setting: Prompt-Switching Task}
\label{sec:prompt-switching}
A key challenge in evaluating SMC and understanding its performance as a function of the aforementioned quantities is the difficulty of \emph{measuring them}, as well as measuring the performance of SMC itself. Most notably, we need to measure quantities involving the target distribution $\pistar_h$, which is typically not readily available. While prior work (e.g., \citet{zhao2024twisted}) has studied how to evaluate the performance of SMC using the algorithm's estimate of the partition function $Z := \E_{\piref}[\Vstar(x_H)]$, such techniques are only guaranteed to provide tight bounds if SMC is accurate.  Moreover, these methods do not directly address the challenge of estimating other quantities such as action-level coverage. 
Thus, in this section, we consider an alternative approach, in a setting which will conveniently satisfy that $\piref, \pistar$ are both efficiently sampleable. %

In particular, we consider the \emph{prompt-switching task}, where $\piref$ and $\pistar$ correspond to the output distribution of a language model under different prompts. Formally, we fix a language model $\model$, and consider the setting of \cref{sec:intro} where $\MA$ is the token space of $\model$ and $\MX_h = \MA^h$. %
Each instance of the prompt-switching task is determined by 3 prompts, $(\promptref, \promptstar, \prompt)$, as follows. 
Given choices of these prompts, we let $\piref$ be the output distribution of $\model$ for the prompt $\promptref$, and we denote it as $\piref(\cdot) = \model(\cdot \mid \promptref)$. We then choose the terminal reward function to be $r^\star(a_{1:H}) := \frac{ \model(a_{1:H} \mid \promptstar)}{ \model(a_{1:H} \mid \promptref)}$, which gives that, for any intermediate step $h$, $V^\star(a_{1:h}) = {\model(a_{1:h} \mid \promptstar) / \model(a_{1:h} \mid \promptref)}$. It follows that $\pistar_h(a_{1:h}) = \model(a_{1:h} \mid \promptstar)$. Since it is tractable to sample from $\model$ and compute sequence-level probabilities, sampling from $\piref, \pistar$ as well as computing $\Vstar$ is tractable. Finally, the prompt $\prompt$ determines $\Vhat$, in a way that will be discussed below.

\paragraph{Evaluation} 
To evaluate SMC on the prompt-switching task, we consider the following metric, inspired by more complex ``LLM-as-judge'' techniques and also the use of log-probabilities to characterize the output distributions of LLMs \citep{golowich2025sequenceslogitsreveallow}. Given two distributions $\nu, \nu'$ over sequences of length $H$, and a set of prompts $\MP$, we define the \emph{$\MP$-logprob discrepancy} $\mathrm{LPD}_\MP(\nu, \nu')$ between $\nu$ and $\nu'$ as the difference between the mean token-level log-probabilities assigned to sequences sampled from $\nu$ and $\nu'$ under the prompts in $\MP$ (summed over all positions $h$). Formally, for $\nu, \nu' \in \Delta(\MA^H)$, we have:
\begin{align}
\mathrm{LPD}_\MP(\nu, \nu') :=&  \sum_{\prompt \in \MP} \sum_{h=1}^H \left|\E_{a_{1:H} \sim \nu} \left[ \log \model(a_h \mid \prompt, a_{1:h-1}) \right]\right.\icml{\nonumber\\
&\quad } \left.  - \E_{a_{1:H}' \sim \nu'} \left[ \log \model(a_h' \mid \prompt, a_{1:h-1}') \right]\right|.\nonumber
\end{align}
In words, the $\MP$-logprob discrepancy uses the prompts in $\MP$ as \emph{test functions} to measure how different the two distributions $\nu, \nu'$ are. It can be thought of as a very simple (albeit more stable) ``LLM-as-judge'' metric, where instead of \emph{generating} from $\model$ to evaluate $\nu, \nu'$, we simply use the log-probabilities assigned by the model $\model$. In our evaluations, we choose $\MP = \{\promptref, \promptstar\}$, which is natural in light of the fact that the two most salient prompts in the prompt switching task are $\promptref$ and $\promptstar$.

\subsection{Evaluating Dependence on PRM Accuracy}
\label{sec:exp-prm-accuracy}
To evaluate the influence of PRM accuracy on SMC performance, we fix the action-level coverage between $\piref$ and $\pistar$ (in particular, by fixing $\promptref=\promptstar$) and vary the PRM $\Vhat$. In particular, we consider a number $k = 50$ of instances of the prompt-switching task as described above:  each instance $i \in [k]$ is specified by three prompts $((\promptref)\^i,(\promptstar)\^i,\prompt\^i)$, as well as a scalar $\alpha$ used in \cref{eq:vhat-interp} below. In this subsection, the reference $(\promptref)\^i$ and target $(\promptstar)\^i$ prompts are shared across all instances, so we denote them by $\promptref, \promptstar$, respectively.
Moreover, we in fact have $\promptref = \promptstar$ --- this prompt asks the model to write a short story (e.g., ``Write a scene about a dragon negotiating peace with the last human kingdom.'').\footnote{While in this setting it is trivial to sample from $\promptstar$, we emphasize that our main goal here is studying the performance of SMC in terms of PRM accuracy as opposed to solving a challenging sampling problem.} Each instance $i \in [k]$ has a different value of $\Vhat$, denoted $\Vhat\^i$, which is given as follows: we fix $k$ prompts $\prompt\^1, \ldots, \prompt\^k$, which are each identical to $\promptref$ except they ask for the story in $k$ different styles (e.g., ``$\ldots$ Tell it as a news article.''). For each such prompt $\prompt\^i$, we let $\pi\^i$ denote the distribution $\pi\^i(a_{1:H}) = \model(a_{1:H} \mid \prompt\^i)$, which should be interpreted as a ``perturbed'' version of $\pistar$. %
Then we define
\begin{align}
\Vhat\^i(a_{1:h}) := \frac{\pistar(a_{1:h})}{\piref(a_{1:h})} \cdot \left( \frac{\pi\^i(a_{1:h})}{\pistar(a_{1:h})}\right)^{(1-h/H) \cdot \alpha},\label{eq:vhat-interp}
\end{align}
for some parameter $\alpha > 0$ controlling the degree of accuracy of $\Vhat\^i$. To get some intuition behind this definition, suppose $\alpha = 1$ for simplicity: for small $h$, $\Vhat\^i(a_{1:h}) \approx {\pi\^i(a_{1:h})}/{\pirefs[h](a_{1:h})}$, while at the final step $h =H$,  $\Vhat\^i(a_{1:H}) = \Vstar(a_{1:H})$. In particular, at early steps, $\Vhat_h\^i$ looks the value function of the ``perturbed'' target distribution $\pi\^i$, meaning that for small $h$, $\Vhat_h\^i$ is ``misleading'' in that it ``points towards'' $\pi\^i$ instead of $\pistar$. For larger $h$, we attenuate the noise so as to ensure that sampling from $\Vstar_H$ is information-theoretically possible (i.e., that $\Vhat_H\^i = \Vstar_H$).

Natural variation in the choice of the prompts $\prompt\^i$ (which were generated by GPT-5) leads the resulting functions $\Vhat\^i_h$ to have varying levels of accuracy with respect to $\Vstar_h$. Define $\pihat\^i_h\in\Delta(\MA^h)$ by $\pihat\^i_h(a_{1:h}) \propto \piref(a_{1:h}) \cdot \Vhat\^i(a_{1:h})$. While estimating $\Dchis{\pistar_h}{\pihat\^i_h}$ is in principle possible, we found that the ratios $\Vstar(a_{1:h}) / \Vhat\^i(a_{1:h})$ needed to estimate the $\chi^2$ divergence %
are prohibitively high in variance. 
Thus, we instead estimate $\Dkl{\pistar_h}{\pihat\^i_h}$ to measure the divergence between $\pistar_h$ and $\pihat_h\^i$, which is much more stable because KL divergence can be written as an expectation of \emph{log}-probabilities (see \cref{sec:exp-prm-accuracy-details} for further details). %

In our experiments, we use Qwen3-0.6B as the language model $\model$ and set the horizon to $H = 64$ tokens. For each choice of $i \in [k]$ we run SMC with $N=32$ particles given the PRM $\Vhat\^i$, and sample from its output distribution $\nuhat_H\^i$. In \cref{fig:inv-phi-intro}, we plot the values of $\Dkl{\pistar_h}{\pihat_h\^i}$ against the sampling error of SMC, measured via $\MP$-logprob discrepancies $\mathrm{LPD}_\MP(\E[\nuhat_H\^i], \pistar_H)$ (estimated by running SMC for 200 trials). %
We observe a clear correlation between the two quantities, thus validating our theoretical findings that PRM accuracy (as measured by the divergence between $\pihat_h$ and $\pistar_h$) predicts SMC's sampling error.  
In \cref{sec:exp-prm-accuracy-details} we repeat the experiment for additional choices of $\promptref, \promptstar, \prompt\^i, \alpha$ and observe similar results.

\subsection{Evaluating Dependence on Action-level Coverage}\label{sec:exp-prm-action}
We next evaluate the influence of action-level coverage on SMC performance. We consider the same setup as in the previous subsection, except for each instance $i \in [k]$ we take the target distribution, denoted $(\pistar)\^i$, to be the same as $\pi\^i$. In particular, each instance is determined by the tuple $(\promptref, \prompt\^i, \prompt\^i)$. %
Moreover, we take $\alpha = 0$, so that $\Vhat\^i = (\Vstar)\^i$. %
In this setting, the \emph{action-level coverage} depends on $i$. We again use Qwen3-0.6B, take the horizon to be $H = 32$ tokens, and run SMC with $N = 32$ particles.

While the action-level coverage between $(\pistar)\^i$ and $\piref$ (in the $L_\infty$ sense of \cref{asmp:act-cov}) is extremely large and essentially impossible to compute, we note that $\Dkl{(\pistar)\^i}{\piref}$ can be viewed as an ``average-case proxy'' for action-level coverage in light of the chain rule, which gives \[\Dkl{(\pistar)\^i}{\piref} = \sum_{h=1}^H \E_{a_{1:h} \sim (\pistar)\^i_h}\left[ \log \frac{(\pistar)\^i_h(a_{h} \mid a_{1:h-1})}{\pirefs[h](a_{h} \mid a_{1:h-1})} \right].\] 

\Cref{fig:mean-ratio-intro} plots $\frac{1}{H}\Dkl{(\pistar)\^i}{\piref}$ against the $\MP$-logprob discrepancies $\mathrm{LPD}_\MP(\E[\nuhat_H\^i], \pistar_H)$; we again observe a clear correlation between the two quantities, in line with our theory. %

\subsection{Dependence on the Number of Particles}
\label{sec:experiments-particles}
Finally, in \cref{fig:pswitch-wallclock-scaling}, we show the performance of SMC as a function of the number of particles $N$ for the prompt switching task, using a setup similar to that of \cref{sec:exp-prm-accuracy}. In particular, we consider 50 different prompt tuples $(\promptref\^i, (\promptstar)\^i, \prompt\^i)$, and for each tuple we use a PRM $\Vhat\^i$ defined from $\prompt\^i$ as in \cref{eq:vhat-interp} with $\alpha = 1$. In addition to SMC, we also plot the performance of (a) sequential importance sampling using $\Vhat\^i$ (\cref{sec:comparison}), and (b) the ``best-of-$N$'' algorithm which samples $N$ sequences from $\piref$ and outputs the one with the highest value according to the terminal reward $(\Vstar)\^i(a_{1:H})$. We observe that SMC consistently outperforms both baselines, and moreover that its performance improves as $N$ increases; the latter conclusion is in line with our theoretical results. 

\begin{figure}[ht]
\centering
\includegraphics[width=\icml{0.4}\arxiv{0.6}\textwidth]{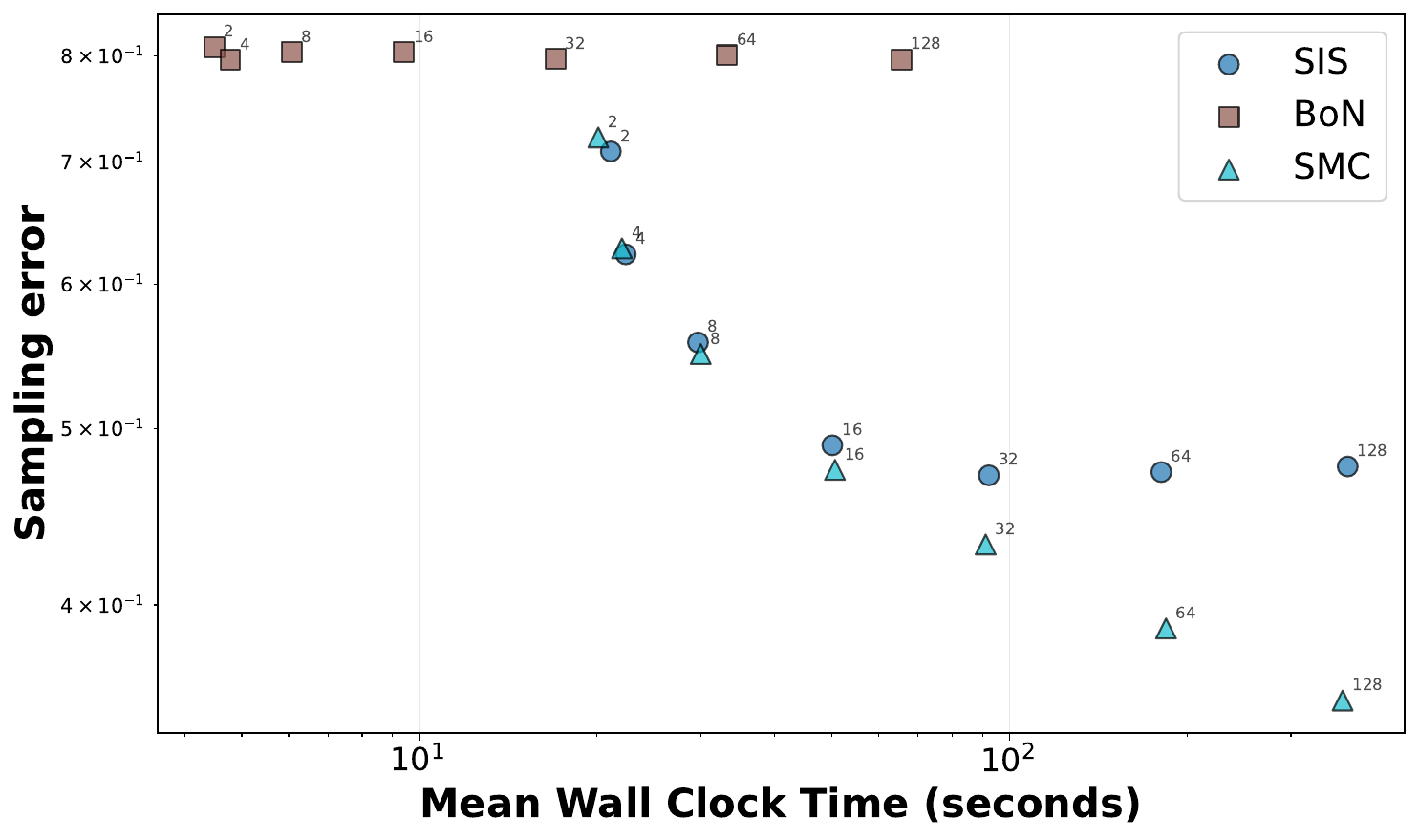}
\caption{Performance of SMC on the prompt-switching task (\cref{sec:experiments-particles}). Each point represents SMC with some number of particles $N \in \{2,4,8,\ldots,128\}$. We average the sampling error and wall-clock time over all data points and all trials per data point.  SMC consistently outperforms both sequential importance sampling (SIS) and Best-of-$N$ (BoN) baselines.}
\label{fig:pswitch-wallclock-scaling}
\end{figure}

\subsection{SMC on Math Problem-solving Tasks}
\label{sec:smc-math-experiments}
Finally, we study the use of SMC to improve the performance of LLMs on math problem-solving tasks, including AIME and MATH500. In this setting, the base distribution $\piref$ corresponds to the output distribution of a language model prompted to solve a math problem, and the terminal reward function $r^\star(a_{1:H}) \in \{0,1\}$ is the indicator that the answer $a_{1:H}$ is correct. Thus, $\pistar$ is simply the base model distribution \emph{conditioned on producing a correct answer}, and $\Vstar(a_{1:h})$ is the probability that $\piref$ produces a correct answer conditioned on the prefix $a_{1:h}$.

There are several open-source process reward models fine-tuned for math problems, which purport to yield good estimates of $\Vstar$. In our experiments, we take the PRM $\Vhat$ to be Qwen2.5-Math-PRM-7B, and the base model $\piref$ to be Qwen2.5-1B-Instruct. In the application of SMC to math reasoning tasks, since the goal is simply to find \emph{some} correct answer, it is typical to combine SMC with Best-of-$N$ sampling in the following manner: at the end of SMC, we select the single particle with the highest value $\Vhat(a_{1:H})$. Note that this is distinct from the standard Best-of-$N$ sampling procedure, in which $N$ samples are drawn \emph{independently} from $\piref$ and the one with highest $\Vhat$-value is selected.

\paragraph{SMC improves performance on math} Many papers \citep{puri2025probabilistic,chatziveroglou2025adecodingtokenefficientinferencescaling} have reported that SMC typically improves the performance over Best-of-$N$ (as well as other baselines). However, such results are typically reported \emph{on-average} over all problems in the dataset, which fails to account for the hetereogeneity between individual problems. For instance, it was not clear if SMC improves performance over Best-of-$N$ on \emph{all} problems, or if SMC can actually hurt performance (compared to Best-of-$N$) on a non-negligible fraction of the dataset.\footnote{Indeed, much prior work has highlighted that SMC can suffer from pathologies such as mode collapse\noah{cite}, which might decrease the diversity of responses from which Best-of-$N$ can choose.} Certainly such a distinction is salient for any theoretical explanation for SMC's performance.

In \cref{fig:smc-vs-bon}, we estimate the performance of SMC and Best-of-$N$ for each problem in Math500 (using 100 trials per problem), and plot the accuracy of SMC against that of Best-of-$N$. While there is variability in the relative performance of the two algorithms, we see that for the vast majority of problems, SMC's accuracy is at least as large as that of Best-of-$N$. In \cref{fig:smc-vs-bon-aime}, we observe a similar conclusion for AIME24 and AIME25.

\begin{figure}[ht]
\centering
\includegraphics[width=\icml{0.45}\arxiv{0.9}\textwidth]{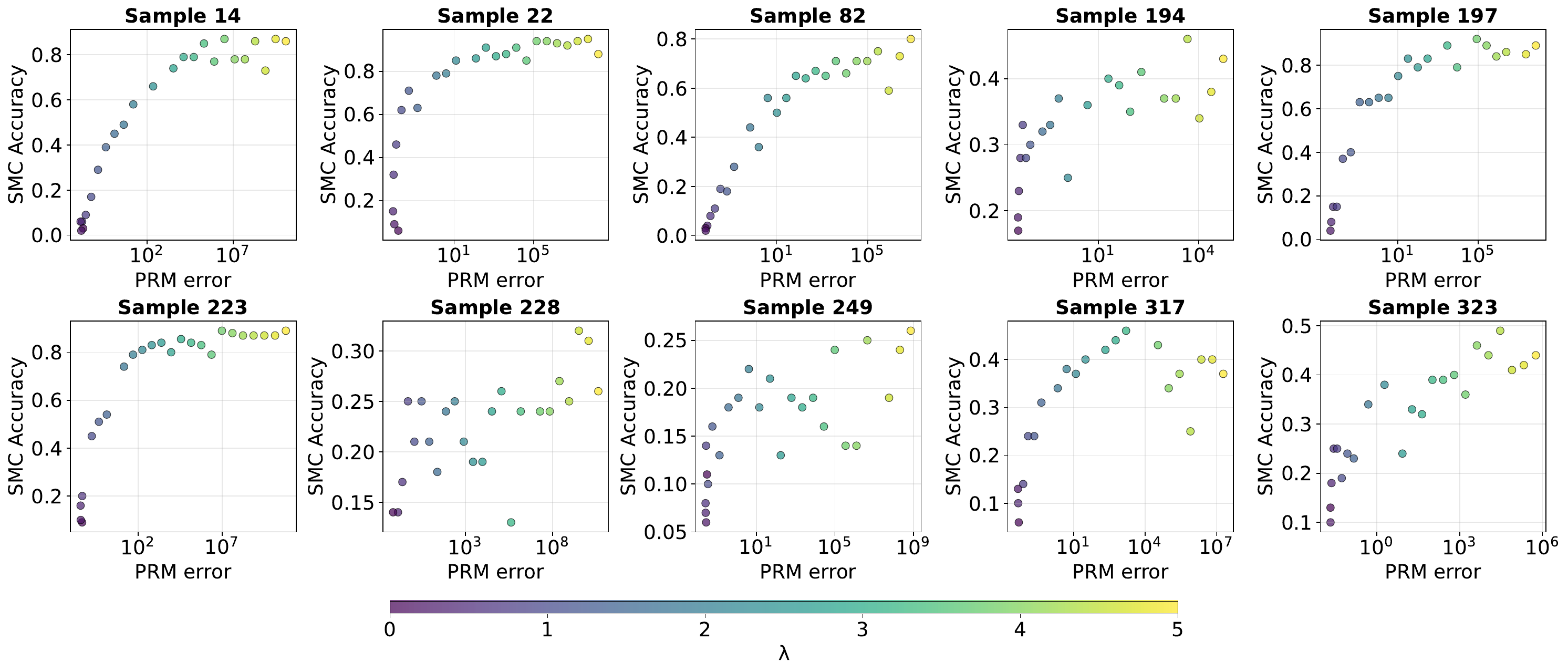}
\caption{Influence of PRM error (measured by $\Dchis{\pistar_h}{\pihat_h}$) on SMC accuracy for 10 Math500 problems. Different colors correspond to different values of the inverse temperature parameter $\lambda$ parametrizing $\Vhat\^\lambda$. The plots show SMC accuracy with respect to a \emph{random} particle (not the best one) selected at the end of SMC.}
\label{fig:smc-temperature}
\vspace{-0.5cm}
\end{figure}

\paragraph{SMC and PRM accuracy} Having established that SMC leads to a consistent increase in accuracy across most Math500 and AIME problems, we next turn to the question of what factors influence its performance \emph{on a given problem}. In light of the fact that significant effort has been spent designing better PRMs $\Vhat$ \citep{lightman2023let,wang2024mathshepherd,wang2025valueguided,zhang2025lessons}, a natural follow-up question from our theory is whether the divergence between $\pistar_h$ and $\pihat_h$ (as measured by, e.g., the $\chi^2$-divergence $\Dchis{\pistar_h}{\pihat_h}$) predicts the performance of SMC. 

To address this question, we consider a range of PRMs $\Vhat\^\lambda$ by varying the inverse temperature parameter $\lambda$ used by the PRM to produce its final probabilities.  %
In \cref{fig:smc-temperature}, we select 10 Math500 problems, and for each value of $\lambda$, we plot an empirical estimate of $\Dchis{\pistar_h}{\pihat_h}$ %
against the accuracy of SMC given $\Vhat\^\lambda$. Interestingly, we observe that larger $\chi^2$-divergence often leads to \emph{higher} accuracy, which is in contrast to our theoretical findings which predict the opposite. %
We suspect that larger values of $\lambda$ (lower temperature) lead to better performance since they more effectively weed out incorrect partial solutions; however, they will also potentially lead $\pihat_h$ to miss some modes of $\pistar_h$, which causes the $\chi^2$-divergence to blow up. Thus, in order to accurately capture the performance of SMC on such problems, we need a modification to our framework that aims to not approximate $\pistar_H$ in a distributional sense but rather to simply cover some portion of its mass.

\section{Discussion}
In this paper, we have established theoretical guarantees for particle filtering algorithms under varioues assumptions relating the reference distribution $\piref$ to the target distribution $\pistar$ and the PRM $\Vhat$ (via the induced distributions $\pihat$). We highlight below some intriguing directions for future work: 
\begin{itemize}
  \item There are various polynomial gaps between our upper and lower bounds, including their the dependence on horizon $H$, as discussed in \cref{sec:theory-overview}, as well as the decay of the total variation distance as a function of $N$: whereas our upper bounds of \cref{thm:smc-chisq,thm:dmc-chisq} decay as $1/\sqrt{N}$, the lower bound of \cref{prop:SMC-lower-exact} decays as only $1/N$. 
\item Can we develop metrics for the performance of SMC (and other inference-time sampling algorithms) which are weaker than total variation distance but still meaningfully capture ``usefulness'' of outputs? For instance, for math problems, typically one only desires a correct answer, and the particular phrasing of the solution may not matter too much. 
\item Can we develop provable guarantees for particle filtering algorithms or other types of inference-time sampling procedures which rely on weaker notions of discrepancy between $\pistar_h$ and $\pihat_h$? The values of $\Dchis{\pistar_h}{\pihat_h}$ seen in, e.g., the experiments in \cref{sec:exp-prm-accuracy,sec:exp-prm-action,sec:experiments-particles} are extremely large (in particular, much larger than the values of $N$ seen in our experiments), yet particle filtering algorithms still are able to effectively make use of $\Vhat$. 
\end{itemize}

\icml{\section*{Impact Statement}

This paper presents work whose goal is to advance the field of machine learning. There are many potential societal consequences of our work, none of which we feel must be specifically highlighted here.
}

\bibliography{refs.bib}

@article{brown2024large,
  title={Large language monkeys: Scaling inference compute with repeated sampling},
  author={Brown, Bradley and Juravsky, Jordan and Ehrlich, Ryan and Clark, Ronald and Le, Quoc V and R{\'e}, Christopher and Mirhoseini, Azalia},
  journal={arXiv preprint arXiv:2407.21787},
  year={2024}
}

@inproceedings{fu2025deep,
  title={Deep Think with Confidence},
  author={Fu, Yichao and Wang, Xuewei and Tian, Yuandong and Zhao, Jiawei},
  booktitle={NeurIPS 2025 Workshop on Efficient Reasoning},
  year={2025}
}

@inproceedings{
wang2023selfconsistency,
title={Self-Consistency Improves Chain of Thought Reasoning in Language Models},
author={Xuezhi Wang and Jason Wei and Dale Schuurmans and Quoc V Le and Ed H. Chi and Sharan Narang and Aakanksha Chowdhery and Denny Zhou},
booktitle={The Eleventh International Conference on Learning Representations },
year={2023},
url={https://openreview.net/forum?id=1PL1NIMMrw}
}

@article{zhao2025majority,
  title={The majority is not always right: Rl training for solution aggregation},
  author={Zhao, Wenting and Aggarwal, Pranjal and Saha, Swarnadeep and Celikyilmaz, Asli and Weston, Jason and Kulikov, Ilia},
  journal={arXiv preprint arXiv:2509.06870},
  year={2025}
}

@article{wu2025resum,
  title={ReSum: Unlocking Long-Horizon Search Intelligence via Context Summarization},
  author={Wu, Xixi and Li, Kuan and Zhao, Yida and Zhang, Liwen and Ou, Litu and Yin, Huifeng and Zhang, Zhongwang and Yu, Xinmiao and Zhang, Dingchu and Jiang, Yong and others},
  journal={arXiv preprint arXiv:2509.13313},
  year={2025}
}

@article{yang2024buffer,
  title={Buffer of thoughts: Thought-augmented reasoning with large language models},
  author={Yang, Ling and Yu, Zhaochen and Zhang, Tianjun and Cao, Shiyi and Xu, Minkai and Zhang, Wentao and Gonzalez, Joseph E and Cui, Bin},
  journal={Advances in Neural Information Processing Systems},
  volume={37},
  pages={113519--113544},
  year={2024}
}

@inproceedings{
mudgal2023controlled,
title={Controlled Decoding from Language Models},
author={Sidharth Mudgal and Jong Lee and Harish Ganapathy and YaGuang Li and Tao Wang and Yanping Huang and Zhifeng Chen and Heng-Tze Cheng and Michael Collins and Trevor Strohman and Jilin Chen and Alex Beutel and Ahmad Beirami},
booktitle={Forty-first International Conference on Machine Learning},
year={2024},
url={https://openreview.net/forum?id=bVIcZb7Qa0}
}

@article{wang2025value,
  title={Value-Guided Search for Efficient Chain-of-Thought Reasoning},
  author={Wang, Kaiwen and Zhou, Jin Peng and Chang, Jonathan and Gao, Zhaolin and Kallus, Nathan and Brantley, Kiant{\'e} and Sun, Wen},
  journal={arXiv preprint arXiv:2505.17373},
  year={2025}
}

@article{puri2025probabilistic,
  title={A probabilistic inference approach to inference-time scaling of llms using particle-based monte carlo methods},
  author={Puri, Isha and Sudalairaj, Shivchander and Xu, Guangxuan and Xu, Kai and Srivastava, Akash},
  journal={arXiv preprint arXiv:2502.01618},
  year={2025}
}

@inproceedings{
botta2025on,
title={On the Query Complexity of Verifier-Assisted Language Generation},
author={Edoardo Botta and Yuchen Li and Aashay Mehta and Jordan T. Ash and Cyril Zhang and Andrej Risteski},
booktitle={Forty-second International Conference on Machine Learning},
year={2025},
url={https://openreview.net/forum?id=9oIjvaDhoN}
}

@inproceedings{yang2021fudge,
  title={FUDGE: Controlled Text Generation With Future Discriminators},
  author={Yang, Kevin and Klein, Dan},
  booktitle={Proceedings of the 2021 Conference of the North American Chapter of the Association for Computational Linguistics: Human Language Technologies},
  pages={3511--3535},
  year={2021}
}

@misc{golowich2025sequenceslogitsreveallow,
      title={Sequences of Logits Reveal the Low Rank Structure of Language Models}, 
      author={Noah Golowich and Allen Liu and Abhishek Shetty},
      year={2025},
      eprint={2510.24966},
      archivePrefix={arXiv},
      primaryClass={cs.LG},
      url={https://arxiv.org/abs/2510.24966}, 
}

@article{rohatgi2025taming,
  title={Taming imperfect process verifiers: A sampling perspective on backtracking},
  author={Rohatgi, Dhruv and Shetty, Abhishek and Saless, Donya and Li, Yuchen and Moitra, Ankur and Risteski, Andrej and Foster, Dylan J},
  journal={arXiv preprint arXiv:2510.03149},
  year={2025}
}

@article{madaan2025rethinking,
  title={Rethinking thinking tokens: Llms as improvement operators},
  author={Madaan, Lovish and Didolkar, Aniket and Gururangan, Suchin and Quan, John and Silva, Ruan and Salakhutdinov, Ruslan and Zaheer, Manzil and Arora, Sanjeev and Goyal, Anirudh},
  journal={arXiv preprint arXiv:2510.01123},
  year={2025}
}

@inproceedings{cerou2011nonasymptotic,
  title={A nonasymptotic theorem for unnormalized Feynman--Kac particle models},
  author={C{\'e}rou, F and Del Moral, P and Guyader, A},
  booktitle={Annales de l'IHP Probabilit{\'e}s et statistiques},
  volume={47},
  number={3},
  pages={629--649},
  year={2011},
  organization={Gauthier-Villars}
}

@article{whiteley2012linear,
  title={Linear variance bounds for particle approximations of time-homogeneous Feynman--Kac formulae},
  author={Whiteley, Nick and Kantas, Nikolas and Jasra, Ajay},
  journal={Stochastic Processes and their Applications},
  volume={122},
  number={4},
  pages={1840--1865},
  year={2012},
  publisher={Elsevier}
}

@incollection{del2004feynman,
  title={Feynman-kac formulae},
  author={Del Moral, Pierre},
  booktitle={Feynman-Kac Formulae: Genealogical and Interacting Particle Systems with Applications},
  pages={47--93},
  year={2004},
  publisher={Springer}
}

@article{schweizer2012non,
  title={Non-asymptotic error bounds for sequential MCMC methods in multimodal settings},
  author={Schweizer, Nikolaus},
  journal={arXiv preprint arXiv:1205.6733},
  year={2012}
}

@article{guarniero2017iterated,
  title={The Iterated Auxiliary Particle Filter},
  author={Guarniero, Peter and Johansen, Adam M. and Lee, Anthony},
  journal={Journal of the American Statistical Association},
  volume={112},
  number={520},
  pages={1636--1647},
  year={2017}
}

@article{heng2017controlled,
  title={Controlled Sequential Monte Carlo},
  author={Heng, Jeremy and Bishop, Adrian and Deligiannidis, George and Doucet, Arnaud},
  journal={arXiv preprint arXiv:1708.08396},
  year={2017}
}

@article{lawson2022sixo,
  title={SIXO: Smoothing Inference with Twisted Objectives},
  author={Lawson, J. and Raventos, L. and Warrington, A. and Linderman, S.},
  journal={CoRR},
  volume={abs/2206.05952},
  year={2022}
}

@article{naesseth2019highdim,
  title={High-dimensional Filtering using Nested Sequential Monte Carlo},
  author={Naesseth, C. A. and Lindsten, F. and Sch{\"o}n, T. B.},
  journal={IEEE Transactions on Signal Processing},
  volume={67},
  number={16},
  pages={4177--4188},
  year={2019},
  doi={10.1109/TSP.2019.2926035}
}

@article{doucet2000sequential,
  title={On Sequential Monte Carlo sampling methods for Bayesian filtering},
  author={Doucet, Arnaud and Godsill, Simon J. and Andrieu, Christophe},
  journal={Statistics and Computing},
  volume={10},
  number={3},
  pages={197--208},
  year={2000}
}

@misc{chatziveroglou2025adecodingtokenefficientinferencescaling,
      title={A*-Decoding: Token-Efficient Inference Scaling}, 
      author={Giannis Chatziveroglou},
      year={2025},
      eprint={2505.13672},
      archivePrefix={arXiv},
      primaryClass={cs.AI},
      url={https://arxiv.org/abs/2505.13672}, 
}

@techreport{pitt1997filtering,
  title={Filtering via Simulation: Auxiliary Particle Filters},
  author={Pitt, Michael K. and Shephard, Neil},
  institution={Nuffield College, University of Oxford},
  number={W13},
  year={1997}
}

@inproceedings{montemerlo2002fastslam,
  title={FastSLAM: A Factored Solution to the Simultaneous Localization and Mapping Problem},
  author={Montemerlo, Michael and Thrun, Sebastian and Koller, Daphne and Wegbreit, Ben},
  booktitle={Proceedings of the {AAAI} Conference on Artificial Intelligence},
  year={2002}
}

@article{lightman2023let,
  title={Let's Verify Step by Step},
  author={Lightman, Hunter and Kosaraju, Vineet and Burda, Yura and Edwards, Harri and Baker, Bowen and Lee, Teddy and Leike, Jan and Schulman, John and Sutskever, Ilya and Cobbe, Karl},
  journal={arXiv preprint arXiv:2305.20050},
  year={2023}
}

@article{wang2024mathshepherd,
  title={Math-Shepherd: Verify and Reinforce {LLMs} Step-by-step without Human Annotations},
  author={Wang, Peiyi and Li, Lei and Shao, Zhihong and Xu, R. X. and Dai, Damai and Li, Yifei and Chen, Deli and Wu, Y. and Sui, Zhifang},
  journal={arXiv preprint arXiv:2312.08935},
  year={2024}
}

@article{lew2023smcsteering,
  title={Sequential Monte Carlo Steering of Large Language Models using Probabilistic Programs},
  author={Lew, Alexander K. and Tan, Zhi-Xuan and Grand, Gabriel and Mansinghka, Vikash K.},
  journal={arXiv preprint arXiv:2306.03081},
  year={2023}
}

@article{zhao2024twisted,
  title={Probabilistic Inference in Language Models via Twisted Sequential Monte Carlo},
  author={Zhao, Stephen and Brekelmans, Rob and Makhzani, Alireza and Grosse, Roger},
  journal={arXiv preprint arXiv:2404.17546},
  year={2024}
}

@article{xiong2024iterative,
  title={Iterative Preference Learning from Human Feedback: Bridging Theory and Practice for {RLHF} under {KL}-constraint},
  author={Xiong, Wei and Dong, Hanze and Ye, Chenlu and Wang, Ziqi and Zhong, Han and Ji, Heng and Jiang, Nan and Zhang, Tong},
  journal={arXiv preprint arXiv:2312.11456},
  year={2024}
}

@article{stephan2024calculation,
  title={From Calculation to Adjudication: Examining {LLM} Judges on Mathematical Reasoning Tasks},
  author={Stephan, Andreas and Zhu, Dawei and Assenmacher, Matthias and Shen, Xiaoyu and Roth, Benjamin},
  journal={arXiv preprint arXiv:2409.04168},
  year={2024}
}

@article{feng2024step,
  title={Step-by-Step Reasoning for Math Problems via Twisted Sequential Monte Carlo},
  author={Feng, Shengyu and Kong, Xiang and Ma, Shuang and Zhang, Aonan and Yin, Dong and Wang, Chong and Pang, Ruoming and Yang, Yiming},
  journal={arXiv preprint arXiv:2410.01920},
  year={2024}
}

@article{singhal2025general,
  title={A General Framework for Inference-time Scaling and Steering of Diffusion Models},
  author={Singhal, Raghav and Horvitz, Zachary and Teehan, Ryan and Ren, Mengye and Yu, Zhou and McKeown, Kathleen and Ranganath, Rajesh},
  journal={arXiv preprint arXiv:2501.06848},
  year={2025}
}

@article{zhang2025lessons,
  title={The Lessons of Developing Process Reward Models in Mathematical Reasoning},
  author={Zhang, Zhenru and Zheng, Chujie and Wu, Yangzhen and Zhang, Beichen and Lin, Runji and Yu, Bowen and Liu, Dayiheng and Zhou, Jingren and Lin, Junyang},
  journal={arXiv preprint arXiv:2501.07301},
  year={2025}
}

@article{puri2025particlemc,
  title={A Probabilistic Inference Approach to Inference-Time Scaling of {LLMs} using Particle-Based Monte Carlo Methods},
  author={Puri, Isha and Sudalairaj, Shivchander and Xu, Guangxuan and Xu, Kai and Srivastava, Akash},
  journal={arXiv preprint arXiv:2502.01618},
  year={2025}
}

@article{skreta2025feynman,
  title={Feynman-Kac Correctors in Diffusion: Annealing, Guidance, and Product of Experts},
  author={Skreta, Marta and Akhound-Sadegh, Tara and Ohanesian, Viktor and Bondesan, Roberto and Aspuru-Guzik, Al{\'a}n and Doucet, Arnaud and Brekelmans, Rob and Tong, Alexander and Neklyudov, Kirill},
  journal={arXiv preprint arXiv:2503.02819},
  year={2025}
}

@article{foster2025foundation,
  title={Is a Good Foundation Necessary for Efficient Reinforcement Learning? The Computational Role of the Base Model in Exploration},
  author={Foster, Dylan J. and Mhammedi, Zakaria and Rohatgi, Dhruv},
  journal={arXiv preprint arXiv:2503.07453},
  year={2025}
}

@article{lipkin2025fastcontrolled,
  title={Fast Controlled Generation from Language Models with Adaptive Weighted Rejection Sampling},
  author={Lipkin, Benjamin and LeBrun, Benjamin and Vigly, Jacob Hoover and Loula, {Jo\~ao} and MacIver, David R. and Du, Li and Eisner, Jason and Cotterell, Ryan and Mansinghka, Vikash K. and O'Donnell, Timothy J. and Lew, Alexander K. and Vieira, Tim},
  journal={arXiv preprint arXiv:2504.05410},
  year={2025}
}

@inproceedings{grand2025selfsteering,
  title={Self-Steering Language Models},
  author={Grand, Gabriel and Tenenbaum, Joshua B. and Mansinghka, Vikash K. and Lew, Alexander K. and Andreas, Jacob},
  booktitle={Conference on Language Modeling ({COLM})},
  year={2025},
  note={arXiv:2504.07081}
}

@inproceedings{loula2025syntactic,
  title={Syntactic and Semantic Control of Large Language Models via Sequential Monte Carlo},
  author={Loula, {Jo\~ao} and LeBrun, Benjamin and Du, Li and Lipkin, Ben and Pasti, Clemente and Grand, Gabriel and Liu, Tianyu and Emara, Yahya and Freedman, Marjorie and Eisner, Jason and Cotterell, Ryan and Mansinghka, Vikash K. and Lew, Alexander K. and Vieira, Tim and O'Donnell, Timothy J.},
  booktitle={International Conference on Learning Representations},
  year={2025},
  note={arXiv:2504.13139}
}

@inproceedings{wang2025valueguided,
  title={Value-Guided Search for Efficient Chain-of-Thought Reasoning},
  author={Wang, Kaiwen and Zhou, Jin Peng and Chang, Jonathan and Gao, Zhaolin and Kallus, Nathan and Brantley, Kiante and Sun, Wen},
  booktitle={Advances in Neural Information Processing Systems},
  year={2025},
  note={arXiv:2505.17373}
}

@article{geuter2025guided,
  title={Guided Speculative Inference for Efficient Test-Time Alignment of {LLMs}},
  author={Geuter, Jonathan and Mroueh, Youssef and Alvarez-Melis, David},
  journal={arXiv preprint arXiv:2506.04118},
  year={2025}
}

@article{yu2025limits,
  title={On the Limits of Test-Time Compute: Sequential Reward Filtering for Better Inference},
  author={Yu, Yue and Di, Qiwei and Gu, Quanquan and Zhou, Dongruo},
  journal={arXiv preprint arXiv:2512.04558},
  year={2025}
}

@article{chen2025coverage,
  title={The Coverage Principle: How Pre-Training Enables Post-Training},
  author={Chen, Fan and Huang, Audrey and Golowich, Noah and Malladi, Sadhika and Block, Adam and Ash, Jordan T and Krishnamurthy, Akshay and Foster, Dylan J},
  journal={arXiv preprint arXiv:2510.15020},
  year={2025}
}

@article{marion2018finite,
  title={Finite Sample $ L\_2 $ Bounds for Sequential Monte Carlo and Adaptive Path Selection},
  author={Marion, Joseph and Schmidler, Scott C},
  journal={arXiv preprint arXiv:1807.01346},
  year={2018}
}

@article{lee2024convergence,
  title={Convergence bounds for sequential Monte Carlo on multimodal distributions using soft decomposition},
  author={Lee, Holden and Santana-Gijzen, Matheau},
  journal={arXiv preprint arXiv:2405.19553},
  year={2024}
}

@misc{karan2025reasoning,
      title={Reasoning with Sampling: Your Base Model is Smarter Than You Think}, 
      author={Aayush Karan and Yilun Du},
      year={2025},
      eprint={2510.14901},
      archivePrefix={arXiv},
      primaryClass={cs.LG},
      url={https://arxiv.org/abs/2510.14901}, 
}

@article{zhu2026power,
  title={On the Power of (Approximate) Reward Models for Inference-Time Scaling},
  author={Zhu, Youheng and Lu, Yiping},
  journal={arXiv preprint arXiv:2602.01381},
  year={2026}
}
\icml{\IfFileExists{icml2026.bst}{\bibliographystyle{icml2026}}{\bibliographystyle{plainnat}}}

\appendix
\onecolumn

\renewcommand{\contentsname}{Contents of Appendix}
\addtocontents{toc}{\protect\setcounter{tocdepth}{2}}
{
  \hypersetup{hidelinks}
  \tableofcontents
}

\clearpage

\section{Comparison with Prior Work}\label{sec:comparison}
In this section we discuss how our theoretical guarantees compare with prior work. We specify to the autoregressive generation setting, as the more general setting has not been theoretically studied in the past.

\paragraph{Action-level rejection sampling / sequential importance sampling \citep{yang2021fudge}} This algorithm samples $a_{1:H}$ autoregressively via the conditional distributions
\[\pihat(a_{h+1}\mid{}a_{1:h}) \propto \piref(a_{1:h+1}\mid{}a_{1:h}) \Vhat(a_{1:h+1}).\]
It can be implemented in expected time $O(H\Cacthat)$ by rejection sampling / importance sampling \cite{rohatgi2025taming}, and by \cref{eq:pistar-kernel}, it exactly samples from $\pistar$ if $\Vhat=\Vstar$. However, even under assumption \cref{asmp:L-inf} with $\einf = 1+\epsilon$, it can incur $\Omega(\epsilon\sqrt{H})$ sampling error \cite{rohatgi2025taming}, which is vacuous if e.g. $\epsilon = \Theta(1)$.

\paragraph{VGB \citep{rohatgi2025taming}} This algorithm is defined by a random walk on the \emph{tree of autoregressive generations}, which has state space $\cA^0 \sqcup \dots\sqcup \cA^H$ and in which the parent of $a_{1:h}$ is $a_{1:h-1}$. The first main result of \citet{rohatgi2025taming} is that the random walk samples from $\pistar$, up to total variation error $\delta$, in time roughly $O(H^2 \Cact \einf^4 \log(1/\delta))$. This guarantee is essentially matched by the guarantee of \cref{thm:smc-linf} for SMC, except the language model evaluations in SMC can be parallelized into $H$ rounds, whereas the language model evaluations in VGB are inherently sequential.\footnote{The results of \cref{app:coupling} suggest a potential approach to parallelizing VGB via particle filtering, but it is unclear in what generality such a simulation would be efficient.}

The second main result of \citet{rohatgi2025taming} shows that under the following average-case closeness condition between $\Vhat$ and $\Vstar$, as well as bounded action-level coverage, VGB samples from a distribution that \emph{approximately covers} $\pistar$. 

\begin{assumption}[Average-case assumption of \citet{rohatgi2025taming}]\label{asmp:rohatgi-avg}
\[\max\left\{ \En^{\pistar}\left[\frac{\Vhat(a_{1:h})}{\Vstar(a_{1:h})}\right], \En^{\pistar}\left[\frac{\Vstar(a_{1:h})}{\Vhat(a_{1:h})}\right]\right\} \leq C.\]
\end{assumption}

We make the following observation:

\begin{fact}\label{fact:chisq-avg}
For each $0 \leq h \leq H$, it holds that
\[\Dchis{\pistar_h}{\pihat_h} = \En^{\pistar}\left[\frac{\Vhat(a_{1:h})}{\Vstar(a_{1:h})}\right] \cdot \En^{\pistar}\left[\frac{\Vstar(a_{1:h})}{\Vhat(a_{1:h})}\right] - 1.\]
\end{fact}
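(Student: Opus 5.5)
The plan is to expand the $\chi^2$-divergence directly in terms of the density ratio and then rewrite every quantity as an expectation under $\pistar_h$. Write $\pistar_h(x) = \pi_h(x)\Vstar(x)/Z^\star_h$, where $Z^\star_h := \En_{x\sim\pi_h}[\Vstar(x)]$, and recall $\pihat_h(x) = \pi_h(x)\Vhat(x)/\Zhat_h$. The density ratio is then
\[
\frac{\pistar_h(x)}{\pihat_h(x)} = \frac{\Zhat_h}{Z^\star_h}\cdot\frac{\Vstar(x)}{\Vhat(x)} .
\]
Using the standard identity $\Dchis{P}{Q} = \En_{P}[dP/dQ] - 1$, we get
\[
\Dchis{\pistar_h}{\pihat_h} = \frac{\Zhat_h}{Z^\star_h}\,\En^{\pistar}\!\left[\frac{\Vstar}{\Vhat}\right] - 1 .
\]

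The second step is to identify the normalizing ratio as the other expectation in the statement:
\[
\En^{\pistar}\!\left[\frac{\Vhat}{\Vstar}\right] = \sum_x \frac{\pi_h(x)\Vstar(x)}{Z^\star_h}\cdot\frac{\Vhat(x)}{\Vstar(x)} = \frac{\Zhat_h}{Z^\star_h}.
\]
Substituting this into the previous display gives the claimed formula.

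The only point needing care is supports and division by zero. The sum defining $\En^{\pistar}[\Vhat/\Vstar]$ ranges over $x$ with $\pistar_h(x)>0$, so over $\Vstar(x)>0$. The middle equality therefore needs $\Vhat$ to vanish wherever $\Vstar=0$ on $\supp(\pi_h)$, i.e.\ $\pihat_h\ll\pistar_h$.

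In the degenerate cases I would adopt the conventions that make both sides $+\infty$. If $\pistar_h\not\ll\pihat_h$, then $\Dchis{\pistar_h}{\pihat_h}=\infty$, and $\En^{\pistar}[\Vstar/\Vhat]=\infty$ because $\Vhat=0$ somewhere on the support of $\pistar_h$. The other direction of mismatch, $\pihat_h\not\ll\pistar_h$, would make the left side finite while the right side is not directly given by the computation. I would handle it by noting that the paper implicitly assumes mutual absolute continuity in the intended average-case setting (\cref{asmp:rohatgi-avg}). I would state the fact under that convention.

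Otherwise this is a routine two-line calculation with no real obstacle.
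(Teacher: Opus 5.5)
Your calculation is correct and is the computation the paper relies on. The fact is stated without a separate proof, but the same density-ratio identity $\pistar_h/\pihat_h=(\Zhat_h/Z)\,\Vstar/\Vhat$ and the resulting expression $\Zhat_h\,\En_{x\sim\pi_h}[\Vstar(x)^2/\Vhat(x)] = Z^2\bigl(1+\Dchis{\pistar_h}{\pihat_h}\bigr)$ appear in the derivation of \cref{thm:smc-chisq} from \cref{prop:smc-chisq} and in the proof of \cref{thm:dmc-chisq}.

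One sharpening of your support discussion. If $\Vhat>0$ on a $\pi_h$-positive set where $\Vstar=0$, the identity does not merely go undetermined; it fails. Whenever $\En^{\pistar}[\Vstar/\Vhat]<\infty$, the right side is strictly smaller than the left, because $\En^{\pistar}[\Vhat/\Vstar]$ then falls strictly below $\Zhat_h/Z$. So $\pihat_h\ll\pistar_h$ is a necessary hypothesis rather than a convention. It is also not implied by \cref{asmp:rohatgi-avg}, since all of that assumption's expectations are taken under $\pistar$.
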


Thus, under \cref{asmp:rohatgi-avg} as well as bounded action-level coverage, \cref{thm:smc-chisq} shows that SMC with $N \leq (C+1)^2 H^2 / \epsilon^2$ particles samples from a distribution that is $\epsilon$-close in total variation distance to $\pistar$. Note that closeness in total variation distance is a stronger guarantee than approximate coverage.

\section{Technical Tools}
\begin{proposition}\label{prop:MZ-inequality}
For any $R\geq 0$, independent random variables $X_1,\cdots,X_n$, it holds that
\begin{align}
  \En\abs*{\sum_{i=1}^n (X_i-\En[X_i])}\leq \sqrt{nR\max_{i}\En\abs{X_i}}+2n\max_i \En\brk*{\abs{X_i}\indic\crl{\abs{X_i}\geq R}}.
\end{align}
\end{proposition}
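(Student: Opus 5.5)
The plan is a truncation argument at level $R$: handle the bounded part with a second-moment (variance) bound and the tail part crudely with the triangle inequality. We may assume $\max_i \En\abs{X_i}<\infty$, since otherwise the right-hand side is infinite (when $R>0$) and there is nothing to prove. The degenerate case $R=0$ is covered by the same argument, because then the truncated parts below vanish.

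\textbf{Step 1 (decomposition).} For each $i$, set $Y_i := X_i\indic\crl{\abs{X_i}<R}$ and $Z_i := X_i\indic\crl{\abs{X_i}\geq R}$, so that $X_i = Y_i+Z_i$. Both $Y_i$ and $Z_i$ are integrable. Since $Y_i$ and $Z_i$ are measurable functions of $X_i$ alone, the $Y_1,\ldots,Y_n$ are still independent. Linearity of expectation and the triangle inequality give
\begin{align}
\En\abs*{\sum_{i=1}^n (X_i-\En[X_i])}\leq \En\abs*{\sum_{i=1}^n (Y_i-\En[Y_i])}+\En\abs*{\sum_{i=1}^n (Z_i-\En[Z_i])}.
\end{align}

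\textbf{Step 2 (tail part).} Apply the triangle inequality termwise, then $\abs{\En Z_i}\leq \En\abs{Z_i}$:
\begin{align}
\En\abs*{\sum_{i} (Z_i-\En[Z_i])}\leq 2\sum_i \En\abs{Z_i}\leq 2n\max_i \En\brk*{\abs{X_i}\indic\crl{\abs{X_i}\geq R}}.
\end{align}
This is exactly the second term of the claimed bound.

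\textbf{Step 3 (bounded part).} By Jensen's (or Cauchy--Schwarz) inequality, then independence of the $Y_i$, then $\Var(Y_i)\leq \En[Y_i^2]$:
\begin{align}
\En\abs*{\sum_i (Y_i-\En Y_i)}\leq \sqrt{\Var\prn*{\sum_i Y_i}}=\sqrt{\sum_i \Var(Y_i)}\leq\sqrt{\sum_i \En[Y_i^2]}.
\end{align}
Because $\abs{Y_i}\leq R$, we have $Y_i^2\leq R\abs{Y_i}\leq R\abs{X_i}$. Hence $\sum_i\En[Y_i^2]\leq nR\max_i\En\abs{X_i}$, which gives the first term of the claimed bound.

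Combining Steps 1--3 proves the proposition. There is no real obstacle. The only points needing care are that truncation preserves independence, which is what makes the variance of the sum additive, and that the bound $Y_i^2\leq R\abs{Y_i}$ is what lets the first moment $\En\abs{X_i}$, rather than a second moment, appear under the square root.
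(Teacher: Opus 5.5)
Your proof is correct and follows essentially the same argument as the paper: split each $X_i$ at level $R$, bound the bounded part by Jensen/Cauchy--Schwarz plus independence using $Y_i^2\le R\abs{X_i}$, and bound the tail part crudely by $2\sum_i \En\abs{Z_i}$. The only difference is cosmetic: the paper clips ($Y_i=\min\{\max\{X_i,-R\},R\}$), while you zero out ($Y_i=X_i\indic\{\abs{X_i}<R\}$). Both choices satisfy $\abs{Y_i}\le\min\{\abs{X_i},R\}$ and $\abs{Z_i}\le \abs{X_i}\indic\{\abs{X_i}\ge R\}$, so the resulting bounds are identical.
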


\begin{proof}
We define $Y_i=\min\crl{\max\crl{X_i,-R},R}$ and $Z_i=X_i-Y_i$. Then, we can bound
\begin{align}
  \En\abs*{\sum_{i=1}^n (X_i-\En[X_i])}\leq&~
  \En\abs*{\sum_{i=1}^n (Y_i-\En[Y_i])}+\sum_{i=1}^n \En\abs{Z_i-\En[Z_i]}.
\end{align}
Note that $\En\abs{Z_i-\En[Z_i]}\leq 2\En\abs{Z_i}\leq 2\En\brk*{\abs{X_i}\indic\crl{\abs{X_i}\geq R}}$. Further,
\begin{align}
  \prn*{\En\abs*{\sum_{i=1}^n (Y_i-\En[Y_i])}}^2\leq&~ \En\prn*{\sum_{i=1}^n (Y_i-\En[Y_i])}^2 \\
  =&~\sum_{i=1}^n \En(Y_i-\En[Y_i])^2
  \leq \sum_{i=1}^n \En Y_i^2\leq R\sum_{i=1}^n \En\abs{X_i},
\end{align}
where we use $\abs{Y_i}\leq \min\crl{\abs{X_i},R}$. The proof is then completed by combining the inequalities above.
\end{proof}

\section{Proofs for SMC}\label{appdx:SMC}
\paragraph{Notation}
We define $\cH_h = \crl{(\tx_{\ell}^{i}, x_\ell\ind{i})}_{\ell\in\brk{h},i\in\brk{N}}$ to be the full history of the execution of SMC (\cref{alg:smc}) up to the end of iteration $h$. We write $\En_h[\cdot]$ for the conditional distribution with respect to $\cH_h$. We write $Z := \E_{x_{1:H} \sim \piref}[r^\star(x_H)]$ and observe that $Z = \E_{x \sim \pi_h}[\Vstar(x)]$ for all $h$.

\subsection{Useful Lemmas for SMC}
We first prove the following lemmas.

\begin{lemma}
\label{clm:induction-wh} 
For any step $h$ and any function $g : \MX_h \to \BR$, we have that
  \begin{align}
    \label{eq:inductive-step-smc}
    \En_{h-1}\brk*{\What_h\cdot\En_{\nuhat_h}\brk*{g}} = \What_{h-1}\cdot\En_{x_{h-1} \sim \nuhat_{h-1}}\brk*{
     \frac{ \En_{x_h \sim \piref(\cdot \mid x_{h-1})}\brk*{\Vhat(x_h) \cdot g(x_{h})}}{\Vhat(x_{h-1})}
    }.
  \end{align}
\end{lemma}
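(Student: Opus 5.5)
The plan is a direct computation that unrolls one iteration of \cref{alg:smc}. Conditioned on $\cH_{h-1}$, the quantities $\What_{h-1}$ and $\nuhat_{h-1}$ are fixed. The particles $\tx_{h-1}\ind{i}$ are i.i.d.\ from $\nuhat_{h-1}$, and the $x_h\ind{i}$ are then drawn independently from $\pirefs[h-1](\cdot\mid\tx_{h-1}\ind{i})$. The key observation is that the product $\What_h\cdot\En_{\nuhat_h}[g]$ has its random normalization cancel. Since $\What_h = \What_{h-1}\cdot W_h/N$ and $\nuhat_h = W_h^{-1}\sum_i w_h\ind{i}\dirac_{x_h\ind{i}}$, we get
\begin{align}
\What_h\cdot\En_{\nuhat_h}[g] = \frac{\What_{h-1}}{N}\sum_{i=1}^N w_h\ind{i} g(x_h\ind{i}) = \frac{\What_{h-1}}{N}\sum_{i=1}^N \frac{\Vhat(x_h\ind{i})}{\Vhat(\tx_{h-1}\ind{i})}\, g(x_h\ind{i}).
\end{align}
This is a plain sum of i.i.d.\ terms with no ratio of random quantities left.

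Next I take $\En_{h-1}$ of this expression. By linearity and the i.i.d.\ structure, each summand has the same conditional expectation, so the factor $\frac{1}{N}\sum_i$ collapses to a single expectation. For one summand, I first condition on $\tx_{h-1}\ind{i}=x_{h-1}$ and integrate over $x_h\sim\pirefs[h-1](\cdot\mid x_{h-1})$. This gives
\begin{align}
\frac{\En_{x_h\sim\piref(\cdot\mid x_{h-1})}[\Vhat(x_h)g(x_h)]}{\Vhat(x_{h-1})}.
\end{align}
I then average over $x_{h-1}\sim\nuhat_{h-1}$, and multiplying by the $\cH_{h-1}$-measurable factor $\What_{h-1}$ yields \eqref{eq:inductive-step-smc}.

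The only point needing care is well-definedness when $\Vhat(\tx_{h-1}\ind{i})=0$ or $W_h=0$. Particles are drawn from $\nuhat_{h-1}$, which puts mass only on points with positive weight, hence positive $\Vhat$. So the denominators are nonzero almost surely, with the base case $h=1$ using $\Vhat(\perp)>0$. If $W_h=0$, the product $\What_h\En_{\nuhat_h}[g]$ is interpreted as $0$, which agrees with the unnormalized sum. These degenerate cases are the main thing to check; the rest is routine.
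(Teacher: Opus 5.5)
Your proposal is correct and follows essentially the same route as the paper: cancel the random normalization via $\What_h = \What_{h-1}W_h/N$ to reduce to $\frac{\What_{h-1}}{N}\sum_i w_h\ind{i} g(x_h\ind{i})$, then use that the pairs $(\tx_{h-1}\ind{i}, x_h\ind{i})$ are i.i.d.\ given $\cH_{h-1}$ and evaluate a single summand by iterated expectation. Your remarks on the degenerate cases ($\Vhat(\tx_{h-1}\ind{i})=0$, $W_h=0$) are a harmless extra check that the paper leaves implicit.
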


The next lemma follows from applying \cref{clm:induction-wh} inductively.

\begin{lemma}\label{lem:SMC-unbiased}
  For any $g_h:\cX_h\to\RR$, it holds that
  \begin{align}
    \label{eq:SMC-unbiased-g}
        \En\brk*{\What_h\cdot\En_{\nuhat_h}\brk*{g_h}} = \En_{x_h \sim \pi_h}\brk*{\Vhat(x_h)g_h(x_h)}.
  \end{align}
  In particular, $\En[\What_h]=\Zhat_h = \En\brk{\Vhat(x_h)}$ for $h\geq 1$ and
  \begin{align}
    \label{eq:SMC-R-unbiased-nu}
        \En\brk*{\What_H \nuhat_H(x)} = Z\cdot \pistar_H(x), \qquad \forall x\in\cX_H.
  \end{align}
\end{lemma}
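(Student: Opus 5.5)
We prove \eqref{eq:SMC-unbiased-g} by induction on $h$, using \cref{clm:induction-wh} as the inductive step. It is convenient to prove a slightly stronger, kernel-level statement. For $0\le k\le h$ and $x\in\cX_k$, define
\[
(Q_{k,h}g)(x) := \En_{x_{k+1:h}\sim\piref}\brk*{\Vhat(x_h)g(x_h)\mid x_k=x}.
\]
The claim is then
\[
\En\brk*{\What_h\En_{\nuhat_h}[g]} = \En\brk*{\What_k\cdot \En_{x\sim\nuhat_k}\brk*{\frac{(Q_{k,h}g)(x)}{\Vhat(x)}}}\qquad\text{for all }k\le h.
\]
We argue by downward induction on $k$.

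\textbf{Base case $k=h$.} Here $(Q_{h,h}g)(x)=\Vhat(x)g(x)$, so the two sides coincide trivially.

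\textbf{Step from $k$ to $k-1$.} Apply the tower property with $\En_{k-1}$ and invoke \cref{clm:induction-wh} at step $k$ with test function $g'(x_k):=(Q_{k,h}g)(x_k)/\Vhat(x_k)$. The claim rewrites the right-hand side as
\[
\What_{k-1}\En_{x_{k-1}\sim\nuhat_{k-1}}\brk*{\frac{\En_{x_k\sim\piref(\cdot\mid x_{k-1})}\brk*{(Q_{k,h}g)(x_k)}}{\Vhat(x_{k-1})}}.
\]
The factor $\Vhat(x_k)$ cancels, and by the Markov property $\En_{x_k}[(Q_{k,h}g)(x_k)]=(Q_{k-1,h}g)(x_{k-1})$, which closes the induction.

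\textbf{Specializing to $k=0$.} We have $\What_0=1$ (empty product) and $\nuhat_0=\dirac_\perp$, so the right-hand side becomes $(Q_{0,h}g)(\perp)/\Vhat(\perp)$. This yields \eqref{eq:SMC-unbiased-g} once we reconcile the normalization by $\Vhat(\perp)$. Either $\Vhat(\perp)$ is normalized to $1$, or, more likely, the precise convention for $\What_h$ and $w_1$ absorbs it: the weight at step $1$ is $\Vhat(x_1)/\Vhat(\perp)$. Taken literally, the identity holds up to the constant factor $\Vhat(\perp)$. I would check the convention carefully, and if needed state the identity with $\Vhat(\perp)$ multiplied in; this constant is irrelevant for the later TV bounds, since only ratios matter. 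This bookkeeping is the only delicate point. A further care point is division by $\Vhat(x)=0$: if $\Vhat(x_k)=0$ the particle has weight zero and is never resampled, so the quotient can be set to $0$ without affecting the argument.

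\textbf{Consequences.} Taking $g\equiv1$ gives $\En[\What_h]=\En_{\pi_h}[\Vhat]=\Zhat_h$. For $h=H$, take $g=\indic\{\cdot=x\}$ and use $\Vhat(x_H)=r^\star(x_H)=\Vstar(x_H)$. This gives
\[
\En[\What_H\nuhat_H(x)]=\pi_H(x)\Vstar(x)=Z\pistar_H(x),
\]
where the last equality uses the definition $\pistar_H\propto\pi_H\Vstar$ with normalizer $Z=\En_{\pi_H}[\Vstar]$.
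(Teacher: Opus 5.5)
Your proof is correct and is essentially the paper's argument. The paper runs a forward induction on $h$: it applies \cref{clm:induction-wh} and then the inductive hypothesis at step $h-1$ with test function $x_{h-1}\mapsto \En[\Vhat(x_h)g_h(x_h)\mid x_{h-1}]/\Vhat(x_{h-1})$, which is your $Q_{k,h}$ recursion unrolled in the other direction. It resolves the normalization exactly as you anticipated, by taking $\What_0=\Vhat(\perp)$ in the base case, so effectively $\What_h=\Vhat(\perp)\prod_{i\le h}W_i/N$; this is also what makes $M_0=Z$ in the later martingale arguments.

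One correction to your aside on zeros. Setting the quotient to $0$ where $\Vhat(x_k)=0$ destroys the cancellation $\Vhat(x_k)\cdot (Q_{k,h}g)(x_k)/\Vhat(x_k)=(Q_{k,h}g)(x_k)$. The identity can then genuinely fail if such an $x_k$ has reachable descendants with $\Vhat>0$, since SMC never reaches them. So the lemma should be read under the assumption, which the paper also makes tacitly, that $\Vhat(x)=0$ forces $\Vhat$ to vanish on all reachable descendants (for example, $\Vhat>0$ on non-terminal states).
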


Finally, we upper bound the TV error of \cref{alg:smc} by the deviation of $\What_H$.
\begin{lemma}\label{lem:SMC-TV}
  For \cref{alg:smc}, it holds that
  \begin{align}
    \Dtv{\En[\nuhat_H]}{\pistar_H}\leq \frac1{2Z}\En\abs{\What_H-Z}.
  \end{align}
\end{lemma}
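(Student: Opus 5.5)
We want to bound $\Dtv{\En[\nuhat_H]}{\pistar_H}$ by $\frac{1}{2Z}\En\abs{\What_H-Z}$. The plan is to use \cref{lem:SMC-unbiased}, specifically \cref{eq:SMC-R-unbiased-nu}, which says $\En[\What_H\nuhat_H(x)] = Z\pistar_H(x)$ for every $x\in\cX_H$. This lets us write the target distribution as a reweighted expectation of the random empirical measure. For each $x$ we have
\begin{align}
\En[\nuhat_H(x)] - \pistar_H(x) = \En\brk*{\nuhat_H(x)} - \frac{1}{Z}\En\brk*{\What_H\nuhat_H(x)} = \frac{1}{Z}\En\brk*{(Z-\What_H)\,\nuhat_H(x)}.
\end{align}
Here we use that $\Vhat(x_H)=r^\star(x_H)$ at the last layer, so the unbiasedness identity indeed produces $\pistar_H$ with normalizer $Z=\Zhat_H$.

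Next we sum absolute values over $x\in\cX_H$. By Jensen's inequality (pulling the absolute value inside the expectation) and Tonelli,
\begin{align}
2\Dtv{\En[\nuhat_H]}{\pistar_H} = \sum_x \abs*{\En[\nuhat_H(x)]-\pistar_H(x)} \le \frac1Z\En\brk*{\abs{Z-\What_H}\sum_x \nuhat_H(x)} = \frac1Z\En\abs{\What_H-Z},
\end{align}
where the last step uses that $\nuhat_H$ is a probability measure, so its masses sum to one.

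Dividing by $2$ gives the claim. For general (non-discrete) $\cX_H$, the same argument goes through with the supremum over events $\cE$: apply \cref{eq:SMC-unbiased-g} with $g=\indic_{\cE}$, then bound $\abs{\En[(Z-\What_H)\nuhat_H(\cE)]}$. The one subtle point in that case is obtaining the factor $1/2$ rather than $1$. To get it, use that $\En[Z-\What_H]=0$ by \cref{lem:SMC-unbiased}, so we may replace $\nuhat_H(\cE)$ by $\nuhat_H(\cE)-\tfrac12$, which lies in $[-\tfrac12,\tfrac12]$. The main potential obstacle is just verifying that the unbiasedness identity holds at $h=H$ with normalizer exactly $Z$, which follows from $\Vhat=r^\star$ on $\cX_H$.

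Degenerate divisions $\Vhat(\tx)=0$ do not arise, since particles with zero weight are never resampled.
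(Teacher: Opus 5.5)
Your proof is correct and is essentially the paper's argument: rewrite $Z\pistar_H(x)$ as $\En[\What_H\nuhat_H(x)]$ via \cref{eq:SMC-R-unbiased-nu}, move the absolute value inside the expectation, and use $\sum_x\nuhat_H(x)=1$. Your extra remark for non-discrete $\cX_H$, which centers $\nuhat_H(\cE)$ at $\tfrac12$ using $\En[\What_H]=Z$, is also valid, but the paper does not need it because it works directly with sums over $\cX_H$.
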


\begin{proof}[\pfref{clm:induction-wh} and \cref{lem:SMC-unbiased}]
  To prove \cref{eq:inductive-step-smc}, observe that
  \begin{align}
    &\En_{h-1}\brk*{\What_h\cdot\En_{\nuhat_h}\brk*{g}} \\
    &= \What_{h-1}\cdot\En_{h-1}\brk*{\frac{1}{N}
    \sum_{i=1}^{N}w_h\ind{i}g(x_{h}\ind{i})
    }\\
    &= \What_{h-1}\cdot\En_{h-1}\brk*{
    \frac{1}{N}\sum_{i=1}^{N}\frac{\Vhat(x_h\ind{i})}{\Vhat(\tx_{h-1}\ind{i})} g(x_{h}\ind{i})
    }.
  \end{align}
  Since $\crl*{(\tx_{h-1}\ind{i},x_h\ind{i})}$ are \iid given $\cH_{h-1}$, for each $i \in [N]$, we have
  \begin{align}
\E_{h-1} \left[ \frac{\Vhat(x_h\ind{i})}{\Vhat(\tx_{h-1}\ind{i})} g(x_h\ind{i})\right] = \E_{\tx_{h-1}\sim \hat\nu_{h-1}} \E_{x_h \sim \piref(\cdot \mid \tx_{h-1})}\brk*{ \frac{\Vhat(x_h)}{\Vhat(\tx_{h-1})} g(x_h) },
  \end{align}
  which establishes \cref{eq:inductive-step-smc}. Next, we argue that \cref{eq:SMC-unbiased-g} holds inductively (noting that the base case is trivial since $\What_0 = \Vhat(\perp)$ and $\nuhat_0 = \delta_\perp$), using \cref{eq:inductive-step-smc}:
  \begin{align}
  \E \left[ \What_h \cdot \E_{\nuhat_H}[g_h] \right] =& \E \left[ \What_{h-1} \cdot \E_{x_{h-1} \sim \nuhat_{h-1}}\brk*{
     \frac{ \E_{x_h \sim \piref(\cdot \mid x_{h-1})}\brk*{\Vhat(x_h) \cdot g_h(x_{h})}}{\Vhat(x_{h-1})}
    } \right] \\
    =& \E \left[ \Vhat(x_{h-1}) \cdot \frac{\E_{x_h \sim \piref(\cdot \mid x_{h-1})}[\Vhat(x_h) g_h(x_h)]}{\Vhat(x_{h-1})} \right]\\
    =& \E[\Vhat(x_h) g_h(x_h)],
  \end{align}
  as desired. We get that $\E[\What_h] = \E[\Vhat(x_h)]$ by using $g_h \equiv 1$. Finally, \cref{eq:SMC-R-unbiased-nu} follows by setting $g_H(x) = \indic\crl{x_H = x}$ in \cref{eq:SMC-unbiased-g} and using that $\Vhat(x_H) = \Vstar(x_H)$ for $x_H \in \MX_H$. 
\end{proof}

\begin{proof}[\pfref{lem:SMC-TV}]
By \cref{eq:SMC-R-unbiased-nu}, we can write
\begin{align}
  2Z\Dtv{\En[\nuhat_H]}{\pistar_H}=&~
  \sum_{x\in\cX_H} \abs{Z\En[\nuhat_H(x)]-Z\pistar_H(x)}
  =\sum_{x\in\cX_H} \abs*{Z\En[\nuhat_H(x)]-\En[\What_H \nuhat_H(x)]} \\
  \leq&~ \sum_{x\in\cX_H} \En\brk*{\abs*{\What_H-Z}\nuhat_H(x)}
  =\En\brk*{ \abs*{\What_H-Z}\cdot \sum_{x\in\cX_H} \nuhat_H(x) }
  =\En\abs*{\What_H-Z}.
\end{align}
\end{proof}

\subsection{Proof of \cref{thm:smc-chisq}}\label{appdx:pf-smc-chisq}

We prove a slightly stronger upper bound.
\begin{proposition}\label{prop:smc-chisq}
SMC (\cref{alg:smc}) with $N$ particles achieves
\begin{align}
    \Dtv{\En[\nuhat_H]}{\pistar_H}\leq \frac{1}{\sqrt{N}}\sum_{h=1}^H \sqrt{\En_{\piref}[\Vhat(x_{h-1})]\En_{\piref}\brk*{ \frac{\Vstar(x_h)^2}{\Vhat(x_{h-1})} }}.
\end{align}
\end{proposition}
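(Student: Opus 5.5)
By \cref{lem:SMC-TV}, it suffices to bound $\En\abs{\What_H-Z}$ by $Z/\sqrt{N}$ times twice the sum on the right-hand side; the factor $\frac{1}{2Z}$ leaves room for this. The plan is a martingale/telescoping decomposition of $\What_H$ along the horizon, with the value function $\Vstar$ as the test function.

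Define $M_h := \What_h\cdot \En_{\nuhat_h}[\Vstar/\Vhat]$ for $0\le h\le H$. Since $\Vhat=\Vstar$ on $\cX_H$, we have $M_H=\What_H$. Since $\nuhat_0=\dirac_\perp$, $\What_0=\Vhat(\perp)$ and $\Vstar(\perp)=Z$, we have $M_0=Z$. Telescoping gives
\begin{align}
\En\abs{\What_H-Z}\le \sum_{h=1}^H \En\abs{M_h-M_{h-1}}.
\end{align}

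By \cref{clm:induction-wh} with $g=\Vstar/\Vhat$ on $\cX_h$, and using $\En_{x_h\sim\piref(\cdot\mid x_{h-1})}[\Vstar(x_h)]=\Vstar(x_{h-1})$, we get $\En_{h-1}[M_h]=\What_{h-1}\En_{\nuhat_{h-1}}[\Vstar/\Vhat]=M_{h-1}$. So the increments are martingale differences. Explicitly,
\begin{align}
M_h-M_{h-1}=\frac{\What_{h-1}}{N}\sum_{i=1}^N \prn*{Y_i-\En_{h-1}[Y_i]},\qquad Y_i=\frac{\Vstar(x_h\ind{i})}{\Vhat(\tx_{h-1}\ind{i})},
\end{align}
where the $Y_i$ are i.i.d. given $\cH_{h-1}$. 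Jensen and conditional independence then give
\begin{align}
\En_{h-1}\abs{M_h-M_{h-1}}\le \frac{\What_{h-1}}{\sqrt N}\sqrt{\En_{h-1}[Y_1^2]},
\end{align}
with
\begin{align}
\En_{h-1}[Y_1^2]=\En_{x_{h-1}\sim\nuhat_{h-1}}\brk*{\frac{\En_{\piref}[\Vstar(x_h)^2\mid x_{h-1}]}{\Vhat(x_{h-1})^2}}.
\end{align}

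The main step is to take the outer expectation of $\What_{h-1}\sqrt{\En_{h-1}[Y_1^2]}$. I would write it as $\sqrt{\What_{h-1}}\cdot\sqrt{\What_{h-1}\En_{h-1}[Y_1^2]}$ and apply Cauchy--Schwarz over the randomness of $\cH_{h-1}$. This yields the bound $\sqrt{\En[\What_{h-1}]}\cdot\sqrt{\En[\What_{h-1}\En_{\nuhat_{h-1}}[G]]}$, where
\begin{align}
G(x_{h-1})=\frac{\En_{\piref}[\Vstar(x_h)^2\mid x_{h-1}]}{\Vhat(x_{h-1})^2}.
\end{align}
By \cref{lem:SMC-unbiased}, $\En[\What_{h-1}]=\En_{\piref}[\Vhat(x_{h-1})]$, which also holds at $h=1$ since $\What_0=\Vhat(\perp)$. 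Applying \cref{lem:SMC-unbiased} with $g_{h-1}=G$ gives
\begin{align}
\En[\What_{h-1}\En_{\nuhat_{h-1}}[G]]=\En_{\piref}\brk*{\frac{\Vstar(x_h)^2}{\Vhat(x_{h-1})}}.
\end{align}
Summing over $h$, this gives $\En\abs{\What_H-Z}\le \frac{1}{\sqrt N}\sum_h\sqrt{\cdots}$, which would yield a TV bound of $\frac{1}{2Z\sqrt N}\sum_h\sqrt{\cdots}$.

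So there is a normalization mismatch of $Z$ that I need to check. The stated bound is homogeneous of degree $0$ under scaling $\Vstar,\Vhat$ jointly, whereas my sum has degree $1$. The paper's stated RHS therefore presumably implicitly normalizes, e.g. $Z=1$. I expect this bookkeeping of normalizations to be the only delicate point. The fix is to divide through by $Z$, or equivalently to assume $Z=1$ without loss of generality by rescaling $r^\star$, $\Vstar$ and $\Vhat$ together, which leaves SMC invariant. This gives the claim, even with an extra factor $1/2$.
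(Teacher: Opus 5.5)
Your argument matches the paper's proof step for step: the same martingale $M_h=\What_h\En_{\nuhat_h}[\Vstar/\Vhat]$, the same conditional-variance bound on the increments, and the same Cauchy--Schwarz step combined with \cref{lem:SMC-unbiased}. Your normalization remark is also correct: via \cref{lem:SMC-TV} the paper's proof likewise only yields $\frac{1}{2Z\sqrt{N}}\sum_h\sqrt{\cdots}$, so the stated bound implicitly takes $Z=1$, as does the paper's identity $\En_{\piref}[\Vhat(x_{h-1})]\En_{\piref}[\Vstar(x_{h-1})^2/\Vhat(x_{h-1})]=1+\Dchis{\pistar_{h-1}}{\pihat_{h-1}}$, which in general equals $Z^2(1+\Dchis{\pistar_{h-1}}{\pihat_{h-1}})$. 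Note, however, that ``assume $Z=1$'' is not a genuine without-loss-of-generality reduction: the stated right-hand side (exactly your bound on $\En\abs{\What_H-Z}$) scales linearly under $(r^\star,\Vstar,\Vhat)\mapsto(cr^\star,c\Vstar,c\Vhat)$ while the total variation distance is invariant, so the literal statement can fail for small $Z$, and the scale-invariant bound with the factor $\frac{1}{2Z}$ that you derived is the correct general form.
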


\cref{thm:smc-chisq} then follows from the fact that $\Vstar(x_h)\leq \Cact \Vstar(x_{h-1})$ almost surely under $x_h\sim \piref(\cdot\mid{}x_{h-1})$ and
\begin{align}
\En_{\piref}[\Vhat(x_{h-1})]\En_{\piref}\brk*{ \frac{\Vstar(x_{h-1})^2}{\Vhat(x_{h-1})} }
=1+\Dchis{\pistar_{h-1}}{\pihat_{h-1}}.
\end{align}
\qed

\begin{proof}[\pfref{prop:smc-chisq}]
We define $\rho(x)=\frac{\Vstar(x)}{\Vhat(x)}$ and
\begin{align}
  M_h=\What_h \cdot \En_{x_h\sim \nuhat_h}[\rho(x_h)].
\end{align}
Then \cref{eq:inductive-step-smc} with $g(x) = \rho(x)$ implies $(M_h)$ is a martingale with $M_0=Z$ and $M_H=\What_H$. We can then decompose
\begin{align}
  \En\abs{\What_H-Z}\leq \sum_{h=1}^H \En\abs{M_h-M_{h-1}}.
\end{align}
Next, we note that
\begin{align}
  {M_h-M_{h-1}}=&~\What_{h-1} \prn*{\frac{1}{N}\sum_{i=1}^N w_h\ind{i}\rho(x_h\ind{i}) - \En_{x_{h-1}\sim \nuhat_{h-1}}[\rho(x_{h-1})] } \\
  =&~ \What_{h-1} \prn*{\frac{1}{N}\sum_{i=1}^N \frac{\Vstar(x_h\ind{i})}{\Vhat(\tx_{h-1}\ind{i})} - \En_{x_{h-1}\sim \nuhat_{h-1}}[\rho(x_{h-1})] }.
\end{align}
Since the tuples $(\tx_{h-1}\ind{i}, x_h\ind{i})$ are \iid given $\cH_{h-1}$, we can bound
\begin{align}
  \frac{1}{\What_{h-1}^2} \En_{h-1}(M_h-M_{h-1})^2 \leq&~\frac1N \Var_{x_{h-1}\sim \nuhat_{h-1}, x_h\sim \piref(\cdot\mid{}x_{h-1}) }\brk*{ \frac{\Vstar(x_h)}{\Vhat(x_{h-1})} } \\
  \leq&~ \frac1N \En_{x_{h-1}\sim \nuhat_{h-1} }\brk*{ \frac{\En_{x_h \sim \piref(\cdot \mid x_{h-1})}[\Vstar(x_h)^2]}{\Vhat(x_{h-1})^2} } \rdef \Delta_{h-1}.
\end{align}
Then, by \cref{lem:SMC-unbiased}, it holds that
\begin{align}
  \En[\What_{h-1}]\En[\What_{h-1}\Delta_{h-1}]
  =\frac1N\En_{\piref}[\Vhat(x_{h-1})]\En_{\piref}\brk*{ \frac{\Vstar(x_h)^2}{\Vhat(x_{h-1})} }. 
\end{align}
Therefore, we can bound
\begin{align}
  \En\abs{\What_H-Z}\leq&~ \sum_{h=1}^H \En\brk*{ \En_{h-1}\abs{M_h-M_{h-1}} }\\
  \leq&~ \sum_{h=1}^H \En\brk*{ \What_{h-1}\sqrt{\Delta_{h-1}} } 
  \leq \sum_{h=1}^H \sqrt{\En[\What_{h-1}] \cdot \En\brk*{ \What_{h-1}\Delta_{h-1}} }  \\
  \leq&~ \frac{1}{\sqrt{N}}\sum_{h=1}^H \sqrt{\En_{\piref}[\Vhat(x_{h-1})]\En_{\piref}\brk*{ \frac{\Vstar(x_h)^2}{\Vhat(x_{h-1})} }}.
\end{align}
Combining the inequalities above completes the proof.
\end{proof}

\subsection{Insufficiency of Existing Analysis}\label{sec:compare-smc}

Most non-asymptotic analyses of SMC assume conditions at least as strong as the $L_\infty$ closeness in \cref{asmp:L-inf}~\citep{del2004feynman,cerou2011nonasymptotic,whiteley2012linear}. Another line of work~\citep{marion2018finite,lee2024convergence} imposes assumptions on the transition kernel, which are therefore incomparable to those in this paper. 

We note that these approaches might fail to capture SMC convergence when we assume only $\chi^2$-closeness (\cref{thm:smc-chisq}). This is because many explicitly or implicitly control the variance $(\Var(\What_h))_{h\in[H]}$~\citep[etc.]{cerou2011nonasymptotic,whiteley2012linear,schweizer2012non}. In contrast, our proof of \cref{thm:smc-chisq} proceeds via the absolute deviation $\En\abs{M_H-Z}$. The seemingly more natural attempt to study the variance $\En\prn{M_H-Z}^2$ must confront the fact that $M_H=\What_H=\prod_{h=1}^H\prn*{\frac1N\sum_{i=1}^N w_h\ind{i}}$ is a sequential product. As the following lemma suggests, $\Var(\What_h)$ need not be controlled when the $\chi^2$ divergences $(\Dchis{\pistar_h}{\pihat_h})_{h\in[H]}$ are bounded. 

\begin{lemma}\label{lem:smc-var-lower}
Suppose that $\Vhat(\perp)=1$.
For any $h\in[H]$, it holds that
\begin{align}
  \Var(\What_h)=\En(\What_h-\Zhat_h)^2\geq \frac{\Var_{x_1\sim \pi_1}(\En[\Vhat(x_h)\mid x_1])}{N}.
\end{align}
In particular, for $H\geq2$, there exists a problem instance $(\Vstar, \Vhat)$ such that $\Dchis{\pistar_h}{\pihat_h}\leq 2$, and both \cref{asmp:act-cov} and \cref{asmp:act-cov-Vhat} holds with $\Cact=2$, but SMC with $N$ particles incurs
\begin{align}
  \max_{h\in[H]} \Var(\What_h) \geq \frac{2^{\floor{H/2}}-2}{N}.
\end{align}
\end{lemma}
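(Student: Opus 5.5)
The first claim follows from the law of total variance, conditioning on the history $\cH_1$ after the first step; the second follows from rescaling an exact value function layer by layer.

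\textbf{Step 1: reduce $\En[\What_h\mid\cH_1]$ to an i.i.d.\ average.} Since $\Var(\What_h)\geq \Var(\En[\What_h\mid \cH_1])$, it suffices to compute $\En[\What_h\mid\cH_1]$. I iterate \cref{clm:induction-wh} with $g\equiv 1$ from step $h$ down to step $2$, exactly as in the proof of \cref{lem:SMC-unbiased} but conditionally on $\cH_1$. This gives
\[
\En[\What_h\mid \cH_1]=\What_1\cdot \En_{x_1\sim\nuhat_1}\brk*{\frac{G(x_1)}{\Vhat(x_1)}},\qquad G(x_1):=\En_{\piref}[\Vhat(x_h)\mid x_1].
\]
At step $1$, all $\tx_0\ind{i}=\perp$ and $\Vhat(\perp)=1$, so $w_1\ind{i}=\Vhat(x_1\ind{i})$ and $\What_1\nuhat_1=\frac1N\sum_i \Vhat(x_1\ind{i})\dirac_{x_1\ind{i}}$. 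The display therefore collapses to
\[
\En[\What_h\mid\cH_1]=\frac1N\sum_{i=1}^N G(x_1\ind{i}),
\]
an average of $N$ i.i.d.\ terms with $x_1\ind{i}\sim\pi_1$. Its variance is $\Var_{x_1\sim\pi_1}(G(x_1))/N$, which proves the first claim. The only care needed is to make the conditional iteration of \cref{clm:induction-wh} explicit, which is routine.

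\textbf{Step 2: construct the instance.} The key observation is that $\pihat_h$ is invariant under rescaling $\Vhat$ on layer $h$ by a constant $c_h>0$. So I take $\Vhat(x_h)=c_h\Vstar(x_h)$, which gives $\pihat_h=\pistar_h$ and hence zero $\chi^2$-divergence, while the scales $c_h$ inflate the variance.

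Concretely, let $\cX_1=\{0,1\}$ with $\pi_1$ uniform. After step $1$ the chain evolves deterministically, carrying $x_1$ forward. Set $r^\star=3/2$ on branch $1$ and $r^\star=1/2$ on branch $0$. Then:
\begin{itemize}
\item $\Vstar(\perp)=1$, and $\Vstar(x_h)\in\{3/2,1/2\}$ is constant along each branch for $h\geq1$.
\item \cref{asmp:act-cov} holds with $\Cact=2$, since $3/2\leq 2$ and the ratios along each branch equal $1$.
\item $\Vstar$ is strictly positive, which avoids $0/0$ weights.
\end{itemize}
Define $\Vhat(\perp)=1$ and $\Vhat(x_h)=c_h\Vstar(x_h)$ with $c_h=2^{\min\{h-1,H-h\}}$, so that $c_1=c_H=1$ and hence $\Vhat(x_H)=r^\star(x_H)$.

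\textbf{Step 3: check the assumptions and the bound.}
\begin{itemize}
\item \cref{asmp:act-cov-Vhat} with $\Cacthat=2$: at step $1$, $\Vhat(x_1)\leq 3/2\leq 2\Vhat(\perp)$. Afterwards $c_{h+1}\leq 2c_h$ and $\Vstar$ is constant along branches.
\item The $\chi^2$-divergences are $\Dchis{\pistar_h}{\pihat_h}=0\leq 2$ for all $h$.
\item By Step 1, $G(x_1)=c_h\Vstar(x_1)$, so $\Var(G)=c_h^2\cdot\tfrac14$. Taking $h=\floor{H/2}$ or $h=\floor{H/2}+1$, whichever maximizes $\min\{h-1,H-h\}$, gives $c_h^2=4^{\floor{(H-1)/2}}$.
\end{itemize}
I then need to verify that $4^{\floor{(H-1)/2}}/4\geq 2^{\floor{H/2}}-2$ for all $H\geq 2$. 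For small $H$ the right side is $\leq 0$ or tiny. For larger $H$ the left side grows like $2^{H}$, which dominates $2^{H/2}$. I expect this parity and small-$H$ bookkeeping to be the only real obstacle. If it fails at some small $H$, I would change $r^\star$ to $\{2,0\}$-like values with a small positive floor, to enlarge $\Var_{x_1}(\Vstar(x_1))$ toward $1$.
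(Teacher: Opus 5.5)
Your Step 1 is the paper's argument. Your $\En[\What_h\mid\cH_1]=\frac1N\sum_i G(x_1\ind{i})$ is exactly the paper's $M_{1,h}$, and the law of total variance is its martingale-orthogonality step $\En(\What_h-\Zhat_h)^2\ge\En(M_{1,h}-\Zhat_h)^2$. The problems are in Steps 2--3.

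\textbf{The bound fails for your instance at $H=4$ and $H=6$.} In your instance every weight after step $1$ equals $c_h/c_{h-1}$. So $\What_h=c_h\cdot\frac1N\sum_i\Vstar(x_1\ind{i})$ and $\Var(\What_h)=c_h^2/(4N)$ exactly, giving $\max_h\Var(\What_h)=4^{\floor{(H-1)/2}-1}/N$. This is $1/N<2/N$ at $H=4$ and $4/N<6/N$ at $H=6$. Because the variance is computed exactly, the claimed bound is false for this instance, not just unverified. Your fallback $r^\star\in\{2-\epsilon,\epsilon\}$ fixes this. It gives $\max_h\Var(\What_h)=4^{\floor{(H-1)/2}}(1-\epsilon)^2/N$, and $(1-\epsilon)^2\ge 1/2$ suffices for every $H\ge 2$, so commit to it.

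\textbf{More importantly, the instance is degenerate.} Requiring $\Dchis{\pistar_h}{\pihat_h}=0$ forces $\Vhat=c_h\Vstar$ layerwise. SMC sees $\Vhat$ only through the ratios $w_h\ind{i}$, so such a rescaling leaves every $\nuhat_h$ exactly as in SMC with the perfect PRM and just multiplies $\What_h$ by the constant $c_h$. The normalized variance $\Var(\What_h)/\Zhat_h^2$ stays at $\Var_{\pi_1}(\Vstar(x_1))/N$.

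The displayed inequality in the lemma does not normalize, so your repaired instance meets its letter. It cannot support the use the paper makes of the lemma, though: the claim that variance-based analyses such as \citet{schweizer2012non} cannot recover \cref{thm:smc-chisq}. Renormalizing $\Vhat$ on each intermediate layer changes nothing about the algorithm and removes your blow-up entirely.

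The missing idea is to make the variance large while keeping the normalizers $\Zhat_h$ bounded, which requires bounded but nonzero $\chi^2$. The paper does this as follows.
\begin{itemize}
\item It takes $\Vstar\equiv 1$ and a rare branch of $\pi_1$-mass $p=2^{-n}$, with $n=\floor{H/2}$.
\item On that branch $\Vhat$ doubles each step up to $h=n$ and then halves back to $1$.
\item Every $\Zhat_h=p\Vhat((h,0))+1-p$ lies in $[1,2]$.
\item $\pihat_h$ puts at most about half its mass on the rare branch, so $\chi^2\le 2$.
\item Yet $\Vtil_n((1,0))=2^n$ makes $\Var_{\pi_1}(\Vtil_n)$ of order $2^n$, a blow-up that survives normalization.
\end{itemize}
Keeping the $\Zhat_n^2=(2-p)^2$ term, the paper's first-step computation actually gives $(2^n-3+3p-p^2)/N$. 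That is slightly below the stated $2^n-2$ once $n\ge 2$, which underlines that the real content is exponential growth with bounded $\Zhat_h$, not the exact constant.
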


Therefore, if convergence of SMC is established via upper bounds on $\Var(\What_h)$, the rate can be vacuous unless we have $N\geq \exp(\Omega(H))$ particles. For example, 
\citet{schweizer2012non} study SMC through the following relation (cf. Lemma 2.1 therein) for $f:\cX_H\to[0,1]$:
\begin{align}
  \En\prn*{\En_{\nuhat_H}[f]-\En_{\pistar_H}[f]}^2
  \leq 2\Var(\What_H\cdot \En_{\nuhat_H}[f])+2\Var(\What_H).
\end{align}
Consequently, \citet{schweizer2012non} establish upper bounds of the form
\begin{align}
  N\cdot \Var(\What_H\cdot \En_{\nuhat_H}[f])\leq C_0(f)+C_1(f)\cdot \eps_N,
\end{align} 
where $C_0, C_1$ are two problem-dependent functionals, and $\eps_N$ is defined so that
\begin{align}
  \eps_N\geq \max_{h\in[H]}\Var(\What_h).
\end{align}
This implies that the analysis in \citet{schweizer2012non} cannot recover the upper bound of \cref{thm:smc-chisq} (and \cref{thm:smc-coverage}).

\begin{proof}[\pfref{lem:smc-var-lower}]
For any $h\leq \ell$, we define $\Vtil_\ell(x_h)= \En[\Vhat(x_\ell)\mid{} x_h]$ and $\rho_\ell(x_h)=\frac{\Vtil_\ell(x_h)}{\Vhat(x_h)}$, and we define $M_{h,\ell}=\What_h \En_{\nuhat_h}[\rho_\ell]$. Then it is straightforward to verify that $(M_{h,\ell})_{h\in[\ell]}$ is a martingale with $M_{0,\ell}=\Zhat_\ell$ and $M_{\ell,\ell}=\What_\ell$. 

Therefore, we can lower bound
\begin{align}
  \En(\What_\ell-\Zhat_\ell)^2\geq &~\En(M_{1,\ell}-\Zhat_\ell)^2 \\
  =&~\En\brk*{ \prn*{\frac{1}{N}\sum_{i=1}^N \frac{\Vtil_\ell(x_1\ind{i})}{\Vhat(\tx_{0}\ind{i})} - \En_{\nuhat_{0}}[\rho_\ell] }^2 } \\
  =&~\frac{1}{N}\prn*{ \En_{x_{1}\sim \pi_1}[\Vtil_\ell(x_1)^2] - \Zhat_\ell^2} ,
\end{align}
where we note that $\tx_0\ind{i}=\perp$ for $i\in[N]$.

Now, consider $n=\floor{H/2}$. Consider the following simple example:
\begin{itemize}
  \item For each $h\in[H]$, $\cX_h=\crl{(h,0),(h,1)}$, and $P((h+1,i)\mid (h,i))=1$ for $i\in\crl{0,1}$. 
  \item The initial distribution is given by $\pi_1((1,0))=\frac1{2^n}=\pi_1((1,1))$.
  \item The reward function is $r\equiv 1$, and hence $\Vstar(x)=1$ for all $x\in\cX$.
  \item For each $h\in[H]$, we set $\Vhat((h,1))=1$ and $\Vhat((h,0))=2^h$ for $h\leq n$, and $\Vhat((h,0))=2^{(2n-h)_+}$ for $h\geq n$.
\end{itemize}
Then, it is clear that both \cref{asmp:act-cov} and \cref{asmp:act-cov-Vhat} hold with $\Cact=2$, and $\pihat_h$ is given by $\pihat((h,0))=\frac{p\Vhat((h,0))}{p\Vhat((h,0))+1-p}$, and hence $\pihat((h,0))\in[p,\frac12]$. This immediately implies $\Dchis{\pistar_h}{\pihat_h}\leq 2$ for any $h\in[H]$.

However, we can then calculate $\Vtil_n((1,0))=2^n$, and hence $\En[\Vtil_n(x_1)^2]\geq p\cdot 2^{2n}=2^n$. This gives the desired statement.
\end{proof}

\subsection{\pfref{thm:smc-coverage}}

We prove the following general version of \cref{thm:smc-coverage}. 
\begin{theorem}
  \label{thm:smc-coverage-full}
SMC with $N$ particles achieves the following for any $M\geq 1$, $\eta\geq 1$, 
\begin{align}
  \Dtv{\En[\nuhat_H]}{\pistar_H}\leq H\sqrt{\frac{M\eta}{N}}+\sum_{h=1}^H\brk*{ \Dcov[M]{\pistar_h}{\pihat_h}+\Dcovac[\eta,h]{\pistar}{\piref} },
\end{align}
where we define %
\begin{align}
  \Dcovac[\eta,h]{\pistar}{\piref}\ldef&~ \bbP_{x_{h-1}\sim \pistar_{h-1}, x_h\sim \pistar(\cdot\mid x_{h-1})}\prn*{ \frac{\pistar(x_h\mid x_{h-1})}{\piref(x_h\mid x_{h-1})}\geq \eta }\\
  =&~\bbP_{x_{h-1}\sim \pistar_{h-1}, x_h\sim \pistar(\cdot\mid x_{h-1})}\prn*{ \frac{\Vstar(x_h)}{\Vstar(x_{h-1})}\geq \eta }.
\end{align}
\end{theorem}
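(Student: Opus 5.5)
The plan is to reuse the martingale argument behind \cref{prop:smc-chisq}, but with a pointwise truncation of each particle's increment in place of the second-moment bound. By \cref{lem:SMC-TV} it suffices to show
\[
\En\abs{\What_H-Z}\le 2Z\Big(H\sqrt{M\eta/N}+\sum_h[\Dcov[M]{\pistar_h}{\pihat_h}+\Dcovac[\eta,h]{\pistar}{\piref}]\Big).
\]
As before, set $\rho=\Vstar/\Vhat$ and $M_h=\What_h\En_{\nuhat_h}[\rho]$. This is a martingale with $M_0=Z$ and $M_H=\What_H$, so
\[
\En\abs{\What_H-Z}\le\sum_{h=1}^H\En\abs{M_h-M_{h-1}}.
\]
Conditionally on $\cH_{h-1}$ we can write
\[
M_h-M_{h-1}=\What_{h-1}\cdot\frac1N\sum_i\bigl(X_i-\En_{h-1}X_i\bigr),\qquad X_i=\frac{\Vstar(x_h\ind{i})}{\Vhat(\tx_{h-1}\ind{i})},
\]
where the $X_i$ are conditionally \iid and nonnegative.

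\textbf{Step 1 (split into good and bad parts).} Call the pair $(x_{h-1},x_h)$ \emph{good} if $\Vstar(x_h)<\eta\Vstar(x_{h-1})$ and $\frac{\pistar_{h-1}(x_{h-1})}{\pihat_{h-1}(x_{h-1})}<M$. The second condition is equivalent to $\rho(x_{h-1})<MZ/\Zhat_{h-1}$. On the good event we therefore have the deterministic bound
\[
X_i< R:=\eta MZ/\Zhat_{h-1},
\]
and $R$ does not depend on the history. This is the key design choice. A threshold depending on $\What_{h-1}$, as a direct application of \cref{prop:MZ-inequality} would suggest, would break the unbiasedness identity used in Step 3. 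Write $X_i=Y_i+B_i$, where $Y_i=X_i\indic\{\text{good}\}$ and $B_i=X_i\indic\{\text{bad}\}$. By the triangle inequality,
\[
\En_{h-1}\Bigl|\tfrac1N\sum_i(X_i-\En X_i)\Bigr|\le\sqrt{\tfrac{\Var(Y_1)}{N}}+2\En_{h-1}B_1.
\]
This is the argument of \cref{prop:MZ-inequality}, with truncation by an indicator rather than by clipping.

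\textbf{Step 2 (bounded part).} We have $\Var(Y_1)\le R\,\En_{h-1}X_1=R\,M_{h-1}/\What_{h-1}$. The equality $\What_{h-1}\En_{h-1}X_1=M_{h-1}$ holds because $\En_{x_h\sim\piref(\cdot\mid x_{h-1})}\Vstar(x_h)=\Vstar(x_{h-1})$. Hence
\[
\What_{h-1}\sqrt{\Var(Y_1)/N}\le\sqrt{R\,\What_{h-1}M_{h-1}/N}.
\]
By Cauchy--Schwarz,
\[
\En\sqrt{\What_{h-1}M_{h-1}}\le\sqrt{\En\What_{h-1}\,\En M_{h-1}}=\sqrt{\Zhat_{h-1}Z},
\]
using $\En\What_{h-1}=\Zhat_{h-1}$ from \cref{lem:SMC-unbiased} and $\En M_{h-1}=Z$. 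Substituting $R$ gives exactly $Z\sqrt{\eta M/N}$ per step, so $HZ\sqrt{\eta M/N}$ after summing over $h$.

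\textbf{Step 3 (bad part).} We need $\En[\What_{h-1}\En_{h-1}B_1]$. The indicator of the bad event is a fixed function of $(x_{h-1},x_h)$, so \cref{lem:SMC-unbiased} applies with
\[
g(x_{h-1})=\frac{\En_{x_h\sim\piref(\cdot\mid x_{h-1})}\bigl[\Vstar(x_h)\indic\{\text{bad}\}\bigr]}{\Vhat(x_{h-1})}.
\]
This gives
\[
\En_{x_{h-1}\sim\pi_{h-1},\,x_h}\bigl[\Vstar(x_h)\indic\{\text{bad}\}\bigr]=Z\cdot\bbP_{\pistar_{h-1}\otimes\pistar(\cdot\mid x_{h-1})}(\text{bad}),
\]
where the last step uses \cref{eq:pistar-kernel}. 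A union bound over the two failure conditions bounds this probability by
\[
\Dcov[M]{\pistar_{h-1}}{\pihat_{h-1}}+\Dcovac[\eta,h]{\pistar}{\piref}.
\]
(The $h-1=0$ coverage term vanishes since $\MX_0$ is a point; reindexing gives the stated sum.) Multiplying by $2$, summing over $h$ and dividing by $2Z$ completes the bound.

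The main obstacle is choosing the truncation so that both halves close. It must be pointwise in the particle pair and history-independent, so that the tail part telescopes through \cref{lem:SMC-unbiased} into $\pistar$-probabilities. Yet its level must scale like $Z/\Zhat_{h-1}$, so that the Cauchy--Schwarz step in Step 2 cancels $\Zhat_{h-1}$. The threshold $R=\eta MZ/\Zhat_{h-1}$ is designed to do both.
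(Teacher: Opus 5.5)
Your proof is correct and is essentially the paper's argument: the same martingale $M_h$, the same deterministic threshold $R=M\eta Z/\Zhat_{h-1}$, the same Cauchy--Schwarz step through \cref{lem:SMC-unbiased}, and the same union bound over the two failure conditions. The only cosmetic difference is that you truncate by the indicator of the bad event rather than clipping at $R$ as in \cref{prop:MZ-inequality}; the paper's $R$ is also history-independent, so its direct use of that proposition causes no trouble. One small imprecision: the $h-1=0$ coverage term equals $\indic\{M\le 1\}$ rather than vanishing, but it coincides with the $h=H$ term since $\pihat_H=\pistar_H$, so your reindexing to $\sum_{h=1}^H$ is exact for every $M\ge 1$.
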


The quantity $\Dcovac[\eta,h]{\pistar}{\piref}$ is the action-level analogue of \cref{def:coverage}.
Under \cref{asmp:act-cov}, this additional term is $0$ when $\eta \geq \Cact$, recovering \cref{thm:smc-coverage}.

\begin{proof}
We follow the notation of the proof of \cref{thm:smc-chisq} (\cref{appdx:pf-smc-chisq}). Recall that
\begin{align}
  {M_h-M_{h-1}}
  =&~ \What_{h-1} \prn*{\frac{1}{N}\sum_{i=1}^N \frac{\Vstar(x_h\ind{i})}{\Vhat(\tx_{h-1}\ind{i})} - \En_{x_{h-1}\sim \nuhat_{h-1}}[\rho(x_{h-1})] },
\end{align}
where the tuples $(\tx_{h-1}\ind{i}, x_h\ind{i})$ are \iid given $\cH_{h-1}$. 
By \cref{prop:MZ-inequality}, for any fixed parameter $R>0$, we can bound
\begin{align}
  &~ \frac{1}{\What_{h-1}} \En_{h-1}\abs{M_h-M_{h-1}} \\
  \leq&~ \sqrt{\frac{R}{N}\En_{x_{h-1}\sim \nuhat_{h-1} }\brk*{ \frac{\Vstar(x_{h-1})}{\Vhat(x_{h-1})} }}+2\En_{x_{h-1}\sim \nuhat_{h-1}, x_h\sim \piref(\cdot\mid x_{h-1}) }\brk*{ \frac{\Vstar(x_h)}{\Vhat(x_{h-1})}\indic\crl*{\frac{\Vstar(x_h)}{\Vhat(x_{h-1})}\geq R} } \\
\end{align}
By \cref{lem:SMC-unbiased}, we have
\begin{align}
  &~\En\brk*{\What_{h-1}\sqrt{\En_{x_{h-1}\sim \nuhat_{h-1} }[\rho(x_{h-1})]}}\leq \sqrt{\En[\What_{h-1}]\En[\What_{h-1}\En_{x_{h-1}\sim \nuhat_{h-1} }[\rho(x_{h-1})]]}\leq \sqrt{\Zhat_{h-1}\cdot Z}, 
\end{align}
and similarly, for any $\eta>0$,
\begin{align}
  &~\En\brk*{\What_{h-1}\En_{x_{h-1}\sim \nuhat_{h-1}, x_h\sim \piref(\cdot\mid x_{h-1}) }\brk*{ \frac{\Vstar(x_h)}{\Vhat(x_{h-1})}\indic\crl*{\frac{\Vstar(x_h)}{\Vhat(x_{h-1})}\geq R} }} \\
  =&~ \En_{x_{h-1}\sim \pi_{h-1}, x_h\sim \piref(\cdot\mid x_{h-1})}\brk*{ \Vstar(x_h)\indic\crl*{\frac{\Vstar(x_h)}{\Vhat(x_{h-1})}\geq R} } \\
  \leq&~ \En_{x_{h-1}\sim \pi_{h-1}, x_h\sim \piref(\cdot\mid x_{h-1})}\brk*{ \Vstar(x_h)\indic\crl*{\frac{\Vstar(x_{h-1})}{\Vhat(x_{h-1})}\geq \eta^{-1}R} +\Vstar(x_h)\indic\crl*{\frac{\Vstar(x_h)}{\Vstar(x_{h-1})}\geq \eta}} \\
  =&~ Z\cdot \brk*{ \bbP_{x_{h-1}\sim \pistar_{h-1}}\prn*{ \frac{\Vstar(x_{h-1})}{\Vhat(x_{h-1})}\geq \eta^{-1}R }+\bbP_{x_{h-1}\sim \pistar_{h-1}, x_h\sim \pistar(\cdot\mid x_{h-1})}\prn*{ \frac{\Vstar(x_h)}{\Vstar(x_{h-1})}\geq \eta } },
\end{align}
where we use $\pi_{h-1}(x_{h-1})\cdot \Vstar(x_{h-1})=\pistar_{h-1}(x_{h-1})\cdot Z$ for $x_{h-1}\in\cX_{h-1}$ and $\piref(x_h\mid{}x_{h-1})\cdot \Vstar(x_h)=\pistar(x_h\mid{}x_{h-1})\cdot Z$.
Therefore, we can choose $R=M\eta \frac{Z}{\Zhat_{h-1}}$ to derive
\begin{align}
  \frac{1}{Z} \En\abs{M_h-M_{h-1}}
  \leq&~ \sqrt{\frac{M\eta}{N}}+2\bbP_{x_{h-1}\sim \pistar_{h-1}}\prn*{ \frac{\Vstar(x_{h-1})}{\Vhat(x_{h-1})}\geq M\cdot \frac{Z}{\Zhat_{h-1}} } \\
  &~ +2\bbP_{x_{h-1}\sim \pistar_{h-1}, x_h\sim \pistar(\cdot\mid x_{h-1})}\prn*{ \frac{\Vstar(x_h)}{\Vstar(x_{h-1})}\geq \eta }.
\end{align}
Taking summation completes the proof.
\end{proof}

\subsection{\pfref{thm:smc-linf}}\label{appdx:pf-smc-linf}

We first prove the following lemma.
\begin{lemma}
  \label{lem:linf-smc-concentration}
Under \cref{asmp:L-inf} and \cref{asmp:act-cov}, for any $\delta\in(0,1)$, if we run SMC (\cref{alg:smc}) with $N = \Omega (H \einf^4 \Cact^2 \log(H/\delta))$ particles, then with probability at least $1-\delta$, it holds that $|\What_H - Z| \leq Z/2$.
\end{lemma}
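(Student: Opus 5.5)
We will show that $\log \What_H$ concentrates around $\log Z$ by writing $\What_H$ as a telescoping product of ratios of the martingale $M_h=\What_h\En_{\nuhat_h}[\rho]$ from the proof of \cref{prop:smc-chisq}, where $\rho=\Vstar/\Vhat$. Since $\rho\in[\einf^{-1},\einf]$ under \cref{asmp:L-inf}, we have $M_h\in[\einf^{-1}\What_h,\einf\What_h]$, $M_0=Z$ and $M_H=\What_H$. Write $M_h=M_{h-1}(1+\xi_h)$ with
\begin{align}
\xi_h=\frac{M_h-M_{h-1}}{M_{h-1}}=\frac{\frac1N\sum_{i=1}^N \frac{\Vstar(x_h\ind{i})}{\Vhat(\tx_{h-1}\ind{i})}-\En_{\nuhat_{h-1}}[\rho]}{\En_{\nuhat_{h-1}}[\rho]}.
\end{align}
Given $\cH_{h-1}$, the quantity $\xi_h$ is an average of $N$ \iid mean-zero terms; the mean-zero property is exactly \cref{clm:induction-wh} with $g=\rho$. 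Each summand is $Y_i/\En_{\nuhat_{h-1}}[\rho]-1$ with $Y_i=\Vstar(x_h\ind{i})/\Vhat(\tx_{h-1}\ind{i})$.

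Next we bound these summands. Using \cref{asmp:act-cov}, $Y_i\le \Cact \Vstar(\tx_{h-1}\ind{i})/\Vhat(\tx_{h-1}\ind{i})\le \Cact\einf$. Also $\En_{\nuhat_{h-1}}[\rho]\ge \einf^{-1}$. Hence each summand lies in $[-1,\Cact\einf^2]$, and Hoeffding's inequality conditional on $\cH_{h-1}$ gives
\begin{align}
\Pr\prn*{\abs{\xi_h}\ge t\mid\cH_{h-1}}\le 2\exp\prn*{-\frac{2Nt^2}{(\Cact\einf^2+1)^2}}.
\end{align}
The bound is uniform in the history, so a union bound over $h\in[H]$ applies. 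Choosing $t=c/H$ for a small constant $c$ yields that, with probability at least $1-\delta$, every $\abs{\xi_h}\le c/H$, provided $N\gtrsim H^2\Cact^2\einf^4\log(H/\delta)$.

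On that event, $\What_H/Z=\prod_h(1+\xi_h)\in[(1-c/H)^H,(1+c/H)^H]\subset[1/2,3/2]$ for small $c$, which is the desired conclusion.

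The main obstacle is the horizon dependence: this crude route gives $N\propto H^2$, while the lemma claims $N\propto H$. To get linear $H$ we would avoid the per-step union bound and instead control $\sum_h\log(1+\xi_h)$ as a martingale sum. Using Bernstein/Freedman with conditional variance $\En_{h-1}\xi_h^2\le \Cact\einf^2/N$ (computed as in the proof of \cref{prop:smc-chisq}, since the second moment of the normalized summand is at most $\Cact\einf^2$), the sum $\sum_h\xi_h$ has total variance $H\Cact\einf^2/N$. The quadratic correction $\sum_h\xi_h^2$ is of order $H\Cact\einf^2/N$ as well. Requiring both to be at most a small constant with probability $1-\delta$, while handling the bounded increments of size $\Cact\einf^2/N$ via Freedman, gives $N\gtrsim H\einf^4\Cact^2\log(H/\delta)$. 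The $\log H$ factor comes from first ensuring each $\abs{\xi_h}\le 1/2$, so that $\log(1+\xi_h)\ge\xi_h-\xi_h^2$, which is a per-step Hoeffding bound needing only $N\gtrsim\Cact^2\einf^4\log(H/\delta)$.
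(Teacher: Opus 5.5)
Your approach is sound and takes a different route from the paper's. The paper stays additive. Hoeffding's lemma gives $M_h-M_{h-1}$ a conditional sub-Gaussian proxy of order $\einf^4\Cact^2M_{h-1}^2/N$, which depends on the current level. The paper therefore applies a fixed-$\lambda$ exponential-supermartingale (Freedman-type) bound with $\lambda\asymp\log(H/\delta)/Z$ and union bounds over $h$. It then closes the self-referential bound by induction: $\abs{M_i-Z}\le Z/2$ for $i<h$ gives $M_i\le 3Z/2$, which gives $\abs{M_h-Z}\le Z/2$. Dividing by $M_{h-1}$, as you do, makes the increments scale-free, with history-independent conditional bounds (width $\Cact\einf^2$, variance at most $\Cact\einf^2/N$). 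So no bootstrapping is needed; the price is the second-order term in $\log(1+\xi_h)$. The normalized form also makes a Bernstein refinement via your variance bound easy. Your crude union-bound version is complete but only gives $N\gtrsim H^2\Cact^2\einf^4\log(H/\delta)$. The lemma therefore rests on your refinement, where two steps do not go through as written.

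\textbf{Bounding $\sum_h\xi_h$.} A single increment $\xi_h$ is an average, so it is bounded only by $\Cact\einf^2$, not $\Cact\einf^2/N$. Freedman with that range gives a tail $\exp(-s^2/(2\sigma^2+2\Cact\einf^2 s/3))$, which stays bounded away from $0$ as $N\to\infty$ for constant $s$. One fix is to pass to the particle-level filtration: the $N$ pairs are conditionally i.i.d., so the increments there are at most $\Cact\einf^2/N$. A simpler fix is Hoeffding's lemma, $\En_{h-1}[e^{\lambda\xi_h}]\le e^{\lambda^2\Cact^2\einf^4/(8N)}$, followed by Azuma. This gives $\abs{\sum_h\xi_h}\le\Cact\einf^2\sqrt{H\log(4/\delta)/(2N)}$ with probability $1-\delta/2$.

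\textbf{Bounding $\sum_h\xi_h^2$.} The claim that ``$\sum_h\xi_h^2$ is of order $H\Cact\einf^2/N$'' is only an expectation bound, and Markov would give a $1/\delta$ dependence rather than $\log(1/\delta)$. The fix is to run your per-step Hoeffding union bound at level $t=\Cact\einf^2\sqrt{\log(4H/\delta)/(2N)}$ rather than $1/2$. Then all $\abs{\xi_h}\le t$ with probability $1-\delta/2$, so $\sum_h\xi_h^2\le Ht^2=H\Cact^2\einf^4\log(4H/\delta)/(2N)$. This is small precisely when $N\gtrsim H\Cact^2\einf^4\log(H/\delta)$, so the $\log H$ enters through the quadratic term, not only through ensuring $\abs{\xi_h}\le 1/2$. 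With both bounds in hand, $\log(\What_H/Z)\in[\sum_h\xi_h-\sum_h\xi_h^2,\,\sum_h\xi_h]$ is within a small constant of $0$, which gives $\abs{\What_H-Z}\le Z/2$.
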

\begin{proof}
We follow the notation of the proof of \cref{thm:smc-chisq} (\cref{appdx:pf-smc-chisq}). Recall that
\begin{align}
  {M_h-M_{h-1}}
  =&~ \What_{h-1} \prn*{\frac{1}{N}\sum_{i=1}^N \frac{\Vstar(x_h\ind{i})}{\Vhat(\tx_{h-1}\ind{i})} - \En_{x_{h-1}\sim \nuhat_{h-1}}[\rho(x_{h-1})] },
\end{align}
where the tuples $(\tx_{h-1}\ind{i}, x_h\ind{i})$ are \iid given $\cH_{h-1}$. 
Note that by \cref{asmp:act-cov}, it holds that almost surely 
\begin{align}
  0\leq \frac{\Vstar(x_h\ind{i})}{\Vhat(\tx_{h-1}\ind{i})}=\frac{\Vstar(x_h\ind{i})}{\Vstar(\tx_{h-1}\ind{i})}\cdot \frac{\Vstar(\tx_{h-1}\ind{i})}{\Vhat(\tx_{h-1}\ind{i})}\leq \Cact \einf.
\end{align}

Using the fact that for a random variable $Y \in [0,A]$, we can bound $\E[e^{\lambda(Y - \E[Y])}] \leq e^{\lambda^2 A^2/8}$, %
we have that for any $\lambda \in \BR$, %
\begin{align}
& \E_{h-1}[\exp(\lambda(M_h - M_{h-1}))] \\
= & \E_{h-1}\brk*{\exp\prn*{\lambda \What_{h-1} \cdot \prn*{\frac{1}{N}\sum_{i=1}^N \frac{\Vstar(x_h\ind{i})}{\Vhat(\tx_{h-1}\ind{i})} - \E_{\nuhat_{h-1}}[\rho(x_{h-1})] }}}\\
 \leq & {\exp\prn*{\frac{1}{N}\lambda^2 \einf^2 \Cact^2 \What_{h-1}^2}} \leq  {\exp\prn*{\frac{1}{N}\lambda^2 \einf^4 \Cact^2 M_{h-1}^2}}.
\end{align}

By Freedman's inequality, for any $h \in [H]$, it follows that with probability $1-\delta$, we have that  
\begin{align}
|M_h - Z| \leq \sum_{i=1}^{h-1} \frac{\lambda \einf^4 \Cact^2 M_{i}^2}{N} + \frac{\log (2/\delta)}{\lambda}.
\end{align}
Choosing $\lambda = 4 \log(2H/\delta) / Z$ and using the union bound, it follows that with probability $1-\delta$, for all $h \in [H]$,
\begin{align}
|M_h - Z| \leq&~ \frac{4 \log(2/\delta) \einf^4 \Cact^2}{N} \sum_{i=0}^{h-1} \frac{ M_{i}^2}{ Z} + \frac{Z}{4}.\label{eq:freedman-mh}
\end{align}
Since we have chosen $N = \Omega(H \log(H/\delta) \einf^4 \Cact^2)$, it follows from induction on $h$ that in the event that \cref{eq:freedman-mh} holds, we have that $|M_h - Z| \leq Z/2$ for all $h \in [H]$. In particular, this implies that $|\What_H - Z| \leq Z/2$ with probability at least $1-\delta$.
\end{proof}

\begin{proof}[\pfref{thm:smc-linf}]
In any given round, let $\phat$ be the distribution of the output of \cref{alg:smc} with Option 2 (when \cref{alg:smc} restarts we regard it as outputing the symbol ``restart''). Then, the probability that the algorithm outputs any $x \in \MX_H$ is given by
\begin{align}
\phat(x) :=  \E\brk*{\min \left\{ 1, \frac{\What_H}{2\einf \Zhat_1}\right\} \cdot  \nuhat_H(x)} \leq \frac{Z \cdot \pistar_H(x)}{2\einf \Zhat_1} ,
\end{align}
where we have used \cref{lem:SMC-unbiased}.
Moreover, using that $2\einf \Zhat_1 \geq 2Z \geq \What_H$ with probability $1-\frac{\delta}{2\einf^2}$ by \cref{lem:linf-smc-concentration}, for any subset $X \subset \MX_H$, 
\begin{align}
\phat(X) =   \E\brk*{\min \left\{ 1, \frac{\What_H}{2\einf \Zhat_1}\right\} \cdot  \nuhat_H(X)} \geq \frac{Z \cdot \pistar_H(x)}{2\einf\Zhat_1} - \frac{\delta}{2\einf^2}. %
\end{align}
In particular, it holds that $\frac{Z}{2\einf \Zhat_1}-\frac{\delta}{4\einf^2}\leq \phat(x\neq \text{restart})\leq \frac{Z}{2\einf \Zhat_1}$, and hence using $\Zhat_1 / Z \leq \einf$, we know that for any $X \subset \MX_H$
\begin{align}
  \pistar_H(X)-\delta\leq \phat(X\mid{}\text{not restart})\leq (1+\delta)\pistar_H(X),
\end{align}
and hence $\Dtv{\pistar_H}{\phat(\cdot\mid{}\text{not restart})}\leq \delta$. Finally, using the guarantee of rejection sampling, we know the number of outer loop steps is bounded by $O(\einf^2\log(1/\delta))$ 
by noting that $\Zhat_1 / Z \leq \einf$. 
\end{proof}

\subsection{Lower Bound for SMC with Exact Value Function}

In the following proposition, we show that the horizon dependence of SMC when $\Vhat=\Vstar$ is not an artifact of our analysis.
\begin{proposition}\label{prop:SMC-lower-exact}
There exists a problem instance where $\Vhat=\Vstar$ and \cref{asmp:act-cov} holds with $\Cact=2$, however SMC with $N$ particles must incur
\begin{align}
  \Dtv{\En[\nuhat_H]}{\pistar_H}\geq c\min\crl*{1,\frac{\sqrt{H}}{N}}.
\end{align}
\end{proposition}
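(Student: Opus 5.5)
We construct an explicit instance with $\Vhat=\Vstar$ and $\Cact=2$. In this instance a multiplicatively noisy branch competes with a deterministic branch. The resampling step of \cref{alg:smc} then induces a systematic downward drift in the empirical mass of the noisy branch, and this drift shows up as a bias in $\En[\nuhat_H]$.

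\textbf{Construction.} Take $\cX_h=\{0,1\}\times\{0,1\}^{h}$ for $h\ge1$. The first step draws a type $\tau\in\{0,1\}$ uniformly, and each later step appends a fair coin $b_h$. Set $r^\star(x_H)=1$ for type $0$ and $r^\star(x_H)=2^{H-1}\prod_{h\ge2} b_h$ for type $1$. The value function is then:
\begin{itemize}
\item $\Vstar\equiv1$ on type-$0$ prefixes;
\item $\Vstar(x_h)=2^{h-1}\prod_{2\le k\le h}b_k$ on type-$1$ prefixes.
\end{itemize}
Every type-$1$ step has ratio $\Vstar(x_h)/\Vstar(x_{h-1})\in\{0,2\}$, so \cref{asmp:act-cov} holds with $\Cact=2$. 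Moreover $\Vstar(x_1)=1$ for both types, so $\pistar_H(\tau=1)=1/2$.

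\textbf{Dynamics of SMC.} Run SMC with $\Vhat=\Vstar$. Let $p_h$ be the $\nuhat_h$-mass of type-$1$ particles. Given $p_{h-1}=p$, the number of resampled type-$1$ particles is $K\sim\Bin(N,p)$, and each survives with weight $2$ with probability $1/2$. Hence
\begin{align}
p_h=\frac{2B}{2B+(N-K)},\qquad B\mid K\sim\Bin(K,1/2).
\end{align}
The map $(B,K)\mapsto 2B/(2B+N-K)$ has mean $p$ at the conditional mean $(K,B)=(Np,Np/2)$. A second-order expansion therefore gives
\begin{align}
\En[p_h\mid p_{h-1}=p]\le p-\frac{c\,p(1-p)}{N}\qquad\text{whenever } p\in[1/4,3/4],
\end{align}
up to lower-order terms in $1/N$. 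The curvature comes mainly from the variance of $B$ given $K$, which is $Np/4$. The quadratic form has to be checked to be negative along the noise directions, i.e.\ that the $\mathrm{Var}(B)$ contribution is not cancelled by the $K$-fluctuations or the cross term.

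\textbf{Aggregating the drift.} Let $T$ be the first $h$ at which $p_h\notin[1/4,3/4]$, and note $p_0=1/2$ in law after step $1$ up to $O(1/\sqrt N)$ fluctuations. Summing the drift inequality up to $H\wedge T$, and using that absorption at $0$ is possible (type $1$ can die out) while type $1$ never absorbs at $1$ without a comparable downward bias, I expect optional stopping to give $\En[p_H]\le 1/2-c\min\{1,H/N\}$. This implies
\begin{align}
\Dtv{\En[\nuhat_H]}{\pistar_H}\ge c\min\{1,H/N\}\ge c\min\{1,\sqrt H/N\}.
\end{align}
Exits from $[1/4,3/4]$ through the top need care. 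The concrete step is a Doob decomposition $p_h=\text{martingale}+\text{drift}$ together with the bound $\Pr(T\le H)\gtrsim\min\{1,H/N\}$. On the event of an early exit, the fluctuation variance is $\Theta(H/N)$, which should itself force $\En[p_H]$ to deviate by $\Omega(\min\{1,\sqrt{H/N}\cdot\sqrt{1/N}\})$. This is at least the weaker $\sqrt H/N$ rate claimed in the statement.

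\textbf{Main obstacle.} The crux is a clean, rigorous lower bound on the one-step drift $p-\En[p_h\mid p]$, uniformly in $p$ on the middle interval, including the rounding effects at small $N$. If the direct Taylor bound proves delicate, I would instead compute $\En[1/p_h]$ or $\En[\log p_h]$ exactly, since these are convex functionals and Jensen gives a drift of the right order without tracking the remainder terms.
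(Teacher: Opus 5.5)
Your route differs from the paper's. The paper appends a fresh bit at every step ($\Vstar(x,1)=(1+\lambda)\Vstar(x)$, $\Vstar(x,0)=\Vstar(x)$). It computes exactly that $\En[\nuhat_H]=\Ber(p_N)^{\otimes H}$ with $p_N\le p-c_\lambda/N$. It gets $\sqrt H/N$ because a per-coordinate bias of order $1/N$ is amplified by $\sqrt H$ in the TV distance between two binomials.

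You instead freeze a type at step $1$, so the same ratio-estimator bias accumulates linearly in one coordinate. Your second-order computation is correct: the curvature term is exactly $-p(1-p)/N$, so the $K$-fluctuations and the cross term do not cancel the $\Var(B\mid K)$ contribution. If completed, the argument would give the stronger rate $\min\{1,H/N\}$.

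\textbf{The aggregation step does not go through as written.} Optional stopping at $T$ only controls $\En[p_{H\wedge T}]$. Passing to $\En[p_H]$ requires $\En[p_h\mid\cH_{h-1}]\le p_{h-1}$ for \emph{every} value of $p_{h-1}$, and you never establish this. The substitute you offer, a fluctuation variance of order $H/N$ ``forcing'' $\En[p_H]$ to move, is not valid: a mean-zero martingale part shifts nothing, whatever its variance. Your fallback of applying Jensen to $\En[1/p_h]$ or $\En[\log p_h]$ points the wrong way, since both give \emph{lower} bounds on $\En[p_h]$.

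The missing idea is to condition on $K$. Then $p_h=\phi(2B)$ with $\phi(x)=x/(x+N-K)$ concave and $\En[2B\mid K]=K$. Jensen gives $\En[p_h\mid K]\le K/N$, so $(p_h)$ is a supermartingale for all values of $p_{h-1}$, and exits through the top become harmless. The strong-concavity form of Jensen uses $-\phi''\ge 2(N-K)/(N+K)^3$ on $[0,2K]$ and $\Var(2B\mid K)=K$. It gives a uniform drift with no remainder terms:
$p_{h-1}-\En[p_h\mid\cH_{h-1}]\ge \En[K(N-K)]/(8N^3)=(N-1)p_{h-1}(1-p_{h-1})/(8N^2)$.
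Combine this with Chebyshev: conditional variances are $O(1/N)$, so $p_h\in[1/4,3/4]$ with probability at least $1/2$ for $h\le cN$. With monotonicity of $\En[p_h]$, this yields $\En[p_H]\le 1/2-c\min\{1,H/N\}$ for $N\ge 2$.

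\textbf{The instance itself is not well posed for SMC.} With type-$1$ ratios in $\{0,2\}$, the event $K=N$, $B=0$ makes every weight vanish, so $W_h=0$ and $\nuhat_h$ is undefined. Once type $1$ fixates, this has probability $2^{-N}$ per step. So for $H\gg 2^N$, and already for $N=1$, $\En[\nuhat_H]$ is undefined on an event of constant probability. Using ratios $\{1/2,3/2\}$ instead fixes this: it keeps $\Cact\le 2$ and $\pistar_H(\tau=1)=1/2$, bounds the total weight below by $N/2$, and leaves the concavity argument intact up to constants.

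The type coordinate carries no bias in two corner cases. When $N=1$ the drift vanishes, since it has a factor $N-1$. When $H=1$ your instance is sampled exactly. Both cases need a separate argument to get the claimed $c\min\{1,\sqrt H/N\}$.
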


\begin{proof}
We construct a problem instance as follows. Fix $\lambda>0$. We consider $\cX_h=\crl{0,1}^h$, and define the transition kernel $P$ and value function $\Vhat=\Vstar$ as follows:
\begin{itemize}
  \item For any $x$, let $\piref(\cdot\mid{}x)=\unif(\crl{(x,0), (x,1)})$.
  \item Let $\Vhat=\Vstar:\cX\to \RR$ be defined recursively as
\begin{align}
  \Vstar(\perp)=0, \qquad \Vstar(x,0)=\Vstar(x), \quad
  \Vstar(x,1)=(1+\lambda)\Vstar(x).
\end{align}
  \item Then, it holds that $\Cact\leq 1+\lambda$, and $\pistar_H=\Ber\prn*{p}^{\otimes H}$, where $p=\frac{1+\lambda}{2+\lambda}$.
\end{itemize}
Under this construction, we can rewrite SMC as follows (where $\nuhat_0=\delta_{\perp}$). 
\begin{itemize}
\item For $h\geq 1$, sample $N$ particles $\tx_{h-1}\ind{1},\ldots,\tx_h\ind{N}\iidsim \nuhat_{h-1}(\cdot{})$ and generates $a_h\ind{i}\sim \unif(\crl{0,1})$ independently. 
\item Assign weights $w_h\ind{i}=1+\lambda a_h\ind{i}$ to particle $x_h\ind{i}=(\tx_{h-1}\ind{i},a_h\ind{i})$.
\item Define weighted empirical measure $\nuhat_h\ldef{} \frac{1}{W_h}\sum_{i=1}^{N} w_h\ind{i}\dirac_{x_h\ind{i}}$, where $W_h=N+\lambda \sum_{i=1}^N a_h\ind{i}$.
\end{itemize}

We make the following claims, proven below.

\begin{claim}\label{clm:SMC-lower-tensor}
It holds that $\En[\nuhat_H]=\Ber(p_N)^{\otimes H}$, where $p_N$ is defined in \cref{pfeq:SMC-lower-pN}.
\end{claim}

\begin{claim}\label{clm:SMC-lower-gap}
It holds that $p_N\leq \frac{1+\lambda}{2+\lambda} - \frac{c_\lambda}{N}$, where $c_\lambda=\frac{2\lambda(1+\lambda)}{(2+\lambda)^3}$.
\end{claim}

Now, we can lower bound
\begin{align}
  \Dtv{\En[\nuhat_H]}{\pistar_H}=\Dtv{\Bin(H,p)}{\Bin(H,p_N)}\geq c\min\crl{1,\sqrt{H}\abs{p-p_N}},
\end{align}
where $c>0$ is an absolute constant. Choosing $\lambda=1$ completes the proof of \cref{prop:SMC-lower-exact}.
\end{proof}

We prove the auxiliary claims:

\begin{proof}[\pfref{clm:SMC-lower-tensor}]
  By definition, for any $x\in\cX_{h-1}$ and $a\in\crl{0,1}$,
\begin{align}
  \nuhat_{h}(x,1)=\frac{1}{W_h}\sum_{i=1}^{N} w_h\ind{i}\indic\crl{\tx_{h-1}\ind{i}=x, a_h\ind{i}=a}.
\end{align}
Note that $\crl{a_h\ind{i}}$ and $\crl{\tx_{h-1}\ind{i}}$ are independent, we can calculate
\begin{align}
  \En[\nuhat_{h}(x,1)]
  =&~ \En\brk*{\frac{1}{W_h}\sum_{i=1}^{N} w_h\ind{i}\indic\crl{a_h\ind{i}=1}\cdot \indic\crl{\tx_{h-1}\ind{i}=x}} \\
  =&~\En\brk*{ \nuhat_{h-1}(x) \cdot \frac{(1+\lambda)\sum_{i=1}^N a_h\ind{i}}{N+\lambda \sum_{i=1}^N a_h\ind{i}} }.
\end{align}
Note that $A_h=\sum_{i=1}^N a_h\ind{i}\sim \Bin(N,\frac12)$ is independent of $\nuhat_{h-1}$. Then, we know 
\begin{align}\label{pfeq:SMC-lower-pN}
  \En[\nuhat_{h}(x,1)]
  =p_N\En\brk*{ \nuhat_{h-1}(x)  }, \qquad
  p_N=(1+\lambda )\En_{A\sim \Bin(N,\frac12)}\brk*{\frac{A}{N+\lambda A}}.
\end{align}
Applying this equation recursively gives the desired result.
\end{proof}

\begin{proof}[\pfref{clm:SMC-lower-gap}]
  We let $Y=\frac{2A}{N}-1$ so that $Y_h$ symmetrically distributed on $[-1,1]$. Then we can write
\begin{align}
  \frac{N}{N+\lambda A}=\frac{2}{2+\lambda(Y+1)}=\frac{2}{2+\lambda}\cdot \frac{1}{1+\frac{\lambda}{2+\lambda} Y}=\frac{2}{2+\lambda}\sum_{k\geq 0} (-1)^k \prn*{\frac{\lambda Y}{2+\lambda}}^k.
\end{align}
Note that $\En Y^k=0$ for odd integer $k$ and $\En Y^2=\frac1N$, we can conclude
\begin{align}
  \En\brk*{\frac{N}{N+\lambda A}}\geq \frac{2}{2+\lambda}\prn*{1+\prn*{\frac{\lambda}{2+\lambda}}^2\frac{1}{N}}.
\end{align}
Reorganizing yields
\begin{align}
  p_N=(1+\lambda )\En\brk*{\frac{A}{N+\lambda A}}
  =\frac{1+\lambda}{\lambda}\prn*{1-\En\brk*{\frac{N}{N+\lambda A}}}
  \leq \frac{1+\lambda}{2+\lambda} - \frac{2\lambda(1+\lambda)}{(2+\lambda)^3N}.
\end{align}
\end{proof}

\section{Proofs for \DMC}\label{appdx:DMC}

Analogously to \cref{appdx:SMC}, 
we introduce the following quantity for \cref{alg:dmc-implementable}:
\begin{align}
  W_h\ldef \En_{x_{h-1}\sim \nuhat_{h-1}, x_h\sim \piref(\cdot\mid x_{h-1})}\brk*{ \frac{\Vhat(x_h)}{\Vhat(x_{h-1})} }, \qquad 
  \What_h\ldef \prod_{\ell=1}^h W_\ell.
\end{align}
We define $\cH_h = \crl{\cS_1,\cdots,\cS_{h}}$ to be the full history of the execution of \DMC (\cref{alg:dmc-implementable}) up to the end of iteration $h$. We write $\En_h[\cdot]$ to be the conditional distribution with respect to $\cH_h$.

\begin{lemma}%
  \label{lem:DMC-unbiased}
  For step $h\in[H]$ of \DMC (\cref{alg:dmc-implementable}), given $\cH_{h-1}$, the set $\cS_h$ consists of i.i.d. samples from the distribution $\muhat_h$ given by
\begin{align}
  \muhat_h(x_h)=\frac{1}{W_h}\En_{x_{h-1}\sim \nuhat_{h-1}}\brk*{ \piref(x_h\mid x_{h-1}) \frac{\Vhat(x_h)}{\Vhat(x_{h-1})} }, \qquad \forall x_h\in\cX_h.
\end{align}
  Hence, for any $g_h:\cX_h\to\RR$, it holds that
  \begin{align}
    \label{eq:DMC-unbiased-g}
        \En\brk*{\What_h\cdot\En_{\nuhat_h}\brk*{g_h}}= \En\brk*{\What_h\cdot\En_{\muhat_h}\brk*{g_h}} = \frac{1}{\Zhat_1}\En_{x_h \sim \pi_h}\brk*{\Vhat(x_h)g_h(x_h)}.
  \end{align}
  
\end{lemma}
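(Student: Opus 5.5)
The plan is to mirror the proof of \cref{clm:induction-wh} and \cref{lem:SMC-unbiased} for SMC. The one new ingredient is that the resampling-with-weights step is replaced by a rejection step, so the first task is to identify the law of the accepted particles. Throughout I assume $\eta\geq\Cacthat$, as in \cref{thm:dmc-chisq}, so that the acceptance probability $\Vhat(x_h)/(\eta\Vhat(x_{h-1}))$ lies in $[0,1]$ by \cref{asmp:act-cov-Vhat}.

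\textbf{Step 1 (law of $\cS_h$).} Condition on $\cH_{h-1}$, so that $\cS_{h-1}$ and hence $\nuhat_{h-1}=\Unif(\cS_{h-1})$ are fixed. Each pass of the while loop is an independent trial. It draws $x_{h-1}\sim\nuhat_{h-1}$ and $x_h\sim\pirefs[h-1](\cdot\mid x_{h-1})$, then accepts. Summing over $x_{h-1}$, the joint probability of drawing $x_h$ and accepting is
\[
\frac{1}{\eta}\En_{x_{h-1}\sim\nuhat_{h-1}}\brk*{\piref(x_h\mid x_{h-1})\frac{\Vhat(x_h)}{\Vhat(x_{h-1})}}.
\]
Summing this over $x_h$ gives the acceptance probability $W_h/\eta$. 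Since the trials are i.i.d. and the loop simply keeps the accepted outputs until it has $N$ of them, the standard rejection-sampling argument shows the accepted points are i.i.d. draws from the conditional law given acceptance. That conditional law is exactly $\muhat_h$.

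\textbf{Step 2 (one-step identity).} The quantity $W_h$ depends only on $\nuhat_{h-1}$, so $W_h$ and $\What_h$ are $\cH_{h-1}$-measurable. Because $\nuhat_h$ is the uniform measure on $N$ i.i.d. draws from $\muhat_h$, we get $\En_{h-1}[\En_{\nuhat_h}g]=\En_{\muhat_h}g$. This gives the first equality in \cref{eq:DMC-unbiased-g} after taking expectations. Multiplying by $\What_h=\What_{h-1}W_h$ cancels the normalizer $1/W_h$ in $\muhat_h$, leaving
\[
\What_h\En_{\muhat_h}[g]=\What_{h-1}\,\En_{x_{h-1}\sim\nuhat_{h-1}}\brk*{\frac{\En_{x_h\sim\piref(\cdot\mid x_{h-1})}[\Vhat(x_h)g(x_h)]}{\Vhat(x_{h-1})}}.
\]
This is the exact analogue of \cref{eq:inductive-step-smc}.

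\textbf{Step 3 (induction).} Apply the identity from Step 2 with the test function $g_{h-1}(x)=\En_{x_h\sim\piref(\cdot\mid x)}[\Vhat(x_h)g(x_h)]/\Vhat(x)$ at level $h-1$. Unrolling down to $\nuhat_0=\dirac_\perp$ and $\What_0=1$, and using the Markov property of $\piref$, yields
\[
\En[\What_h\En_{\nuhat_h}g]=\frac{\En_{x_h\sim\pi_h}[\Vhat(x_h)g(x_h)]}{\Vhat(\perp)}.
\]
The main point requiring care is matching this normalization to the one written in the statement. The prefactor comes out as the initial value $\Vhat(\perp)$. With the normalization $\Vhat(\perp)=\Zhat_1=\En_{\pi_1}[\Vhat]$, which is consistent with $\Vhat$ being a value estimate, this is the stated prefactor $1/\Zhat_1$. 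Under that normalization $W_1=1$, and for $h=1$ the identity reads $\En_{\muhat_1}[g_1]=\En_{\pi_1}[\Vhat g_1]/\Zhat_1$, as required. I would state this convention explicitly in the write-up.

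Beyond that, I expect the only genuine subtlety to be Step 1: justifying that a variable number of rejection trials still produces exactly i.i.d. accepted samples. This follows because the trials are i.i.d. given $\cH_{h-1}$ and the stopping rule depends only on how many acceptances have occurred, not on their values.
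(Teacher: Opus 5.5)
Your proposal is correct and is essentially the paper's proof. The law of $\cS_h$ comes from the rejection-sampling (Bayes' rule) argument, and the second claim follows by establishing the one-step analogue of \cref{eq:inductive-step-smc} and unrolling it inductively; this is your Step 2, where $\What_h=\What_{h-1}W_h$ cancels the normalizer of $\muhat_h$. Your normalization remark is accurate, and the paper's proof glosses over it: with $\What_0=1$ the unrolled prefactor is $1/\Vhat(\perp)$, which matches the stated $1/\Zhat_1$ only under the convention $\Vhat(\perp)=\Zhat_1$. The paper uses that convention implicitly elsewhere, e.g.\ in the proof of \cref{thm:smc-linf}.
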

\begin{proof}[\pfref{lem:DMC-unbiased}]
  The first claim follows immediately from the analysis of rejection sampling: For any $x_h\in\cX_h$, by Bayes' rule, we have
  \begin{align}
    \bbP(x_h\mid \text{accept})\propto \bbP(\text{accept}\mid x_h)\cdot \bbP(x_h)\propto \muhat_h(x_h).
  \end{align}

  To prove the second claim, we prove the following result, from which \cref{eq:DMC-unbiased-g} follows inductively: 
    \begin{align}
    \label{eq:DMC-unbiased}
        \En_{h-1}\brk*{W_h\cdot\En_{\nuhat_h}\brk*{g_h}} = \En_{h-1}\brk*{W_h\cdot\En_{\muhat_h}\brk*{g_h}} = \En_{x_{h-1}\sim \nuhat_{h-1}, x_h\sim \piref(\cdot\mid x_{h-1})}\brk*{ \frac{\Vhat(x_h)g_h(x_h)}{\Vhat(x_{h-1})} }.
  \end{align}
\end{proof}

In particular, $\En[\What_h]=\Zhat_h/\Zhat_1$ for $h\geq 1$ and
  \begin{align}
    \label{eq:DMC-R-unbiased-nu}
        \En\brk*{\What_H \nuhat_H(x)} = Z\cdot \pistar_H(x)/\Zhat_1, \qquad \forall x\in\cX_H.
  \end{align}
Further, we define $\rho(x)=\frac{\Vstar(x)}{\Vhat(x)}$ and
\begin{align}
  M_h=\What_h \cdot \En_{x_h\sim \nuhat_h}[\rho(x_h)].
\end{align}
Then \cref{lem:DMC-unbiased} with $g(x) = \rho(x)$ implies $(M_h)$ is a martingale with $M_0=Z$ and $M_H=\What_H$. 
Therefore, similar to \cref{lem:SMC-TV}, we have the following upper bound.
\begin{lemma}\label{lem:DMC-TV}
  
  For \cref{alg:dmc-implementable}, it holds that
  \begin{align}
    \Dtv{\En[\nuhat_H]}{\pistar_H}\leq \frac1{2Z}\En\abs{\What_H-Z}
    \leq \frac1{2Z}\sum_{h=1}^H \En\abs{M_h-M_{h-1}}.
  \end{align}
\end{lemma}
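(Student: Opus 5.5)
The plan is to copy the proof of \cref{lem:SMC-TV} for the first inequality, and to get the second from a telescoping identity for $(M_h)$. In both, the only algorithm-specific input is the unbiasedness identity \cref{eq:DMC-R-unbiased-nu} from \cref{lem:DMC-unbiased}.

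\emph{Step 1 (first inequality).} I take \cref{eq:DMC-R-unbiased-nu} in the normalization under which $M_0=Z$. This means $\Vhat(\perp)$ is scaled so that the factor $1/\Zhat_1$ is absorbed, giving $\En[\What_H\nuhat_H(x)]=Z\,\pistar_H(x)$ for every $x\in\cX_H$. Then
\begin{align}
2Z\,\Dtv{\En[\nuhat_H]}{\pistar_H}=\sum_{x\in\cX_H}\abs*{\En\brk*{Z\nuhat_H(x)-\What_H\nuhat_H(x)}}\leq \En\brk*{\abs{\What_H-Z}\sum_{x\in\cX_H}\nuhat_H(x)}=\En\abs{\What_H-Z}.
\end{align}
This uses Jensen's inequality for the absolute value, $\nuhat_H(x)\geq 0$, and the fact that $\nuhat_H=\Unif(\cS_H)$ is a probability measure.

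\emph{Step 2 (second inequality).} I check the two endpoint identities for $M_h=\What_h\,\En_{\nuhat_h}[\rho]$.
\begin{itemize}
\item At the terminal step, $\Vhat(x_H)=r^\star(x_H)=\Vstar(x_H)$, so $\rho\equiv1$ on $\cX_H$ and hence $M_H=\What_H$.
\item At the start, $\What_0=1$ (empty product) and $\nuhat_0=\dirac_\perp$. So $M_0=\rho(\perp)=\Vstar(\perp)/\Vhat(\perp)$, which equals $Z$ under the stated normalization, since $\Vstar(\perp)=Z$.
\end{itemize}
Then $\What_H-Z=\sum_{h=1}^H(M_h-M_{h-1})$. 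The triangle inequality and linearity of expectation give $\En\abs{\What_H-Z}\leq\sum_{h}\En\abs{M_h-M_{h-1}}$. The martingale property is not needed for this step; it only matters later, when bounding each increment.

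\emph{Main obstacle.} The only delicate point is bookkeeping of normalizing constants. The identity \cref{eq:DMC-unbiased-g} carries a factor $1/\Zhat_1$ that the SMC analogue lacks, and this must be reconciled with the claims $M_0=Z$ and $M_H=\What_H$. I would resolve it by fixing the normalization of $\Vhat(\perp)$ once, so that $\En[\What_H\nuhat_H(x)]$ and $M_0$ carry the same constant.

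If one prefers not to normalize, run the argument with $Z':=\En[\What_H]$ in place of $Z$. Step 1 then gives the bound $\frac{1}{2Z'}\En\abs{\What_H-Z'}$, and Step 2 telescopes to $M_0=Z'$, because $\rho(\perp)$ rescales by exactly the same constant. So the stated inequality holds up to this consistent relabeling.
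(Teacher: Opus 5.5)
Your proof is correct and follows the same route as the paper. The first inequality is the argument of \cref{lem:SMC-TV} applied to \cref{eq:DMC-R-unbiased-nu}, and the second is the telescoping of $(M_h)$ using $M_H=\What_H$ and $M_0=Z$. Your normalization caveat is apt: with $\What_0=1$ and $\nuhat_0=\dirac_\perp$, the constant in \cref{lem:DMC-unbiased} is really $1/\Vhat(\perp)$, so the paper's claim $M_0=Z$ amounts to $\Vhat(\perp)=1$, and your fallback with $Z'=\En[\What_H]$ handles the general case rigorously.
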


\subsection{\pfref{thm:dmc-chisq}}\label{appdx:pf-dmc-chisq}

Note that $M_h-M_{h-1}=\What_{h}\prn*{\En_{\nuhat_h}[\rho]-\En_{\muhat_h}[\rho]}$, and $\nuhat_h=\unif(\cS_h)$, where $\cS_h$ consists of i.i.d samples from $\muhat_h$. 
Then, we can bound
\begin{align}
  \En_{h-1}\abs{\En_{\nuhat_h}[\rho_h]-\En_{\muhat_h}[\rho_h]}\leq &~ 
  \sqrt{\En_{h-1}\abs*{\frac1N\sum_{x_h\in\cS_h}\rho(x_h)-\En_{\muhat_h}[\rho]}^2 }
  =\sqrt{\frac{\En_{\muhat_h}[\rho(x_h)^2]-\prn*{\En_{\muhat_h}[\rho(x_h)]}^2 }{N}},
\end{align}
and hence
\begin{align}
  \sqrt{N}\En\abs{M_h-M_{h-1}}
  \leq&~ \En\brk*{ \What_{h}\sqrt{\En_{\muhat_h}[\rho(x_h)^2]-\prn*{\En_{\muhat_h}[\rho(x_h)]}^2 }} \\
  \leq&~ \sqrt{\En[\What_h]\prn*{\En\brk*{\What_h\En_{\muhat_h}[\rho(x_h)^2]}-\En\brk*{\What_h\prn*{\En_{\muhat_h}[\rho(x_h)]}^2}} }\\
  \leq&~ \sqrt{\Zhat_h\En_{x_h\sim \pi_h}\brk*{ \frac{\Vstar(x_h)^2}{\Vhat(x_h)} }-Z^2} \\
  =&~Z\sqrt{\En_{x_h\sim \pistar_h}\brk*{ \frac{\pistar(x_h)}{\pihat(x_h)} }-1} = Z\sqrt{\Dchis{\pistar_h}{\pihat_h}},
\end{align}
where the third line uses $\En[\What_h]\En\brk*{\What_h\prn*{\En_{\muhat_h}[\rho(x_h)]}^2}\geq \prn*{\En[\What_h\En_{\muhat_h}[\rho(x_h)]]}^2=Z^2$ (Cauchy inequality and \cref{lem:DMC-unbiased}), and the last line follows from the definition of $\pistar$, $\pihat$, and $\chi^2$-divergence.
Taking summation completes the proof.
\qed

\subsection{Analysis of \DMC with Heavy-tailed Errors}

Parallel to SMC, we show that \DMC also enjoys an upper bound that depends only on coverage (\cref{def:coverage}). 
\begin{theorem}\label{thm:dmc-coverage}
Under \cref{asmp:act-cov-Vhat}, suppose that \DMC (\cref{alg:dmc-implementable}) is instantiated with $\eta\geq \Cacthat$. It holds that for any $M\geq 1$, \cref{alg:dmc-implementable} achieves
\begin{align}
  \Dtv{\En[\nuhat_H]}{\pistar_H}\leq H\sqrt{\frac{M}{N}}+\sum_{h=1}^H\Dcov[M]{\pistar_h}{\pihat_h}.
\end{align}
\end{theorem}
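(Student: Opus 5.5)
We follow the template of the proof of \cref{thm:smc-coverage-full}, now applied to the \DMC martingale. By \cref{lem:DMC-TV}, it suffices to bound $\frac{1}{Z}\En\abs{M_h-M_{h-1}}$ for each $h$, where $M_h=\What_h\En_{\nuhat_h}[\rho]$ and $\rho=\Vstar/\Vhat$.

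As in \cref{appdx:pf-dmc-chisq}, we write
\[
M_h-M_{h-1}=\What_h\prn*{\En_{\nuhat_h}[\rho]-\En_{\muhat_h}[\rho]}.
\]
Conditional on $\cH_{h-1}$, the measure $\nuhat_h$ is the empirical measure of $N$ i.i.d.\ draws from $\muhat_h$ (\cref{lem:DMC-unbiased}). Applying \cref{prop:MZ-inequality} to $X_i=\rho(x_h\ind{i})/N$ with a truncation level $R$ gives
\begin{align}
\En_{h-1}\abs*{\En_{\nuhat_h}[\rho]-\En_{\muhat_h}[\rho]}\leq \sqrt{\frac{R}{N}\En_{\muhat_h}[\rho]}+2\En_{\muhat_h}\brk*{\rho\,\indic\crl{\rho\geq R}}.
\end{align}
We multiply by $\What_h$ (which is $\cH_{h-1}$-measurable) and take expectations. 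For the first term, Cauchy--Schwarz gives
\[
\En\brk*{\What_h\sqrt{\En_{\muhat_h}[\rho]}}\leq\sqrt{\En[\What_h]\,\En[\What_h\En_{\muhat_h}\rho]}.
\]
By \cref{eq:DMC-unbiased-g} this equals $\sqrt{(\Zhat_h/\Zhat_1)(Z/\Zhat_1)}$. For the second term, \cref{eq:DMC-unbiased-g} with $g_h=\rho\,\indic\crl{\rho\geq R}$ gives
\[
\frac{1}{\Zhat_1}\En_{\pi_h}\brk*{\Vstar\,\indic\crl{\rho\geq R}}=\frac{Z}{\Zhat_1}\,\bbP_{\pistar_h}(\rho\geq R).
\]
Note that here the tail event concerns $\rho$ at the \emph{current} step $h$ alone. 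Unlike SMC, there is no action-level term $\Vstar(x_h)/\Vhat(x_{h-1})$, which is why no $\Dcovac$ term and no $\Cact$ factor should appear.

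Next we choose $R=MZ/\Zhat_h$. Since $\pistar_h(x)/\pihat_h(x)=\rho(x)\Zhat_h/Z$, the event $\crl{\rho\geq R}$ is exactly $\crl{\pistar_h/\pihat_h\geq M}$, so the tail probability equals $\Dcov[M]{\pistar_h}{\pihat_h}$. The first term then becomes
\[
\sqrt{\tfrac{M}{N}}\cdot\frac{Z}{\Zhat_1}.
\]
Summing over $h$, we obtain
\[
\frac{1}{Z}\sum_h\En\abs{M_h-M_{h-1}}\leq\frac{1}{\Zhat_1}\prn*{H\sqrt{M/N}+2\sum_h\Dcov[M]{\pistar_h}{\pihat_h}}\cdot Z/Z.
\]

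\textbf{Main obstacle: normalization.} The factor $1/\Zhat_1$ appearing in \cref{eq:DMC-unbiased-g} must cancel with the normalization of the martingale, i.e.\ with the fact that $M_0=Z$ in the paper's convention ($\What_0=1$). I would recheck this by taking $\Vhat(\perp)$ normalized so that $\Zhat_1$ is consistent, or equivalently by rescaling $\Vhat$; rescaling leaves \DMC and $\pihat_h$ unchanged. With that normalization, the bound becomes $H\sqrt{M/N}+\sum_h\Dcov[M]{\pistar_h}{\pihat_h}$, where the factor $\frac{1}{2}$ in \cref{lem:DMC-TV} absorbs the constant $2$.

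The hypothesis $\eta\geq\Cacthat$ is used only to ensure that the acceptance probability is at most $1$, which is what guarantees the exactness of $\muhat_h$ in \cref{lem:DMC-unbiased}.
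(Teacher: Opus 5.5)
Your proposal is correct and follows essentially the same route as the paper's proof: truncation via \cref{prop:MZ-inequality}, Cauchy--Schwarz combined with \cref{lem:DMC-unbiased} for the variance term, the unbiasedness identity for the tail term, the choice $R=MZ/\Zhat_h$, and telescoping through \cref{lem:DMC-TV}. Your normalization concern resolves as you suggest, since $M_0$, $\En[\What_H]$ and both terms carry the same constant factor, which cancels once \cref{lem:DMC-TV} is read with $Z$ replaced by $\En[\What_H]=M_0$. One small bookkeeping point: to get your displayed inequality from \cref{prop:MZ-inequality} with $X_i=\rho(x_h\ind{i})/N$, use truncation level $R/N$, or equivalently apply it to $X_i=\rho(x_h\ind{i})$ and divide by $N$.
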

The convergence rate of \cref{thm:dmc-coverage} improves upon that of SMC (\cref{thm:smc-coverage}) by a factor of $\Cact$, though the same factor reappears in the time complexity. The expected time complexity bound matches \cref{thm:dmc-chisq}.

\begin{proof}
We follow the proof of \cref{thm:dmc-chisq} (\cref{appdx:pf-dmc-chisq}).
By \cref{prop:MZ-inequality}, it holds that
\begin{align}
  \En_{h-1}\abs{\En_{\nuhat_h}[\rho_h]-\En_{\muhat_h}[\rho_h]}=&~ 
  \En_{h-1}\abs*{\frac1N\sum_{x_h\in\cS_h}\rho(x_h)-\En_{\muhat_h}[\rho]} \\
  \leq&~ \sqrt{\frac{R}{N}\En_{x_h\sim \muhat_h}[\rho(x_h)]}+2R\En_{x_h\sim \muhat_h}[\rho(x_h)\indic\crl{\rho(x_h)\geq R}],
\end{align}
where we use $\cS_h$ consists of $N$ i.i.d samples from $\muhat_h$. Therefore, we can bound
\begin{align}
  \En\abs{M_h-M_{h-1}}
  \leq 2\sqrt{\frac{R}{N}}\En\brk*{\What_h\sqrt{\En_{x_h\sim \muhat_h}[\rho(x_h)]}}+2\En\brk*{\What_h\En_{x_h\sim \muhat_h}[\rho(x_h)\indic\crl{\rho(x_h)\geq R}]}
\end{align}
Further, by \cref{lem:DMC-unbiased}, we have
\begin{align}
  \En\brk*{\What_h\sqrt{\En_{x_h\sim \muhat_h}[\rho(x_h)]}}\leq \sqrt{\En[\What_h]\En[\What_h\En_{x_h\sim \muhat_h}[\rho(x_h)]]}\leq \sqrt{\Zhat_h \cdot Z},
\end{align}
and 
\begin{align}
  \En\brk*{\What_h\cdot \En_{x_h\sim \muhat_h}[\rho(x_h)\indic\crl{\rho(x_h)\geq R}]}=Z\cdot \bbP_{x\sim \pistar_h}\prn*{\rho(x_h)\geq R}.
\end{align}
Choosing $R=M\frac{Z}{\Zhat_h}$ gives 
\begin{align}
  \En\abs{M_h-M_{h-1}}
  \leq 2Z\sqrt{\frac{M}{N}}+2Z\Dcov[M]{\pistar_h}{\pihat_h}.
\end{align}
Taking summation completes the proof.
\end{proof}

\subsection{Faster Convergence for \DMC}\label{sec:dmc-restart}

In this section, we present and analyze a modification of \DMC, which we call \DMC with restart (\cref{alg:dmc-restart}). Under the following assumption, we prove that \cref{alg:dmc-restart} achieves exponential convergence (analogous to \cref{thm:smc-linf} for SMC).
\begin{assumption}\label{asmp:finite-piref}
There are at most $\eta$ values of $x_{h+1}$ for each $x_h$ so that $\piref(x_{h+1} \mid x_h) > 0$, and these are efficiently enumerable.
\end{assumption}

\begin{algorithm}
  \caption{Diffusion Monte Carlo}\label{alg:dmc-restart}
\begin{algorithmic}
\State \textbf{Input:} Transition kernel $\piref$, value function $\Vhat$.
\State \textbf{Parameter:} initial sample size $N\geq 1$. %
\State Initialize $\MS_0$ to be $N$ copies of $ \perp$.
\For{$h=1,\cdots,H$}
\State For each $x_{h-1}\in\cS_{h-1}$, compute $\Vtil(x_{h-1})=\En_{\piref}[\Vhat(x_h)\mid{} x_{h-1}]$. 
\State Define $W_{h-1}=\frac1N\sum_{x_{h-1}\in \cS_{h-1}} \frac{\Vtil(x_{h-1})}{\Vhat(x_{h-1})}$.
\State Define the weighted empirical measure $\wt{\mu}_{h-1}=\frac{1}{NW_{h-1}} \sum_{x_{h-1}\in \cS_{h-1}} \frac{\Vtil(x_{h-1})}{\Vhat(x_{h-1})} \dirac_{x_{h-1}}$.
\State Generates $\tx_{h-1}\ind{i}\sim \wt{\mu}_{h-1}, x_h\ind{i}\sim \pihat(\cdot\mid{}\tx_{h-1}\ind{i})$ independently for $i\in[N]$.
\State Define $\cS_h=\crl{x_h\ind{i}}_{i\in[N]}$ and $\nuhat_h=\Unif(\cS_h)$.
\EndFor
\State Define $\What_H=\prod_{h=1}^H W_h$.
\State \textbf{Output:} With probability $\min\crl*{\frac{\What_H}{ \Ztil_1 }, 1}$, output $x\sim \nuhat_H$, and otherwise restart the algorithm. 
\end{algorithmic}
\end{algorithm}

We prove the following guarantee:
\begin{theorem}\label{thm:dmc-linf-full}
Suppose that \cref{asmp:L-inf} and \cref{asmp:finite-piref} hold, and we denote $\eps\ldef \einf-1$. Further, suppose that we have $Z\leq \Ztil_1\leq J Z$. Then \cref{alg:dmc-restart} with \textbf{Output Option 2} samples from a distribution $\mu$ that satisfies
\begin{align}
  \Dtv{\mu}{\pistar_H}\leq \delta,
\end{align}
as long as $N \geq \Omega(H\einf^2 \eps^2 \log(JH/\delta))$, using a number of outer loop steps that is bounded by $O(J^2\log(1/\delta))$.
\end{theorem}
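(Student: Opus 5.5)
The plan follows the template of \cref{thm:smc-linf}: establish an unbiasedness identity for $\What_H\nuhat_H$, prove concentration of $\What_H$ around its mean, and conclude via the rejection-sampling step of the output option.

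\textbf{Step 1: unbiasedness.} I would first check the analogue of \cref{lem:SMC-unbiased} for \cref{alg:dmc-restart}. Conditionally on $\cH_{h-1}$, each $x_h\ind{i}$ is i.i.d.\ with law $\sum_{x}\wt\mu_{h-1}(x)\pihat(\cdot\mid x)$, where $\pihat(x_h\mid x)=\piref(x_h\mid x)\Vhat(x_h)/\Vtil(x)$. Hence, for any $g$,
\[
\En_{h-1}\big[W_{h-1}\En_{\nuhat_h}[g]\big]=\En_{x\sim\nuhat_{h-1}}\Big[\frac{\En_{\piref}[\Vhat(x_h)g(x_h)\mid x]}{\Vhat(x)}\Big],
\]
which is exactly the recursion \cref{eq:inductive-step-smc}. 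I will fix the indexing so that the product $\What_H$ collects $W_0,\dots,W_{H-1}$, with $\Ztil_1$ playing the role of the normalization at the first step. Inducting then gives $\En[\What_H\nuhat_H(x)]=c\,Z\pistar_H(x)$ for an explicit constant $c$. Consequently $M_h:=\What_h\En_{\nuhat_h}[\rho]$ with $\rho=\Vstar/\Vhat$ is a martingale with $M_H\propto\What_H$. \cref{asmp:finite-piref} is used only here, to make $\Vtil$ computable.

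\textbf{Step 2: concentration (the main obstacle).} I need $|\What_H-Z|\le Z/2$ (or tighter) with probability $1-\delta/J^2$, using $N=\Omega(H\einf^2\eps^2\log(JH/\delta))$. The $\eps^2$ factor is the new feature: the increments must vanish when $\Vhat=\Vstar$. The increment can be written as
\[
M_h-M_{h-1}=\What_{h-1}W_{h-1}\Big(\En_{\nuhat_h}[\rho]-\En_{\wt\mu_{h-1}}\big[\En_{\pihat(\cdot\mid x)}[\rho]\big]\Big),
\]
an average of $N$ i.i.d.\ centered terms. The key observation is that under \cref{asmp:L-inf}, $\rho\in[\einf^{-1},\einf]$, so each summand has range $O(\einf-\einf^{-1})=O(\eps\einf)$, with no $\Cact$ factor because sampling uses $\pihat$ rather than $\piref$. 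Hoeffding then gives conditional sub-Gaussian parameter $O(\eps^2\einf^2\What_{h}^2/N)$, and $\What_h\le\einf M_h$. I would then rerun the Freedman and induction argument of \cref{lem:linf-smc-concentration} verbatim, with $\einf^4\Cact^2$ replaced by $\einf^2\eps^2$ (up to constants), and union bound over $h$. The care needed is in checking that the range of $\rho$-weighted terms really scales with $\eps$. For this I would center at $1$: write $\rho=1+(\rho-1)$ with $|\rho-1|\le\eps$, and note that the constant part cancels exactly in the difference.

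\textbf{Step 3: rejection.} Finally I would mimic the proof of \cref{thm:smc-linf}. The acceptance probability is $\min\{\What_H/\Ztil_1,1\}$, and $\Ztil_1\ge Z$, so on the good event $\What_H\le\tfrac32 Z$ the truncation is rare once the threshold is suitably scaled. From Step 1, $\Pr(\text{output }x)\le Z\pistar_H(x)/\Ztil_1$ always, and $\Pr(\text{output }x)\ge Z\pistar_H(x)/\Ztil_1$ minus the failure probability. The acceptance probability is at least $Z/\Ztil_1-\delta/J^2\gtrsim 1/J$, which yields the stated TV bound $\delta$ and $O(J\log(1/\delta))\le O(J^2\log(1/\delta))$ outer loop steps.
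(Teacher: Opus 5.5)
Your proposal is correct and is essentially the paper's proof. The paper establishes \cref{lem:linf-dmc-concentration} exactly as in your Step~2: Hoeffding's lemma from $\rho-1\in[-\eps,\eps]$, then $\What_h\le\einf M_{h-1}$ via $M_{h-1}=\En_{h-1}[M_h]\ge\einf^{-1}\What_h$, then Freedman, a union bound over $h$, and induction. It then reuses the rejection argument of \cref{thm:smc-linf} as in your Step~3.

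Two bookkeeping remarks. First, the summands have range at most $2\eps$, not $O(\eps\einf)$; the latter would cost you $\einf^4$. Second, as you suspect, the lower bound in Step~3 needs $\What_H$ to lie below the acceptance threshold on the good event. With only $\Ztil_1\ge Z$, this requires inflating the threshold by a constant (e.g.\ to $2\Ztil_1$), a point the paper's appeal to the SMC argument also passes over.
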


Under \cref{asmp:finite-piref}, we can always choose $\Ztil_1=\einf \Zhat_1$, so that $Z\leq \Ztil_1\leq \einf^2 Z$. Further, we can run \cref{alg:dmc-restart} 1 time solely for estimating $Z$, with choice $\Ztil_1=2\What_H$ satisfies $Z\leq \Ztil\leq 4Z$ with high probability.

Similar to the proof of \cref{thm:smc-linf}, we first state and prove the following lemma (as an analogue to \cref{lem:linf-smc-concentration}). \cref{thm:dmc-linf-full} then follows from the argument in \cref{appdx:pf-smc-linf}.
\begin{lemma}\label{lem:linf-dmc-concentration}
Under the condition of \cref{thm:dmc-linf-full}, for any $\delta\in(0,1)$, if we run \DMC (\cref{alg:dmc-restart}) with $N = \Omega (H \einf^2 \eps^2 \log(H/\delta))$ particles, then with probability at least $1-\delta$, it holds that $|\What_H - Z| \leq Z/2$.
\end{lemma}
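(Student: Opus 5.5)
We will mirror the proof of \cref{lem:linf-smc-concentration}, with a martingale adapted to \cref{alg:dmc-restart}. Write $\Vtil(x_{h-1})=\En_{\piref}[\Vhat(x_h)\mid x_{h-1}]$, and let $\rho(x)=\Vstar(x)/\Vhat(x)$ and $\rho'(x)=\Vstar(x)/\Vtil(x)$. In \cref{alg:dmc-restart}, $W_{h-1}$ is the $\nuhat_{h-1}$-average of $\Vtil/\Vhat$.

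\textbf{Step 1: unbiasedness identity.} Given $\cH_{h-1}$, the new particles $x_h\ind{i}$ are i.i.d.\ with law
\[
\muhat_h(x_h)=\frac{1}{W_{h-1}}\En_{\nuhat_{h-1}}\brk*{\frac{\Vtil(x_{h-1})}{\Vhat(x_{h-1})}\,\pihat(x_h\mid x_{h-1})}
=\frac{1}{W_{h-1}}\En_{\nuhat_{h-1}}\brk*{\frac{\piref(x_h\mid x_{h-1})\Vhat(x_h)}{\Vhat(x_{h-1})}}.
\]
This is the same identity as \cref{lem:DMC-unbiased}, up to an index shift in $W$. Define
\[
M_h=\prod_{\ell<h}W_\ell\cdot\En_{\nuhat_h}[\rho],
\]
or the equivalent normalization in which $W_{h-1}$ is absorbed at step $h$. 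Then
\[
\En_{h-1}[M_h]=\prod_{\ell<h-1}W_\ell\cdot\En_{\nuhat_{h-1}}\brk*{\En_{\piref}[\Vstar(x_h)\mid x_{h-1}]/\Vhat(x_{h-1})}=M_{h-1},
\]
because $\En_{\piref}[\Vstar(x_h)\mid x_{h-1}]=\Vstar(x_{h-1})$. So $(M_h)$ is a martingale from $M_0=Z$ (after fixing the normalization by $\Vhat(\perp)$, or equivalently $\Ztil_1$) to the terminal value $\What_H$. I will check the boundary bookkeeping of $W_0$ versus $W_H$ carefully so that $M_H=\What_H$.

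\textbf{Step 2: bounded increments of size $O(\eps)$.} This is where the rate $\einf^2\eps^2$, rather than $\einf^4\Cact^2$, must come from. We have
\[
M_h-M_{h-1}=\prod_{\ell<h}W_\ell\cdot\frac1N\sum_i\prn*{\rho(x_h\ind{i})-\En_{\muhat_h}[\rho]}.
\]
Under \cref{asmp:L-inf}, $\rho\in[\einf^{-1},\einf]$, so each summand lies in an interval of length $\einf-\einf^{-1}\le 2\eps$. Hoeffding's lemma then gives a conditional sub-Gaussian increment with variance proxy $\lambda^2\eps^2\prod_{\ell<h}W_\ell^2/N$. To express this through $M_{h-1}$, note that $\prod_{\ell<h}W_\ell\le \einf\,M_{h-1}/\En_{\nuhat_{h-1}}[\rho']$, and $\rho'\ge \einf^{-1}$ because $\Vtil\le\einf\En[\Vstar\mid x_{h-1}]=\einf\Vstar(x_{h-1})$. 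This yields a variance proxy $\lesssim \lambda^2\eps^2\einf^2M_{h-1}^2/N$; the exact power of $\einf$ is where slack is spent.

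\textbf{Step 3: Freedman and induction.} Apply Freedman's inequality with $\lambda\asymp\log(H/\delta)/Z$ and a union bound over $h$. This gives, with probability $1-\delta$,
\[
|M_h-Z|\le \frac{C\log(H/\delta)\einf^2\eps^2}{N}\sum_{i<h}\frac{M_i^2}{Z}+\frac Z4 .
\]
With $N\ge\Omega(H\einf^2\eps^2\log(H/\delta))$, induction shows $M_i\le 2Z$, and hence $|M_h-Z|\le Z/2$ for all $h$. In particular $|\What_H-Z|\le Z/2$. The $\log J$ in the theorem enters only when this lemma is fed to the rejection step, as in \cref{appdx:pf-smc-linf}.

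\textbf{Main obstacle.} The delicate step is Step 2, specifically bounding the multiplier $\prod_{\ell<h}W_\ell$ by $O(\einf)\cdot M_{h-1}$ while the increment range is only $O(\eps)$. This is what makes the bound vanish as $\einf\to1$. I would first try defining $M_h$ with $\rho'$ on the previous layer, so that the multiplier is exactly $M_{h-1}$ up to a factor in $[\einf^{-1},\einf]$.
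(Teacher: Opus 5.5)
Your plan is the paper's proof. It uses the same martingale $M_h=\What_h\,\En_{\nuhat_h}[\rho]$ (your $\prod_{\ell<h}W_\ell$ is the paper's $\What_h$ after the index shift in \cref{alg:dmc-restart}), Hoeffding with increment range $2\eps$, Freedman with $\lambda\asymp\log(H/\delta)/Z$ plus a union bound, and the same induction. The bookkeeping points you flag ($M_0=Z$ tacitly needing $\Vhat(\perp)=1$, the indexing of $\What_H$) are glossed over in the paper too and are harmless. The one step that does not work as written is the multiplier bound in Step 2. That step matters, because it is exactly where the factor $\einf^2$ in the statement comes from.

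Chaining your two displayed facts, $\prod_{\ell<h}W_\ell\le\einf M_{h-1}/\En_{\nuhat_{h-1}}[\rho']$ and $\rho'\ge\einf^{-1}$, yields only $\prod_{\ell<h}W_\ell\le\einf^{2}M_{h-1}$. That gives a variance proxy $\lambda^2\eps^2\einf^4M_{h-1}^2/N$ and hence the requirement $N=\Omega(H\einf^4\eps^2\log(H/\delta))$, not the claimed $\einf^2$. The first inequality is also false in general. It is equivalent to $\En_{\nuhat_{h-1}}[\Vtil/\Vhat]\,\En_{\nuhat_{h-1}}[\Vstar/\Vtil]\le\einf\,\En_{\nuhat_{h-1}}[\Vstar/\Vhat]$. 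Split $\nuhat_{h-1}$ evenly between a point with $(\Vhat,\Vtil)=(\Vstar/\einf,\einf\Vstar)$ and a point with $(\Vhat,\Vtil)=(\einf\Vstar,\Vstar/\einf)$. Then the ratio of the two sides is $(\einf^2+\einf^{-2})/(2\einf)$, which exceeds $1$ once $\einf\ge 2$.

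The detour through $\rho'$ is unnecessary. The pointwise fact you already derived, $\Vtil\le\einf\Vstar$ on layer $h-1$, gives directly $W_{h-1}=\En_{\nuhat_{h-1}}[\Vtil/\Vhat]\le\einf\,\En_{\nuhat_{h-1}}[\rho]$, i.e.\ $\prod_{\ell<h}W_\ell\le\einf M_{h-1}$. The paper states the same bound as $M_{h-1}=\En_{h-1}[M_h]=\What_h\,\En_{\muhat_h}[\rho]\ge\einf^{-1}\What_h$, using only $\rho\ge\einf^{-1}$ and the martingale property. With this bound the variance proxy is $\lambda^2\eps^2\einf^2M_{h-1}^2/N$, and your Step 3 goes through unchanged.
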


\begin{proof}[\pfref{lem:linf-dmc-concentration}]
We follow the proof of \cref{thm:dmc-chisq} (\cref{appdx:pf-dmc-chisq}). Recall that
\begin{align}
  \frac{M_h-M_{h-1}}{\What_h}
  =&~ \frac1N\prn*{ \sum_{i=1}^N\rho(x_h\ind{i})-\En_{\muhat_h}[\rho] },
\end{align}
where $x_h\ind{1},\cdots,x_h\ind{N}$ are $N$ \iid samples from $\muhat_h$ (given $\cH_{h-1}$). 
Note that by \cref{asmp:L-inf}, it holds that $\rho(x_h)-1\in [-\eps,\eps]$ for any $x_h\in\cX_h$. 
This implies that $\E_{x_h\sim \muhat_h}[e^{\lambda(\rho(x_h) - \E_{\muhat_h}[\rho])}] \leq e^{\lambda^2 \eps^2/8}$ for $\lambda\in\RR$, and hence
\begin{align}
& \E_{h-1}[\exp(\lambda(M_h - M_{h-1}))] \\
= & \E_{h-1}\brk*{\exp\prn*{\lambda \What_h \cdot \prn*{\frac1N\prn*{ \sum_{i=1}^N\rho(x_h\ind{i})-\En_{\muhat_h}[\rho] }}}}\\
 \leq & {\exp\prn*{\frac{1}{N}\lambda^2 \eps^2 \What_{h}^2}} \leq  {\exp\prn*{\frac{1}{N}\lambda^2 \einf^2 \eps^2 M_{h-1}^2}},
\end{align}
where we use $M_{h-1}=\En_{h-1}[M_h]=\What_{h}\En_{\muhat_h}[\rho]\geq \einf^{-1}\What_h$.

By Freedman's inequality, it follows that for any $h \in [H]$, with probability $1-\delta$, we have that 
\begin{align}
|M_h - Z| \leq \sum_{i=1}^{h-1} \frac{\lambda \einf^2 \eps^2 M_{i}^2}{N} + \frac{\log (2/\delta)}{\lambda}.
\end{align}
Choosing $\lambda = 4 \log(2H/\delta) / Z$ and using the union bound, it follows that with probability $1-\delta$, for all $h \in [H]$,
\begin{align}
|M_h - Z| \leq&~ \frac{4 \log(2/\delta) \einf^2 \eps^2}{N} \sum_{i=0}^{h-1} \frac{ M_{i}^2}{ Z} + \frac{Z}{4}.
\end{align}
Since we have chosen $N = \Omega(H \log(H/\delta) \einf^2 \eps^2)$, it follows from induction on $h$ that in the event that \cref{eq:freedman-mh} holds, we have that $|M_h - Z| \leq Z/2$ for all $h \in [H]$. In particular, this implies that $|\What_H - Z| \leq Z/2$ with probability at least $1-\delta$.
\end{proof}

\section{A Unified Perspective on Backtracking and Particle Filtering}\label{app:coupling}

\newcommand{\Fdmc}{\mathcal{F}_{\mathsf{DMC}}} 
\newcommand{\Fvgb}{\mathcal{F}_{\mathsf{VGB}}}
\newcommand{\Sdmc}{\mathcal{S}_{\mathsf{DMC}}}
\newcommand{\Svgb}{\mathcal{S}_{\mathsf{VGB}}}
\newcommand{\Pdown}{\BP_{\mathsf{down}}}
\newcommand{\aux}{\mathfrak{s}}
\newcommand{\Tauto}{\mathcal{T}}

Our main theoretical results demonstrated that particle filtering methods like SMC can achieve strong guarantees under the same assumptions on $\Vhat$ as the backtracking-based method VGB introduced by \cite{rohatgi2025taming} --- while improving parallel runtime from $\bigoht(H^2)$ to $\bigoht(H)$. In this section, we show that the connection goes deeper: for \emph{any} problem instance, the execution of VGB can be \emph{coupled} with the execution of \SMCIND, a variant of SMC in which the children of each particle are sampled independently.

We defer formal definitions until after stating the result (\cref{thm:dmc-vgb-main}), but we make several remarks for context. Henceforth, we specify to the autoregressive setting (\cref{remark:autoregressive}) in which VGB was defined \cite{rohatgi2025taming}. Recall that VGB implements a single-particle random walk in which the allowed transitions can not only add a token to the current particle (i.e. $a_{1:h} \mapsto a_{1:h+1}$) but also backtrack (i.e. $a_{1:h} \mapsto a_{1:h-1}$). For technical reasons, we augment the state space of the random walk with a state $\aux$ that is only adjacent to the empty string $\perp$. See \cref{sec:smcind,sec:vgb} for the formal definitions of \SMCIND{} and the VGB random walk.

\begin{theorem}\label{thm:dmc-vgb-main}
Let $\Sdmc$ be the (random) multiset of particles visited by \SMCIND{} with $N$ initial particles. Let $\Svgb$ be the (random) multiset of particles visited by VGB (initialized at $\aux$) before the $N+1$-st visit to $\aux$. Then there is a coupling so that $\Sdmc = \Svgb \setminus \{\aux\}$ (with multiplicity) almost surely. 
\end{theorem}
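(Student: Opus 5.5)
The plan is to make the coupling explicit through an \emph{excursion decomposition} of the VGB random walk on the tree $\{\aux\}\sqcup\cA^0\sqcup\cdots\sqcup\cA^H$, and to match it with the genealogical tree of \SMCIND{}. I take the formal definitions to be the natural ones. In VGB, from a node $x$ at depth $h$ the walk moves to a child $x'$ with probability proportional to $\piref(x'\mid x)\Vhat(x')$ and to its parent with probability proportional to $\Vhat(x)$, with the obvious modifications at $\perp$ (whose parent is $\aux$) and at the leaves $\cA^H$. In \SMCIND{}, each particle $x$ at depth $h$ independently produces a random number $K_x$ of children, each drawn i.i.d.\ from $\pihat(\cdot\mid x)$. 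I will count a ``visit'' to a node $y\neq\aux$ as an entry into $y$ from its parent, so that multiplicities on both sides count genealogical births. The first thing to do is pin down the offspring law. From $x$ the walk repeatedly chooses between ``down'' and ``up''. Each down-move lands in a child distributed as $\pihat(\cdot\mid x)$, independently of everything else, because the child-transition probabilities are proportional to $\piref(x'\mid x)\Vhat(x')$. After each down-move the walk eventually returns to $x$; on trees this return happens almost surely, because each excursion below $x$ stays in a finite-depth subtree. So the number of down-moves before the first up-move is geometric with parameter
\[
  p_{\mathrm{up}}(x)=\frac{\Vhat(x)}{\Vhat(x)+\Vtil(x)},\qquad \Vtil(x)=\En_{\piref}\brk*{\Vhat(x_{h+1})\mid x_h=x},
\]
counted up to constant normalizations fixed by the VGB definition. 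I will set $K_x\sim\Geom(p_{\mathrm{up}}(x))$ on $\{0,1,\dots\}$ in \SMCIND{}, and check that this matches the paper's definition of \SMCIND{}. Leaves have $K_x=0$, since at depth $H$ the walk can only go up.

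Next I will prove the key structural claim by the strong Markov property. Condition on the walk entering $y$ from its parent. The trajectory from that time until it next returns to the parent is an excursion whose law depends only on $y$. Within it, the successive sub-excursions into children of $y$ are i.i.d.\ copies of the same structure, rooted at i.i.d.\ $\pihat(\cdot\mid y)$ children, and there are $K_y$ of them. By induction on $H-h$, the multiset of nodes entered during an excursion rooted at $y$ has the same law as the multiset of all descendants of $y$, including $y$, in \SMCIND{} started from the single particle $y$.

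To turn this equality in law into an almost-sure coupling, I will drive both processes with a common source of randomness. Attach to every node-occurrence in the genealogical tree an independent stack consisting of a $\Geom(p_{\mathrm{up}})$ variable and an i.i.d.\ sequence of $\pihat(\cdot\mid\cdot)$ draws. \SMCIND{} reads these stacks level by level, breadth-first, while VGB reads them depth-first along its trajectory. Both procedures build the same labeled tree, so the multisets agree pointwise. At the top level, from $\aux$ the walk deterministically enters $\perp$, and the $N$ excursions from $\aux$ before the $(N+1)$-st visit to $\aux$ correspond to the $N$ initial copies of $\perp$ in \SMCIND{}. This gives $\Sdmc=\Svgb\setminus\{\aux\}$.

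The main obstacle I expect is the bookkeeping of exact conventions rather than any estimate. The geometric parameter must match VGB's precise transition probabilities, including the special weights at $\perp$ and $\aux$ and at the leaves. The counting convention for ``visits'' must match as well: entries from the parent versus all times at a node, with returns from children either excluded or matched to some feature of \SMCIND{}. If VGB counts every time step, I would instead count in \SMCIND{} each particle together with one extra copy per child returning to it, and adjust accordingly. I will first try the entry-counting convention and verify it against the formal definitions in \cref{sec:smcind,sec:vgb}.
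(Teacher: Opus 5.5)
Your proposal is correct and follows essentially the same route as the paper. The paper indexes each node of a VGB forest by an excursion interval $[t,t']$ (entered from the parent, ended by returning to it). It shows layer by layer that each such node has a $\Geom$ number of children, with parameter equal to VGB's probability of stepping to the parent, and i.i.d.\ labels drawn from $\pihat(\cdot\mid x)$; it then restricts to the first $N$ trees. Your entry-from-parent counting convention is exactly the one this interval-indexed forest encodes, and your shared-stack (breadth-first versus depth-first) construction is a concrete realization of the paper's conditional layer-by-layer coupling. Your finite-depth return argument also supplies the almost-sure recurrence that the paper's coupling lemma takes as a hypothesis.
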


Intuitively, each particle $a_{1:h}$ visited by $\SMCIND$ corresponds to an interval of time in which the VGB random walk started at $a_{1:h}$, ended at $a_{1:h}$, and never visited the parent node $a_{1:h-1}$ (where we take the parent of $\perp$ to be $\aux$). This demonstrates that the paradigm of backtracking to tame error amplification \cite{rohatgi2025taming} is, in a sense, equivalent to a form of particle filtering.

\paragraph{Outline of appendix} In \cref{sec:vgb} we introduce the VGB random walk (slightly modified for technical reasons). In \cref{sec:smcind} we introduce the \SMCIND{} stochastic process. In \cref{sec:coupling-proof} we prove \cref{thm:dmc-vgb-main}.

\subsection{Background on VGB}\label{sec:vgb}

\newcommand{\Pvgb}{\BP_{\mathsf{VGB}}}

The (non-lazy) VGB random walk \cite{rohatgi2025taming} is defined on state space $\cX = \cX_0 \sqcup \dots \sqcup \cX_H$ where $\cA$ is the token space and $\cX_h := \cA^h$. For technical convenience (specifically, to avoid casework at step $0$ when coupling VGB with \SMCIND), we introduce an auxiliary state $\aux$ that is not in the original definition. We define the transition kernel $\Pvgb$ as follows. First, $\aux$ always transitions to $\perp$:
\[\Pvgb(\perp \mid{} \aux) = 1.\] Second, any $a_{1:h} \in \cA^h$ with $0 \leq h < H$ can transition to either a ``child'' $a_{1:h+1}$ or the ``parent'' $a_{1:h-1}$:
\[\Pvgb(a_{1:h+1} \mid{} a_{1:h}) = \frac{\piref(a_{h+1} \mid{} a_{1:h}) \Vhat(a_{1:h+1})}{\Vhat(a_{1:h}) + \sum_{a_{h+1}'} \piref(a_{h+1}'\mid{} a_{1:h}) \Vhat(a_{1:h},a_{h+1}')},\]
\[\Pvgb(a_{1:h-1} \mid{} a_{1:h}) = \frac{\Vhat(a_{1:h})}{\Vhat(a_{1:h}) + \sum_{a_{h+1}'} \piref(a_{h+1}'\mid{} a_{1:h}) \Vhat(a_{1:h},a_{h+1}')},\]
with the convention that $a_{1:0} = \perp$ (i.e. the empty string) and $a_{1:-1} = \aux$. Third, any $a_{1:H} \in \cA^H$ must transition to its parent:
\[\Pvgb(a_{1:H-1} \mid{} a_{1:H}) = 1.\]
The transition kernel of the original VGB random walk \cite{rohatgi2025taming} is identical conditioned on never visiting $\aux$. Moreover, since $\aux$ is only reachable from $\perp$ (and only transitions to $\perp$), our modification only affects the execution of VGB by stochastically inserting copies of the sequence $(\aux, \perp)$ into VGB's random trajectory.

For notational convenience, it is useful to define a transition kernel $\Pdown$ as the VGB transition kernel conditioned on ``downward'' moves, i.e. 
\[\Pdown(a_{1:h+1} \mid{} a_{1:h}) = \frac{\piref(a_{h+1} \mid{} a_{1:h}) \Vhat(a_{1:h+1})}{\sum_{a_{h+1}'} \piref(a_{h+1}'\mid{} a_{1:h}) \Vhat(a_{1:h},a_{h+1}')}.\]

\subsection{Definition of \SMCIND}\label{sec:smcind}

In \cref{alg:smcind}, we describe how \SMCIND{} stochastically constructs multisets of particles $\cS_0,\dots,\cS_H$ for a given problem instance. Note that there is no specified output mechanism, since the coupling between \SMCIND{} and VGB will apply to the entire multiset of particles.

\begin{algorithm}
  \caption{Sequential Monte Carlo with Independent Sampling (\SMCIND)}\label{alg:smcind}
\begin{algorithmic}
\State \textbf{Input:} Autoregressive language model $\piref$, value function $\Vhat$.
\State \textbf{Parameter:} initial sample size $N\geq 1$.
\State Initialize $\MS_0$ to be $N$ copies of $ \perp$.
\For{$h=1,\cdots,H$}
\State Initialize a multi-set $\cS_h=\crl{}$.
\For{$a_{1:h-1} \in \cS_{h-1}$}
\State Sample $D \sim \Geom(\Pvgb(a_{1:h-2}\mid{}a_{1:h-1}))$ (with convention $a_{1:-1}=\aux$; see \cref{sec:vgb})
\For{$1 \leq i \leq D$}
\State Sample $a_{1:h} \sim \Pdown(\cdot\mid{}a_{1:h-1})$.
\State Update $\cS_{h} \gets \cS_h \cup \{a_{1:h}\}$. 
\EndFor
\EndFor
\EndFor
\end{algorithmic}
\end{algorithm}

\subsection{Coupling VGB with \SMCIND}\label{sec:coupling-proof}

We interpret the executions of \SMCIND{} and VGB as stochastically constructing forests of particles. We then show that these forests are actually the same (up to occurrences of $\aux$) when the randomness is appropriately coupled.

\paragraph{Infinite-particle \SMCIND{} process} The execution of \SMCIND{} with $N=\infty$ naturally defines a rooted forest $\Fdmc$ where each node is labelled by a particle in $\cS_0 \sqcup \dots \sqcup \cS_H$. The roots are the particles in $\cS_0$. For any $h>0$, the parent of a particle in $\cS_h$ is the particle in $\cS_{h-1}$ that birthed it. Note that there may be multiple (or even infinitely many) occurrences of any particle $a_{1:h}$ in $\cS_h$, so $\Fdmc$ may contain multiple nodes with the same label. The parent of a node labelled $a_{1:h}$ always has label $a_{1:h-1}$.

\paragraph{Infinite-time VGB process} We next show how an execution of VGB in the autoregressive tree $\Tauto$ can be interpreted as a rooted tree $\Fvgb$, similar to above. Fix an infinite trajectory $(x_0, x_1,x_2,x_3,\dots)$ of VGB with $x_0=\aux$. Suppose that $\aux$ occurs infinitely often in the trajectory. The root node of $\Fvgb$ is indexed by $\NN$, and labeled $\aux$. Every other node is indexed by an interval $[t,t'] \subset \NN$ with the property that $x_{t-1} = x_{t'+1}$ is the parent of $x_t = x_{t'}$ in $\Tauto$, and this parent does not occur within $\{x_t,x_{t+1},\dots,x_{t'}\}$. This node is labeled with $x_t$, and its parent in $\Fvgb$ is the unique node with label $x_{t-1}$ whose interval contains $[t,t']$.

\begin{lemma}\label{lemma:infty-coupling}
Suppose that VGB almost surely visits $\aux$ infinitely often. There is a coupling of the randomness of $\Fdmc$ and $\Fvgb$ such that $\Fdmc$ is almost surely equal to the forest $\Fvgb \setminus \{\aux\}$.
\end{lemma}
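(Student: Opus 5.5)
The plan is to read both forests as the same multitype Galton--Watson process, with types given by labels in $\Tauto$ and children recorded in order. Once the two laws agree, the coupling comes for free: define $\Fdmc$ as a measurable function of the VGB trajectory, namely the forest $\Fvgb\setminus\{\aux\}$ with children ordered by time.

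Concretely, I would encode each forest as a labeled \emph{ordered} forest using Ulam--Harris indexing. A node is a finite word $u=(i_0,i_1,\dots,i_k)$, where $i_0\in\NN$ indexes the root and $i_j$ is the rank of the child. Each node carries a label in $\cX$. In $\Fdmc$, roots are the copies of $\perp$ in $\cS_0$, and children are ordered by the loop index $1\le i\le D$ in \cref{alg:smcind}. In $\Fvgb$, the children of the root $\aux$, and of every other node, are ordered by the start times $t$ of their intervals. Because $\aux$ recurs infinitely often, every excursion below a node is finite: a visit to a child is followed by a return to the parent, so each interval is well defined. The law of such an ordered labeled forest is determined by its finite-dimensional marginals, meaning the labels and child counts on all words of bounded length. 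Hence it suffices to show the two forests have the same marginals.

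The key step is an excursion decomposition via the strong Markov property. Fix a node with label $x=a_{1:h}$ whose interval starts at time $t$. Every time the walk sits at $x$ within this interval, it next moves either to the parent, with probability $p:=\Pvgb(a_{1:h-1}\mid a_{1:h})$, which closes the interval, or to a child. The child is drawn from $\Pdown(\cdot\mid x)$, since this is exactly the conditional law of the next state given a downward move. In the child case, the walk then performs an excursion below that child, which a.s. returns to $x$. By the strong Markov property at these return times, the following hold.
\begin{itemize}
\item The number $D$ of child-excursions is $\Geom(p)$, counting failures before the first upward move, so $\Pr(D=k)=(1-p)^kp$. This is the convention used in \cref{alg:smcind}. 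At level $H$ we have $p=1$ and hence $D=0$, matching the fact that both processes stop there.
\item The children's labels are i.i.d.\ $\Pdown(\cdot\mid x)$ and independent of $D$.
\item Conditional on these labels, the sub-excursions are independent and each has the law of an excursion rooted at its label.
\end{itemize}
For the root, each visit to $\aux$ is followed deterministically by $\perp$, and then by an excursion from $\perp$ that returns to $\aux$. So the root's children are infinitely many i.i.d.\ excursions from $\perp$, matching the $N=\infty$ copies of $\perp$ in $\cS_0$.

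These transition rules coincide exactly with the branching rules of \SMCIND{}: each particle $a_{1:h-1}$ independently spawns $\Geom(\Pvgb(a_{1:h-2}\mid a_{1:h-1}))$ children drawn i.i.d.\ from $\Pdown$. An induction on depth $k$ then shows that the joint law of labels and child counts on words of length at most $k$ is the same for both forests. Equality of laws follows, and so does the coupling: run VGB, set $\Fdmc:=\Fvgb\setminus\{\aux\}$, and observe that this has the \SMCIND{} law. If one wants the coupling realized on a common probability space with the SMC-IND randomness, apply the standard transfer (gluing) lemma.

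I expect the main obstacle to be the rigor of the strong Markov decomposition over infinitely many nested stopping times, since infinite multiplicities are allowed. I would handle it by working level by level: at depth $k$, condition on the $\sigma$-algebra generated by the trajectory up to the successive entrance times into depth-$k$ nodes, and verify that the depth-$k$ data has the claimed conditional law. This requires only finitely many words at a time, and termination of each excursion is guaranteed by the hypothesis that $\aux$ is visited infinitely often.
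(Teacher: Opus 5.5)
Your proposal is correct and follows essentially the same route as the paper. The paper also extends the coupling layer by layer. It uses the Markov property of the walk between the two visits to a node's label to show that each node has $\Geom(p)$ children with i.i.d.\ $\Pdown(\cdot\mid a_{1:h-1})$ labels, independently across nodes, which is exactly your excursion decomposition. Your added formalism (ordered Ulam--Harris forests, equality in law followed by the trivial coupling, and level-by-level strong Markov conditioning) makes rigorous what the paper states informally, and your time-ordering of the roots is what the proof of \cref{thm:dmc-vgb-main} relies on when it takes the first $N$ trees.
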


\begin{proof}
We construct the coupling layer by layer. By the lemma assumption, it is clear that the first layer of $\Fdmc$ is almost surely identical to that of $\Fvgb$. Now fix $h \geq 1$ and condition on the forest structure and labels $x_t$ (but not the indices $[t,t']$ themselves) of the first $h$ layers of $\Fvgb$. Fix any node $\iota$ with label $a_{1:h-1}$. 

Consider the law of the VGB random walk between these two visits to $a_{1:h-1}$. The walk is restricted to the subtree rooted at $a_{1:h-1}$, but since the walk is Markovian, its law is otherwise unchanged by the conditioning (and it is also independent of the children of other nodes $\iota' \neq \iota$ at layer $h$). Let $p = \BP(a_{1:h-2}\mid{}a_{1:h-1})$ (with the convention $a_{1:h-2} = \aux$ if $h=1$). The number of children $D'$ of $\iota$ is the number of times that the walk goes to a child of $a_{1:h-1}$ in $\Tauto$ before it returns to $a_{1:h-2}$. Thus, $\Pr[D'=k] = (1-p)^k p$ for any $k \in \{0,1,2,\dots\}$. That is, $D'$ is a geometric random variable with success probability $p$. Moreover, the label of each child of $\iota$ is independent and distributed according to $\Pdown(\cdot\mid{}a_{1:h-1})$. Thus, the coupling can be extended to the children of $\iota$.

\end{proof}

\begin{proof}[Proof of \cref{thm:dmc-vgb-main}]
The first $N$ trees of $\Fdmc$ can be coupled to the forest induced by \SMCIND{} with $N$ initial particles. Similarly, the first $N$ trees of $\Fvgb$ correspond exactly to the prefix of the trajectory of VGB before $\aux$ is visited $N+1$ times. The claim then follows from \cref{lemma:infty-coupling}. 
\end{proof}

\section{Proof of \cref{thm:pf-lb-main}}\label{app:pf-lb}

\newcommand{\HH}{\mathfrak{H}}

In this appendix we formally state the requisite definitions for  \cref{thm:pf-lb-main}, and prove the theorem. In \cref{sec:myopic}, we formalize the definition of a myopic particle filtering algorithm. In \cref{sec:pf-lb-setting} we state \cref{thm:pf-lb}, which directly implies \cref{thm:pf-lb-main}. In \cref{sec:pf-lb-pf}, we provide intuition for and formally prove \cref{thm:pf-lb}.

\subsection{Myopic Particle Filtering Algorithms}\label{sec:myopic}

\begin{definition}\label{def:myopic}
A \emph{myopic particle filtering algorithm with $N$ particles} is defined as follows. The algorithm computes a stochastic sequence of multisets $\cS_0,\dots,\cS_H$ where $\cS_h \subset \cA^h$ for all $0 \leq h \leq H$, satisfying the following properties:
\begin{enumerate}
    \item It holds almost surely that for all $1 \leq h \leq H$, for each $y_{1:h} \in \cS_h$, we have $y_{1:h-1} \in \cS_h$.
    \item $|\cS_{h}| \leq N$ for all $h$ almost surely.
    \item The law of $(\cS_{0},\dots,\cS_{h})$ is determined by $(\piref(a_{1:h}): a_{1:h}\in\cA^h)$ and $(\Vhat(a_{1:k}): a_{1:k}\in \cA^k, k \leq h)$ for all $h$.
\end{enumerate}
The algorithm then (stochastically) outputs any element of $\cS_{H}$.
\end{definition}

The first condition is a natural formalization of what it means to be a particle filtering method, i.e. a particle in $\cS_{h}$ cannot appear ``out of nowhere''; rather, it must have a parent in $\cS_{h-1}$. The second condition asserts that the number of particles is always at most $N$. The third condition is the ``myopic'' condition, which enforces that, at the level of distributions, the algorithm does not use information from later steps $k > h$ to determine the multiset $\cS_{h}$.

\begin{remark}[Sequential Monte Carlo and variants]
It is straightforward from \cref{alg:smc} to see that Sequential Monte Carlo is indeed a myopic particle filtering algorithm, as is Sequential Monte Carlo with Rejection Sampling (\cref{alg:dmc-implementable}).
\end{remark}

\begin{remark}[Computational efficiency]
We remark that \cref{def:myopic} does not enforce any bound on computational efficiency or query complexity; in particular, a myopic particle filtering algorithm is allowed to examine $\Vhat(a_{1:h})$ for all $a_{1:h} \in \cA^h$ before deciding on the set of particles $\cS_{h}$; it is not restricted to e.g. only querying the children of particles in $\cS_{h-1}$. This only makes our impossibility result stronger. That said, it is an interesting open question to prove an analogous result to \cref{thm:pf-lb-main} that applies to \emph{all} algorithms that make $o(H^2)$ queries to $\piref$ and $\Vhat$. Such a result would be formally incomparable to \cref{thm:pf-lb-main}.
\end{remark}

\subsection{Specialized Setting and Formal Theorem Statement}\label{sec:pf-lb-setting}

Since we are proving a lower bound, it suffices to consider a special case of the general setting from \cref{sec:prelim}. Thus, we restrict to the autoregressive generation setting (\cref{remark:autoregressive}) and moreover set $\piref := \Unif(\cA^H)$ where $\cA = \{0,1\}$.

We define a problem instance by specifying some distribution $\mu \in \Delta(\cA^H)$ and functions $(\mu_h)_h$ with $\mu_h:\cA^h \to \RR_{\geq 0}$. These induce $\Vstar,\Vhat$ by
\[\Vstar(a_{1:h}) := \frac{\mu(a_{1:h})}{\piref(a_{1:h})}\]
and
\[\Vhat(a_{1:h}) := \frac{\muhat_h(a_{1:h})}{\piref(a_{1:h})}.\]
Note that the goal distribution $\pistar$ is precisely $\mu$, and the third condition of \cref{def:myopic} precisely enforces that the law of $(\cS_{0},\dots,\cS_{h})$ is determined by $(\muhat_0,\dots,\muhat_h)$. For \cref{asmp:act-cov} to hold with parameter $\Cact$, it suffices to have $\mu(a_{h+1}\mid{} a_{1:h}) \leq \Cact/2$ for all $a_{1:h+1} \in \cA^{h+1}$. Moreover, \cref{asmp:L-inf} holds with parameter $\einf$ if and only if $\muhat$ is $\einf$-accurate for $\mu$ in the following sense:

\begin{definition}
We say that $\muhat$ is $\kappa$-accurate for $\mu \in \Delta(\cA^H)$ if it holds that
\[\frac{\muhat(a_{1:h})}{\mu(a_{1:h})} \in [1/\kappa, \kappa]\]
for all $a_{1:h} \in \cA^h$, and moreover $\muhat(a_{1:H}) = \mu(a_{1:H})$ for all $a_{1:H} \in \cA^H$.
\end{definition}

We can now formally state the main result of this appendix, which directly implies \cref{thm:pf-lb-main}.

\begin{theorem}\label{thm:pf-lb}
Define $N(H) := \log(H)/(4\log \log(H))$ for all $H \in \NN$. There is no myopic particle filtering algorithm $\Alg$ with the following guarantee: fix $H$ sufficiently large. $\Alg$ uses at most $N(H)$ particles. For any $\mu\in\Delta(\cA^H)$, for any $e^3$-accurate $\muhat: \cA^{\leq H} \to \RR_{\geq 0}$, the execution of $\Alg$ on $\muhat$ has output $o_{1:H}$ satisfying
\[\Pr_{o_{1:H} \sim \Alg}[o_{1:H} \in \cE] \geq H^{-1/5}\]
for all events $\cE \subset \cA^H$ with $\mu(\cE) \geq 1/2$.
\end{theorem}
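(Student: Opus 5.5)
We will build a hard family of instances and argue that, because the algorithm is myopic, it has to commit to particles before it can see which branch actually carries the mass of $\mu$. Split the horizon into $k \approx N+1$ (or $\log H$-ish many) consecutive blocks, each of length $L = H/k$. The idea is a ``needle'' hidden in a random branch. Inside block $j$, $\mu$ picks a uniformly random hidden string $s_j \in \cA^{L'}$ for some $L' \le L$. The true target places conditional probability close to $1/2$ on each bit, so $\Cact \le 2$ holds. However, completions that follow $s_j$ exactly are heavier by a constant factor $e^{c}$ than completions that do not, and this distinction only surfaces at the end of the block.

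The PRM $\muhat$ is accurate up to a factor $e^3$ everywhere but is uninformative about $s_j$ until the block finishes. Concretely, $\muhat_h$ is set equal to a ``symmetrized'' $\mu$ that averages over the choice of $s_j$ inside the current block. The averaging moves values by at most a constant factor, which is how the $e^3$-accuracy condition is met. At the block end, $\muhat$ agrees with $\mu$ on the prefix.

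The key step is a \emph{survival} lemma. Let $\cS$ be the particle set at the start of a block, conditioned on the past. By condition 3 of \cref{def:myopic}, the law of the particles at every depth inside the block is independent of $s_j$, since $\muhat_0,\dots,\muhat_h$ do not reveal it. So any fixed particle follows the hidden string $s_j$ for $L'$ steps with probability at most $N 2^{-L'}$ (union bound over at most $N$ particles at the relevant depth). Choosing $2^{L'} \approx H^{\Theta(1)}$ makes this negligible even after a union bound over blocks.

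The next step is to arrange the instance so that the lost mass compounds. Design $\mu$ so that failing to follow $s_j$ multiplies the relative weight of the algorithm's surviving branch by a constant $<1$ compared with the correct branch. Then, after $k$ blocks, every particle sits in a region $\cE^c$ with $\mu(\cE^c) \le 1/2$, while the event $\cE$ of ``following enough hidden strings'' has $\mu(\cE)\ge 1/2$. Getting this to work with only $N$ particles is where the $\log H/\log\log H$ threshold comes from. The hidden strings should be nested: which $s_{j+1}$ is relevant depends on the branch taken in block $j$. A particle set of size $N$ can then only hedge among $N$ branches, and the tradeoff $N^{k} \approx H$ with block length $\log H$ gives $N \approx \log H/\log\log H$.

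The main obstacle is making the instance simultaneously (i) $e^3$-accurate at every prefix, including mid-block where $\muhat$ is symmetrized, (ii) satisfying $\mu(a_{h+1}\mid a_{1:h})\le 1$, and (iii) forcing $\mu$-mass $\ge 1/2$ onto an event that each particle reaches with probability only $H^{-1/5}$. I would first try a recursive tree-of-blocks construction with a constant tilt per block. The accuracy bound would be verified by induction over blocks, showing that the symmetrization error never accumulates beyond one block (this is why $\muhat$ is reset to equal $\mu$ at block boundaries). The final probability bound is then a union bound over blocks and particles: the chance of ever matching a hidden string is at most roughly $kN^{2}2^{-L'}$, and combining this with the event's conditional structure gives output probability $< H^{-1/5}$ on $\cE$.
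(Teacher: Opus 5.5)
Your construction cannot work as designed, and the failure is exactly your obstacle (iii). Your survival lemma needs $\muhat$ to carry no information about $s_j$ inside the block, while $\muhat$ must also be $e^{3}$-accurate at every prefix, including those just before the block ends. Together these let $\mu$ tilt the mass of any single block pattern by at most a constant factor relative to the symmetrized measure, which spreads mass roughly uniformly over $2^{L'}$ patterns. Hence $\mu(\text{follow } s_j \mid \text{block start}) = O(e^{3}2^{-L'})$. In fact each block-conditional of $\mu$ is within total variation $O(2^{-L'})$ of the corresponding symmetrized conditional. So a \emph{single} particle that samples from the conditionals of $\muhat$ (or just uniformly inside each block) outputs a distribution within total variation $O(k2^{-L'})=o(1)$ of $\mu$ for your parameters, and it therefore meets the coverage guarantee: your family is not hard at all. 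Put differently, your bound $N2^{-L'}$ on the algorithm following the needle is of the same order as $\mu$'s own probability of following it. There is no separation, so no event of the form ``following enough hidden strings'' can have $\mu$-mass at least $1/2$. The compounding step inherits the same problem, since the ``correct branch'' it compares against has negligible mass. The same $O(2^{-L'})$ bound holds on average over $s_j$ for any PRM independent of $s_j$ inside the block, so tweaking the symmetrization does not help.

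The missing ideas are to aggregate many \emph{weak, delayed} signals rather than hide one rare one, and to replace ``$N$ particles can only hedge among $N$ branches'' with a rigorous induction. The paper does this recursively. The horizon is cut into many blocks of length $K$. At the first position of each block, $\mu$ is biased by only $\gamma=1/N$ toward a hidden bit $\ystar_{iK+1}$, while $\muhat$ is uniform there and reveals the bias only at the block's last step; each block interior carries a copy of the $(N-1)$-particle instance. The target event requires that a $1/2+\gamma/2$ fraction of the delayed bits be matched, which has $\mu$-mass near $1$ by Hoeffding, and that every block interior lie in the inner event $\cF_{N-1}$. For the algorithm, each block presents a dichotomy:
\begin{itemize}
\item If all particles at the block's end agree on position $iK+1$, that common value was chosen independently of $\ystar_{iK+1}$, so it is correct with probability $1/2$.
\item If they disagree, ancestor-closure gives each of the two sub-populations at most $N-1$ particles at every depth of the block. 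Each is then an $(N-1)$-particle myopic algorithm on the embedded instance and, by induction, fails $\cF_{N-1}$ except with small probability.
\end{itemize}
A martingale argument over blocks shows these two failure modes cannot cancel. Accuracy errors accumulate multiplicatively across the $N$ nested levels, giving $e^{3\sum_n \gamma_n}=e^{3}$. The number of blocks needed for concentration gives $H=N^{4N}$, which is where the $\log H/\log\log H$ threshold comes from.
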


\begin{proof}[Proof of \cref{thm:pf-lb-main}]
Immediate from \cref{thm:pf-lb} and the preceding discussion on \cref{asmp:act-cov,asmp:L-inf}.
\end{proof}

\subsection{Proof of \cref{thm:pf-lb}}\label{sec:pf-lb-pf}

\subsubsection{Intuition and Formal Construction} 

\newcommand{\ol}{\overline}

To prove \cref{thm:pf-lb}, we construct a family of problem instances (each specified by a choice of $\mu$ and $\muhat_0,\dots,\muhat_H$) such that any myopic particle filtering algorithm with at most $N$ particles must fail on at least one instance in the family. We do so recursively, by first constructing an instance for $N=1$, then $N=2$, and so forth. The analysis will be similarly inductive.

Let $\HH:\NN\to\NN$ be a function to be determined. Assume that $\HH(N)$ is a multiple of $\HH(N-1)+2$ for all $N > 1$. For each $N\in\NN$ and $\ystar \in \cA^{\HH(N)}$, we define $\mu\ind{N}(\cdot;\ystar) \in \Delta(\cA^{\HH(N)})$ and $\muhat\ind{N}(\cdot; \ystar): \cA^{\leq \HH(N)} \to \RR_{\geq 0}$ recursively (in $N$) as follows.

\paragraph{Intuition for construction} First, we provide intuition for the $N=1$ construction, which essentially follows Example~3.2 in \citet{rohatgi2025taming}. We will have $\mu\ind{1}(\cdot;\ystar)$ be a product distribution where the marginal $\mu\ind{1}(y_h)$ is biased towards $\ystar_h$. We will then have $\muhat\ind{1}(\cdot;\ystar)$ be ``delayed'' compared to $\mu\ind{1}(\cdot;\ystar)$, i.e. the marginal at step $h$ is biased towards $\ystar_{h-1}$ instead of $\ystar_h$. Thus, the data of $\muhat\ind{1}_1(\cdot;\ystar),\dots,\muhat\ind{1}_h(\cdot;\ystar)$ is independent of $\ystar_h$, and so as formalized in \cref{lemma:base}, any $1$-particle myopic particle filtering method must incur large sampling error.

Next, we provide intuition for the $N>1$ construction. The horizon $H=\HH(N)$ is divided into blocks of size $K = \HH(N-1)+2$. Again $\mu\ind{N}(\cdot;\ystar)$ is a product distribution where the marginal at step $h$ is biased towards $\ystar_h$. In block $[iK+1,(i+1)K]$, we define $\muhat\ind{N}(\cdot;\ystar)$ by ``delaying'' the dependence on $\ystar_{iK+1}$ to step $(i+1)K$, and embedding a copy of the $(N-1)$-particle construction in the intervening steps. Intuitively, any $N$-particle myopic particle filtering algorithm has two choices at step $iK+1$: either irrevocably guess $\ystar_{iK+1}$ (i.e. all particles $y \in \cS_{iK+1}$ will have the same value of $y_{iK+1}$), or split up the particles so that some have different values of $y_{iK+1}$. The basic idea, formalized in \cref{lemma:induction-alg}, is that both choices lead to sampling error: the former because no data about $\ystar_{iK+1}$ is available at step $iK+1$, and the latter by the inductive hypothesis on the $(N-1)$-particle construction. The multiple blocks are needed to amplify the error induced by the first choice.

Making this argument precise requires a careful probabilistic analysis to ensure that the errors in the two cases cannot ``cancel out''. Before proceeding to this analysis, we make the construction formal.

Let $\gamma_1,\gamma_2,\dots \in (0,1/2)$ be a sequence of parameters to be determined later.

\paragraph{Formal construction: base case ($N=1$)} Write $H=\HH(1)$. We define
\[\mu\ind{1}(y_h\mid{}y_{1:h-1};\ystar) := (1/2-\gamma_1)^{\indic[y_h\neq\ystar_h]} (1/2+\gamma_1)^{\indic[y_h=\ystar_h]}\]
and
\[\muhat\ind{1}(y_h\mid{}y_{1:h-1}; \ystar) := \begin{cases} 1/2 & \text{ if } h=1 \\ 
\mu_1(y_{h-1} \mid{} y_{1:h-2}; \ystar) & \text{ if } 1 < h < H \\ 
2\mu_1(y_{H-1:H} \mid{} y_{1:H-2}; \ystar) & \text{ if } h = \HH(1) \end{cases}.\]

\paragraph{Formal construction: recursive case ($N>1$)} Define $K := \HH(N-1) + 2$. For any nonnegative integer $i$ and $h \in [\HH(N)]$ with $iK < h \leq (i+1)K$, we define
\[\mu\ind{N}(y_h\mid{}y_{1:h-1};\ystar) := \begin{cases} 
(1/2-\gamma_N)^{\indic[y_h\neq\ystar_h]} (1/2+\gamma_N)^{\indic[y_h=\ystar_h]} & \text{ if } h=iK+1 \text{ or } h=(i+1)K\\ 
\mu\ind{N-1}(y_h\mid{}y_{iK+2:h-1};\ystar_{iK+2:(i+1)K-1}) & \text{ otherwise}
\end{cases}\]
and
\[\muhat\ind{N}(y_h \mid{} y_{1:h-1};\ystar) := \begin{cases} 
1/2 & \text{ if } h=iK+1 \\ 
\muhat\ind{N-1}(y_h \mid{} y_{iK+2:h-1}; \ystar_{iK+2:(i+1)K-1}) & \text{ if } iK+1 < h < (i+1)K \\ 
2\mu\ind{N}(y_{iK+1} \mid{} y_{1:iK};\ystar)\mu\ind{N}(y_{(i+1)K} \mid{} y_{1:(i+1)K-1};\ystar) & \text{ if } h=(i+1)K
\end{cases}\]

In the following lemma we check that the desired accuracy condition is satisfied for all $N$ and $\ystar$.

\begin{lemma}\label{lemma:pf-construction-accuracy}
For any $N \geq 1$ and $\ystar \in \cA^{\HH(N)}$, it holds that $\muhat\ind{N}(\cdot;\ystar)$ is $\exp(3\sum_{i=1}^N \gamma_i)$-accurate for $\mu\ind{N}(\cdot;\ystar)$.
\end{lemma}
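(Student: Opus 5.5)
We prove the accuracy claim by induction on $N$, using a prefix-ratio bound as the inductive statement. For every prefix $y_{1:h}$ we want $\log\bigl(\muhat\ind{N}(y_{1:h};\ystar)/\mu\ind{N}(y_{1:h};\ystar)\bigr) \in [-3\Gamma_N, 3\Gamma_N]$, where $\Gamma_N := \sum_{i\le N}\gamma_i$. We also want exact equality at $h = \HH(N)$. Here $\muhat\ind{N}(y_{1:h})$ denotes the product of the conditionals $\muhat\ind{N}(y_k\mid y_{1:k-1})$ for $k\le h$, and likewise for $\mu\ind{N}$. The tool throughout is that each biased coin factor $(1/2\pm\gamma)$ satisfies $\log(2(1/2\pm\gamma)) \in [-2\gamma,\gamma]$ for $\gamma<1/2$. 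Since the base case only gives $|\log(1-2\gamma)| \le 3\gamma$, we should budget $|\log(2c)|\le 3\gamma$ (or restrict $\gamma$ so that $2\gamma\le 3\gamma$ covers the needed range; $\log(1-2\gamma)\ge -3\gamma$ holds for $\gamma\le 0.29$, and otherwise the constants are adjusted).

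\textbf{Base case $N=1$.} Compute the ratio of prefix products. For $h<H$, the numerator is $\tfrac12\prod_{k=1}^{h-1}\mu_1(y_k\mid\cdot)$ and the denominator is $\prod_{k=1}^{h}\mu_1(y_k\mid\cdot)$. The product telescopes, leaving the ratio $1/(2\mu_1(y_h\mid\cdot))$, whose log lies in $[-\log(1+2\gamma_1), -\log(1-2\gamma_1)] \subset [-3\gamma_1, 3\gamma_1]$. At $h=H$, the extra factor $2\mu_1(y_{H-1:H}\mid\cdot)$ exactly completes the product, so the ratio is $1$.

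\textbf{Inductive step.} Write $h$ in block $i$, so $iK<h\le(i+1)K$. Because both $\mu\ind{N}$ and $\muhat\ind{N}$ factor across blocks, and each full block contributes ratio exactly $1$ (checked below), the prefix ratio equals the ratio accumulated within block $i$ alone. Within the block there are three cases.

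At $h=iK+1$, the ratio is $1/(2\mu\ind N(y_{iK+1}))$, with log in $[-3\gamma_N,3\gamma_N]$.

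For $iK+1<h<(i+1)K$, the ratio is $\frac{1}{2\mu\ind N(y_{iK+1})}$ times the $(N-1)$-level prefix ratio on the shifted string $y_{iK+2:h}$. The inductive hypothesis bounds the latter by $3\Gamma_{N-1}$, for a total of at most $3\Gamma_N$.

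At $h=(i+1)K$, the inner $(N-1)$ block has reached its full length $\HH(N-1)$, so its ratio is exactly $1$ by induction. The final factor $2\mu\ind N(y_{iK+1})\mu\ind N(y_{(i+1)K})$ divided by $\mu\ind N(y_{(i+1)K}\mid\cdot)$ cancels the leftover $1/(2\mu\ind N(y_{iK+1}))$. Hence the ratio is exactly $1$, which gives both the block-completion claim and exactness at $h=\HH(N)$ since $K$ divides $\HH(N)$.

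The main obstacle is the bookkeeping: one must verify that the conditionals of the recursive construction really depend only on within-block coordinates, so that products factor across blocks and the inner construction is applied to the shifted $\ystar$. One must also verify that the "$\log(1\pm2\gamma)$ within $3\gamma$" constant holds for the admissible range of $\gamma$, and tighten the allowed range of $\gamma_i$ if necessary.
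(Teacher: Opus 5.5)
Your proof is correct and follows the paper's own argument. Both proceed by induction on $N$, with the telescoped base-case ratio $1/(2\mu^{(1)}(y_h\mid y_{1:h-1}))$, and in the inductive step the prefix ratio $\frac{1}{2\mu^{(N)}(y_{iK+1}\mid y_{1:iK})}$ times the inner $(N-1)$-level ratio, cancelled exactly at each block end via exactness of the inner construction at full length. Your caveat that $-\log(1-2\gamma)\le 3\gamma$ requires $\gamma\lesssim 0.29$ is a genuine point the paper's proof silently assumes (harmless, since later $\gamma_n=1/N$), but your opening claim $\log(1\pm 2\gamma)\in[-2\gamma,\gamma]$ is false (e.g.\ $\log 1.2>0.1$) and should simply be deleted.
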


\begin{proof}
We proceed by induction on $N$. Fix $N=1$. Set $H = \HH(1)$. We observe that $\muhat\ind{1}(y_{1:H};\ystar) = \mu\ind{1}(y_{1:H};\ystar)$ and, for any $h < H$,
\begin{align}
\frac{\muhat\ind{1}(y_{1:h};\ystar)}{\mu_1(y_{1:h};\ystar)} 
&= \frac{1}{2\mu\ind{1}(y_h\mid{}y_{1:h-1};\ystar)} \\ 
&\in [1/(1+2\gamma_1), 1/(1-2\gamma_1)] \\ 
&\in [e^{-3\gamma_1}, e^{3\gamma_1}].
\end{align}
Fix $N>1$ and suppose that the claim holds for $\muhat\ind{N-1}$. Set $H = \HH(N)$ and $K = \HH(N-1)+2$. For any $h \in [H]$, if $h = (i+1)K$ for some $i$, then 
\begin{align}
&\muhat\ind{N}(y_{iK+1:(i+1)K}\mid{} y_{1:iK}; \ystar) \\ 
&= \frac{1}{2} \cdot \left(\prod_{h'=iK+2}^{(i+1)K-1} \muhat\ind{N-1}(y_{h'} \mid{} y_{iK+2:h'-1}; \ystar_{iK+2:(i+1)K-1})\right) \\ 
&\qquad\cdot 2\mu\ind{N}(y_{iK+1}\mid{}y_{1:iK};\ystar) \mu\ind{N}(y_{(i+1)K} \mid{} y_{1:(i+1)K-1};\ystar) \\ 
&= \mu\ind{N-1}(y_{iK+2:(i+1)K-1}; \ystar_{iK+2:(i+1)K-1}) \cdot \mu\ind{N}(y_{iK+1}\mid{}y_{1:iK};\ystar) \mu\ind{N}(y_{(i+1)K} \mid{} y_{1:(i+1)K-1};\ystar) \\ 
&= \mu\ind{N}(y_{iK+2:(i+1)K-1} \mid{} y_{1:iK+1}; \ystar) \cdot \mu\ind{N}(y_{iK+1}\mid{}y_{1:iK};\ystar) \mu\ind{N}(y_{(i+1)K} \mid{} y_{1:(i+1)K-1};\ystar) \\ 
&= \mu\ind{N}(y_{iK+1:(i+1)K} \mid{} y_{1:iK} ; \ystar).
\end{align}
Thus, it holds that $\muhat\ind{N}(y_{1:(i+1)K};\ystar) = \mu\ind{N}(y_{1:(i+1)K};\ystar)$, and in particular, $\muhat\ind{N}(y_{1:H};\ystar) = \mu\ind{N}(y_{1:H};\ystar)$. For any $iK+1 \leq h < (i+1)K$, we have
\begin{align}
\muhat\ind{N}(y_{1:h};\ystar) 
&= \mu\ind{N}(y_{1:iK};\ystar) \muhat\ind{N}(y_{iK+1:h}\mid{}y_{1:iK};\ystar) \\ 
&= \mu\ind{N}(y_{1:iK};\ystar) \cdot \frac{1}{2} \cdot \muhat\ind{N-1}(y_{iK+2:h};\ystar_{iK+2:(i+1)K-1}) \\ 
&\in [e^{-3\sum_{n=1}^{N-1}\gamma_n},e^{3\sum_{n=1}^{N-1}\gamma_n}] \cdot \mu\ind{N}(y_{1:iK};\ystar) \cdot \frac{1}{2} \cdot \mu\ind{N-1}(y_{iK+2:h};\ystar_{iK+2:(i+1)K-1}) \\ 
&\in [e^{-3\sum_{n=1}^{N}\gamma_n},e^{3\sum_{n=1}^{N}\gamma_n}] \cdot \mu\ind{N}(y_{1:iK};\ystar) \cdot \mu\ind{N}(y_{iK+1}\mid{}y_{1:iK};\ystar) \cdot \mu\ind{N-1}(y_{iK+2:h};\ystar_{iK+2:(i+1)K-1}) \\ 
&\in [e^{-3\sum_{n=1}^{N}\gamma_n},e^{3\sum_{n=1}^{N}\gamma_n}] \cdot \mu\ind{N}(y_{1:h};\ystar)
\end{align}
which completes the proof.
\end{proof}

To show that any $N$-particle myopic particle filtering method fails to cover $\mu\ind{N}(\cdot;\ystar)$ (for some choice of $\ystar$), we will show that there are events $\cF_N(\ystar)$ that are extremely likely under $\mu\ind{N}(\cdot;\ystar)$ (for any $\ystar$) but extremely unlikely under the execution of any $N$-particle myopic particle filtering method (for a uniformly random choice of $\ystar$). \cref{def:pf-events} formally defines these events. \cref{def:pf-ih} formalizes the claim about these events that we will prove inductively.

\begin{definition}[Key events for analysis]\label{def:pf-events}
Fix $N=1$ and $H = \HH(1)$. We define
\[\cF_1(\ystar) := \{o_{1:H}: \#\{1<h<H: o_h = \ystar_h\} \geq (1/2 + \gamma_1/2)(H-2)\}.\]
Fix $N>1$ and write $H := \HH(N)$ and $K := \HH(N-1)+2$. We define sets $\cF_N(\ystar)$ as follows:
\[\cF^\circ_N(\ystar) := \bigcap_i \left\{o_{1:H}: o_{iK+2:(i+1)K-1} \in \cF_{N-1}(\ystar_{iK+2:(i+1)K-1})\right\}\]
\[\cF^{\square}_N(\ystar) := \{o_{1:H}: \#\{i:o_{iK+1}=\ystar_{iK+1}\} \geq 1/2 + \gamma_N/2\}\]
\[\cF_N(\ystar) := \cF^\circ_N(\ystar) \cap \cF^\square_N(\ystar).\]
\end{definition}

\newcommand{\IH}{\mathsf{IH}}
\begin{definition}[Inductive hypothesis for analysis]\label{def:pf-ih}
Fix $N \geq 1$. For parameter $\delta_{N,1},\delta_{N,2} \in (0,1)$, we let $\IH_N(\delta_{N,1},\delta_{N,2})$ denote the following hypothesis. There are sets $\cF_N(\ystar) \subseteq \cA^{\HH(N)}$ with the following properties:
\begin{itemize}
    \item For all $\ystar \in \cA^{\HH(N)}$, it holds that
    \[\mu\ind{N}(\cF_N(\ystar); \ystar) \geq 1 - \delta_{N,1}\]
    \item For any $N$-particle particle filtering algorithm $\Alg$, it holds that
    \[\EE_{\ystar \sim \Unif(\cA^{\HH(N)})} \Pr_{o \sim \Alg}[o \in \cF_N(\ystar)] \leq \delta_{N,2}\]
    where the probability is over outputs $o$ of $\Alg$ executed on instance $\muhat\ind{N}(\cdot;\ystar)$.
\end{itemize}
\end{definition}

\subsubsection{Base Case for Analysis}

\begin{lemma}\label{lemma:base}
For any $\ystar \in \cA^{\HH(1)}$, it holds that
\[\mu\ind{1}(\cF_1(\ystar);\ystar) \geq 1 - \exp\left(-\gamma_1^2 \frac{\HH(1)-2}{16}\right).\]
Moreover, for any $1$-particle particle filtering algorithm $\Alg$,
\[\EE_{\ystar\sim\Unif(\cA^{\HH(1)})} \Pr_{o\sim\Alg}[o\in\cF_1(\ystar)] \leq \exp\left(-\gamma_1^2 \frac{\HH(1)-2}{12}\right)\]
where $\Alg$ is executed on instance $\muhat\ind{1}(\cdot;\ystar)$.
\end{lemma}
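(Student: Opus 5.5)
The plan is to treat the two claims separately. Both reduce to a Hoeffding/Azuma bound on the number of ``correct guesses'' among the middle coordinates $1<h<H$, where $H=\HH(1)$.

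\textbf{First claim.} Under $\mu\ind{1}(\cdot;\ystar)$, the coordinates $y_1,\dots,y_H$ are independent. This is because the conditional $\mu\ind{1}(y_h\mid y_{1:h-1};\ystar)$ does not depend on $y_{1:h-1}$. Moreover, $\indic[y_h=\ystar_h]\sim\Ber(1/2+\gamma_1)$. So $\#\{1<h<H: y_h=\ystar_h\}$ is a sum of $H-2$ i.i.d.\ Bernoulli variables with mean $(1/2+\gamma_1)(H-2)$. The event $\cF_1(\ystar)^c$ requires this sum to fall below its mean by at least $(\gamma_1/2)(H-2)$. Hoeffding's inequality bounds this by
\[
\exp\bigl(-2(\gamma_1/2)^2(H-2)\bigr)=\exp\bigl(-\gamma_1^2(H-2)/2\bigr)\le \exp\bigl(-\gamma_1^2(H-2)/16\bigr).
\]

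\textbf{Second claim.} Fix a $1$-particle myopic algorithm and draw $\ystar\sim\Unif(\cA^H)$. Since the output lies in $\cS_H$, $\cS_H$ is nonempty. By the prefix-closure condition of \cref{def:myopic}, every $\cS_h$ is then nonempty, and since $|\cS_h|\le 1$, $\cS_h=\{o_{1:h}\}$ where $o$ is the output. Hence $o_h$ is determined by $(\cS_0,\dots,\cS_h)$.

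The key structural observation concerns the instance seen up to step $h$. For $h<H$, the PRM data $\muhat\ind{1}_0,\dots,\muhat\ind{1}_h$ depends on $\ystar$ only through $\ystar_{1:h-1}$. This holds because the step-$k$ conditional is $1/2$ for $k=1$ and is biased toward $\ystar_{k-1}$ for $1<k<H$. By the myopic condition, the law of $(\cS_0,\dots,\cS_h)$ given $\ystar$ is therefore a kernel $Q_h(\cdot;\ystar_{1:h-1})$. The joint law of $(\ystar,\cS_{0:h})$ is $2^{-H}Q_h(\cS_{0:h};\ystar_{1:h-1})$. Marginalizing out the other coordinates shows that, conditional on $\mathcal{G}_h:=(\ystar_{1:h-1},\cS_{0:h})$, the coordinate $\ystar_h$ is uniform on $\{0,1\}$.

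Now set $X_h:=\indic[o_h=\ystar_h]$. Each $X_{h'}$ with $h'<h$ is $\mathcal{G}_h$-measurable, and $o_h$ is $\mathcal{G}_h$-measurable. Hence $\E[X_h\mid \mathcal{G}_h]=1/2$. The filtration is nested, since $\mathcal{G}_h\subseteq\mathcal{G}_{h+1}$. So $(X_h-1/2)_{1<h<H}$ is a bounded martingale difference sequence. (In fact the $X_h$ are i.i.d.\ $\Ber(1/2)$.) Azuma--Hoeffding then gives
\[
\EE_{\ystar}\Pr\bigl[o\in\cF_1(\ystar)\bigr]=\Pr\Bigl[\textstyle\sum_{1<h<H}(X_h-1/2)\ge (\gamma_1/2+\gamma_1)(H-2)-\gamma_1(H-2)\cdot 0\Bigr],
\]
that is, the probability that the sum exceeds $\gamma_1(H-2)/2$. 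This is at most $\exp(-\gamma_1^2(H-2)/2)\le\exp(-\gamma_1^2(H-2)/12)$.

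\textbf{Main obstacle.} The delicate step is the conditional-uniformity argument. \cref{def:myopic} constrains only the \emph{law} of $(\cS_0,\dots,\cS_h)$, not an explicit sampling procedure. I would handle this by working directly with the joint law $2^{-H}Q_h(\cdot;\ystar_{1:h-1})$ as above. The only point to check is that the kernels $Q_h$ are consistent marginals of one process: $Q_h$ is the $(\cS_0,\dots,\cS_h)$-marginal of the full law, which depends on all of $\ystar$. Marginalizing that full law over $\cS_{h+1:H}$ must leave a quantity depending only on $\ystar_{1:h-1}$, and the myopic condition guarantees exactly this. A second point to verify is that step $h=H$ is excluded from the count, since $\muhat\ind{1}_H$ reveals $\ystar_H$.
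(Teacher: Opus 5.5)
Your proposal is correct and follows essentially the same route as the paper. The first claim is a Chernoff/Hoeffding bound on the independent coordinates. For the second, the one-step delay in $\muhat\ind{1}$ together with the myopic condition makes $\ystar_h$ independent of $(\cS_{0:h},\ystar_{1:h-1})$ for $1<h<H$; this gives success probability $1/2$ at each step, and Azuma finishes the argument.

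Two minor remarks. You condition on $\cS_{0:h}$ rather than the paper's $\cS_{0:h-1}$, which simply makes $o_h$ measurable directly. The garbled threshold in your Azuma display should read $\gamma_1(H-2)/2$, as you state immediately afterward.
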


\begin{proof}
Let $X \sim \mu\ind{1}(\cdot;\ystar)$. Then $X_1,\dots,X_H$ are independent with $\Pr[X_h = \ystar_h] = 1/2+\gamma_1$ for each $h \in [H]$. It follows from a Chernoff bound that
\begin{align}
\mu\ind{1}(\cF_1(\ystar);\ystar)
&= \Pr[\#\{1<h<H: X_h=\ystar_h\} > (1/2+\gamma_1/2)H] \\ 
&\geq 1 - \exp(-\gamma_1^2(\HH(1)-2)/16)
\end{align}
which proves the first claim of the lemma. Next, fix any $1$-particle particle filtering algorithm $\Alg$. Sample $\ystar \sim \Unif(\cA^{\HH(1)})$ and let $\cS_{0},\dots,\cS_{H}$ be the stochastic sequence of multisets obtained by executing $\Alg$ on $\muhat\ind{1}(\cdot;\ystar)$. Let $o$ be the output of $\Alg$. Then $\cS_{h} = \{o_{1:h}\}$ for all $0 \leq h \leq H$. Since the law of $\cS_{0:h}$ is determined by $(\muhat\ind{1}_0(\cdot;\ystar),\dots,\muhat\ind{1}_h(\cdot;\ystar))$, it holds from the definition of $\muhat\ind{1}(\cdot;\ystar)$ that $(\cS_{0:h},\ystar_{1:h-1})$ is independent of $\ystar_h$ for each $1 < h < H$. Thus, $\ystar_h$ is independent of $\cS_{h}$ conditioned on $(\cS_{0:h-1},\ystar_{1:h-1})$, and so we have
\[\Pr[o_h = \ystar_h \mid{} \cS_{0:h-1},\ystar_{1:h-1}] = \frac{1}{2}\]
for all $1 < h < H$. The second claim of the lemma then follows from Azuma's inequality.
\end{proof}

\subsubsection{Inductive Step for Analysis}

\begin{lemma}\label{lemma:induction-mu}
Fix $N>1$. Assume that $\IH_{N-1}(\delta_{N-1,1},\delta_{N-1,2})$ holds. Suppose $\HH(N-1)\geq 2$. For any $\ystar \in \cA^{\HH(N)}$, it holds that
\[\mu\ind{N}(\cF_N(\ystar);\ystar) \geq 1 - \frac{\HH(N)}{\HH(N-1)} \delta_{N-1,1} - \exp\left(-\gamma_N^2 \frac{\HH(N)}{16\cdot \HH(N-1)}\right)\]
\end{lemma}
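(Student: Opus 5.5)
The plan is to bound the failure probability of $\cF_N(\ystar) = \cF^\circ_N(\ystar)\cap\cF^\square_N(\ystar)$ by a union bound over the complements $\cF^\circ_N(\ystar)^c$ and $\cF^\square_N(\ystar)^c$. Throughout, write $H=\HH(N)$ and $K=\HH(N-1)+2$. Let the number of blocks be $m := H/K$, which is an integer by the divisibility assumption on $\HH$.

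The first step is to read off the structure of $\mu\ind{N}(\cdot;\ystar)$ directly from its definition. The law of the block $y_{iK+1:(i+1)K}$ given $y_{1:iK}$ does not depend on $y_{1:iK}$, so the $m$ blocks are independent. Within block $i$, the coordinate $y_{iK+1}$ is $\ystar_{iK+1}$ with probability $1/2+\gamma_N$ and is independent of everything else. The inner segment $y_{iK+2:(i+1)K-1}$ has law exactly $\mu\ind{N-1}(\cdot;\ystar_{iK+2:(i+1)K-1})$, because the conditionals used in the \emph{otherwise} case of the definition are precisely those of $\mu\ind{N-1}$ applied to the shifted sequence. Finally, $y_{(i+1)K}$ is an independent biased bit. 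Altogether the coordinates $y_{iK+1}$, for $i=0,\dots,m-1$, are i.i.d. indicators of agreement with $\ystar$, each with success probability $1/2+\gamma_N$, and they are independent of the inner segments.

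For $\cF^\circ_N$, apply the first bullet of $\IH_{N-1}(\delta_{N-1,1},\delta_{N-1,2})$ to each inner segment. This gives $\mu\ind{N}(o_{iK+2:(i+1)K-1}\notin\cF_{N-1}(\ystar_{iK+2:(i+1)K-1})) \le \delta_{N-1,1}$ for each $i$, and a union bound over the $m$ blocks gives failure probability at most $m\,\delta_{N-1,1}$. Since $K\ge \HH(N-1)$, we have $m\le \HH(N)/\HH(N-1)$, which is the first error term in the statement.

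For $\cF^\square_N$, I read the condition as requiring the fraction $\#\{i: o_{iK+1}=\ystar_{iK+1}\}/m$ to be at least $1/2+\gamma_N/2$; as literally written it is a count, and the count version is only weaker. The number of agreements is then $\mathrm{Bin}(m,1/2+\gamma_N)$, and a Chernoff/Hoeffding bound for a deviation of $\gamma_N/2$ below the mean gives failure probability at most $\exp(-2m(\gamma_N/2)^2)=\exp(-m\gamma_N^2/2)$. This matches the base-case Chernoff bound used in \cref{lemma:base}.

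The main point needing care is converting $m=H/K$ into the exponent $\HH(N)/(16\,\HH(N-1))$. The hypothesis $\HH(N-1)\ge 2$ gives $K=\HH(N-1)+2\le 2\HH(N-1)$, so $m\ge \HH(N)/(2\HH(N-1))$. The Hoeffding exponent is then at least $\gamma_N^2\HH(N)/(4\HH(N-1))$, which comfortably dominates the required $\gamma_N^2\HH(N)/(16\,\HH(N-1))$. Summing the two failure probabilities yields the stated bound.
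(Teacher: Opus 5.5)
Your proposal is correct and follows the paper's proof essentially verbatim. The structure is the same: a union bound over $\cF^\circ_N(\ystar)^c$ and $\cF^\square_N(\ystar)^c$, then a per-block union bound via the first bullet of $\IH_{N-1}$ giving $(H/K)\delta_{N-1,1}\le (\HH(N)/\HH(N-1))\delta_{N-1,1}$, then Hoeffding on the independent indicators $\indic[o_{iK+1}=\ystar_{iK+1}]$ using $K\le 2\HH(N-1)$. Your sharper Hoeffding exponent ($m\gamma_N^2/2$ versus the paper's $m\gamma_N^2/8$) is immaterial, and reading $\cF^\square_N$ as a fraction matches how the paper uses it in \cref{lemma:induction-alg}.
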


\begin{proof}
Set $H := \HH(N)$ and $K := \HH(N-1)+2$. Let $o \sim \mu\ind{N}(\cdot;\ystar)$. Then for each $i$, the law of $o_{iK+2:(i+1)K-1}$ is precisely $\mu\ind{N-1}(\cdot;\ystar_{iK+2:(i+1)K-1})$. Thus,
\begin{align} 
&\mu\ind{N}(\cF^\circ_N(\ystar);\ystar) \\ 
&\geq 1 - \sum_i \mu\ind{N}(o_{iK+2:(i+1)K-1} \not\in \cF_{N-1}(\ystar_{iK+2:(i+1)K-1})) \\ 
&= 1 - \frac{H}{K}\left(1 -  \mu\ind{N-1}(\cF_{N-1}(\ystar_{iK+2:(i+1)K-1});\ystar_{iK+2:(i+1)K-1})\right) \\ 
&\geq 1 - \frac{H}{K}\delta_{N-1,1} \\ 
&\geq 1 - \frac{\HH(N)}{\HH(N-1)} \delta_{N-1,1}.
\end{align}
Next, note that the random variables $(o_{iK+1})_i$ are independent and $\Pr[o_{iK+1}=\ystar_{iK+1}] = 1/2 + \gamma_N$ for each $i$. Thus,
\[\mu\ind{N}(\cF^\square_N(\ystar);\ystar)\geq 1 - \exp(-\gamma_N^2 (H/K)/8) \geq 1 - \exp(-\gamma_N^2 (\HH(N)/\HH(N-1))/16\]
by Hoeffding's inequality, so long as $\HH(N-1)\geq 2$. The lemma follows from the union bound.
\end{proof}

\begin{lemma}\label{lemma:induction-alg}
Fix $N>1$ and $\delta_{N-1,1},\delta_{N-1,2},\delta_{N,2} \in (0,1/2)$. Assume that $\IH_{N-1}(\delta_{N-1,1},\delta_{N-1,2})$ holds. Set $H := \HH(N)$ and $K := \HH(N-1)+2$. If 
    \[\gamma_N > 4\delta_{N-1,2} + \sqrt{\frac{8K\log(2/\delta_{N,2})}{H}} + \frac{8K}{H}\log(4/\delta_{N,2}),\]
    then for any $N$-particle particle filtering algorithm $\Alg$, it holds that
    \[\EE_{\ystar \sim \Unif(\cA^{\HH(N)})} \Pr_{o \sim \Alg}[o \in \cF_N(\ystar)] \leq \delta_{N,2}\]
    where $\Alg$ is executed on instance $\muhat\ind{N}(\cdot;\ystar)$.
\end{lemma}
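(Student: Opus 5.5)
The plan is to split the horizon $H=\HH(N)$ into the $H/K$ blocks $[iK+1,(i+1)K]$, and to classify each block according to what $\Alg$ does before the information about $\ystar_{iK+1}$ is revealed at step $(i+1)K$. Call block $i$ \emph{collapsed} if every particle in $\cS_{(i+1)K-1}$ has the same value of $y_{iK+1}$, and \emph{split} otherwise. On $\cF_N(\ystar)$ we need two things: the output matches $\ystar_{iK+1}$ in at least a $(1/2+\gamma_N/2)$-fraction of blocks (the event $\cF^\square_N$), and the inner segment of \emph{every} block lies in $\cF_{N-1}$ (the event $\cF^\circ_N$). I will show that collapsed blocks contribute matches only at rate $1/2$, and that split blocks rarely have their inner segment in $\cF_{N-1}$. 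Hence too few blocks can be matched.

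\textbf{Collapsed blocks.} By construction, $\muhat\ind{N}_h(\cdot;\ystar)$ for $h<(i+1)K$ does not depend on $\ystar_{iK+1}$. By the myopic condition (item 3 of \cref{def:myopic}), the law of $\cS_{0:(i+1)K-1}$ is then independent of $\ystar_{iK+1}$ given the other coordinates of $\ystar$ revealed so far. If the block is collapsed, the output's $o_{iK+1}$ is already determined at step $(i+1)K-1$, by the ancestry condition (item 1). Therefore, conditioned on the history through step $(i+1)K-1$ and on the earlier coordinates of $\ystar$, the event $o_{iK+1}=\ystar_{iK+1}$ has probability exactly $1/2$. Azuma's inequality applied to the martingale differences $\indic[o_{iK+1}=\ystar_{iK+1}]-1/2$ over collapsed blocks then gives the following: except with probability $\delta_{N,2}/2$, the number of matches among collapsed blocks is at most $\#\text{collapsed}/2+\sqrt{2(H/K)\log(2/\delta_{N,2})}$.

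\textbf{Split blocks.} This is the main obstacle: I need a reduction to $\IH_{N-1}$ that holds \emph{conditionally on the past blocks}. If both values $b\in\{0,1\}$ of $y_{iK+1}$ appear in $\cS_{(i+1)K-1}$, then by ancestry both appear at every level $iK+1,\dots,(i+1)K-1$. Consequently each branch has at most $N-1$ particles at every inner level. I would build an $(N-1)$-particle myopic algorithm on the instance $\muhat\ind{N-1}(\cdot;\ystar_{iK+2:(i+1)K-1})$ as follows. It internally simulates $\Alg$, sampling the outer coordinates of $\ystar$ and the prior history itself; this is legitimate because the inner values of $\muhat\ind{N}$ in branch $b$ equal $\muhat\ind{N-1}$ evaluated on the inner coordinates, for either $b$. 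It then outputs a path drawn from branch $b$'s particles, for a chosen $b$. Summing over both choices of $b$ bounds the conditional probability that block $i$ is split and its inner output lies in $\cF_{N-1}$ by $2\delta_{N-1,2}$. Some care is needed to ensure that this simulated process satisfies \cref{def:myopic} (in particular the bound $|\cS_h|\le N-1$ and the myopic measurability), and that the bound holds given the past blocks. Averaging over the uniform $\ystar_{iK+2:(i+1)K-1}$, which is independent of everything outside the block, handles the latter.

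\textbf{Combining.} Let $S$ be the number of split blocks whose inner output is in $\cF_{N-1}$. Each of its indicators has conditional mean at most $2\delta_{N-1,2}$, so Freedman's inequality gives $S\le 4\delta_{N-1,2}H/K+8\log(4/\delta_{N,2})$ except with probability $\delta_{N,2}/2$. On $\cF^\circ_N$, every split block is counted in $S$, so the number of matches is at most
\begin{align}
\frac{H}{2K}+\sqrt{2(H/K)\log(2/\delta_{N,2})}+4\delta_{N-1,2}\frac{H}{K}+8\log(4/\delta_{N,2}).
\end{align}
Dividing by $H/K$, the assumed lower bound on $\gamma_N$ makes this strictly less than $(1/2+\gamma_N/2)H/K$. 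So $\cF_N(\ystar)$ fails outside an event of probability at most $\delta_{N,2}$, which proves the claim.
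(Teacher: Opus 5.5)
Your architecture is the paper's. Collapsed/split is exactly the paper's $\overline{\cE^i}$ versus $\cE^i$. Your branch-wise reduction to $\IH_{N-1}$ is its $\cT^{(\cdot,b)}$ construction; the paper settles your $|\cS_h|\le N-1$ worry by truncating each branch to its first $N-1$ elements, which changes nothing on the split event. The probability-$1/2$ argument is also the same, and your collapsed-only Azuma is in fact properly adapted.

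The gap is the last step, which does not close. From your displayed count, contradicting $\cF^\square_N$ requires
\[
\gamma_N > 8\delta_{N-1,2} + \sqrt{\frac{8K\log(2/\delta_{N,2})}{H}} + \frac{16K}{H}\log(4/\delta_{N,2}),
\]
which the hypothesis does not imply. The extra factor on the log term is just loose Freedman constants. The coefficient $8$ on $\delta_{N-1,2}$ is structural: you push the per-block mean $2\delta_{N-1,2}$ through Freedman's multiplicative factor, which is $2$ here and always $>1$. The paper instead places $2\delta_{N-1,2}$ additively in the per-block bound $\Pr[o_{iK+1}=\ystar_{iK+1}\mid\cS_{0:iK}]\le\frac12+\Pr[o_{iK+2:(i+1)K-1}\notin\cF_{N-1}(\ystar_{iK+2:(i+1)K-1})\mid\cS_{0:iK}]+2\delta_{N-1,2}$. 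It then applies Freedman only to the inner-failure probabilities, whose realized count is $0$ on $\cF^\circ_N$, so the factor $2$ multiplies zero. Your weaker condition would still suffice for the application in \cref{lemma:pf-induction}, but it does not prove the lemma as stated.

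Separately, the indicators counted by $S$ involve the final output $o$, which is selected only after step $H$. They are therefore not adapted to the block filtration, and Freedman does not apply as written. A single martingale repairs both issues. With $c_i$ the common value of $y_{iK+1}$ on a collapsed block, let
\[
Z_i := \indic[\text{collapsed}_i,\ c_i=\ystar_{iK+1}] + \indic\bigl[\text{split}_i,\ \exists y\in\cS_{(i+1)K-1}:\ y_{iK+2:(i+1)K-1}\in\cF_{N-1}(\ystar_{iK+2:(i+1)K-1})\bigr].
\]
This is measurable with respect to $(\cS_{0:(i+1)K-1},\ystar_{1:(i+1)K-1})$. Its conditional mean given $(\cS_{0:iK},\ystar_{1:iK})$ is at most $\frac12+2\delta_{N-1,2}$. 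For the second term, let each simulated $(N-1)$-particle algorithm output an element of $\cF_{N-1}$ whenever one exists; this is allowed because its final-step data $\muhat\ind{N-1}_{K-2}=\mu\ind{N-1}$ reveals the inner $\ystar$.

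Any $y\in\cS_H\cap\cF_N(\ystar)$ has at most $\sum_i Z_i$ blocks with $y_{iK+1}=\ystar_{iK+1}$. One Azuma--Hoeffding bound then gives a contradiction whenever $\gamma_N>4\delta_{N-1,2}+\sqrt{2K\log(1/\delta_{N,2})/H}$, which the hypothesis implies. This even yields $\Pr[\cS_H\cap\cF_N(\ystar)\neq\emptyset]\le\delta_{N,2}$.
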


\begin{proof}
Consider the random process induced by sampling $\ystar \sim \Unif(\cA^{\HH(N)})$ and executing $\Alg$ on $\muhat\ind{N}(\cdot;\ystar)$. Fix $i$ and condition on $\cS_{0:iK}$. Let $\cE^i$ be the event that $y_{iK+1} \neq y'_{iK+1}$ for some $y,y' \in \cS_{(i+1)K-1}$. For each $b \in \{0,1\}$, define process $\cT^{(0,b)},\dots,\cT^{(K-2,b)}$ where $\cT^{(0,b)} = \{\emptyset\}$ and, for each $h'>0$,
$\cT^{(h',b)}$ consists of the first $\leq N-1$ elements of \[\{y_{iK+2:h'+iK+1}: y_{1:h'+iK+1} \in \cS_{h' + iK + 1} \land y_{iK+1}=b \land y_{iK+2:h'+iK} \in \cT^{(h'-1,b)}\}\]
according to some arbitrary ordering of $\cA^{h'}$. Observe that $\cT^{(0,b)},\dots,\cT^{(K-2,b)}$ describes the execution of an $(N-1)$-particle myopic particle filtering algorithm on $\muhat\ind{N-1}(\cdot; \ystar_{iK+2:(i+1)K-1})$. To see why, we check the conditions of \cref{def:myopic}. First, it holds almost surely that $|\cT^{(h',b)}| \leq N-1$ for all $h'$. Second, by construction, if $y_{1:h'} \in \cT^{(h',b)}$ then $y_{1:h'-1} \in \cT^{(h'-1,b)}$. Third, for any $0 \leq h \leq K-2$, the law of $\cT^{(0,b)},\dots,\cT^{(h',b)}$ is determined by the law of $\cS_{0:h'+iK+1}$, which by definition of $\Alg$ is determined by $(\muhat\ind{N}_h(\cdot;\ystar): h \leq h'+iK+1)\}$, which by definition of $\muhat\ind{N}$ is determined by $(\muhat\ind{N-1}_{k}(\cdot;\ystar_{iK+2:(i+1)K-1}): 0 \leq k \leq h')$ (together with randomness independent of $\ystar_{iK+2:(i+1)K-1}$).

It follows from the inductive hypothesis that 
\[\Pr[\exists y \in \cT^{(K-2,b)} \cap \cF_{N-1}(\ystar_{iK+2:(i+1)K-1})] \leq \delta_{N-1, 2}.\]
In the event $\cE^i$, it holds almost surely that \[\{y_{iK+2:(i+1)K-1}: y_{1:(i+1)K-1} \in \cS_{(i+1)K-1}\} = \cT^{(K-2,0)} \cup \cT^{(K-2,1)}.\] Moreover, the output $o_{1:H}$ of $\Alg$ satisfies $o_{1:(i+1)K-1} \in \cS_{(i+1)K-1}$ almost surely, so
\begin{align}
&\Pr[o_{iK+2:(i+1)K-1} \in \cF_{N-1}(\ystar_{iK+2:(i+1)K-1}) \mid{} \cS_{0:iK}] \\
&\leq \Pr[\exists y_{1:(i+1)K-1}\in\cS_{(i+1)K-1}: y_{iK+2:(i+1)K-1} \in \cF_{N-1}(\ystar_{iK+2:(i+1)K-1}) \mid{} \cS_{0:iK}] \\ 
&\leq \Pr[\ol \cE^i \mid{} \cS_{0:iK}] + \Pr[\exists y \in \cT^{(K-2,0)} \cup \cT^{(K-2,1)}: y_{iK+2:(i+1)K-1} \in \cF_{N-1}(\ystar_{iK+2:(i+1)K-1}) \mid{} \cS_{0:iK}] \\ 
&\leq \Pr[\ol \cE^i \mid{} \cS_{0:iK}] + 2\delta_{N-1,2}.
\end{align}
Next, observe that in the event $\ol \cE^i$, it holds that $o_{iK+1} = \text{mode}\{y_{iK+1}: y \in \cS_{(i+1)K-1}\}$. Observe that $\cS_{(i+1)K-1}$ is independent of $\ystar_{iK+1}$, so that
\[\Pr[\text{mode}\{y_{iK+1}: y \in \cS_{(i+1)K-1}\} = \ystar_{iK+1} \mid{} \cS_{0:iK}] = \frac{1}{2}.\]
Thus, we have
\[\Pr[o_{iK+1} = \ystar_{iK+1} \mid{} \cS_{0:iK}] \leq \frac{1}{2} + \Pr[\cE^i \mid{} \cS_{0:iK}].\]
It follows that
\begin{align} 
&\Pr[o_{iK+1} = \ystar_{iK+1} \mid{} \cS_{0:iK}]  \\ 
&\leq \frac{1}{2} + \Pr[o_{iK+2:(i+1)K-1} \not \in \cF_{N-1}(\ystar_{iK+2:(i+1)K-1}) \mid{} \cS_{0:iK}] + 2\delta_{N-1,2}. 
\end{align}
Define random variables
\[W_i := \indic[o_{iK+1} = \ystar_{iK+1}] - \Pr[o_{iK+1} = \ystar_{iK+1} \mid{} \cS_{0:iK}].\]
Then $W_0,\dots,W_{H/K}$ is a martingale difference sequence under filtration induced by $(\cS_{0:iK})_{i=0}^{H/K-1}$, so for any $\delta \in (0,1)$,
\[\Pr\left[\sum_{i=0}^{H/K-1} W_i > \sqrt{8(H/K)\log(1/\delta)}\right] \leq \delta.\]
Thus, with probability at least $1-\delta$, it holds that 
\begin{align}
&\frac{K}{H}\sum_{i=0}^{H/K-1} \indic[o_{iK+1}=\ystar_{iK+1}]  \\
&\leq \frac{1}{2} + 2\delta_{N-1,2} + \sqrt{\frac{8K\log(1/\delta)}{H}} \\ 
&\qquad+ \frac{K}{H}\sum_{i=0}^{H/K-1} \Pr[o_{iK+2:(i+1)K-1} \not \in \cF_{N-1}(\ystar_{iK+2:(i+1)K-1}) \mid{} \cS_{0:iK}].
\end{align}
By Freedman's inequality, it holds with probability at least $1-\delta$ that 
\begin{align}
&\sum_{i=0}^{H/K-1} \Pr[o_{iK+2:(i+1)K-1} \not \in \cF_{N-1}(\ystar_{iK+2:(i+1)K-1}) \mid{} \cS_{0:iK}] \\ 
&\leq 2 \sum_{i=0}^{H/K-1} \indic[o_{iK+2:(i+1)K-1} \not \in \cF_{N-1}(\ystar_{iK+2:(i+1)K-1})] + 8\log(2/\delta).
\end{align}
We conclude that with probability at least $1-2\delta$,
\begin{align}
&\frac{K}{H}\sum_{i=0}^{H/K-1} \indic[o_{iK+1}=\ystar_{iK+1}]  \\
&\leq \frac{1}{2} + 2\delta_{N-1,2} + \sqrt{\frac{8K\log(1/\delta)}{H}} + \frac{8K}{H}\log(2/\delta)\\ 
&\qquad+ \frac{2K}{H}\sum_{i=0}^{H/K-1} \indic[o_{iK+2:(i+1)K-1} \not \in \cF_{N-1}(\ystar_{iK+2:(i+1)K-1})].
\end{align}
Condition on this event. If we moreover have $o_{1:H} \in \cF_N(\ystar)$, then we have both
\[\frac{K}{H}\sum_{i=0}^{H/K-1} \indic[o_{iK+1}=\ystar_{iK+1}] \geq \frac{1}{2} + \frac{\gamma_N}{2}\]
and
\[\frac{K}{H}\sum_{i=0}^{H/K-1} \indic[o_{iK+2:(i+1)K-1} \not \in \cF_{N-1}(\ystar_{iK+2:(i+1)K-1})] = 0.\]
But then 
\[\frac{\gamma_N}{2} \leq 2\delta_{N-1,2} + \sqrt{\frac{8K\log(1/\delta)}{H}} + \frac{8K}{H}\log(2/\delta).\]
Set $\delta := \delta_{N,2}/2$. Then this bound contradicts the lemma assumption, so in fact $o_{1:H} \not \in \cF_N(\ystar)$ almost surely (under the preceding condition). It follows that $\Pr[o_{1:H} \in \cF_N(\ystar)] \leq 2\delta = \delta_{N,2}$ as claimed.
\end{proof}

\subsubsection{Completing the Induction}

Fix some $N$. Define $\HH(n) := N^{4n}$ for all $n$. Define $\gamma_n := 1/N$ for all $n$.

\begin{lemma}\label{lemma:pf-induction}
It holds that
\[\mu\ind{N}(\cF_N(\ystar);\ystar) \geq 1 - \exp(-\Omega(N^2))\]
for all $\ystar$, and
\[\EE_{\ystar\sim\Unif(\cA^{\HH(N)})} \Pr_{o\sim \Alg}[o\in\cF_N(\ystar)] \leq 4\exp(-N).\]
\end{lemma}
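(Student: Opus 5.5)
We prove, by induction on $n = 1,\dots,N$, that $\IH_n(\delta_{n,1},\delta_{n,2})$ holds with
\[
\delta_{n,1} := n\exp(-c N^2) \quad\text{and}\quad \delta_{n,2} := 4\exp(-N),
\]
for a small absolute constant $c>0$. The parameters are $\HH(n) = N^{4n}$ and $\gamma_n = 1/N$. Since $\HH(n)/\HH(n-1) = N^4$, the ratio $\HH(n)/(\HH(n-1)+2)$ is essentially $N^4$ (any divisibility issues are harmless, or one can adjust $\HH$ by constant factors). In every level, the effective number of independent coordinates is therefore of order $N^4$, and each is biased by $\gamma = 1/N$. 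This gives concentration exponents of order $\gamma^2 N^4 = N^2$.

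\textbf{Base case $n=1$.} We apply \cref{lemma:base}. The first bound is
\[
1-\exp\!\left(-\gamma_1^2(\HH(1)-2)/16\right) = 1-\exp(-\Omega(N^2)),
\]
because $\HH(1)-2 \approx N^4$. The second bound is $\exp(-\Omega(N^2))$, which is at most $4\exp(-N)$ for $N$ large enough.

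\textbf{Inductive step, first property.} We apply \cref{lemma:induction-mu}. This gives
\[
\mu\ind{n}(\cF_n(\ystar);\ystar) \geq 1 - N^4\delta_{n-1,1} - \exp(-N^2/16).
\]
Iterating, the error satisfies $\delta_{n,1} \le N^4\delta_{n-1,1} + e^{-N^2/16}$. Over $N$ levels this accumulates to at most about $N^{4N}\cdot N e^{-cN^2}$. The prefactor is only $\exp(O(N\log N))$, so the bound remains $\exp(-\Omega(N^2))$. This is the first claim, with the constant in $\Omega$ absorbing the $N\log N$ loss. Accordingly, the induction should be stated with $\delta_{n,1} \le (2N^4)^{n} e^{-N^2/16}$ rather than the simplified form above.

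\textbf{Inductive step, second property.} We apply \cref{lemma:induction-alg} with $\delta_{n,2} = 4e^{-N}$ and $\delta_{n-1,2} = 4e^{-N}$. Taking $K/H \approx N^{-4}$, the required condition is
\[
\frac{1}{N} > 16e^{-N} + \sqrt{8N^{-4}\log(e^N/2)} + 8N^{-4}\log(e^N).
\]
The right-hand side is approximately
\[
16e^{-N} + \sqrt{8}\,N^{-3/2} + 8N^{-3},
\]
which is $o(1/N)$. The condition therefore holds for all sufficiently large $N$, and the lemma yields $\delta_{n,2} = 4e^{-N}$ again. This shows the bound is stable under induction and does not accumulate.

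\textbf{Main obstacle.} The one delicate point is the bookkeeping in the first property: the multiplicative factor $\HH(n)/\HH(n-1)$ compounds across levels. This is handled by noting that $N^{4N} = e^{4N\log N} \ll e^{cN^2}$.

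A second point is that $\delta_{n,1},\delta_{n-1,2} < 1/2$ and $\HH(n-1)\ge 2$, which the lemmas require. Both hold trivially for large $N$. The lemma then follows by taking $n = N$.
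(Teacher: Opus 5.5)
Your proposal is correct and follows the paper's argument. The first claim comes from unrolling \cref{lemma:induction-mu}, with \cref{lemma:base} as the base case, where the compounded $N^{O(N)}$ factor is absorbed by $e^{-\Omega(N^2)}$. The second claim comes from applying \cref{lemma:induction-alg} at every level with the fixed choice $\delta_{n,2}=4e^{-N}$; its condition holds for large $N$ because the right-hand side is $O(N^{-3/2})=o(1/N)$.

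One small point: your initial choice $\delta_{n,1}=ne^{-cN^2}$ does not close under the recursion, but the bound $\delta_{n,1}\le(2N^4)^n e^{-N^2/16}$ you switch to does.
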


\begin{proof}
We have from \Cref{lemma:base,lemma:induction-mu} that
\[\mu\ind{N}(\cF_N(\ystar);\ystar) \geq 1 - \sum_{n=1}^N \frac{\HH(N)}{\HH(n-1)} \exp(-\gamma_n^2 \cdot \Omega(\HH(n)/\HH(n-1))) \geq 1-N^{4N+1}\exp(-\Omega(N^2)).\]
Next, observe that if we set $\delta_{n,2} := 4\exp(-N)$ then the condition of \Cref{lemma:induction-alg} is satisfied for each $n$, so long as $N$ exceeds a sufficiently large constant. Thus, applying \Cref{lemma:induction-alg} together with induction on $N$ (with \Cref{lemma:base} providing the guarantee for $N=1$) proves that 
\[\EE_{\ystar\sim\Unif(\cA^{\HH(n)})} \Pr_{o\sim \Alg}[o\in\cF_n(\ystar)] \leq 4\exp(-N)\]
for each $n \leq N$ as needed.
\end{proof}

\begin{proof}[Proof of \Cref{thm:pf-lb}]
Immediate from \Cref{lemma:pf-induction,lemma:pf-construction-accuracy}, together with the fact that the $N$-particle construction has horizon $H = \HH(N) = N^{4N}$, and hence $N \geq \log(H)/(4\log\log(H))$.
\end{proof}

\section{Additional Experimental Details}
In this section, we provide further details regarding our experimental setup.

\subsection{Additional Evaluation Approaches}
\label{sec:additional-evaluations}
First, we give additional context and motivation for our choice of the evaluation metric of logprob-discrepancy introduced in \cref{sec:prompt-switching}. 

One approach to evaluate SMC, which has been used in prior work, is to use the fact that the value $\What_H$ computed in \cref{alg:smc} satisfies $\E[\What_H] = Z$ and moreover the difference $\log Z - \E[\log \What_H]$ is an upper bound on the KL divergence $\Dkl{\E[\nuhat_H]}{\pistar_H}$ between the output distribution of SMC, $\E[\nuhat_H]$, and the target distribution $\pistar_H$ (Proposition 5 of \citet{zhao2024twisted}).\footnote{Further, in \cref{lem:SMC-TV} we show that the difference $|\What_H - Z|$ upper bounds the \emph{total variation distance} $\Dtv{\E[\nuhat_H]}{\pistar_H}$.}  %
However, there are several pitfalls with this approach:  first, this is only an \emph{upper bound} and is not guaranteed to be tight; further, it only applies to SMC and so does not allow us to compare the performance of SMC to other baselines. Finally, even for the relatively simple setting in which we operate, the number $N$ of particles is too small to get actual \emph{distributional} closeness between $\E[\nuhat_H]$ and the target $\pistar_H$ (for instance, the action-level coverage is too large). %

Thus, we aimed to use a weaker notion of distributional closeness that we can both (a) measure efficiently and (b) hope to have some bearing on the quality of samples produced by SMC. The $\MP$-logprob discrepancy certainly satisfies (a). Moreover, since its error is one-sided in nature (i.e., $\mathrm{LPD}_{\MP}(\nu, \nu) = 0$) and it takes on a  positive value in all of our experiments, we believe it is a good candidate in the context of the prompt switching task.

\subsection{Details for \cref{sec:exp-prm-accuracy}}
\label{sec:exp-prm-accuracy-details}
For the prompt switching task in \cref{sec:exp-prm-accuracy}, the prompts $\promptref,\prompt\^i$ were all generated with GPT-5.2 (in particular, the $\text{gpt-5.2\_2025-12-11}$ release). 

To estimate $\Dkl{\pistar_h}{\pihat_h\^i}$, we write
\begin{align}
\Dkl{\pistar_h}{\pihat_h\^i} &= \E_{a_{1:h} \sim \pistar_h} \left[ \log \frac{\pistar_h(a_{1:h})}{\pihat_h\^i(a_{1:h})} \right] \nonumber\\
&= \E_{a_{1:h} \sim \pistar_h} \left[ (1-h/H) \cdot \alpha \cdot \log \frac{\pistar(a_{1:h})}{\pi\^i(a_{1:h})} \right] + \log \Zhat\^i_h\label{eq:kldiv-expand},
\end{align}
where 
\begin{align}
\Zhat\^i_h := \E_{\piref}[\Vhat\^i(a_{1:h})] = \E_{\pistar} \left[ \left(\frac{\pi\^i_h(a_{1:h})}{\pistar_h(a_{1:h})} \right)^{(1-h/H) \cdot \alpha}\right].\label{eq:zhat-expand}
\end{align}
 We estimate the first term in \cref{eq:kldiv-expand} via Monte Carlo rollouts from $\pistar_h$, and estimate $\log \Zhat\^i_h$ via Monte Carlo rollouts from $\pistar$, per \cref{eq:zhat-expand} (each with 1000 samples). %

 Notice that our estimate of of $\Zhat_h\^i$ is likely very noisy, since there is no logarithm inside the expectation. When we repeated the experiments without adding (our estimate of) the term $\log \Zhat_h\^i$ at all in \cref{eq:kldiv-expand}, we observed similar results. Thus, the noise in our estimate of $\Zhat_h\^i$ seems unlikely to have affected our conclusions. %

 \paragraph{Details for the figures} In \cref{fig:inv-phi-intro}, we use the PRM defined in \cref{eq:vhat-interp} with $\alpha = 2$. Moreover, the KL divergences are measured at step $h = 32$ (recall that $H = 64$); we observed similar results with other values of $h$. The experiments were run on 32GB NVIDIA V100 GPUs. The Pearson correlation coefficient for the points plotted in \cref{fig:inv-phi-intro} is 0.81. 

 In \cref{fig:invphi-alpha2-all}, we plot additional runs of the same experiment as in \cref{fig:inv-phi-intro}, with 3 additional different choices of the dataset (i.e., different choices of $\promptref$ and $\prompt\^i$). Finally, in \cref{fig:invphi-alpha1-all}, we repeat the same experiment for all 4 datasets but with $\alpha = 1$. 

 \subsection{Details for \cref{sec:exp-prm-action}}
 \label{sec:exp-prm-action-details}
In \cref{fig:actcov-all}, we repeat the experiment in \cref{fig:mean-ratio-intro} with 4 additional choices for the dataset (i.e., choices of $\promptref$ and $\prompt\^i$, $i \in [k]$). The Pearson correlation coefficients $r$ are shown in the subcaptions of each subfigure. The experiments were run on 32GB NVIDIA V100 GPUs.

\subsection{SMC vs.\ Best-of-$N$ on AIME}\label{sec:smc-bon-aime-details}
In our experiments on math problems in \cref{sec:smc-math-experiments}, we used Qwen2.5-1B-Instruct as the base model $\piref$ and Qwen2.5-Math-PRM-7B as the PRM $\Vhat$. For both SMC and Best-of-$N$, we used $N=32$ particles/samples. We sampled from the base model with temperature $0.8$, and used systematic resampling in the implementation of SMC. We generated for a maximum of 3072 tokens for AIME and a maximum of 2048 tokens for Math500. 

As opposed to implementing SMC with $\MA$ being the space of tokens (as in our experiments on prompt switching), we took $\MA$ to be the space of \emph{blocks} of some number $B$ of tokens; doing so significantly improves the efficiency of SMC, as calls to the PRM are the main computational bottleneck. We set $B = 64$ for Math500 and $B = 128$ for AIME. Moreover, we observed that the reward model $\Vhat(a_{1:h})$ often gave unreliable answers when passed as input a partial completion $a_{1:h}$ which was truncated in the middle of a sentence or mathematical expression. Thus, we in fact truncated all blocks at the last instance of a delimiter character (e.g., new line, period). 

\begin{figure}[ht]
\centering
\begin{subfigure}[b]{0.35\textwidth}
\centering
\includegraphics[width=\textwidth]{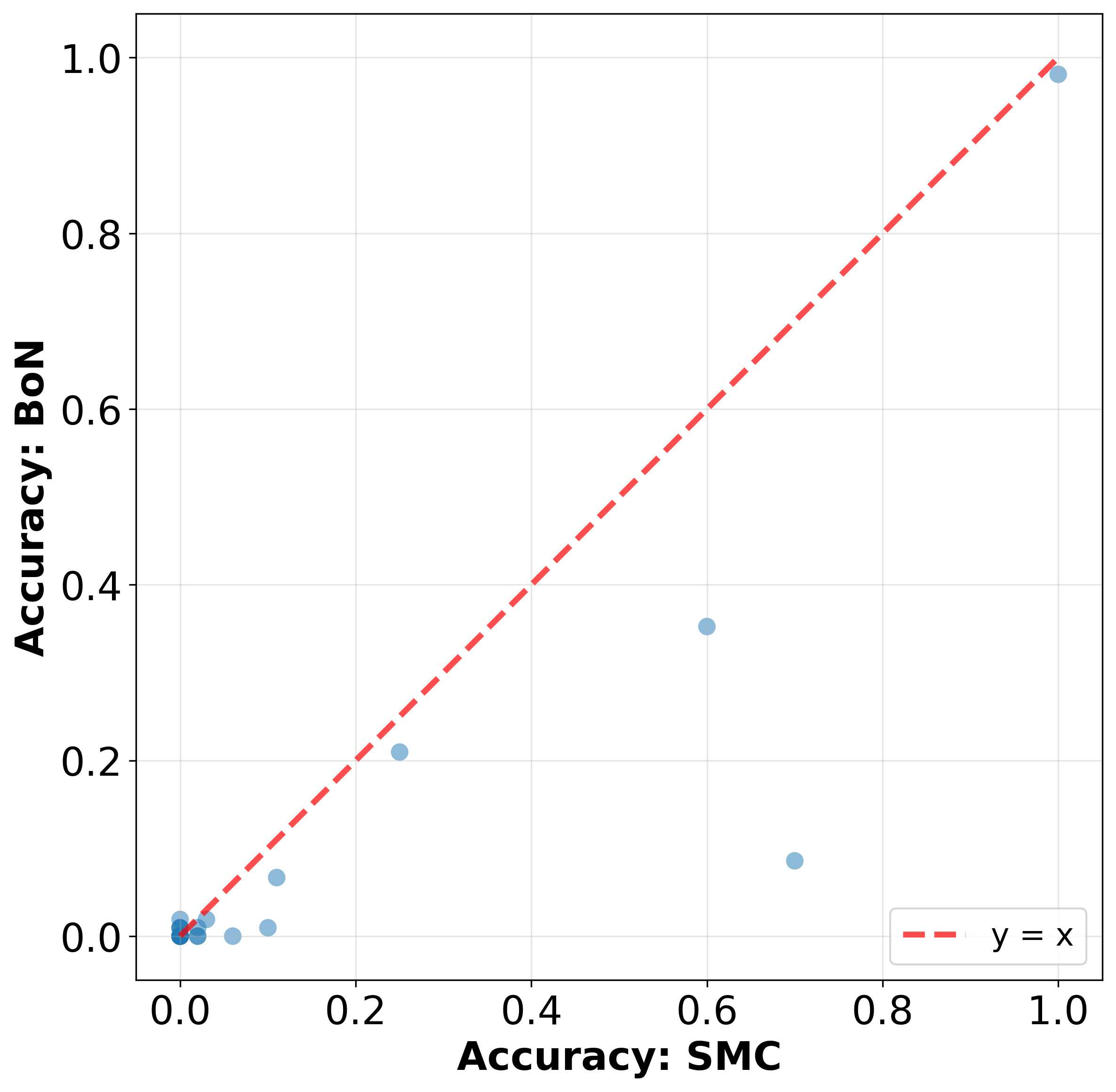}
\caption{AIME 2024}
\label{fig:smc-vs-bon-aime24}
\end{subfigure}
\begin{subfigure}[b]{0.35\textwidth}
\centering
\includegraphics[width=\textwidth]{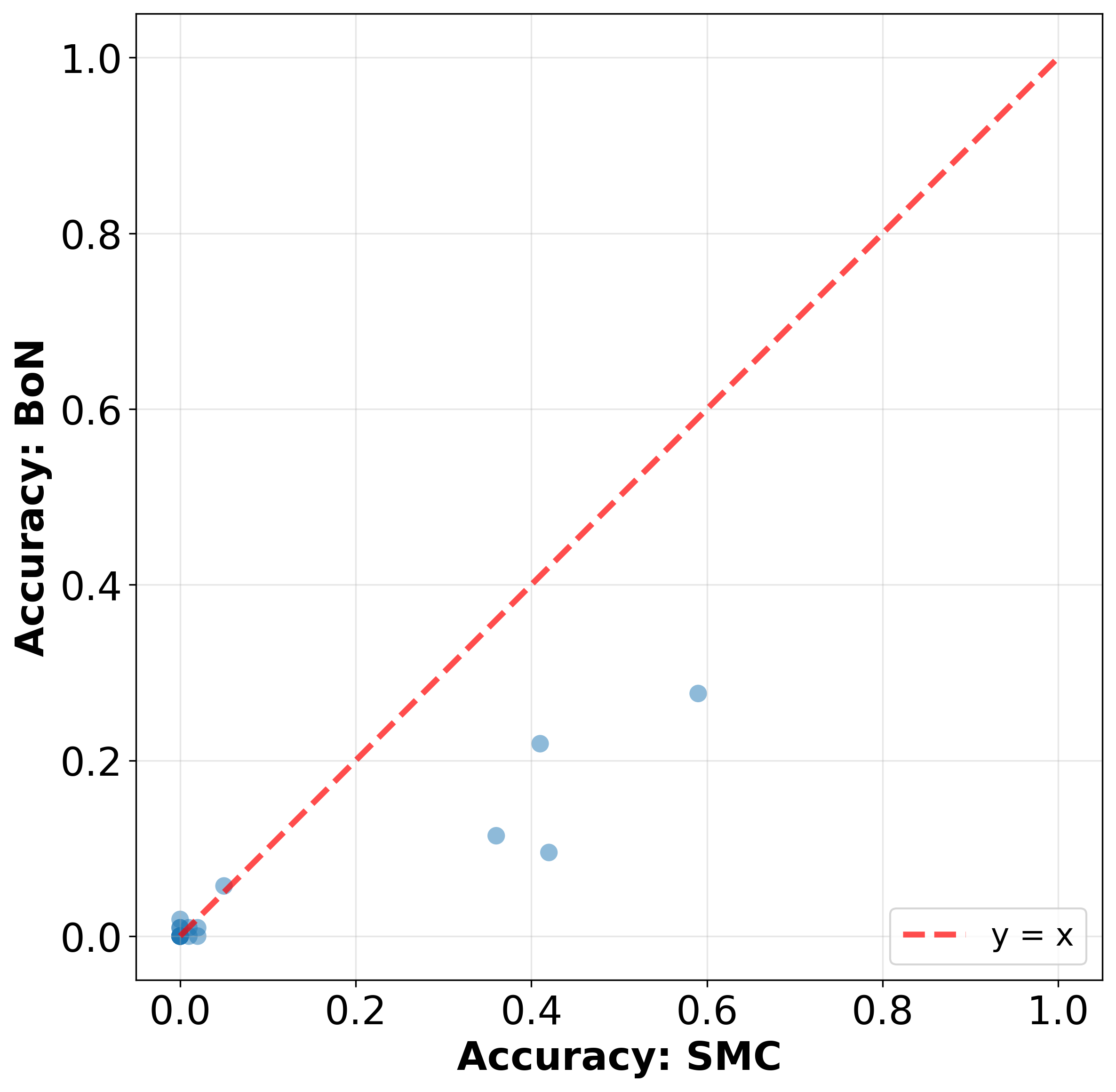}
\caption{AIME 2025}
\label{fig:smc-vs-bon-aime25}
\end{subfigure}
\caption{Performance of SMC vs.\ Best-of-$N$ on AIME problems (each point is a different problem). Similar to \cref{fig:smc-vs-bon}, the majority of points lie below the line $y=x$, indicating that SMC improves performance over Best-of-$N$ on most problems.}
\label{fig:smc-vs-bon-aime}
\end{figure}

\begin{figure}[ht]
\centering
\begin{subfigure}[b]{0.3\textwidth}
\centering
\includegraphics[width=\textwidth]{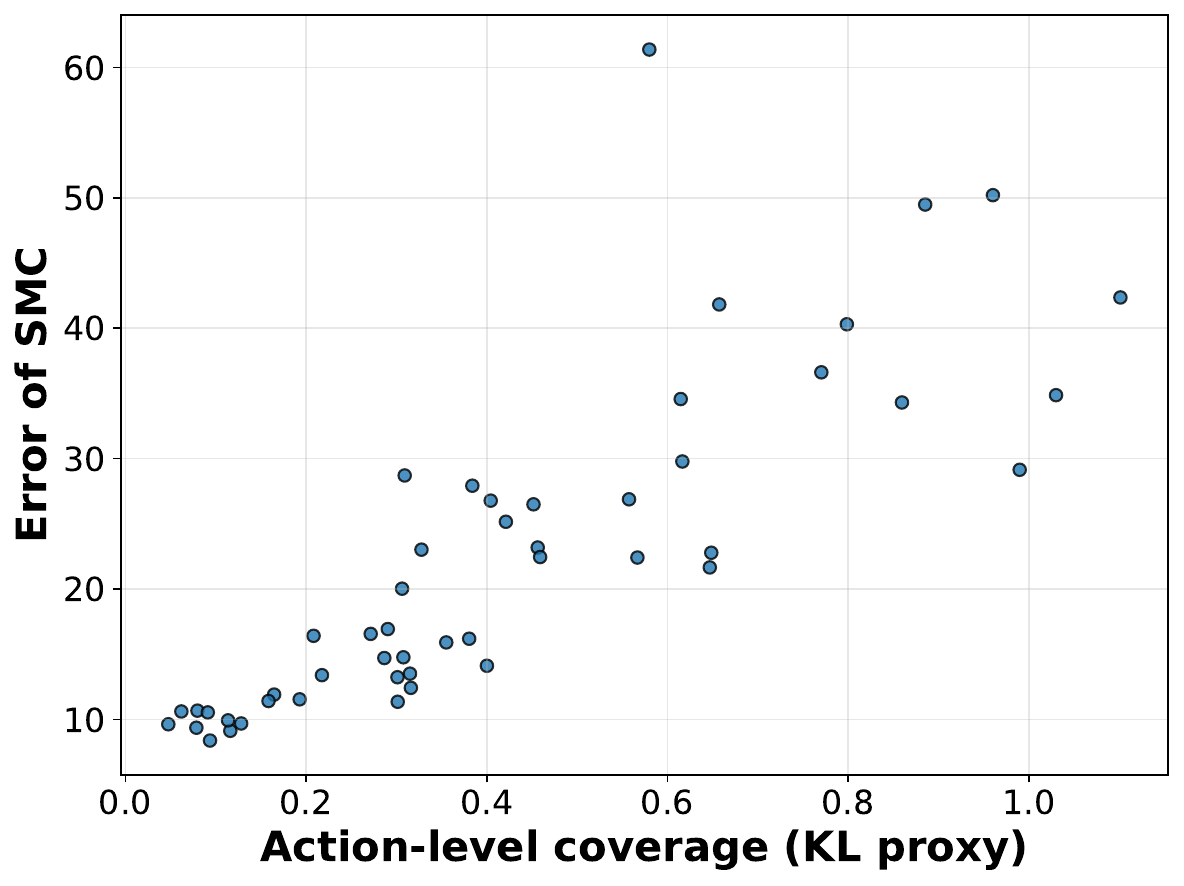}
\caption{$r = 0.83$}
\label{fig:actcov-1}
\end{subfigure}\hfill
\begin{subfigure}[b]{0.3\textwidth}
\centering
\includegraphics[width=\textwidth]{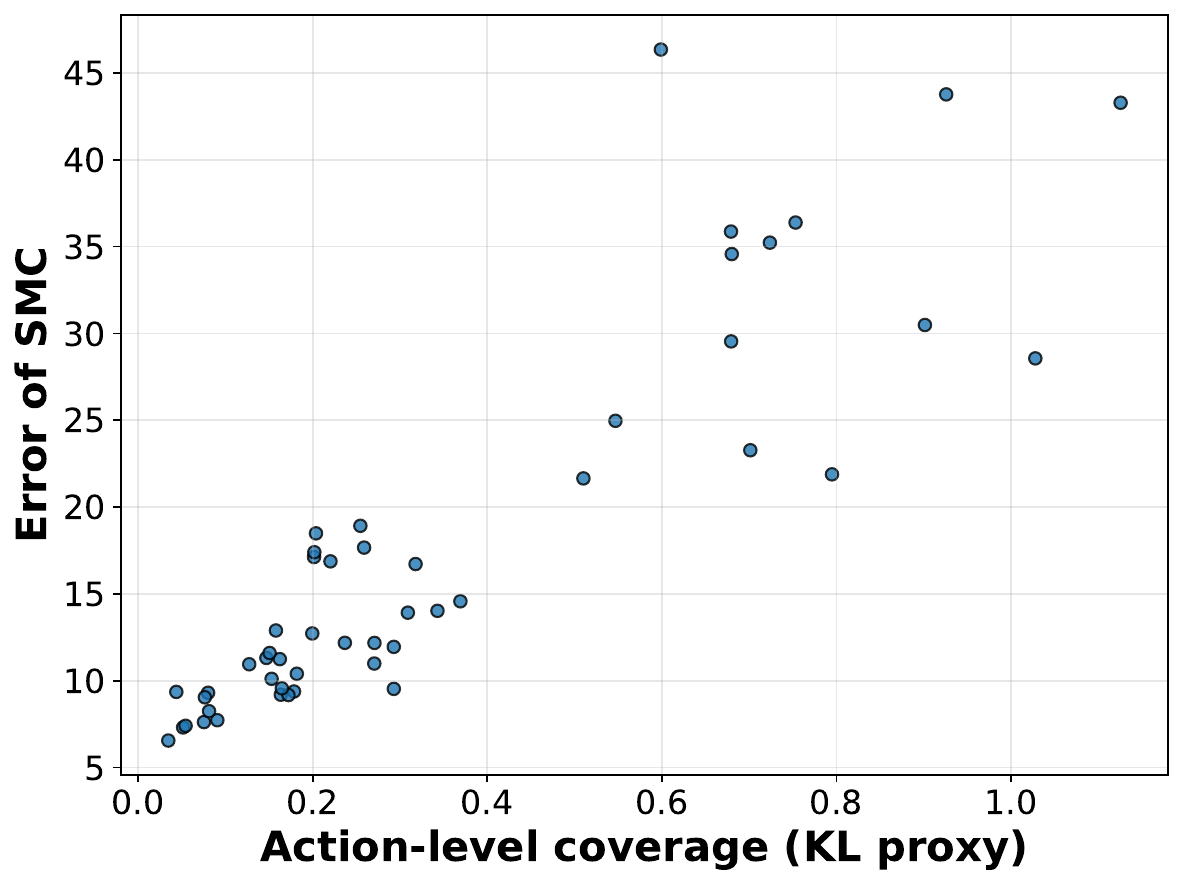}
\caption{$r = 0.89$}
\label{fig:actcov-2}
\end{subfigure}
\begin{subfigure}[b]{0.3\textwidth}
\centering
\includegraphics[width=\textwidth]{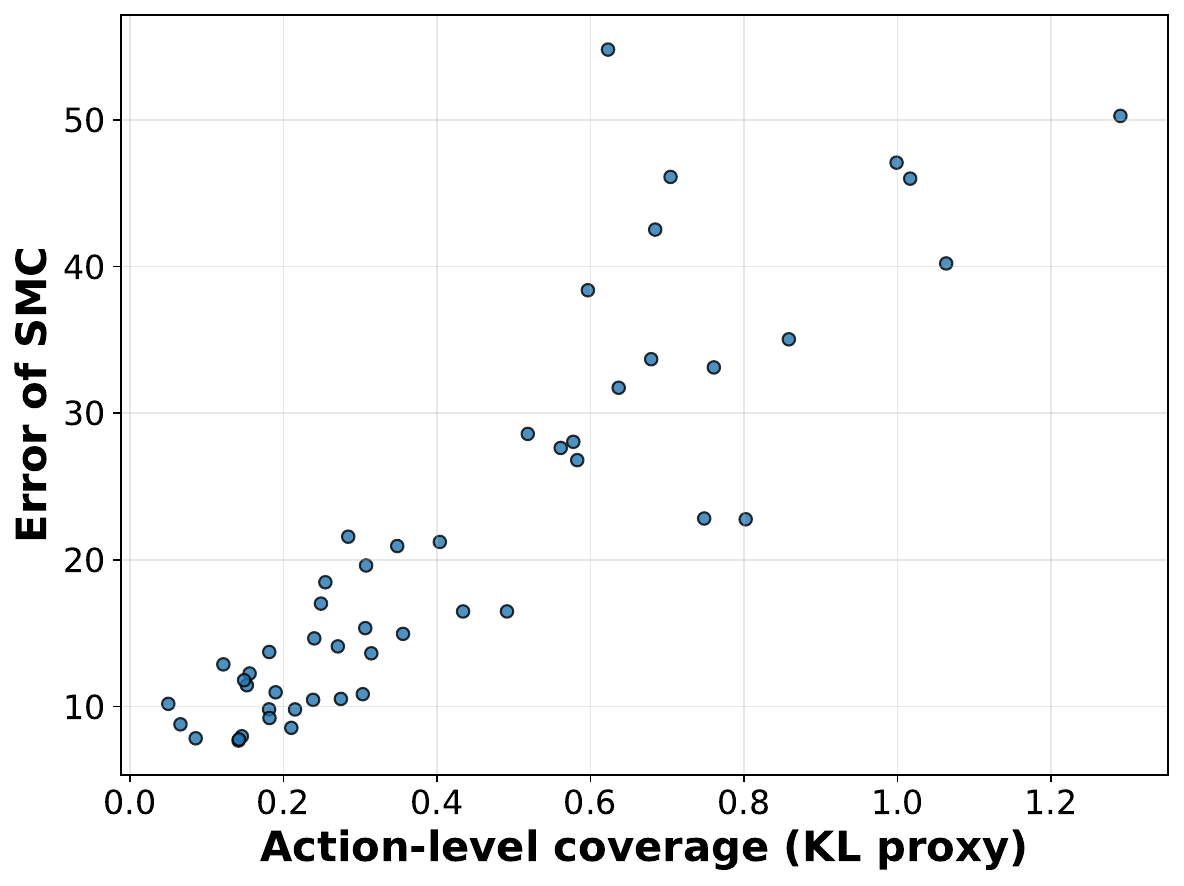}
\caption{$r = 0.88$}
\label{fig:actcov-3}
\end{subfigure}\hfill
\begin{subfigure}[b]{0.3\textwidth}
\centering
\includegraphics[width=\textwidth]{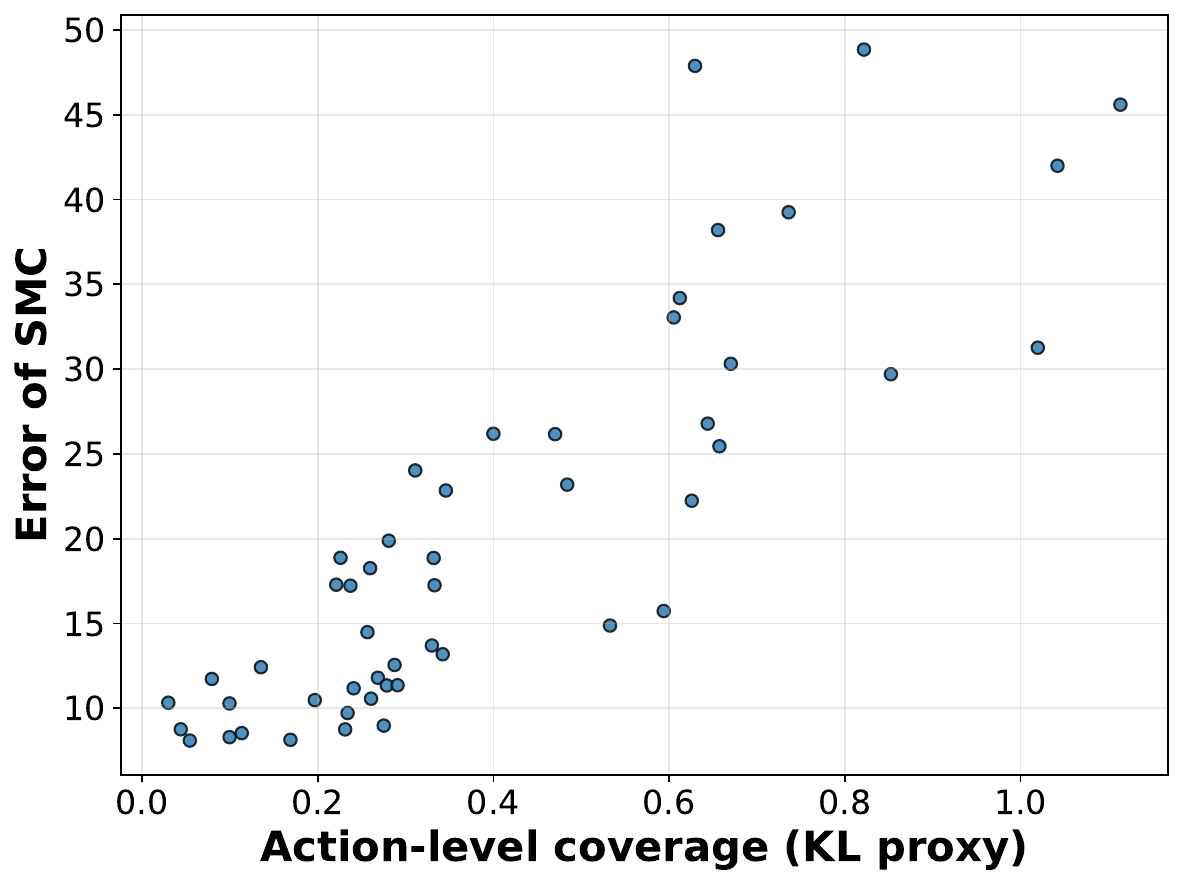}
\caption{$r = 0.86$}
\label{fig:actcov-4}
\end{subfigure}
\caption{Additional instantiations of the experiment shown in \cref{fig:mean-ratio-intro}, with different datasets (generated by GPT-5.2). Captions show Pearson correlation coefficients.}
\label{fig:actcov-all}
\end{figure}

\begin{figure}[ht]
\centering
\begin{subfigure}[b]{0.3\textwidth}
\centering
\includegraphics[width=\textwidth]{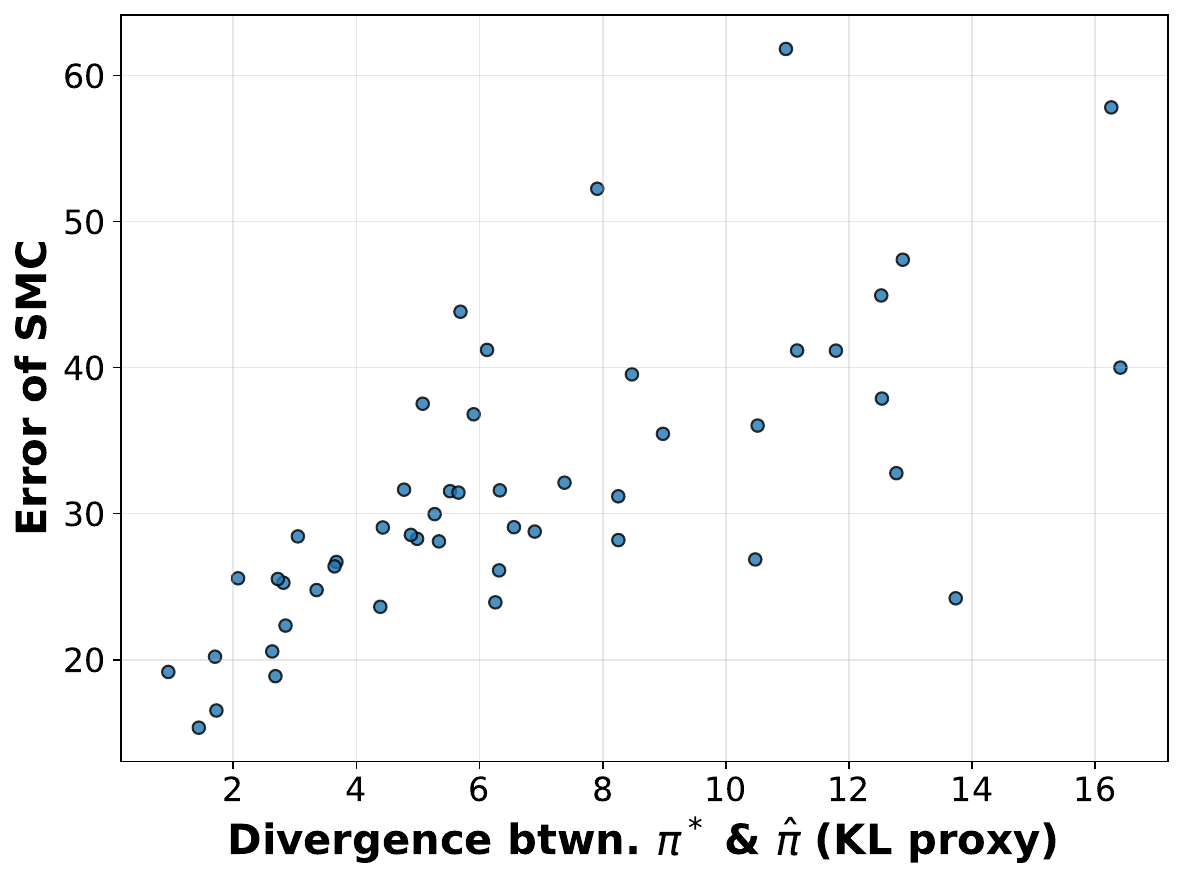}
\caption{$r = 0.70$}
\label{fig:invphi-alpha2-1}
\end{subfigure}
\begin{subfigure}[b]{0.3\textwidth}
\centering
\includegraphics[width=\textwidth]{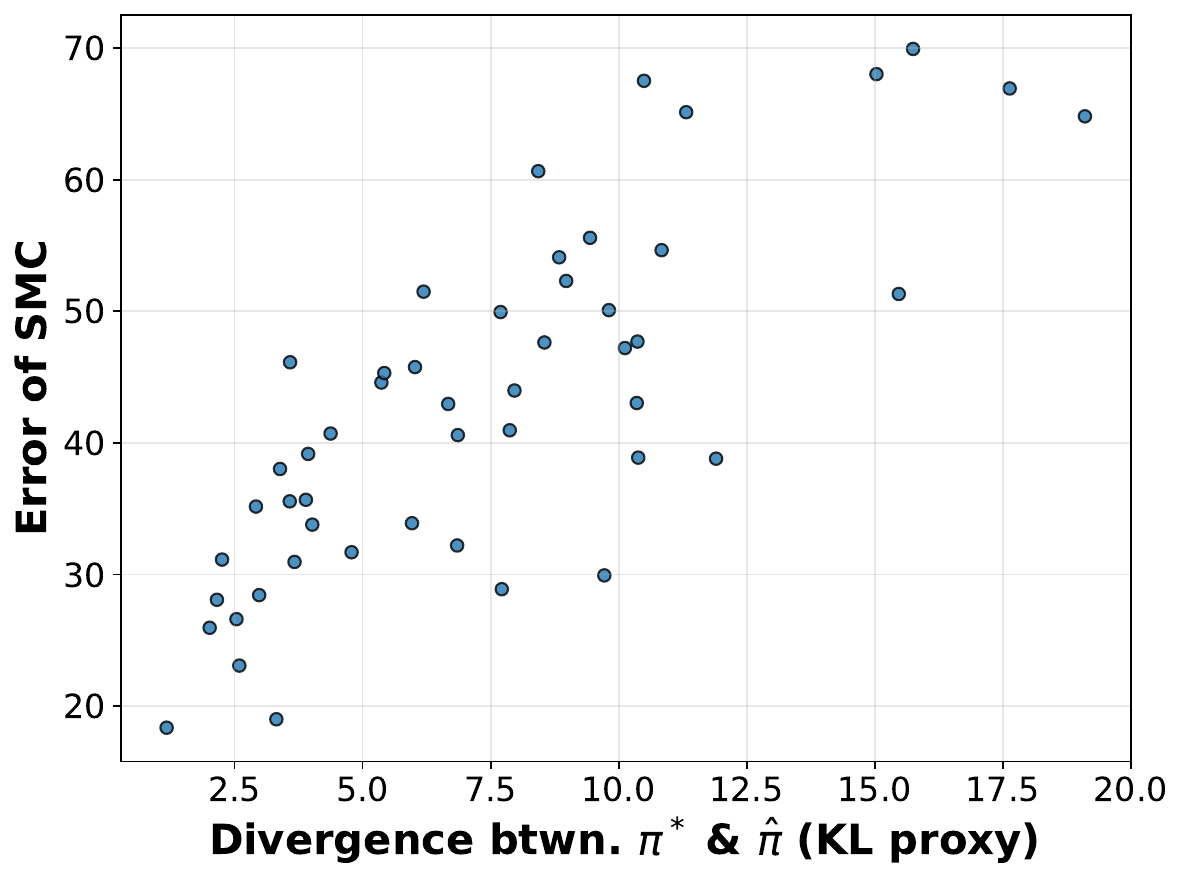}
\caption{$r = 0.79$}
\label{fig:invphi-alpha2-2}
\end{subfigure}
\begin{subfigure}[b]{0.3\textwidth}
\centering
\includegraphics[width=\textwidth]{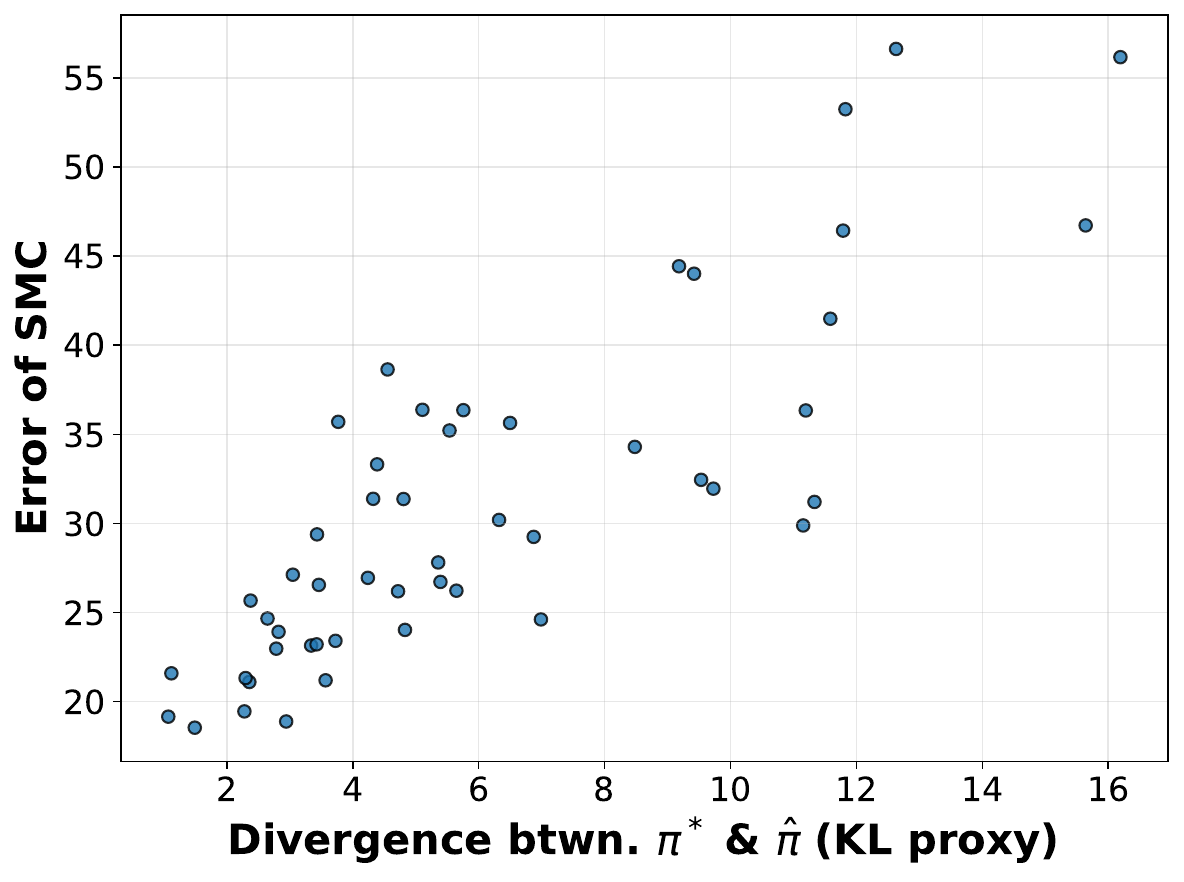}
\caption{$r = 0.83$}
\label{fig:invphi-alpha2-3}
\end{subfigure}
\caption{Additional instantiations of the experiment shown in \cref{fig:inv-phi-intro}, with different datasets (generated by GPT-5.2). Captions show Pearson correlation coefficients.}
\label{fig:invphi-alpha2-all}
\end{figure}

\begin{figure}[ht]
\centering
\begin{subfigure}[b]{0.3\textwidth}
\centering
\includegraphics[width=\textwidth]{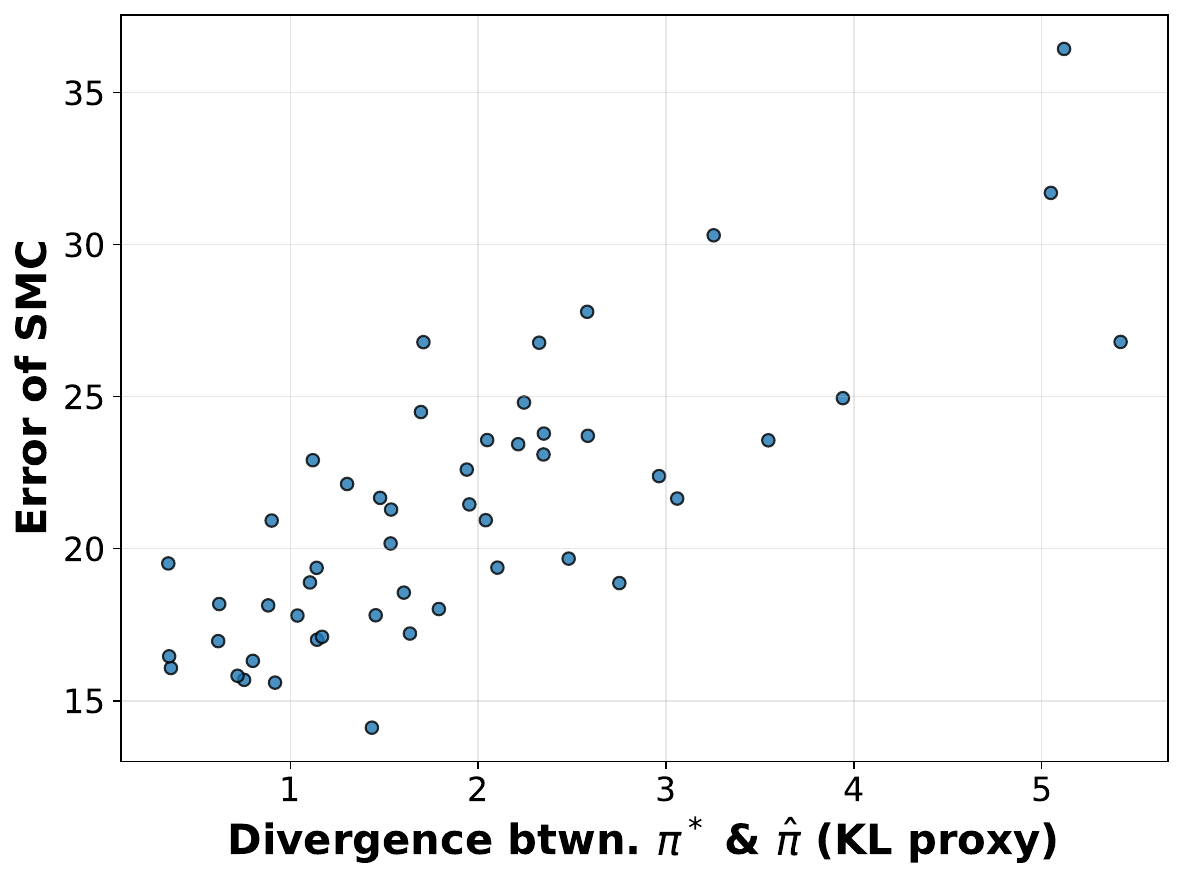}
\caption{$r = 0.78$}
\label{fig:invphi-alpha1-0}
\end{subfigure}
\begin{subfigure}[b]{0.3\textwidth}
\centering
\includegraphics[width=\textwidth]{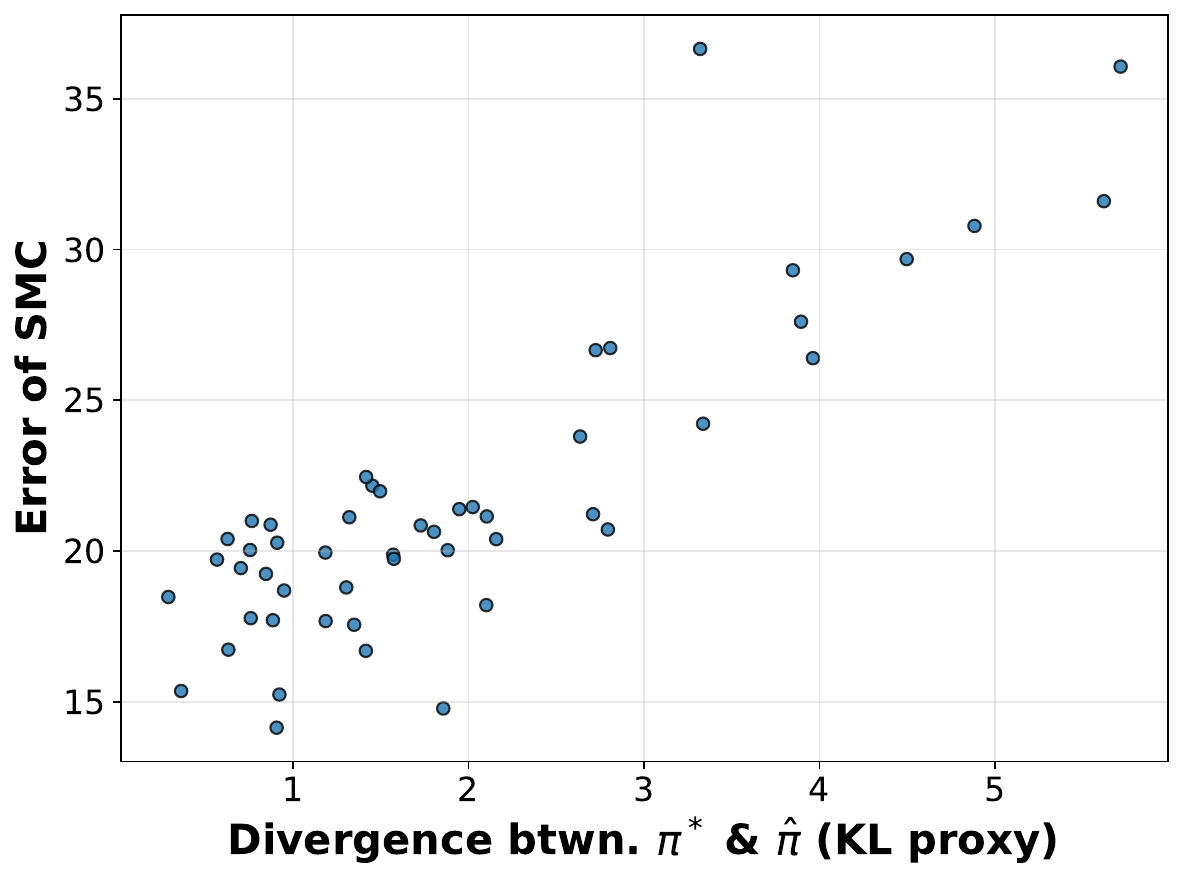}
\caption{$r = 0.85$}
\label{fig:invphi-alpha1-1}
\end{subfigure}
\begin{subfigure}[b]{0.3\textwidth}
\centering
\includegraphics[width=\textwidth]{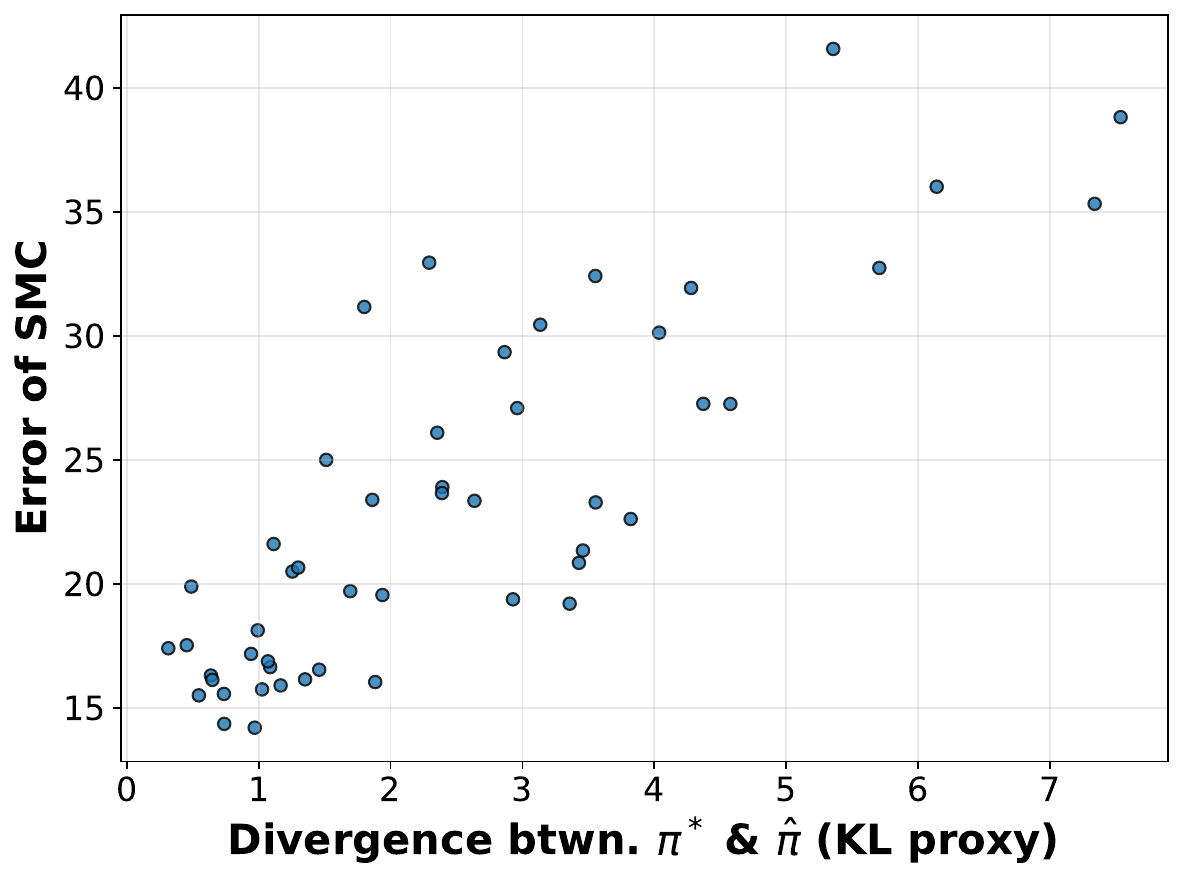}
\caption{$r = 0.83$}
\label{fig:invphi-alpha1-2}
\end{subfigure}
\begin{subfigure}[b]{0.3\textwidth}
\centering
\includegraphics[width=\textwidth]{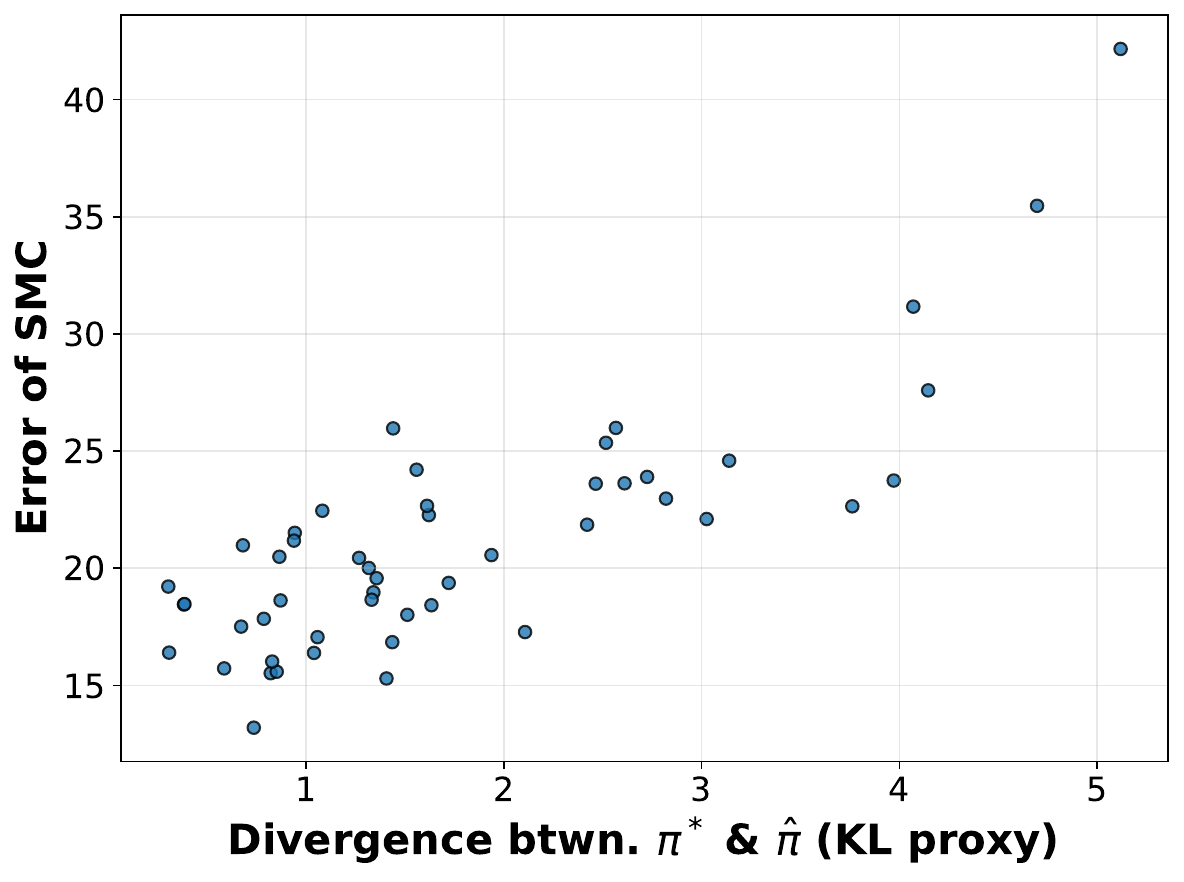}
\caption{$r = 0.82$}
\label{fig:invphi-alpha1-3}
\end{subfigure}
\caption{Additional instantiations of the experiment shown with the 4 datasets in \cref{fig:inv-phi-intro,fig:invphi-alpha2-all}, except with $\alpha = 1$. Captions show Pearson correlation coefficients.}
\label{fig:invphi-alpha1-all}
\end{figure}

\end{document}